\renewcommand{\zeta}{\overline{\rho}}
\renewcommand{\omega}{\underline{\rho}}
\DeclareMathOperator{\polylog}{\rm polylog}
\newcommand{\la}{\langle}
\newcommand{\ra}{\rangle}
\newtheorem{condition}[theorem]{Condition}
\def \poly {\mathrm{poly}}
\icmltitlerunning{Benign Overfitting in Two-layer ReLU Convolutional Neural Networks}
\begin{document}

\twocolumn[
\icmltitle{Benign Overfitting in Two-layer ReLU Convolutional Neural Networks}




\icmlsetsymbol{equal}{*}

\begin{icmlauthorlist}
\icmlauthor{Yiwen Kou}{equal,uclacs}
\icmlauthor{Zixiang Chen}{equal,uclacs}
\icmlauthor{Yuanzhou Chen}{uclacs}
\icmlauthor{Quanquan Gu}{uclacs}
\end{icmlauthorlist}

\icmlaffiliation{uclacs}{Department of Computer Science, University of California, Los Angeles}

\icmlcorrespondingauthor{Quanquan Gu}{qgu@cs.ucla.edu}

\icmlkeywords{Machine Learning, ICML}

\vskip 0.3in
]



\printAffiliationsAndNotice{\icmlEqualContribution} 

\begin{abstract}
Modern deep learning models with great expressive power can be trained to overfit the training data but still generalize well. This phenomenon is referred to as \textit{benign overfitting}. Recently, a few studies have attempted to theoretically understand benign overfitting in neural networks. 
However, these works are either limited to neural networks with smooth activation functions or to the neural tangent kernel regime. 
How and when benign overfitting can occur in ReLU neural networks remains an open problem. In this work, we seek to answer this question by establishing algorithm-dependent risk bounds for learning two-layer ReLU convolutional neural networks with label-flipping noise. We show that, under mild conditions, the neural network trained by gradient descent can achieve near-zero training loss and Bayes optimal test risk. 
Our result also reveals a sharp transition between benign and harmful overfitting under different conditions on data distribution in terms of test risk. 
Experiments on synthetic data back up our theory.
\end{abstract}

\section{Introduction}

Modern deep learning models have a large number of parameters, often exceeding the number of training data points. Despite being over-parameterized and overfitting the training data, these models can still make accurate predictions on the unseen test data \citep{zhang2016understanding, neyshabur2018towards}. This phenomenon, often referred to as \textit{benign overfitting} \citep{bartlett2020benign}, has revolutionized traditional theories of statistical learning and attracted great attention from the statistics and machine learning communities \citep{belkin2018understand, belkin2019reconciling, belkin2020two, hastie2022surprises}. 

There has been a line of work in recent years studying benign overfitting from the theoretical perspective. 
Despite their contributions and insights into the benign overfitting phenomenon, most of these works focus on linear models \citep{belkin2020two, bartlett2020benign, hastie2022surprises, wu2020optimal, chatterji2021finite, zou2021benign, cao2021risk} or kernel/random features models \citep{belkin2018understand,liang2020just,montanari2022interpolation}. \citet{adlam2020neural} and \citet{li2021towards} focused on benign overfitting in neural network models, yet their results are limited to the neural tangent kernel (NTK) regime \citep{jacot2018neural}, where the neural network learning is essentially equivalent to kernel regression. 

Understanding benign overfitting in neural networks beyond the NTK regime is much more challenging because of the non-convexity of the problem. 
Recently, \citet{frei2022benign} studied the problem of learning log-concave mixture data with label-flipping noise, using fully-connected two-layer neural networks with smoothed leaky ReLU activation. They proved the risk upper bound under certain regularity conditions, which matches the lower bound given in \citet{cao2021risk} when the label-flipping noise is zero. 
\citet{cao2022benign} provided an analysis for learning two-layer convolutional neural networks (CNNs) with polynomial ReLU activation function (ReLU$^q$, $q > 2$). Their analysis also identifies a condition that controls the phase transition between benign and harmful overfitting. 
The analyses in both \citet{frei2022benign} and \citet{cao2021risk} highly rely on smooth activation functions and cannot deal with the most widely used ReLU activation function. Thus, there remains an open question: 
\begin{center}
    \emph{How and when does benign overfitting occur in ReLU neural networks? } 
\end{center}

In this paper, we seek to answer the above question by establishing risk bounds for learning two-layer CNNs with ReLU activation function. 

\subsection{Problem Setup}
We consider a similar data distribution that was explored in \citet{cao2022benign}. In this particular distribution, the input data consists of two types of components: \textit{label dependent signals} and \textit{label independent noises}. This data generation model takes inspiration from image data, where the inputs are composed of various patches, and only certain patches are relevant to the class label of the image. Similar models have also been investigated in recent works by \citet{li2019towards,allen2020feature,allen2020towards,zou2021understanding,shen2022data}.

\begin{definition}\label{def:data}
Let $\bmu\in \RR^d$ be a fixed vector representing the signal contained in each data point. Each data point $(\xb,y)$ with predictor $\xb = [\xb^{(1)\top}, \xb^{(2)\top}]^{\top}\in\RR^{2d}, \xb^{(1)}, \xb^{(2)} \in \RR^{d}$ and label $y\in\{-1,1\}$ is generated from a distribution $\cD$, which we specify as follows: 
\begin{enumerate}[leftmargin = *]
    \item The true label $\hat{y}$ is generated as a Rademacher random variable, i.e. $\PP[\hat{y}=1] = \PP[\hat{y}=-1]= 1/2$. The observed label $y$ is then generated by flipping $\hat{y}$ with probability $p$ where $p<1/2$, i.e. $\PP[y=\hat{y}]=1-p$ and $\PP[y=-\hat{y}]=p$. 
    \item A noise vector $\bxi$ is generated from the Gaussian distribution $\cN(\mathbf{0}, \sigma_p^2 \Ib)$. 
    \item  One of $\xb^{(1)}, \xb^{(2)}$ is randomly selected and then assigned as $\hat{y}\cdot\bmu$, which represents the signal, while the other is given by $\bxi$, which represents noises.
\end{enumerate}

\end{definition}

Definition \ref{def:data} strictly generalizes the data distribution in \citet{cao2022benign}, in the sense that it introduces label-flipping noise to the true label $\hat{y}$, and relaxes the orthogonal condition between the signal vector $\bmu$ and the noise vectors $\bxi$ (See Definition 3.1 in \citet{cao2022benign} for a comparison). 

Given a training data set $S = \{(\xb_i,y_i)\}_{i=1}^n$ drawn from some unknown joint distribution $\cD$ over $\xb\times y$, we train a two-layer CNN with ReLU activation by minimizing the following empirical risk measured by logistic loss
\begin{align}\label{eq:objective_definition}
    L_S(\Wb) 
    &= \frac{1}{n} {\sum_{i=1}^n} \ell[ y_i \cdot f(\Wb,\xb_i) ],
\end{align}
where $\ell(z) = \log(1 + \exp(-z))$, and $f(\Wb,\xb)$ is the two-layer CNN (See Section \ref{sec:prob} for the detailed definition). We will use gradient descent to minimize the training loss $L_S(\Wb)$, and we are interested in characterizing the test error (i.e., true error) defined by 
\begin{align}\label{eq:test error01}
L_{\cD}^{0-1} (\Wb) := \PP_{(\xb,y )\sim \cD} \big[ y \neq \sign \big(f(\Wb,\xb)\big) \big].
\end{align}


\subsection{Main Contributions}
We prove the following main result, which characterizes the training loss and test error of the two-layer ReLU CNN trained by gradient descent.

\begin{theorem}[Informal]\label{thm:signal_learning_main_informal}
For any $\epsilon > 0$, under certain regularity conditions, with probability at least $ 1 - \delta$, there exists $0 \leq t \leq T$ such that:
\begin{enumerate}[leftmargin = *]
    \item The training loss converges to $\epsilon$, i.e., $L_S(\Wb^{(t)}) \leq \epsilon$.
    \item If $n\|\bmu\|_{2}^{4} \geq \Omega(\sigma_{p}^{4}d)$, we have
    $ L_{\cD}^{0-1} (\Wb^{(t)}) \leq p + \exp\Big(-  n\|\bmu\|_{2}^{4}/(C_2\sigma_{p}^{4}d)\Big)$. 
    \item If $n\|\bmu\|_{2}^{4} \leq O(\sigma_p^{4} d)$, we have $ L_{\cD}^{0-1} (\Wb^{(t)}) \geq p + 0.1$. 
\end{enumerate}
\end{theorem}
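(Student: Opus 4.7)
The plan is to prove all three parts via a coefficient decomposition of the gradient descent iterates together with a stage-wise dynamics analysis. I will write each filter $\wb_{j,r}^{(t)}$ of the two-layer CNN as
\[
\wb_{j,r}^{(t)} = \wb_{j,r}^{(0)} + j \cdot \gamma_{j,r}^{(t)} \cdot \frac{\bmu}{\|\bmu\|_2^2} + \sum_{i=1}^n j \cdot \rho_{j,r,i}^{(t)} \cdot \frac{\bxi_i}{\|\bxi_i\|_2^2},
\]
and derive scalar recursions for $\gamma_{j,r}^{(t)}$ and for the positive and negative parts of $\rho_{j,r,i}^{(t)}$ directly from the gradient descent update. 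The key point is that $\gamma_{j,r}^{(t)}$ records how much neuron $(j,r)$ has learned the signal direction, while $\rho_{j,r,i}^{(t)}$ records how much it has memorized the $i$-th training noise; the test error is then read off from the relative sizes of these two families of coefficients.

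The first step is to collect, with probability $1-\delta$ over the sample and initialization, the standard concentration facts $\|\bxi_i\|_2^2 = \Theta(\sigma_p^2 d)$, $|\langle \bxi_i,\bxi_{i'}\rangle| = \widetilde O(\sigma_p^2\sqrt d)$, $|\langle \bxi_i,\bmu\rangle|=\widetilde O(\sigma_p\|\bmu\|_2)$, together with initialization tail bounds. The second step is a scale-invariant induction over an initial phase of length $T_1$: I maintain that $\gamma_{j,r}^{(t)},\,|\rho_{j,r,i}^{(t)}|=\widetilde O(1)$ so that the loss derivatives $\ell_i^{\prime(t)} := \ell'(y_i f(\Wb^{(t)},\xb_i))$ stay uniformly close to their initialization value $-1/2$. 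Inside this invariant the ReLU activation on the signal patch is frozen, and the per-step updates satisfy, up to activation masks,
\[
\gamma_{j,r}^{(t+1)} - \gamma_{j,r}^{(t)} = \Theta\!\bigl(\eta\|\bmu\|_2^2\bigr), \qquad |\rho_{j,r,i}^{(t+1)} - \rho_{j,r,i}^{(t)}| = \Theta\!\bigl(\eta\sigma_p^2 d/n\bigr).
\]
The $n$-fold advantage of signal growth over any single noise coefficient is what makes benign overfitting possible, and the ratio of accumulated $\gamma_{j,r}^{(T)}$ to the effective noise scale $\sqrt{n/d}\cdot|\rho^{(T)}|$ on a fresh point produces exactly $\sqrt{n/d}\cdot\|\bmu\|_2^2/\sigma_p^2$, whose square is the SNR quantity $n\|\bmu\|_2^4/(\sigma_p^4 d)$ appearing in the theorem.

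The third step is the long-horizon convergence and test-error analysis. For part 1, after the warm-up phase I will show that the margin $y_i f(\Wb^{(t)},\xb_i)$ is uniformly lower bounded by a positive, slowly growing quantity, and use a Bregman-style potential $\|\Wb^{(t)}-\Wb^\ast\|_F^2$ against a reference $\Wb^\ast$ built from scaled signal/noise coefficients to telescope $\sum_t L_S(\Wb^{(t)})$; this yields $\min_{t\le T}L_S(\Wb^{(t)})\le\epsilon$ for $T$ polynomial in the problem parameters. For part 2, I will decompose $f(\Wb^{(t)},\xb)$ on a fresh test sample into a deterministic signal contribution of order $\gamma_{j,r}^{(T)}$ and a mean-zero Gaussian noise contribution of standard deviation of order $\sqrt{n/d}\cdot\max_{j,r,i}|\rho_{j,r,i}^{(T)}|$, using independence of $\bxi$ from the training noises; plugging in the end-of-training values and a Gaussian tail bound delivers the excess error $\exp(-n\|\bmu\|_2^4/(C_2\sigma_p^4 d))$ above the Bayes floor $p$. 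For part 3, I will argue that when $n\|\bmu\|_2^4=O(\sigma_p^4 d)$ the noise contribution has standard deviation comparable to the signal contribution, so the sign of $f$ on a fresh clean sample is essentially independent of its signal patch, and a Gaussian anti-concentration argument pins the incremental error above $0.1$.

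The hard part, and the main departure from \citet{cao2022benign}, is handling the non-smoothness of pure ReLU. ReLU$^q$ with $q>2$ has a derivative that grows with the preactivation, which both drives signal amplification and gives smooth control of how activations evolve; with plain ReLU neither is available. The plan is to circumvent this by (i) working throughout with the deterministic subgradient $\mathbf{1}\{z\ge 0\}$ and treating the signal and noise patches separately inside each neuron, (ii) proving an activation-pattern preservation lemma showing that for every neuron the sign of its preactivation on the signal patch, once set at initialization, is preserved for all $t\le T$ with high probability, so that the effective linear gain on the signal patch is a stable constant independent of $t$, and (iii) controlling noise-patch activations via the scale invariants so that even when individual activations flip, their aggregate contribution to $\gamma_{j,r}^{(t)}$ is negligible compared to the $n$ signal-patch contributions. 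These ingredients let the non-smooth ReLU dynamics be analyzed with the same inductive discipline as the smooth case.
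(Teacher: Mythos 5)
Your decomposition and overall three-stage plan mirror the paper's signal--noise decomposition, two-stage dynamics analysis, and Gaussian concentration/anti-concentration test-error bounds, and the arithmetic that produces $n\|\bmu\|_2^4/(\sigma_p^4 d)$ is essentially right. However, there are three genuine gaps.

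First, your proposed ``activation-pattern preservation on the signal patch'' (item (ii)) is false under this paper's conditions. Because $\gamma_{j,r}^{(t)}$ is monotone increasing and reaches order $\hat{\gamma} = n\|\bmu\|_2^2/(\sigma_p^2 d)$, which is much larger than $|\la \wb_{j,r}^{(0)}, \bmu\ra| = O(\sigma_0\|\bmu\|_2)$, a neuron $(1,r)$ with initially misaligned signal-patch preactivation will flip sign (similarly for $j=-1$). The paper never claims, and never needs, this preservation: for the signal patch it splits on the sign of $\la\wb_{j,r}^{(t)},\bmu\ra$ and shows the $\gamma_{j,r}$ increment is nonnegative in both cases. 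What the paper does preserve is the \emph{noise}-patch activation set (items 4 and 5 of Lemma~\ref{lm: balanced logit}), namely $S_i^{(0)}\subseteq S_i^{(t)}$ and $S_{j,r}^{(0)}\subseteq S_{j,r}^{(t)}$; this gives $|S_{j,r}^{(t)}| = \Theta(n)$ and hence a time-invariant ratio $\gamma_{j,r}^{(t)}/\sum_i\zeta_{j,r,i}^{(t)} = \Theta(\mathrm{SNR}^2)$. Your item (iii), describing ``noise-patch activations' aggregate contribution to $\gamma_{j,r}^{(t)}$,'' conflates the two coordinates: by the update rule \eqref{iterative equation2}, $\gamma_{j,r}^{(t)}$ is driven only by signal-patch activations; noise-patch activations drive $\zeta$ and $\omega$.

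Second, you have not addressed the label-flipping noise in the crucial place. Your invariant ``$\ell_i'^{(t)}\approx -1/2$'' holds only during the warm-up stage $t\leq T_1$; for $t>T_1$ the derivatives decay to make $L_S$ small. In the update \eqref{iterative equation2}, the noisy index set $S_-$ contributes a \emph{negative} term to the $\gamma$ increment, and without additional control it could cancel the $S_+$ term, so it is not even clear that $\gamma_{j,r}^{(t)}$ increases past $T_1$. The paper's resolution is the ``automatic balance'' Lemma~\ref{lemma: logit ratio bound}: a self-correcting recursion on $\sum_r(\zeta_{y_i,r,i}^{(t)}-\zeta_{y_k,r,k}^{(t)})$ shows $\ell_i'^{(t)}/\ell_k'^{(t)} \leq C$ uniformly over $t\leq T^*$, which combined with $p\leq 1/C$ forces the $S_+$ contribution to dominate. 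Nothing in your sketch plays this role.

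Third, the lower-bound argument in part 3 is underspecified. The random variable $g(\bxi)=\sum_{j,r}j\sigma(\la\wb_{j,r}^{(t)},\bxi\ra)$ is a nonlinear, non-Gaussian functional of $\bxi$, so a direct ``Gaussian anti-concentration'' step does not apply. The paper constructs a shift $\vb$ (Lemma~\ref{lemma:g lower bound}) for which $\sum_{j'\in\{\pm1\}}[g(j'\bxi+\vb)-g(j'\bxi)]$ is deterministically large, then uses the pigeonhole principle together with the TV bound $\|\vb\|_2/(2\sigma_p)$ from Lemma~\ref{lm: TV} to force $\PP(\Omega)\geq 0.22$. Some device of this kind is needed; a raw tail/anti-concentration bound on $g(\bxi)$ will not close the argument.
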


The significance of Theorem \ref{thm:signal_learning_main_informal} is highlighted as follows:
\begin{itemize}[leftmargin = *]
\item The ReLU CNN trained by standard gradient
descent on the logistic loss can interpolate the noisy training data and achieve near-zero training loss. 
\item Under the condition on the data distribution and the training sample size that $n\|\bmu\|_{2}^{4} \geq \Omega(\sigma_{p}^{4}d)$, 
the learned CNN 
can achieve nearly optimal test error (i.e., Bayes risk $p$).

\item 
On the flip side, if  $n\|\bmu\|_{2}^{4} \leq O(\sigma_p^{4} d)$, 
the interpolating CNN model will 
suffer a test error that is at least a constant worse than the Bayes risk. 
This together with the positive result reveals 
a sharp 
phase transition between benign and harmful overfitting.
\end{itemize}

 Our analysis relies on several new proof techniques that significantly generalize 
the 
signal-noise decomposition technique \citep{cao2022benign}. More specifically, to handle ReLU activation, we directly use the activation pattern and data structure to characterize the loss of each training data point rather than using the smoothness condition. To deal with the label-flipping noise, we show that the loss of each 
training data point 
decreases at roughly the same rate throughout training, 
which ensures signal learning even in the presence of label noise. 





\section{Related Work} 

In this section, we will discuss in detail some of the related work briefly mentioned before.


\noindent\textbf{Benign overfitting of linear models.} 
One line of research sought a theoretical understanding of the benign overfitting phenomenon in linear models. Some of these works focused on linear regression problems.
\citet{belkin2020two} provided a precise analysis for the shape of the risk curve in Gaussian and Fourier series models with the least squares predictor.
\citet{hastie2022surprises, wu2020optimal} studied the setting where both the dimension and sample size grow but their ratio is fixed and demonstrated a double descent risk curve with respect to this ratio. 
\citet{bartlett2020benign} established matching upper and lower risk bounds for the over-parameterized minimum norm interpolator and showed that benign overfitting can occur under certain conditions on the spectrum of the data covariance. 
\citet{zou2021benign} studied how well constant stepsize stochastic gradient descent with iterate averaging or tail averaging generalizes in the over-parameterized regime. 
Several other works studied benign overfitting of maximum margin linear classifiers.
\citet{muthukumar2021classification} showed that the max-margin predictor and the least square predict coincide in the overparametrized regime, and generalize differently when using 0-1 loss and square loss functions.
\citet{wang2021benign,cao2021risk} respectively studied Gaussian and sub-Gaussian mixtures data models without label noise and characterized the condition under which benign overfitting can occur.
\citet{chatterji2021finite} showed that the maximum margin algorithm  trained on noisy data can achieve nearly optimal risk with sufficient overparameterization. 
\citet{shamir2022implicit} studied both minimum-norm interpolating predictors for linear regression and max-margin predictors for classification and discussed the conditions under which benign overfitting can or cannot occur.

\textbf{Benign overfitting of neural networks.} 
A series of recent works studied benign overfitting of neural networks. 
\citet{liang2020multiple} showed that kernel “ridgeless” regression can lead to a multiple-descent risk curve for various scaling of input dimension and sample size.
\citet{adlam2020neural} provided a precise analysis of generalization under kernel regression and revealed non-monotonic behavior for the test error.
\citet{li2021towards} examined benign overfitting in random feature models defined as two-layer neural networks. 
\citet{montanari2022interpolation} studied two-layer neural networks in the NTK regime, focusing on its generalization properties when  dimension, sample size, and the number of neurons are overparametrized and polynomially related. 
\citet{chatterji2022deep} bounded the excess risk of interpolating deep linear networks trained by gradient flow and showed that randomly initialized deep linear networks can  closely approximate the risk bounds for the minimum norm interpolator. 

\section{Preliminaries}\label{sec:prob}

In this section, we introduce the notation, two-layer CNN models, and the gradient descent-based training algorithm. 


\noindent\textbf{Notation.} We use lower case letters, lower case bold face letters, and upper case bold face letters to denote scalars, vectors, and matrices respectively.  For a vector $\vb=(v_1,\cdots,v_d)^{\top}$, we denote by $\|\vb\|_2:=\big(\sum_{j=1}^{d}v_j^2\big)^{1/2}$ its $l_2$ norm. For two sequence $\{a_k\}$ and $\{b_k\}$, we denote $a_k=O(b_k)$ if $|a_k|\leq C|b_k|$ for some absolute constant $C$, denote $a_k=\Omega(b_k)$ if $b_k=O(a_k)$, and denote $a_k=\Theta(b_k)$ if $a_k=O(b_k)$ and $a_k=\Omega(b_k)$. We also denote $a_k=o(b_k)$ if $\lim|a_k/b_k|=0$. Finally, we use $\tilde{O}(\cdot)$ and $\tilde{\Omega}(\cdot)$ to omit logarithmic terms in the notation.


\noindent\textbf{Two-layer CNNs.} We consider a two-layer convolutional neural network described in the following: 
its first layer consists of $m$ positive filters and $m$ negative filters, with each filter applying to the two patches $\xb^{(1)}$ and $\xb^{(2)}$ separately; 
its second layer parameters are fixed as $+1/m$ and $-1/m$ respectively for positive and negative convolutional filters. 
Then the network can be written as $f(\Wb, \xb) = F_{+1}(\Wb_{+1},\xb) - F_{-1}(\Wb_{-1},\xb)$, 
where the partial network function of positive and negative filters $F_{+1}(\Wb_{+1},\xb)$, $F_{-1}(\Wb_{-1},\xb)$ are defined as: 
\begin{align*}
    F_j(\Wb_j,\xb)  &= \frac{1}{m}{\sum_{r=1}^m} \big[\sigma(\la\wb_{j,r},\xb^{(1)}\ra) + \sigma(\la\wb_{j,r}, \xb^{(2)}\ra)\big]\\
    &= \frac{1}{m} {\sum_{r=1}^m} \big[\sigma(\la\wb_{j,r},\hat{y}\cdot\bmu\ra) + \sigma(\la\wb_{j,r}, \bxi\ra)\big]
\end{align*}
for $j \in \{\pm 1\}$. Here $\sigma(z) = \max\{0,z\}$ is the ReLU activation function, $\Wb_{j}$ is the collection of model weights associated with $F_j$ 
(positive/negative filters), 
and $\wb_{j,r}\in\RR^{d}$ denotes the weight vector for the $r$-th filter / neuron in $\Wb_{j}$. We use $\Wb$ to denote the collection of all model weights. We note that our CNN model can also be viewed as a CNN with average global pooling \citep{lin2013network}.
Besides the training loss and test error defined in \eqref{eq:objective_definition} and \eqref{eq:test error01}, we also define the true loss (test loss) as $L_{\cD}(\Wb) := \EE_{(\xb,y )\sim \cD} \ell[ y \cdot f(\Wb,\xb) ]$. 

\noindent\textbf{Training algorithm.} We use gradient descent to optimize \eqref{eq:objective_definition}. The gradient descent update of the filters in the CNN can be written as
\begin{align}
    \wb_{j,r}^{(t+1)} &= \wb_{j,r}^{(t)} - \eta \cdot \nabla_{\wb_{j,r}} L_S(\Wb^{(t)}) \notag\\
    &= \wb_{j,r}^{(t)} - \frac{\eta}{nm} \sum_{i=1}^n \ell_i'^{(t)} \cdot  \sigma'(\la\wb_{j,r}^{(t)}, \bxi_{i}\ra)\cdot j y_{i}\bxi_{i} \notag\\
    &\qquad- \frac{\eta}{nm} \sum_{i=1}^n \ell_i'^{(t)} \cdot \sigma'(\la\wb_{j,r}^{(t)}, \hat{y}_{i} \bmu\ra)\cdot \hat{y}_{i}y_i j\bmu \label{eq:gdupdate}.
\end{align}
for all $j \in \{\pm 1\}$ and $r \in [m]$, where we introduce a shorthand notation $\ell_i'^{(t)} = \ell'[ y_i \cdot f(\Wb^{(t)},\xb_i) ] $ and assume the gradient of the ReLU activation function 
at $0$ to be 
$\sigma'(0) = 1$
without losing generality. We initialize the gradient descent by Gaussian initialization, where all entries of $\Wb^{(0)}$ are sampled from i.i.d. Gaussian distributions $\cN(0, \sigma_0^2)$, with $\sigma_0^2$ as the variance.

\section{Main Results}

In this section, we present our main theoretical results. Our results are based on the following conditions on the dimension $d$, sample size $n$, neural network width $m$, initialization scale $\sigma_0$, signal norm $\| \bmu \|_2$, noise rate $p$, and learning rate $\eta$. 
In this paper, we consider the learning period $0 \leq t \leq T^{*}$, where $T^{*} = \eta^{-1}\poly(\epsilon^{-1}, d, n, m)$ is the maximum admissible iterations. 
We 
can deal with any polynomial maximum admissible iterations $T^*$ greater than $\tilde{\Omega}(\eta^{-1}\epsilon^{-1}mnd^{-1}\sigma_p^{-2})$.


\begin{condition}\label{condition:d_sigma0_eta}
Suppose there exists a sufficiently large constant $C$, such that the following hold: 
\begin{enumerate}[leftmargin = *]
    \item Dimension $d$ is sufficiently large:  $d \geq C \max\{n\sigma_p^{-2}\|\bmu\|_{2}^{2}\log(T^*), n^{2}\log(nm/\delta) (\log (T^{*}))^2\}$.
    \item Training sample size $n$ and neural network width satisfy $m\geq C\log(n/\delta), n \geq C\log(m/\delta)$.
    
    \item The norm of the signal satisfies $\|\bmu\|_2^2 \geq C \cdot \sigma_p^2 \log(n / \delta)$. 
    \item The noise rate $p$ satisfies $p\leq 1/C$.
    \item The standard deviation of Gaussian initialization $\sigma_0$ is appropriately chosen such that $\sigma_0\leq \big(C\max\big\{\sigma_{p}d/\sqrt{n},\sqrt{\log(m/\delta)}\cdot\|\bmu\|_{2}\big\}\big)^{-1}$.
    
    \item The learning rate $\eta$ satisfies $\eta\leq \big(C\max\big\{\sigma_p^2 d^{3/2}/(n^2 m \sqrt{\log(n/\delta)}), \sigma_p^2 d/n\big\}\big)^{-1}$.
    
    
\end{enumerate}
\end{condition}

The conditions on $d,n,m$ are to ensure that the learning
problem 
is in a sufficiently over-parameterized setting, and similar conditions have been made in \citet{chatterji2021finite,cao2022benign,frei2022benign}. 
The conditions on $\sigma_0$ and $\eta$ are to ensure that gradient descent can effectively minimize the training loss. 
The difference between Condition \ref{condition:d_sigma0_eta} and Assumption (A1)-(A6) in \citet{frei2022benign} is that our setting assumes a milder condition 
of order 
$O(d^{-3/2})$ on learning rate $\eta$ 
rather 
than $O(d^{-2})$ ((A5) in \citet{frei2022benign}), as well as a milder condition 
of order 
$O(d^{-1}n^{1/2})$ on initialization $\sigma_0$ 
rather 
than $O(d^{-5/2}m^{-1/2})$ ((A6) in \citet{frei2022benign}). 
Another difference is that \citet{frei2022benign} allows neural networks of arbitrary width $m$, but our condition requires a mild assumption that $m$ should be no more than 
an 
exponential order of dimension $d$.
We also require another mild condition that $m$ and $n$ cannot exceed the exponential order of each other. 
Besides, 
in contrast to \citet{cao2022benign}, 
our Condition \ref{condition:d_sigma0_eta} relaxes the dependency of $m$ and $d$ 
in that, we do not require any polynomial upper bound of the neural network width $m$, whereas Condition 4.2 in \citet{cao2022benign} requires that $m$ is upper bounded by a certain fractional order of $d$. 
Another improvement to \citet{cao2022benign} is that we add label-flipping noise $p$ to the problem, but this is also included in \citet{frei2022benign}. 
Detailed comparisons are shown in 
Table \ref{table: comp with frei} and Table \ref{table: comp with cao}.

Based on these conditions, we give our main result in the following theorem. 



\begin{theorem}\label{thm:signal_learning_main}
For any $\epsilon > 0$, under Condition~\ref{condition:d_sigma0_eta}, with probability at least $ 1 - \delta$ there exists $t=  \tilde{O}(\eta^{-1}\epsilon^{-1}mnd^{-1}\\\sigma_p^{-2})$ such that:
\begin{enumerate}[leftmargin = *]
    \item The training loss converges to $\epsilon$, i.e., $L_S(\Wb^{(t)}) \leq \epsilon$.
    \item When $n\|\bmu\|_{2}^{4} \geq C_{1}\sigma_{p}^{4}d$, the trained CNN will generalize with classification error close to the noise rate $p$: 
    $ L_{\cD}^{0-1} (\Wb^{(t)}) \leq p + \exp\Big(-  n\|\bmu\|_{2}^{4}/(C_2\sigma_{p}^{4}d)\Big)$. 
    \item When $n\|\bmu\|_{2}^{4} \leq C_3\sigma_p^{4} d$, the test error $ L_{\cD}^{0-1} (\Wb^{(t)}) \geq p + 0.1$. 
\end{enumerate}
Here $C_{1}, C_{2}, C_{3}$ are some absolute constants. 
\end{theorem}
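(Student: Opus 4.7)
The plan is to extend the signal-noise decomposition of \citet{cao2022benign} to the ReLU setting in the presence of label-flipping noise. I would begin by decomposing each filter as
\begin{align*}
\wb_{j,r}^{(t)} = \wb_{j,r}^{(0)} + j\,\gamma_{j,r}^{(t)}\,\frac{\bmu}{\|\bmu\|_2^2} + \sum_{i=1}^n \rho_{j,r,i}^{(t)}\,\frac{\bxi_i}{\|\bxi_i\|_2^2},
\end{align*}
where $\gamma_{j,r}^{(t)}$ tracks signal learning and $\rho_{j,r,i}^{(t)}$ tracks noise memorization for training example $i$. Substituting into the gradient update \eqref{eq:gdupdate} yields scalar recursions in which $\gamma_{j,r}^{(t)}$ grows by increments of order $\eta \|\bmu\|_2^2/m \cdot |\ell_i'^{(t)}|$ per active example and $\rho_{j,r,i}^{(t)}$ by $\eta \sigma_p^2 d/(nm) \cdot |\ell_i'^{(t)}|$ per step. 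Standard Gaussian concentration gives $\|\bxi_i\|_2^2 = \Theta(\sigma_p^2 d)$ and near-orthogonality $|\langle\bxi_i, \bxi_{i'}\rangle| = \tilde{O}(\sigma_p^2\sqrt{d})$, so the inner products $\langle\wb_{j,r}^{(t)}, \bmu\rangle$ and $\langle\wb_{j,r}^{(t)}, \bxi_i\rangle$ are well-approximated by the respective coefficients, with Condition \ref{condition:d_sigma0_eta} ensuring the initial inner products are negligible relative to the final coefficient scales.

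Next I would split the dynamics into two stages. In Stage I, while $|\ell_i'^{(t)}|=\Theta(1)$, I would inductively maintain that the coefficients remain $\tilde{O}(1)$ and that the activation patterns $\sigma'(\langle\wb_{j,r}^{(t)}, \hat y_i\bmu\rangle)$ and $\sigma'(\langle\wb_{j,r}^{(t)}, \bxi_i\rangle)$ stabilize after a few early steps. The key new ingredient beyond \citet{cao2022benign} is a \emph{uniform loss-derivative lemma} showing that $\ell_i'^{(t)}/\ell_{i'}'^{(t)}$ stays within a constant factor for all pairs $i,i'$ throughout training; this is what allows signal learning to survive label flipping, since the clean majority $(1-p)>1/2$ of examples then provably dominates the net pull on each $\gamma_{j,r}$. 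Stage II begins once the leading coefficients reach $\Theta(1)$; exploiting convexity of $\ell$ in its scalar argument together with the aligned gradient contribution from the $\bxi_i$, I would derive a one-step loss decrease of the form $L_S(\Wb^{(t+1)}) \leq L_S(\Wb^{(t)}) - c\eta \sigma_p^2 d/(nm) \cdot L_S(\Wb^{(t)})^2$, yielding Part 1 within $\tilde{O}(\eta^{-1}\epsilon^{-1} mn d^{-1}\sigma_p^{-2})$ iterations.

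The two test-error claims then follow from the coefficient ratios. In the signal-dominant regime $n\|\bmu\|_2^4 \geq C_1 \sigma_p^4 d$, the growth-rate comparison gives $\gamma_{y,r}^{(t)}/|\rho_{y,r,i}^{(t)}| = \Theta(n\|\bmu\|_2^2/(\sigma_p^2 d))$ throughout training. On a fresh point $(\xb,y)$ with $y=\hat y$, the margin $y\cdot f(\Wb^{(t)},\xb)$ is the sum of a signal term of order $\sum_r \gamma_{y,r}^{(t)}$ and a Gaussian term $\propto \sum_{j,r,i}\rho_{j,r,i}^{(t)}\langle\bxi_i,\bxi\rangle/\|\bxi_i\|_2^2$ of standard deviation $\tilde{O}(|\rho|\sqrt{n/d})$ over the fresh $\bxi$. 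The signal-to-noise-std ratio squared is exactly $n\|\bmu\|_2^4/(\sigma_p^4 d)$ up to constants, so a Gaussian tail bound produces the clean-label error $\exp(-n\|\bmu\|_2^4/(C_2\sigma_p^4 d))$, and combining with the $p$ fraction of flipped labels (which the signal classifier gets wrong deterministically) yields Part 2. In the reverse regime $n\|\bmu\|_2^4 \leq C_3 \sigma_p^4 d$, the same ratio analysis shows the signal contribution is swamped by the fresh Gaussian noise with constant probability, giving Part 3.

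The main obstacle is the simultaneous challenge of non-smoothness and label flipping. Without a smooth activation I cannot Taylor-expand to argue that small weight perturbations preserve predictions, so I would need to control activation patterns directly by examining the sign of $\langle\wb_{j,r}^{(t)},\bxi_i\rangle$, essentially reading off the sign of $\rho_{j,r,i}^{(t)}$ plus a negligible residual, and showing these signs flip at most a bounded number of times. Label flipping compounds this because noisy examples push $\gamma_{j,r}^{(t)}$ in the wrong direction; the uniform loss-derivative lemma is precisely the tool that keeps the clean-vs-flipped contributions in a fixed ratio so the net update stays in the correct direction throughout training. This uniform-rate property is the technical heart of the argument and is what replaces the smoothness-based ratio tracking of \citet{cao2022benign} and the smoothed-activation analysis of \citet{frei2022benign}.
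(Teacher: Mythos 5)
Your overall plan — signal-noise decomposition, a two-stage iterate analysis, and a uniform loss-derivative lemma controlling $\ell_i'^{(t)}/\ell_k'^{(t)}$ to neutralize label flipping — is the same architecture as the paper's proof, and your treatment of Part 1 and Part 2 is essentially right. Two remarks on the details worth noting. First, for Stage II convergence you invoke a one-step quadratic decrease $L_S(\Wb^{(t+1)}) \leq L_S(\Wb^{(t)}) - c\eta\sigma_p^2 d/(nm)\, L_S(\Wb^{(t)})^2$; with ReLU the objective is not smooth in the weights, so that inequality does not follow from standard descent lemmas. The paper instead constructs a reference point $\Wb^*$ with $y_i\langle\nabla f(\Wb^{(t)},\xb_i),\Wb^*\rangle\geq\log(2/\epsilon)$ and bounds $\|\Wb^{(t)}-\Wb^*\|_F^2-\|\Wb^{(t+1)}-\Wb^*\|_F^2\geq\eta L_S(\Wb^{(t)})-\eta\epsilon$, then telescopes. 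This potential-function route avoids needing smoothness and still yields the $O(\epsilon^{-1})$ iteration count. Your activation-pattern control ("signs flip at most a bounded number of times") is also weaker than what the paper actually establishes, namely that the activated index set $S_{j,r}^{(t)}$ is monotone increasing, so once activated a noise neuron stays activated; monotonicity is what makes the time-invariant coefficient-ratio argument go through cleanly.

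The genuine gap is in Part 3. You say that in the regime $n\|\bmu\|_2^4 \leq C_3\sigma_p^4 d$ "the signal contribution is swamped by the fresh Gaussian noise with constant probability," but the quantity that must overwhelm the signal is $\sum_r\sigma(\langle\wb_{\hat y,r}^{(t)},\bxi\rangle)-\sum_r\sigma(\langle\wb_{-\hat y,r}^{(t)},\bxi\rangle)$, which is a nonlinear, non-Gaussian functional of $\bxi$. Having a large standard deviation relative to the signal does not by itself imply that this functional is sufficiently negative with constant probability — the difference of two sums of ReLU units could in principle concentrate near zero or near a positive value. You need an anti-concentration statement, and that is exactly what the paper supplies via Lemma~\ref{lemma:g lower bound}: it constructs a shift $\vb$ with $\|\vb\|_2\leq 0.06\sigma_p$ such that $\sum_{j'\in\{\pm1\}}[g(j'\bxi+\vb)-g(j'\bxi)]\geq 4C_6\max_j\sum_r\gamma_{j,r}^{(t)}$ for all $\bxi$, so by pigeonhole $\Omega\cup(-\Omega)\cup(\Omega-\vb)\cup(-\Omega-\vb)=\mathbb{R}^d$; symmetry gives $\PP(\Omega)=\PP(-\Omega)$, and the total-variation bound $\mathrm{TV}(\cN(0,\sigma_p^2\Ib),\cN(\vb,\sigma_p^2\Ib))\leq\|\vb\|_2/(2\sigma_p)\leq 0.03$ turns the union bound into $\PP(\Omega)\geq 0.22$. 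Without this shift-and-pigeonhole step (or an equivalent anti-concentration device), your magnitude comparison alone does not yield the constant $0.1$ excess error of Part 3.
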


\begin{remark}
Theorem~\ref{thm:signal_learning_main} demonstrates that the training loss converges to $\epsilon$ within $ \tilde{O}(\eta^{-1}\epsilon^{-1}mnd^{-1}\sigma_p^{-2})$ iterations. Moreover, when the training loss converges, the model can achieve optimal test error if the signal-to-noise ratio is large. However, if the signal-to-noise ratio is small, the model will experience a test error that is at least a constant worse than the Bayes risk. The threshold for this distinction is determined by the condition $n\|\bmu\|_{2}^{4}=\Theta(\sigma_p^4d)$. In addition to the results mentioned in Theorem~\ref{thm:signal_learning_main}, it is important to emphasize that the second and third bullet points regarding the test error also hold true for training time $t$ that is greater than $\tilde{O}(\eta^{-1}\epsilon^{-1}mnd^{-1}\sigma_p^{-2})$, but smaller than the maximum allowable iterations $T^{*} = \eta^{-1}\poly(\epsilon^{-1}, d, n, m)$.
\end{remark}

\textbf{Comparison with prior works.} Although Theorem \ref{thm:signal_learning_main} and Theorem 3.1 in \citet{frei2022benign} both show that the network achieves arbitrarily small logistic loss, and simultaneously achieves test error close to the noise rate, our results differ from \citet{frei2022benign} since \citet{frei2022benign} considered a neural network with \emph{smoothed leaky ReLU} activation, while we consider the ReLU activation which is not smooth. 
Besides, to obtain a training error smaller than $\epsilon$, \citet{frei2022benign} needed $O(\epsilon^{-2})$ iterations, whereas our results only require $O(\epsilon^{-1})$ iterations. 
In contrast to \citet{cao2022benign} which studied 
a CNN model with ReLU$^q (q>2)$ activation function and without label noise, 
our setting is more practical as we work with ReLU activation, take label-flipping noise into consideration, and also remove the orthogonal assumption between the signal patch and the noise patch. 
Because of the label-flipping noise, it is more natural to
evaluate generalization performance by comparing 
the test error with the Bayes optimal classifier.
This is why our Theorem~\ref{thm:signal_learning_main} provides test error bounds while \citet{cao2022benign} provided test loss bounds and despite the difference both our results and theirs present exact phase transition conditions. 

\section{Overview of Proof Techniques} \label{sec:proof}
In this section, we discuss the main challenges in
studying 
benign overfitting under our setting, and explain some key techniques we implement in our proofs to overcome these challenges. Based on these techniques, the proof of our main Theorem~\ref{thm:signal_learning_main} will follow naturally. The complete proofs of all the results are given in the appendix.




\subsection{Key Technique 1: Time-invariant Coefficient Ratio}


Our first main challenge is dealing with the ReLU activation function, i.e., $\sigma(z) = \max\{0, z\}$. This is one of the most common and widely used activation functions, but as we explain below, it is also hard to analyze. The key difficulty in establishing benign overfitting guarantees is demonstrating that the neural network can interpolate the data.  \citet{frei2022benign} adopted the smoothness-based convergence proof technique proposed in \citet{frei2021proxy}. This technique requires the activation function to be strictly increasing and smooth, therefore it cannot be applied to the ReLU activation function. \citet{cao2022benign} provides an iterative analysis of the coefficients in the signal-noise decomposition, which is given in the following definition. 
\begin{definition}\label{def:w_decomposition}
Let $\wb_{j,r}^{(t)}$ for $j\in \{\pm 1\}$, $r \in [m]$ be the convolution filters of the CNN at the $t$-th iteration of gradient descent. Then there exist unique coefficients $ \gamma_{j,r}^{(t)} $ and $\rho_{j,r,i}^{(t)}$ such that 
\begin{align*}
    \wb_{j,r}^{(t)} = \wb_{j,r}^{(0)} + j \cdot \gamma_{j,r}^{(t)} \cdot \| \bmu \|_2^{-2} \cdot \bmu + \sum_{ i = 1}^n \rho_{j,r,i}^{(t) }\cdot \| \bxi_i \|_2^{-2} \cdot \bxi_{i}.
\end{align*}
Further denote $\zeta_{j,r,i}^{(t)} := \rho_{j,r,i}^{(t)}\ind(\rho_{j,r,i}^{(t)} \geq 0)$, $\omega_{j,r,i}^{(t)} := \rho_{j,r,i}^{(t)}\ind(\rho_{j,r,i}^{(t)} \leq 0)$. Then 
\begin{equation}
    \begin{aligned} \label{eq:w_decomposition}
        \wb_{j,r}^{(t)} &= \wb_{j,r}^{(0)} + j \cdot \gamma_{j,r}^{(t)} \cdot \| \bmu \|_2^{-2} \cdot \bmu \\
        &\qquad+ \sum_{ i = 1}^n \zeta_{j,r,i}^{(t) }\cdot \| \bxi_i \|_2^{-2} \cdot \bxi_{i} + \sum_{ i = 1}^n \omega_{j,r,i}^{(t) }\cdot \| \bxi_i \|_2^{-2} \cdot \bxi_{i} .
    \end{aligned}
\end{equation}
\end{definition}


\eqref{eq:w_decomposition} is called the \textit{signal-noise decomposition} of $\wb_{j,r}^{(t)}$ where the normalization factors $\|\bmu\|_{2}^{-2}, \|\bxi_{i}\|_{2}^{-2}$ are to ensure that $\gamma_{j,r}^{(t)} \approx \la \wb_{j,r}^{(t)}, j \bmu \ra$, $\rho_{j,r,i}^{(t)} \approx \la \wb_{j,r}^{(t)}, \bxi_{i} \ra$. With Definition \ref{def:w_decomposition}, one can reduce the study of the CNN learning process to a careful assessment of the coefficients $\gamma_{j,r}^{(t)}$, $\zeta_{j,r,i}^{(t)}$, $\omega_{j,r,i}^{(t)}$ throughout training. This technique does not rely on the strictly increasing and smoothness properties of the activation function and will act as the basis of our analysis. However, \citet{cao2022benign} only characterized the behavior of the leading 
neurons by studying 
$\max_{r}\gamma_{j,r}^{(t)}$, $\max_{r}\zeta_{j,r,i}^{(t)}$. To guarantee that the leading neuron can dominate other neurons after training, 
they require neurons with different initial weights to have different update speeds, which is guaranteed thanks to the activation function ReLU$^q$ with $q > 2$. 
But the ReLU function is piece-wise linear, and every activated neuron has the same learning speed $\sigma'(x) = 1$. 
Therefore dealing with ReLU requires new techniques. 


To overcome 
this difficulty, 
we propose a \emph{time-invariant coefficient ratio} analysis which generalizes \citet{cao2022benign}'s technique. The key lemma is presented as follows, which characterizes the coefficient orders at any time $t \leq T^{*}$ and 
helps derive the second and third parts 
of Theorem \ref{thm:signal_learning_main} on the upper and lower bounds of test error.

\begin{proposition}\label{Proposition: range}
Under Condition \ref{condition:d_sigma0_eta}, the following bounds hold for $t\in[0,T^*]$:
\begin{itemize}[leftmargin=*]
    \item $\zeta_{j,r,i}^{(t)}$ is an increasing sequence. Besides, $0\leq\zeta_{j,r,i}^{(t)}\leq4\log(T^*)$ for all $j\in\{\pm1\}$, $r\in[m]$ and $i\in[n]$.
    \item $\omega_{j,r,i}^{(t)}$ is a decreasing sequence. Besides, $-4\log(T^*)\leq -2\max_{i,j,r}\{|\la\wb_{j,r}^{(0)},\bmu\ra|,|\la\wb_{j,r}^{(0)},\bxi_i\ra|\}-10n \sqrt{\log(6n^2/\delta)/d}\cdot4\log(T^{*}) \leq \omega_{j,r,i}^{(t)} \leq 0$ for all $j\in\{\pm1\}$, $r\in[m]$ and $i\in[n]$.
    \item $\gamma_{j,r}^{(t)}$ is a strictly increasing sequence. Besides, $\gamma_{j, r}^{(t)}=\Theta(\|\bmu\|_{2}^{2}/(d\sigma_p^2))\sum_{i=1}^{n} \zeta_{j, r, i}^{(t)} $ for all $j\in\{\pm1\}$ and $r\in[m]$.
\end{itemize}
\end{proposition}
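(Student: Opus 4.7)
\textbf{Proof proposal for Proposition \ref{Proposition: range}.} My plan is to prove all three claims simultaneously by induction on $t \in [0, T^*]$. The starting point is to project the gradient descent update \eqref{eq:gdupdate} onto $\bmu$ and each $\bxi_i$ to derive closed-form scalar recursions for $\gamma_{j,r}^{(t)}$, $\zeta_{j,r,i}^{(t)}$ and $\omega_{j,r,i}^{(t)}$. Because $\sigma'(z)=\mathbf{1}\{z\ge 0\}$ is constant on each half-line, the signal-coefficient update takes the form
\begin{equation*}
\gamma_{j,r}^{(t+1)}=\gamma_{j,r}^{(t)}-\frac{\eta\|\bmu\|_2^2}{nm}\sum_{i=1}^n \ell_i'^{(t)}\,\sigma'(\langle \wb_{j,r}^{(t)},\hat y_i\bmu\rangle)\,\hat y_i y_i,
\end{equation*}
and the noise-coefficient update has the analogous form driven by $jy_i$ and $\sigma'(\langle \wb_{j,r}^{(t)},\bxi_i\rangle)$, up to small off-diagonal errors of order $\langle\bxi_i,\bxi_{i'}\rangle/\|\bxi_{i'}\|_2^2$.

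The inductive hypothesis I would carry forward at iteration $t$ has three parts: (i) the bounds stated in the proposition; (ii) an approximate-orthogonality statement $|\langle \wb_{j,r}^{(t)},\bxi_i\rangle - \langle \wb_{j,r}^{(0)},\bxi_i\rangle - \rho_{j,r,i}^{(t)}|\le \tilde O(n\sqrt{\log(1/\delta)/d})\cdot\log(T^*)$, which is what the extra term in the lower bound for $\omega_{j,r,i}^{(t)}$ is tracking; and (iii) an activation-consistency statement ensuring that once a neuron is switched on by its ``matched'' input (signal with $j=\hat y_i$, or the noise $\bxi_i$ whose $\zeta_{j,r,i}^{(t)}$ is already positive), it stays on. The concentration bounds $\|\bxi_i\|_2^2=\Theta(d\sigma_p^2)$ and $|\langle\bxi_i,\bxi_{i'}\rangle|=\tilde O(\sigma_p^2\sqrt d)$ for $i\neq i'$, together with the width and dimension bounds in Condition~\ref{condition:d_sigma0_eta}, make the cross-terms in (ii) negligible compared to the diagonal contributions, so (ii) and (iii) propagate from step $t$ to step $t+1$ without difficulty.

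Monotonicity of $\zeta_{j,r,i}^{(t)}$ then follows from $-\ell_i'^{(t)}>0$ and the sign structure of the noise update: whenever $\rho_{j,r,i}^{(t)}\ge 0$, its increment inherits the sign of $jy_i$ aligned with $\bxi_i$, hence $\zeta$ only grows; the symmetric argument with the sign reversed gives monotonicity of $\omega_{j,r,i}^{(t)}$. For the $4\log(T^*)$ cap on $\zeta_{j,r,i}^{(t)}$, I would use a stopping-time argument: if $\zeta_{j,r,i}^{(t)}$ ever reaches $2\log(T^*)$, then by the decomposition \eqref{eq:w_decomposition} we get $y_i f(\Wb^{(t)},\xb_i)\gtrsim \log(T^*)$, which forces $|\ell_i'^{(t)}|\le 1/T^*$. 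Since the per-step increment of $\zeta$ is bounded by $\eta|\ell_i'^{(t)}|/m$, the remaining at most $T^*$ iterations can only contribute an additional $O(1)\le 2\log(T^*)$. The matching lower bound on $\omega$ is the same argument mirrored, with the extra Gaussian-initialization and cross-noise term accounting for slack in identity~(ii).

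The main obstacle, and the genuinely new part of the argument, will be the time-invariant ratio $\gamma_{j,r}^{(t)}=\Theta(\|\bmu\|_2^2/(d\sigma_p^2))\sum_i \zeta_{j,r,i}^{(t)}$. The reason this is delicate under ReLU with label-flipping noise is that the signal update for a clean sample ($\hat y_i y_i=+1$) pushes $\gamma_{j,r}$ in the same direction as $\zeta_{j,r,i}$, but for a noisy sample ($\hat y_i y_i=-1$) the two pushes fight each other; meanwhile all activated neurons have identical slope $1$, so one cannot separate leading from lagging neurons as in \citet{cao2022benign}. My plan is to couple the $\gamma_{j,r}^{(t)}$ and $\sum_i \zeta_{j,r,i}^{(t)}$ recursions term by term: using activation-consistency (iii) to align the indicators and Hoeffding concentration on the sign pattern $\hat y_i y_i$ with $p\le 1/C$ from Condition~\ref{condition:d_sigma0_eta} to show the clean samples dominate by a definite constant, I get matching upper and lower bounds on the increment ratio equal to $\|\bmu\|_2^2/\|\bxi_i\|_2^2=\Theta(\|\bmu\|_2^2/(d\sigma_p^2))$. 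Summing telescopically from $t=0$ gives the claimed $\Theta(\cdot)$ identity and, combined with the positivity of the dominant increment, the strict monotonicity of $\gamma_{j,r}^{(t)}$.
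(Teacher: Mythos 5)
Your scaffolding matches the paper closely in several places: the iterative expressions for $\gamma_{j,r}^{(t)},\zeta_{j,r,i}^{(t)},\omega_{j,r,i}^{(t)}$, the approximate-orthogonality estimate in your item~(ii) (this is Lemma~\ref{lm: inner product range}), the activation-consistency claim in~(iii) (the paper's $S_{i}^{(0)}\subseteq S_{i}^{(t)}$ and $S_{j,r}^{(0)}\subseteq S_{j,r}^{(t)}$), and the stopping-time argument for the $4\log(T^*)$ cap on $\zeta$. These parts of your plan would go through.

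The genuine gap is in the handling of label noise for the $\gamma_{j,r}^{(t)}$ claims. You propose to ``use Hoeffding concentration on the sign pattern $\hat y_i y_i$ with $p\le 1/C$ to show the clean samples dominate by a definite constant.'' Counting is not enough. The increment of $\gamma_{j,r}^{(t)}$ is
\[
\gamma_{j,r}^{(t+1)}-\gamma_{j,r}^{(t)}
= -\frac{\eta\|\bmu\|_2^2}{nm}\bigg[\sum_{i\in S_+}\ell_i'^{(t)}\sigma'(\la\wb_{j,r}^{(t)},\hat y_i\bmu\ra)-\sum_{i\in S_-}\ell_i'^{(t)}\sigma'(\la\wb_{j,r}^{(t)},\hat y_i\bmu\ra)\bigg],
\]
so even if $|S_+|\gg|S_-|$, the noisy samples can still win if their $|\ell_i'^{(t)}|$ are much larger than those of the clean samples. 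The paper's central device (Lemma~\ref{lemma:zeta difference}, then Lemma~\ref{lemma: logit ratio bound}) is precisely to rule this out: it shows $\sum_r\zeta_{y_i,r,i}^{(t)}-\sum_r\zeta_{y_k,r,k}^{(t)}\le\kappa$ for all $i,k$, which via the decomposition of $y_i f(\Wb^{(t)},\xb_i)$ implies $\ell_i'^{(t)}/\ell_k'^{(t)}\le C_2$. Only with this ratio bound, together with the counting bounds, can one conclude that the clean contribution dominates termwise and that $\gamma_{j,r}^{(t)}$ is strictly increasing. The same issue recurs in your increment-ratio coupling for the $\Theta(\|\bmu\|_2^2/(d\sigma_p^2))$ identity: to compare the increment of $\gamma_{j,r}^{(t)}$ (which sums over all $n$ samples) against the increment of $\sum_i\zeta_{j,r,i}^{(t)}$ (which involves roughly $n/2$ samples with $y_i=j$), you must relate $\max_i|\ell_i'^{(t)}|$ to $\min_i|\ell_i'^{(t)}|$, and your proposal gives no mechanism to do so. Establishing the logit-ratio bound is itself a separate self-reinforcing induction (the ``small difference doesn't grow much; large difference contracts'' dichotomy in the paper), and it needs to be built in alongside your items (i)--(iii) in the inductive hypothesis, not assumed as a byproduct of concentration on the data.
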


In 
Definition~\ref{def:w_decomposition}, 
$\gamma_{j,r}^{(t)}$ characterizes the progress of learning the signal vector $\bmu$, and $\rho_{j,r,i}^{(t)}$ characterizes the degree of noise memorization by the filter. The first and second bullets in Proposition~\ref{Proposition: range} tell us that for any iteration $t$, 
the degree of noise memorization $\zeta_{j,r,i}^{(t)}, \omega_{j,r,i}^{(t)}$ are bounded by a logarithmic order of total epochs $T^*$. In particular, when $T^{*} = \eta^{-1}\poly (\epsilon^{-1}, d, n, m) $, $\zeta_{j,r,i}^{(t)}, \omega_{j,r,i}^{(t)} = \tilde{O}(1)$. The third bullet in Proposition~\ref{Proposition: range} is the major improvement of our technique compared to \citet{cao2022benign}. It shows that $\gamma_{j,r}^{(t)}$ is strictly increasing, indicating that the CNN will learn the signal $\bmu$ 
despite 
label-flipping noise. Besides, 
the order of the coefficient ratio $\gamma_{j,r}^{(t)}/(\sum_{i=1}^{n} \zeta_{j, r, i}^{(t)})$ is time-invariant.  
When the signal strength $\| \bmu \|_2$ is large compared to the noise variance $\sqrt{d} \sigma_p$, the neurons tend to learn the signal. When $\| \bmu \|_2$ is small compared to $\sqrt{d} \sigma_p$, the neurons tend to learn the noises. By the time-invariant coefficient ratio technique, we can characterize the behavior of all the neurons during training, which enables us to deal with the ReLU activation function.

To prove the third bullet, we need to characterize the activation pattern of $\sigma(\la\wb_{j,r}^{(t)},\bxi_i\ra)$. Observing that the increment of $\sum_{i}\zeta_{i,j,r}^{(t)}$ is scaled by $\sum_i\sigma'(\la\wb_{j,r}^{(t)},\bxi_i\ra)$, for any weight $\wb_{j,r}^{(t)}$, we consider the set sequence $\{S_{j,r}^{(t)}\}_{t=0}^{T^*}$, where $S_{j,r}^{(t)}$ is defined as $\{i|y_i=j,\la\wb_{j,r}^{(t)},\bxi_i\ra>0\}$. We show that this is an increasing set sequence throughout the training, leading to $|S_{j,r}^{(t)}|=\Theta(n)$. 
This intuitively means that for a given sample, once a neuron is activated by the noise patch, it will remain activated throughout training even though the weights of the neuron are updated by gradient descent. 
Applying this finding to \eqref{iterative equation2} and \eqref{iterative equation3}, it follows directly that the increment ratio of $\gamma_{j,r}^{(t)}$ and $\sum_i\zeta_{j,r,i}^{(t)}$ will always remain $\Theta(\|\bmu\|_2^2/(\sigma_p^2 d))$.


\subsection{Key Technique 2: Automatic Balance of Coefficient Updates}

Our second main challenge is dealing with label-flipping noise. Empirical studies found that over-parameterized neural networks can generalize well when trained on data with label noise \citep{belkin2019reconciling, zhang2021understanding}, 
which is in conflict with the long-standing theories of statistical learning. 
To fit corrupted data 
with signal $-y\bmu^{\top}$ and noise $\bxi$, 
the neural network weights must capture the random noise $\bxi$, which harms generalization. Even worse, 
label-flipping noise may trick the learner into capturing the adversarial signal $-\bmu$ rather than $\bmu$. 
Let us investigate the update rule of the coefficient $\gamma_{j,r}, \zeta_{j,r,i}, \omega_{j,r,i}$.

\begin{lemma}\label{lemma:coefficient_iterative_proof}
The coefficients $\gamma_{j,r}^{(t)},\zeta_{j,r,i}^{(t)},\omega_{j,r,i}^{(t)}$ defined in Definition~\ref{def:w_decomposition} satisfy the following iterative equations:
\begin{align}
    &\gamma_{j,r}^{(0)},\zeta_{j,r,i}^{(0)},\omega_{j,r,i}^{(0)} = 0,\label{iterative equation1}\\
    &\gamma_{j,r}^{(t+1)} = \gamma_{j,r}^{(t)} - \frac{\eta}{nm} \cdot \bigg[\sum_{i\in S_{+}} \ell_i'^{(t)} \sigma'(\la\wb_{j,r}^{(t)}, \hat{y}_{i} \cdot \bmu\ra)\notag\\
    &\qquad\qquad- \sum_{i\in S_{-}} \ell_i'^{(t)} \sigma'(\la\wb_{j,r}^{(t)}, \hat{y}_{i} \cdot \bmu\ra)\bigg]  \cdot \| \bmu \|_2^2, \label{iterative equation2}\\
    &\zeta_{j,r,i}^{(t+1)} = \zeta_{j,r,i}^{(t)} - \frac{\eta}{nm} \cdot \ell_i'^{(t)}\cdot \sigma'(\la\wb_{j,r}^{(t)}, \bxi_{i}\ra) \cdot \| \bxi_i \|_2^2\notag\\
    &\qquad\qquad\qquad\qquad\qquad\qquad\qquad\cdot \ind(y_{i} = j), \label{iterative equation3}\\
    &\omega_{j,r,i}^{(t+1)} = \omega_{j,r,i}^{(t)} + \frac{\eta}{nm} \cdot \ell_i'^{(t)}\cdot \sigma'(\la\wb_{j,r}^{(t)}, \bxi_{i}\ra)\cdot \| \bxi_i \|_2^2\notag\\
    &\qquad\qquad\qquad\qquad\qquad\qquad\qquad\cdot \ind(y_{i} = -j),\label{iterative equation4}
\end{align}
for all $r\in [m]$,  $j\in \{\pm 1\}$ and $i\in [n]$, where $S_{+}:=\{i\in[n]|y_i=\hat{y}_i\}$ and $S_{-}:=\{i\in[n]|y_i\neq\hat{y}_i\}$.
\end{lemma}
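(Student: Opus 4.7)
The plan is a straightforward induction on $t$, using the uniqueness part of Definition~\ref{def:w_decomposition} to match coefficients direction-by-direction. For the base case \eqref{iterative equation1}, the trivial expansion $\wb_{j,r}^{(0)} = \wb_{j,r}^{(0)} + 0\cdot\|\bmu\|_2^{-2}\bmu + \sum_i 0\cdot\|\bxi_i\|_2^{-2}\bxi_i$ is a valid signal-noise decomposition, so by uniqueness $\gamma_{j,r}^{(0)} = \rho_{j,r,i}^{(0)} = 0$, and hence $\zeta_{j,r,i}^{(0)} = \omega_{j,r,i}^{(0)} = 0$ as well.

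For the inductive step, I would assume \eqref{eq:w_decomposition} holds at step $t$ and substitute the gradient descent update \eqref{eq:gdupdate} into the left-hand side at step $t+1$. The update \eqref{eq:gdupdate} decomposes naturally into two kinds of directional contributions: a single $\bmu$-contribution $-\tfrac{\eta}{nm}\sum_{i} \ell_i'^{(t)}\sigma'(\la\wb_{j,r}^{(t)},\hat{y}_i\bmu\ra)\hat{y}_i y_i j\,\bmu$, and for each $i$ a separate $\bxi_i$-contribution $-\tfrac{\eta}{nm}\ell_i'^{(t)}\sigma'(\la\wb_{j,r}^{(t)},\bxi_i\ra)jy_i\,\bxi_i$. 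Comparing the $\bmu$-component on both sides gives $\gamma_{j,r}^{(t+1)} - \gamma_{j,r}^{(t)} = -\tfrac{\eta\|\bmu\|_2^2}{nm}\sum_i \ell_i'^{(t)}\sigma'(\la\wb_{j,r}^{(t)},\hat{y}_i\bmu\ra)\hat{y}_i y_i$, and splitting the sum by $\hat{y}_i y_i = +1$ (i.e. $i\in S_+$) versus $\hat{y}_i y_i = -1$ (i.e. $i\in S_-$) yields \eqref{iterative equation2} exactly.

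Matching the $\bxi_i$-component gives $\rho_{j,r,i}^{(t+1)} - \rho_{j,r,i}^{(t)} = -\tfrac{\eta\|\bxi_i\|_2^2}{nm}\ell_i'^{(t)}\sigma'(\la\wb_{j,r}^{(t)},\bxi_i\ra)jy_i$. Because $\ell'(z) = -1/(1+e^z) < 0$ and $\sigma' \ge 0$, the sign of this increment equals the sign of $jy_i$. A short side-induction then shows $\rho_{j,r,i}^{(t)} \ge 0$ whenever $y_i = j$ and $\rho_{j,r,i}^{(t)} \le 0$ whenever $y_i = -j$: indeed, the sign of every increment is constant in $t$ and agrees with the sign that accumulates. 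Consequently the indicator split in Definition~\ref{def:w_decomposition} routes the increment entirely into $\zeta_{j,r,i}^{(t)}$ when $y_i = j$ and entirely into $\omega_{j,r,i}^{(t)}$ when $y_i = -j$, producing \eqref{iterative equation3} and \eqref{iterative equation4} respectively.

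The main obstacle is the coefficient-matching step: it relies on the uniqueness of the decomposition stated in Definition~\ref{def:w_decomposition}, which in turn needs $\bmu, \bxi_1,\dots,\bxi_n$ to be linearly independent in $\RR^d$. This is where the relaxation of the orthogonality assumption in \citet{cao2022benign} shows up. Because $\bmu \not\perp \bxi_i$ in general, one cannot simply project onto $\bmu$ to isolate $\gamma_{j,r}^{(t)}$; instead, one must invoke a high-probability linear-independence event that follows from Condition~\ref{condition:d_sigma0_eta} (in particular $d \gtrsim n^2 \log(nm/\delta)(\log T^*)^2 \gg n$). This preliminary fact should be stated as a lemma before carrying out the induction.
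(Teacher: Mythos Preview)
Your proposal is correct and follows essentially the same route as the paper: match the gradient-descent increment \eqref{eq:gdupdate} against the decomposition \eqref{eq:w_decomposition}, read off the $\bmu$-coefficient and each $\bxi_i$-coefficient by uniqueness, split the $\bmu$ part via $\hat y_i y_i=\pm1$, and route the $\bxi_i$ part into $\zeta$ or $\omega$ using the fixed sign of $\rho_{j,r,i}^{(t)}$ (since $\ell'<0$). The paper unrolls the recursion to time $t$ and then passes to the one-step form, while you do a direct induction; these are equivalent.

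One minor over-complication: you frame linear independence of $\bmu,\bxi_1,\dots,\bxi_n$ as a high-probability event needing the strong bound $d\gtrsim n^2\log(nm/\delta)(\log T^*)^2$. In fact (and this is what the paper uses) it holds with probability $1$ as soon as $d\ge n+1$, because each Gaussian $\bxi_i$ almost surely avoids the at most $i$-dimensional subspace $\mathrm{span}(\bmu,\bxi_1,\dots,\bxi_{i-1})$. So no separate preliminary lemma or appeal to Condition~\ref{condition:d_sigma0_eta} is needed for this step.
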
 
When there is no label-flipping noise, we can conclude that $S_- = \varnothing$ and the signal coefficient $\gamma_{j,r}^{(t)}$ is strictly increasing since $\ell_i'^{(t)}$ is strictly negative. This key observation plays an important role in the proof of \citet{cao2022benign}. Unfortunately, 
the presence of noisy labels introduces the presence of 
a negative 
term $\sum_{i\in S_{-}}\ell_i'^{(t)}\sigma'(\la\wb_{j,r}^{(t)},\hat{y}_i\cdot\bmu\ra)$. Therefore, we cannot conclude directly from formula \eqref{iterative equation1} whether $\gamma_{j,r}^{(t)}$ is increasing or not. If the gradient of losses $\ell_i'^{(t)}$ for (noisy) samples $i \in S_{-}$ are particularly large relative to the gradient of losses $\ell_i'^{(t)}$ for (clean) samples  $i \in S_{+}$, then indeed \eqref{iterative equation1} may fail to guarantee an increase of $\gamma_{j,r}^{(t)}$. In order to show that the neural networks can still learn signals while interpolating the noisy data $S_{-}$, we need more advanced and careful characterization of the learning process. 

To overcome the difficulty in dealing with label-flipping noise, we apply a key technique called \emph{automatic balance of coefficient updates}. As indicated in \eqref{iterative equation2}, if we can show that the loss gradients $\ell_i'^{(t)}$ are essentially `balanced' across all samples, i.e., $\ell_i'^{(t)}/\ell_k'^{(t)} \leq C, \forall i, j \in [n]$, then provided that the fraction of noisy labels is not too large, 
the effect of the noisy labels will be countered by clean labels, and one can eventually show that 
$\gamma_{j,r}^{(t)}$ is increasing. This provides motivation for our next lemma. 

\begin{lemma}\label{lemma: logit ratio bound}
Under Condition \ref{condition:d_sigma0_eta}, the following bounds hold for any $t\in[0,T^*]$: 
\begin{equation}\label{F difference}
    y_i\cdot f(\Wb^{(t)},\xb_i)-y_k\cdot f(\Wb^{(t)},\xb_k)\leq C_4,
\end{equation}
\begin{equation}\label{logit ratio}
    \ell_i'^{(t)}/\ell_k'^{(t)}\leq C_5,
\end{equation}
for any $i,k\in[n]$, where $C_4=\Theta(1)$ is a positive constant, $C_5=\exp(C_4)$, and $\ell_i'^{(t)}=\ell'(y_i f(\Wb^{(t)},\xb_i))$, $\ell_k'^{(t)}=\ell'(y_k f(\Wb^{(t)},\xb_k))$.
\end{lemma}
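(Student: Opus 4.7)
The plan is a single induction on $t$ that establishes \eqref{F difference} and \eqref{logit ratio} together with the bounds of Proposition~\ref{Proposition: range} as one simultaneous inductive bundle, since the three quantities are coupled in a chicken-and-egg manner: the coefficient bounds need balanced loss ratios, balanced loss ratios need a bounded logit gap, and the logit gap needs coefficient bounds. Inequality~\eqref{logit ratio} follows from \eqref{F difference} by direct manipulation of $\ell'(z) = -1/(1+\exp(z))$:
\[
\frac{\ell_i'^{(t)}}{\ell_k'^{(t)}} = \frac{1+\exp(y_k f(\Wb^{(t)},\xb_k))}{1+\exp(y_i f(\Wb^{(t)},\xb_i))} \leq \max\bigl\{1,\exp(y_k f(\Wb^{(t)},\xb_k) - y_i f(\Wb^{(t)},\xb_i))\bigr\} \leq \exp(C_4),
\]
using \eqref{F difference} with $i,k$ swapped in the last step, so I only need to establish \eqref{F difference}.

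For the base case $t=0$ all signal-noise coefficients vanish, so $y_i f(\Wb^{(0)},\xb_i)$ depends only on the Gaussian initialization through $\la \wb_{j,r}^{(0)}, \bmu\ra$ and $\la \wb_{j,r}^{(0)}, \bxi_i\ra$; standard Gaussian concentration yields $|\la \wb_{j,r}^{(0)}, \bmu\ra| = O(\sigma_0\|\bmu\|_2\sqrt{\log(m/\delta)})$ and $|\la \wb_{j,r}^{(0)}, \bxi_i\ra| = O(\sigma_0\sigma_p\sqrt{d})$, both far below $C_4$ under Condition~\ref{condition:d_sigma0_eta}(5), with probability at least $1-\delta$. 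For the inductive step I would expand $y_i f(\Wb^{(t)},\xb_i)$ via Definition~\ref{def:w_decomposition}, isolating a signal piece $\pm m^{-1}\sum_r \gamma_{\hat{y}_i,r}^{(t)}$ (plus when $i\in S_{+}$, minus when $i\in S_{-}$), a self-noise piece $m^{-1}\sum_r \zeta_{y_i,r,i}^{(t)}$ from ReLU acting on $\bxi_i$, and residual cross terms carrying factors $\la \bxi_i,\bxi_k\ra/\|\bxi_k\|_2^2 = O(\sqrt{\log(n/\delta)/d})$ and $\la\bmu,\bxi_i\ra/\|\bmu\|_2^2$; paired with the $\tilde{O}(1)$ coefficient bounds in Proposition~\ref{Proposition: range}, the residuals are $o(1)$ under Condition~\ref{condition:d_sigma0_eta}(1). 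Using \eqref{iterative equation2}--\eqref{iterative equation4} to bound the one-step increment of $y_i f(\Wb^{(t)},\xb_i) - y_k f(\Wb^{(t)},\xb_k)$, the inductive hypothesis $\ell_i'^{(t)}/\ell_k'^{(t)} \leq C_5$ combined with the monotonicity of the activation sets $S_{j,r}^{(t)}$ from Proposition~\ref{Proposition: range} forces the per-step change of the gap to be self-regulating, which is the mechanism behind the lemma's name.

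The main obstacle is the asymmetric case $i\in S_{-}$, $k\in S_{+}$: the two signal pieces then have opposite signs, so a priori the gap could be as large as $2m^{-1}\sum_r \gamma_{\hat{y}_i,r}^{(t)}$, which Proposition~\ref{Proposition: range} only controls at the $\tilde{O}(1)$ level rather than by a fixed constant. Closing the induction here requires a finer accounting: the noise-rate bound $p\leq 1/C$ in Condition~\ref{condition:d_sigma0_eta}(4) combined with the time-invariant ratio $\gamma_{j,r}^{(t)} = \Theta(\|\bmu\|_2^2/(d\sigma_p^2))\sum_i \zeta_{j,r,i}^{(t)}$ from Proposition~\ref{Proposition: range} implies that for a noisy sample $i\in S_{-}$ the self-noise coefficient $\zeta_{y_i,r,i}^{(t)}$ grows in lockstep with the signal coefficient $\gamma_{\hat{y}_k,r}^{(t)}$ driven by clean $k\in S_{+}$, so that the self-noise piece of $y_i f(\Wb^{(t)},\xb_i)$ absorbs the signal deficit within an $O(1)$ window. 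Once this quantitative balance is established, the induction closes at $t+1$ with an absolute constant $C_4$ independent of $t\leq T^*$, and \eqref{logit ratio} follows with $C_5 = \exp(C_4)$.
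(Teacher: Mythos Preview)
Your high-level plan—a simultaneous induction bundling the coefficient bounds of Proposition~\ref{Proposition: range} with \eqref{F difference} and \eqref{logit ratio}—matches the paper, and your reduction of \eqref{logit ratio} to \eqref{F difference} is correct. But the ``obstacle'' you identify and your proposed fix both miss the mark.

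The asymmetric case $i\in S_-$, $k\in S_+$ is not an obstacle. Under Condition~\ref{condition:d_sigma0_eta}(1), namely $d\geq Cn\sigma_p^{-2}\|\bmu\|_2^2\log(T^*)$, one has $\hat{\gamma}\alpha = n\|\bmu\|_2^2\sigma_p^{-2}d^{-1}\cdot 4\log(T^*)\leq 4/C$, so the bound $\gamma_{j,r}^{(t)}\leq C'\hat{\gamma}\alpha$ from Proposition~\ref{Proposition: range} already pins $\gamma$ at an arbitrarily small constant, not merely $\tilde O(1)$. The signal piece in $y_i f(\Wb^{(t)},\xb_i)-y_k f(\Wb^{(t)},\xb_k)$ is therefore $\leq 0.25$ outright (this is the paper's bound on $|I_2|$), with no appeal to the noise rate or any ``absorption'' by the self-noise coefficient. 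Also note your sign bookkeeping is off: for noisy $i$ the argument of ReLU on the signal patch is $\la\wb_{y_i,r},\hat{y}_i\bmu\ra\approx -\gamma_{y_i,r}<0$, so ReLU kills it—the signal piece is $0$, not $-\gamma$.

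The genuine work is controlling $\sum_r[\zeta_{y_i,r,i}^{(t)}-\zeta_{y_k,r,k}^{(t)}]$ (Lemma~\ref{lemma:zeta difference}), and here your sketch does not capture the mechanism. You propose to use the inductive hypothesis $\ell_i'^{(t)}/\ell_k'^{(t)}\leq C_5$ to force self-regulation, but $C_5$ is too coarse: plugging it into \eqref{zeta difference update} only gives a one-sided bound and does not stop the difference from drifting upward. The paper's argument is a two-case split. If the $\zeta$-difference is below $0.9\kappa$, the small step size $\eta$ prevents a jump above $\kappa$. If it exceeds $0.9\kappa$, then the approximation \eqref{ineq: output difference approximation} feeds back a \emph{refined} ratio $\ell_i'^{(t)}/\ell_k'^{(t)}\leq\exp(-0.36\kappa)$, which combined with $|S_i^{(t)}|/|S_k^{(t)}|\leq 2.5$ makes the update in \eqref{zeta difference update} strictly contractive. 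Finally, the activation-set monotonicity $S_i^{(0)}\subseteq S_i^{(t)}$ you cite from Proposition~\ref{Proposition: range} is not stated there; it is itself one of the five conditions proved in the same induction (it requires the loss-ratio bound to show the dominant diagonal term in the gradient update of $\la\wb_{y_i,r},\bxi_i\ra$ beats the cross terms).
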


The strategy of bounding $\ell_i'^{(t)}/\ell_k'^{(t)}$ is first proposed by \citet{chatterji2021finite} in studying linear classification and has later been extended to neural networks with smoothed leaky ReLU activation function \citep{frei2021provable, frei2022benign}. The main idea is that according to the property of logit function $\ell'(z)=-1/(1+\exp(z))$ that $\ell'(z_1)/\ell'(z_2)\approx\exp(z_2-z_1)$, to upper bound the ratio of $\ell_i'$ and $\ell_k'$, one only needs to bound the difference between $y_i f(\Wb^{(t)},\xb_i)$ and $y_k f(\Wb^{(t)},\xb_k)$. 
To further characterize 
this difference, 
the works above utilize the smoothness property, translating the function difference to the gradient difference $\nabla f(\Wb^{(t)},\xb_i)$ and $\nabla f(\Wb^{(t)},\xb_k)$. However, such a smoothness-based technique cannot be directly applied to ReLU neural networks. 

In this paper, we apply signal-noise decomposition and approximate $y_i f(\Wb^{(t)},\xb_i)$ by $\sum_{r}\zeta_{y_i,r,i}^{(t)}$ with a small approximation error for any $i\in[n]$. Therefore, Lemma \ref{lemma: logit ratio bound} can be further simplified into proving the following intermediate result.

\begin{lemma}\label{lemma:zeta difference}
Under Condition \ref{condition:d_sigma0_eta}, the following bounds hold for $t\in[0,T^*]$: 
\begin{equation}
    \sum_{r=1}^{m}\zeta_{y_i,r,i}^{(t)}-\sum_{r=1}^{m}\zeta_{y_k,r,k}^{(t)}\leq\kappa,\label{eq:break_down}
\end{equation}
for any $i,k\in[n]$, where $\kappa=\Theta(1)$ is a positive constant.
\end{lemma}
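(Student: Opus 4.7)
\textbf{Proof proposal for Lemma \ref{lemma:zeta difference}.}  My plan is to argue by induction on $t$, together with a ``first violation'' / contradiction argument that exploits the self-regulating nature of the update \eqref{iterative equation3}.  The base case $t=0$ is trivial since $\zeta_{j,r,i}^{(0)}=0$ for all $(j,r,i)$ by \eqref{iterative equation1}.  For the inductive step I would set $\kappa$ to be a constant strictly larger than the maximum one-step increment of the quantity $\sum_r \zeta_{y_i,r,i}^{(t)}$; since each single update is of order $\eta \sigma_p^2 d / n$ and Condition~\ref{condition:d_sigma0_eta} gives $\eta \leq C^{-1}n/(\sigma_p^2 d)$, this one-step increment is $O(1/C)$, so taking $\kappa = \Theta(1)$ sufficiently large (relative to the constants in Conditions~\ref{condition:d_sigma0_eta}) is admissible.

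Assume, for contradiction, that $t^*+1$ is the first iteration at which \eqref{eq:break_down} fails, witnessed by some pair $(i^*,k^*)$.  By \eqref{iterative equation3}, summing over $r$ and subtracting,
\[
\Delta^{(t^*+1)} - \Delta^{(t^*)} = -\frac{\eta}{nm}\Bigl[\ell_{i^*}'^{(t^*)} \,\|\bxi_{i^*}\|_2^{2}\, |S_{y_{i^*},i^*}^{(t^*)}| - \ell_{k^*}'^{(t^*)} \,\|\bxi_{k^*}\|_2^{2}\, |S_{y_{k^*},k^*}^{(t^*)}|\Bigr],
\]
where $\Delta^{(t)} := \sum_r \zeta_{y_{i^*},r,i^*}^{(t)} - \sum_r \zeta_{y_{k^*},r,k^*}^{(t)}$ and $S_{j,i}^{(t)}:= \{r : \langle \wb_{j,r}^{(t)},\bxi_i\rangle > 0\}$.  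Two auxiliary ingredients, each a consequence of Condition~\ref{condition:d_sigma0_eta} together with Proposition~\ref{Proposition: range} and standard Gaussian concentration, then control the right-hand side: (a) norm concentration gives $\|\bxi_i\|_2^{2}=(1\pm o(1))\sigma_p^{2}d$ uniformly in $i$; and (b) the activation-pattern monotonicity described after Proposition~\ref{Proposition: range} (once a noise patch activates a neuron with $j=y_i$, it stays activated) yields $|S_{y_i,i}^{(t)}| = \Theta(m)$ uniformly in $i$ and $t$.  Combining (a)--(b), the one-step increment of $\Delta^{(t)}$ reduces, up to a $(1\pm o(1))$ factor, to $\tfrac{\eta}{n}\sigma_p^{2}d\cdot\bigl(|\ell_{i^*}'^{(t^*)}|-|\ell_{k^*}'^{(t^*)}|\bigr)$.

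To show this last factor is non-positive at $t^*$, I would use the signal-noise decomposition \eqref{eq:w_decomposition} to approximate margins by $\zeta$-coefficients:  the pieces $\gamma_{j,r}^{(t)}$ contribute a term that is approximately the same across $i,k$ (it depends only on $y_i,\hat y_i$, of which there are four combinations whose contributions can be tracked directly), the initialization inner products $\langle \wb_{j,r}^{(0)},\bxi_i\rangle$ and $\langle \wb_{j,r}^{(0)},\bmu\rangle$ are $o(1)$ by the bound on $\sigma_0$ in Condition~\ref{condition:d_sigma0_eta}, and the cross-noise terms $\rho_{j,r,i'}^{(t)}\langle \bxi_{i'},\bxi_i\rangle/\|\bxi_{i'}\|_2^{2}$ with $i'\neq i$ are $o(1)$ because $\langle\bxi_{i'},\bxi_i\rangle = \widetilde O(\sigma_p^{2}\sqrt d)$ while $\|\bxi_{i'}\|_2^{2}=\Theta(\sigma_p^{2}d)$ and each $|\rho_{j,r,i'}^{(t)}|=\tilde O(1)$ by Proposition~\ref{Proposition: range}.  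Thus $y_{i^*}f(\Wb^{(t^*)},\xb_{i^*}) - y_{k^*}f(\Wb^{(t^*)},\xb_{k^*}) \geq m^{-1}\Delta^{(t^*)} - o(1)$; combined with the inductive hypothesis $\Delta^{(t^*)} > \kappa - O(\eta\sigma_p^{2}d/n)$ and the monotonicity of $z\mapsto |\ell'(z)|$, this forces $|\ell_{i^*}'^{(t^*)}| \leq |\ell_{k^*}'^{(t^*)}|$, hence $\Delta^{(t^*+1)} \leq \Delta^{(t^*)}\leq\kappa$, the desired contradiction.

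\textbf{Main obstacle.}  The hardest part is closing the approximation $y_i f(\Wb^{(t)},\xb_i)\approx m^{-1}\sum_r \zeta_{y_i,r,i}^{(t)}$ uniformly in $i$ and $t\leq T^*$.  Controlling the four contributions (initialization, signal-learning coefficients $\gamma_{j,r}^{(t)}$, same-index noise $\zeta,\omega$ terms, and cross-noise terms) requires that all the auxiliary bounds in Proposition~\ref{Proposition: range} already be available \emph{before} we establish Lemma~\ref{lemma:zeta difference}, so in the full proof I would fold this lemma into the same joint induction that establishes Proposition~\ref{Proposition: range}, carrying $\Delta^{(t)}\leq\kappa$ as one of the inductive invariants along with the $\zeta,\omega,\gamma$ bounds, rather than proving the two statements sequentially.
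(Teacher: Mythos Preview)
Your overall strategy matches the paper's: fold \eqref{eq:break_down} into the same joint induction that carries Proposition~\ref{Proposition: range} (the paper does exactly this in Lemma~\ref{lm: balanced logit}, carrying the $\zeta$-difference bound, the margin bound, the loss-ratio bound, and the activation-pattern monotonicity together), approximate margins by $m^{-1}\sum_r\zeta_{y_i,r,i}^{(t)}$, and exploit the monotonicity of $|\ell'|$. Your identification of the main obstacle is also right.

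There is, however, a genuine quantitative gap in your contradiction step. You claim that combining (a) and (b) reduces the one-step increment ``up to a $(1\pm o(1))$ factor'' to $\tfrac{\eta}{n}\sigma_p^{2}d\bigl(|\ell_{i^*}'|-|\ell_{k^*}'|\bigr)$, and then conclude that $|\ell_{i^*}'|\le|\ell_{k^*}'|$ forces $\Delta^{(t^*+1)}\le\Delta^{(t^*)}$. But (b) only gives $|S_{y_i,i}^{(t)}|\in[0.4m,\,m]$: the ratio $|S_{y_{i^*},i^*}^{(t^*)}|/|S_{y_{k^*},k^*}^{(t^*)}|$ can be as large as $2.5$, not $1+o(1)$. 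The actual increment is
\[
\frac{\eta}{nm}\Bigl(|S_{y_{i^*},i^*}^{(t^*)}|\,|\ell_{i^*}'|\,\|\bxi_{i^*}\|_2^2-|S_{y_{k^*},k^*}^{(t^*)}|\,|\ell_{k^*}'|\,\|\bxi_{k^*}\|_2^2\Bigr),
\]
so non-increase requires $|\ell_{i^*}'|/|\ell_{k^*}'|$ to be strictly below a constant (roughly $0.4$), not merely $\le 1$. A related slip: the margin-approximation error is not $o(1)$ but a fixed absolute constant (the paper obtains $1.75$), since $F_{-y_i}(\Wb_{-y_i}^{(t)},\xb_i)\le 0.5$ and the $\gamma$-contribution is $O(\hat\gamma\alpha)$ under Condition~\ref{condition:d_sigma0_eta}; neither term vanishes asymptotically.

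The paper closes the gap with a two-threshold split rather than a pure first-violation cutoff: if $\Delta^{(t)}\le 0.9\kappa$, the crude one-step bound $\tfrac{\eta}{n}\|\bxi_i\|_2^2\le 0.1\kappa$ (from the learning-rate condition) keeps $\Delta^{(t+1)}\le\kappa$; if $\Delta^{(t)}>0.9\kappa$, the margin gap is at least $0.9\kappa-1.75=0.36\kappa$ (with $\kappa=3.25$), whence $|\ell_{i^*}'|/|\ell_{k^*}'|\le\exp(-0.36\kappa)$, and $2.5\exp(-0.36\kappa)<0.8$ beats both the $|S_i|/|S_k|$ slack and the norm ratio, giving $\Delta^{(t+1)}\le\Delta^{(t)}$. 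Dropping this refinement into your joint-induction framework completes the argument.
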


Note that \eqref{eq:break_down} is much easier to deal with than \eqref{F difference} because we can directly use the iterative analysis of \eqref{iterative equation3}, which leads to the update rule:
\begin{equation}
\begin{aligned}
    &\sum_{r=1}^{m}[\zeta_{y_i,r,i}^{(t+1)}-\zeta_{y_k,r,k}^{(t+1)}]=\sum_{r=1}^{m}[\zeta_{y_i,r,i}^{(t)}-\zeta_{y_k,r,k}^{(t)}]\\
    &\qquad-\frac{\eta}{nm}\cdot \big(|S_i^{(t)}|\ell_i'^{(t)}\|\bxi_i\|_2^2-|S_k^{(t)}|\ell_k'^{(t)}\|\bxi_i\|_2^2 \big),
\end{aligned}\label{zeta difference update}
\end{equation}
where $S_i^{(t)}=\{r\in[m]:\la\wb_{y_i,r}^{(t)},\bxi_i\ra\geq0\}, i\in[n]$. Now, we consider two cases: \begin{itemize}[leftmargin=*]
    \item If $\sum_{r=1}^{m}\zeta_{y_i,r,i}^{(t)}-\sum_{r=1}^{m}\zeta_{y_k,r,k}^{(t)}$ is relatively small, 
    we will show that $\sum_{r=1}^{m}\zeta_{y_i,r,i}^{(t+1)}-\sum_{r=1}^{m}\zeta_{y_k,r,k}^{(t+1)}$ will not grow too much for small enough step-size $\eta$.
    \item If $\sum_{r=1}^{m}\zeta_{y_i,r,i}^{(t)}-\sum_{r=1}^{m}\zeta_{y_k,r,k}^{(t)}$ is relatively large, then it will cause $\ell_i^{(t)}/\ell_k^{(t)}$ to contract because $\ell_i^{(t)}/\ell_k^{(t)}$ can be approximated by $\exp(\sum_{r=1}^{m}\zeta_{y_k,r,k}^{(t)}-\sum_{r=1}^{m}\zeta_{y_i,r,i}^{(t)})$. Moreover, since we can prove that $\|\bxi_i\|_2^2\approx\|\bxi_k\|_2^2$ and $|S_i^{(t)}|/|S_k^{(t)}|=\Theta(1)$, we have $\sum_{r=1}^{m}\zeta_{y_i,r,i}^{(t+1)}-\sum_{r=1}^{m}\zeta_{y_k,r,k}^{(t+1)}$ will decrease according to \eqref{zeta difference update}.
\end{itemize}
Combining the two cases, $\sum_{r=1}^{m}\zeta_{y_i,r,i}^{(t)}-\sum_{r=1}^{m}\zeta_{y_k,r,k}^{(t)}$ can be upper bounded by a constant, which completes the proof of Lemma~\ref{lemma:zeta difference}, and the proof of Lemma \ref{lemma: logit ratio bound} directly follows. 



\subsection{Key Technique 3: Algorithm-dependent Test Error Analysis}

By choosing $\epsilon = 1/(4n)$, Theorem~\ref{thm:signal_learning_main} gives that $L_{S}(\Wb^{(t)}) \leq 1/(4n)$ which further implies that the training error is $0$. On the other hand, we know that the Bayes optimal test error is at least $p$ due to the presence of the label-flipping noise. Thus the gap between the test error and training error is at least $p$, which prevents us from applying commonly-used standard uniform convergence-based bounds \citep{bartlett2017spectrally,neyshabur2017pac} or stability-based bounds \citep{hardt2016train, mou2017generalization, chen2018stability}. In this paper, we will give an algorithm-dependent test error analysis. First, we can decompose the test error as follows
\begin{equation}
\begin{aligned}
&\quad \mathbb{P} \big(y \neq \sign (f(\Wb^{(t)}, \xb)) \big)\\
&= p+ (1-2p)\mathbb{P} \big(\hat{y} f(\Wb^{(t)},\xb) \leq 0\big).\label{eq:test error}
\end{aligned}
\end{equation}
With \eqref{eq:test error}, the analysis of test error can be reduced to bounding the wrong prediction probability $\mathbb{P} \big(\hat{y} f(\Wb^{(t)},\xb) \leq 0\big)$. To achieve this, we need to bound the coefficient order when the training loss converges to $\epsilon$. The following result demonstrates that a constant proportion of $\zeta_{y_i,r,i}^{(t)}$ will reach constant order at time $T_1<T^{*}$.

\begin{lemma}\label{lemma:stage 1}
    Under Condition \ref{condition:d_sigma0_eta}, there exists $T_1=\Theta(\eta^{-1}nm\sigma_p^{-2}d^{-1})$ such that $\zeta_{y_i,r,i}^{(T_1)}\geq 2$ for all $r\in S_i^{(0)}:=\{r\in[m]:\la\wb_{y_i,r}^{(0)},\bxi_i\ra>0\}$ and $i\in[n]$.
\end{lemma}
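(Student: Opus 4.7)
The plan is to apply the coefficient update rule \eqref{iterative equation3} directly, combined with the signal-noise decomposition of Definition~\ref{def:w_decomposition}, and run an induction on $t\in[0,T_1]$ that maintains two invariants for every $i\in[n]$ and every $r\in S_i^{(0)}$: (i) the ReLU gate stays open, i.e.\ $\la\wb_{y_i,r}^{(t)},\bxi_i\ra>0$; and (ii) $\zeta_{y_i,r,i}^{(t)}\leq 2$, which keeps the network output bounded and hence forces $|\ell_i'^{(t)}|=\Theta(1)$. Once both invariants are in place, the lemma follows from a telescoping argument.

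For invariant (i), I would expand
\begin{align*}
\la\wb_{y_i,r}^{(t)},\bxi_i\ra
&= \la\wb_{y_i,r}^{(0)},\bxi_i\ra + \zeta_{y_i,r,i}^{(t)} + \omega_{y_i,r,i}^{(t)} \\
&\quad + y_i\gamma_{y_i,r}^{(t)}\frac{\la\bmu,\bxi_i\ra}{\|\bmu\|_2^2}
+ \sum_{i'\neq i}\bigl(\zeta_{y_i,r,i'}^{(t)}+\omega_{y_i,r,i'}^{(t)}\bigr)\frac{\la\bxi_{i'},\bxi_i\ra}{\|\bxi_{i'}\|_2^2}.
\end{align*}
Standard concentration for this data model gives $|\la\bxi_{i'},\bxi_i\ra|=\tilde{O}(\sigma_p^2\sqrt d)$, $\|\bxi_{i'}\|_2^2=\Theta(\sigma_p^2 d)$, and $|\la\bmu,\bxi_i\ra|=\tilde{O}(\sigma_p\|\bmu\|_2)$. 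Combined with the a priori coefficient bounds $|\zeta_{j,r,i'}^{(t)}|,|\omega_{j,r,i'}^{(t)}|=\tilde{O}(1)$ and $\gamma_{j,r}^{(t)}=\tilde{O}\bigl(n\|\bmu\|_2^2/(\sigma_p^2 d)\bigr)$ implied by Proposition~\ref{Proposition: range}, the noise cross-sum is $\tilde{O}(n/\sqrt d)=o(1)$ and the signal cross-term is $\tilde{O}(\sigma_p/\|\bmu\|_2)=o(1)$ under Condition~\ref{condition:d_sigma0_eta}. Since $\la\wb_{y_i,r}^{(0)},\bxi_i\ra>0$ for $r\in S_i^{(0)}$ and $\zeta_{y_i,r,i}^{(t)}\geq 0$, the inner product stays strictly positive.

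With the gate open, \eqref{iterative equation3} reduces to $\zeta_{y_i,r,i}^{(t+1)}-\zeta_{y_i,r,i}^{(t)}=-\frac{\eta}{nm}\,\ell_i'^{(t)}\,\|\bxi_i\|_2^2$. Invariant (ii) together with Proposition~\ref{Proposition: range} yields $|f(\Wb^{(t)},\xb_{i'})|=O(1)$ for every $i'\in[n]$, hence $|\ell_i'^{(t)}|=\Theta(1)$; combined with $\|\bxi_i\|_2^2=\Theta(\sigma_p^2 d)$, every step contributes $\Theta(\eta\sigma_p^2 d/(nm))$ to $\zeta_{y_i,r,i}^{(t)}$. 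Telescoping until the coefficient first exceeds $2$ gives $T_1=\Theta(\eta^{-1}nm\sigma_p^{-2}d^{-1})$, which lies safely below $T^*$ by the learning-rate bound in Condition~\ref{condition:d_sigma0_eta}.

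The main obstacle is a logical circularity: invariant (i) above is established using coefficient-order bounds from Proposition~\ref{Proposition: range}, but those bounds are themselves established via an activation-pattern argument of exactly this type. I would resolve this by running a single coupled induction on $t$ that simultaneously establishes Proposition~\ref{Proposition: range}, Lemma~\ref{lemma: logit ratio bound}, and Lemma~\ref{lemma:stage 1}. The inductive step closes because all cross-term estimates remain uniformly $o(1)$ under Condition~\ref{condition:d_sigma0_eta} regardless of how mass redistributes among the coefficients, so invariants (i) and (ii) propagate from $t$ to $t+1$ without ever losing the $\Theta(1)$ lower bound on the loss derivative.
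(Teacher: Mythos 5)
Your overall architecture matches the paper's: a coupled induction that simultaneously maintains the activation pattern, the coefficient bounds, and the balanced-logit property, followed by a telescoping of \eqref{iterative equation3} with $|\ell_i'^{(t)}|=\Theta(1)$ to extract $T_1$. You also correctly flag the circularity and correctly resolve it by coupling the induction. However, the specific argument you give for invariant (i) has a genuine gap, and it is not a cosmetic one.

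You write that the cross-terms in the decomposition of $\la\wb_{y_i,r}^{(t)},\bxi_i\ra$ are $o(1)$, and then conclude that because $\la\wb_{y_i,r}^{(0)},\bxi_i\ra>0$ and $\zeta_{y_i,r,i}^{(t)}\geq 0$ the inner product ``stays strictly positive.'' This does not follow. The hypothesis $r\in S_i^{(0)}$ gives no quantitative lower bound on $\la\wb_{y_i,r}^{(0)},\bxi_i\ra$ --- only $\max_{r}\la\wb_{y_i,r}^{(0)},\bxi_i\ra$ enjoys the lower bound $\sigma_0\sigma_p\sqrt{d}/4$ from Lemma~\ref{lm: initialization inner products}, and a generic $r\in S_i^{(0)}$ can have an initial inner product arbitrarily close to $0$. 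Since the cross-sums $\sum_{i'\neq i}(\zeta_{y_i,r,i'}^{(t)}+\omega_{y_i,r,i'}^{(t)})\la\bxi_{i'},\bxi_i\ra/\|\bxi_{i'}\|_2^2$ and $y_i\gamma_{y_i,r}^{(t)}\la\bmu,\bxi_i\ra/\|\bmu\|_2^2$ can be \emph{negative} and merely $o(1)$ (not identically zero), they can overwhelm a tiny but positive $\la\wb_{y_i,r}^{(0)},\bxi_i\ra + \zeta_{y_i,r,i}^{(t)}$ in the early iterations, closing the gate before $\zeta_{y_i,r,i}^{(t)}$ ever gets moving. A static norm bound on the cross-terms cannot rule this out.

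The paper (Lemma~\ref{lm: balanced logit}, fourth and fifth hypotheses in the induction) repairs this with a \emph{per-step monotonicity} argument rather than a global one. It expands the one-step gradient-descent update of $\la\wb_{y_i,r}^{(t)},\bxi_i\ra$ and isolates the \emph{self-interaction} term $-\frac{\eta}{nm}\ell_i'^{(t)}\|\bxi_i\|_2^2$, which appears with a full coefficient of $1$ precisely because the gate was open at step $t$, so $\sigma'(\la\wb_{y_i,r}^{(t)},\bxi_i\ra)=1$. This term has magnitude $\Theta(\eta\sigma_p^2 d |\ell_i'^{(t)}|/ (nm))$ and is strictly positive, while the noise and signal cross-terms are bounded (using the logit-ratio control $|\ell_{i'}'^{(t)}|\leq C_2|\ell_i'^{(t)}|$) by $O(\eta\sigma_p^2 n\sqrt{d\log(n/\delta)}\,|\ell_i'^{(t)}|/(nm))$ and $O(\eta\sigma_p\|\bmu\|_2 n\sqrt{\log(n/\delta)}\,|\ell_i'^{(t)}|/(nm))$. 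Under the dimension condition in Condition~\ref{condition:d_sigma0_eta} the self-term dominates both cross-terms at every single step, hence $\la\wb_{y_i,r}^{(t+1)},\bxi_i\ra \geq \la\wb_{y_i,r}^{(t)},\bxi_i\ra > 0$. This is how the set sequence $S_i^{(t)}$ is shown to be nondecreasing without any quantitative lower bound on the initial inner product. Replacing your static cross-term bound with this one-step dominance argument closes the gap; the rest of your proposal (invariant (ii), the $\gamma$ bound $\gamma_{j,r}^{(t)}\leq \eta t\|\bmu\|_2^2/m$ to keep the output $O(1)$ for $t\leq T_1$, and the telescoping to get $T_1=\Theta(\eta^{-1}nm\sigma_p^{-2}d^{-1})$) then goes through essentially as in the paper.
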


The main idea in proving this lemma is that $\ell_i'^{(t)}$ remain $\Theta(1)$ before time $T_1$, and the dynamics of the coefficients in \eqref{iterative equation3} can be greatly simplified by replacing the $\ell_i'^{(t)}$ factors by their constant lower bounds. After time $T_1$, by the monotonicity and order of coefficients in Proposition \ref{Proposition: range}, we can describe the orders of the coefficients in the following lemma. 

\begin{lemma}\label{order of coef}
    Under Condition \ref{condition:d_sigma0_eta}, the following coefficient orders hold for $t\in[T_1,T^*]$:
    \begin{itemize}[leftmargin=*]
        \item $\sum_{i=1}^{n}\zeta_{j,r,i}^{(t)}=\Omega(n)=O(n\log(T^*))$ for any $j\in\{\pm1\}$ and $r\in[m]$.
        \item $\sum_{i=1}^{n}\zeta_{j,r,i}^{(t)}/\gamma_{j',r'}^{(t)}=\Theta(\mathrm{SNR}^{-2})$ for any $j,j'\in\{\pm 1\}$ and $r,r'\in[m]$.
        \item $\max_{j, r, i} | \omega_{j, r, i}^{(t)} | = \max \big\{O\big(\sqrt{\log(mn/\delta)}\cdot\sigma_0\sigma_p\sqrt{d}\big), \\ O \big( \sqrt{\log(n / \delta)} \log(T^*) \cdot n / \sqrt{d} \big) \big\}$.
    \end{itemize}
\end{lemma}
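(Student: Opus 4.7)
The plan is to derive each of the three bullets by combining the range/monotonicity estimates of Proposition~\ref{Proposition: range} with the stage-one activation guarantee in Lemma~\ref{lemma:stage 1}, plus a few elementary Gaussian concentration bounds on the initialization.

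For the first bullet, I would get the upper bound $\sum_i \zeta_{j,r,i}^{(t)} \le O(n\log(T^*))$ directly from the per-index bound $\zeta_{j,r,i}^{(t)} \le 4\log(T^*)$ in Proposition~\ref{Proposition: range}, noting that the update rule \eqref{iterative equation3} forces $\zeta_{j,r,i}^{(t)} = 0$ whenever $y_i \ne j$. For the matching $\Omega(n)$ lower bound, I would combine monotonicity of $\zeta_{j,r,i}^{(t)}$ in $t$ (Proposition~\ref{Proposition: range}) with Lemma~\ref{lemma:stage 1}: for $t \ge T_1$, every $i$ with $y_i = j$ and $r \in S_i^{(0)}$ contributes $\zeta_{j,r,i}^{(t)} \ge 2$, so it suffices to show $|\{i : y_i = j,\, r \in S_i^{(0)}\}| = \Omega(n)$. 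Since $\bxi_i$, $y_i$, and $\wb_{j,r}^{(0)}$ are mutually independent, $\PP[y_i = j] = 1/2$ (using that $y_i$ has the same marginal as $\hat y_i$) and $\PP[\la \wb_{j,r}^{(0)}, \bxi_i\ra > 0] = 1/2$ by Gaussian symmetry. Hoeffding's inequality then yields $|\{i: y_i = j,\, r \in S_i^{(0)}\}| \ge n/8$ except with probability $\exp(-\Omega(n))$, and a union bound over the $2m$ pairs $(j,r)$ is absorbed by Condition~\ref{condition:d_sigma0_eta} ($n \ge C\log(m/\delta)$).

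For the second bullet, the third part of Proposition~\ref{Proposition: range} gives $\gamma_{j',r'}^{(t)} = \Theta\!\left(\|\bmu\|_2^2/(d\sigma_p^2)\right)\sum_i \zeta_{j',r',i}^{(t)}$, so
\[
\frac{\sum_i \zeta_{j,r,i}^{(t)}}{\gamma_{j',r'}^{(t)}} = \Theta\Bigl(\frac{d\sigma_p^2}{\|\bmu\|_2^2}\Bigr)\cdot\frac{\sum_i \zeta_{j,r,i}^{(t)}}{\sum_i \zeta_{j',r',i}^{(t)}} = \Theta(\mathrm{SNR}^{-2}),
\]
where the first factor is exactly $\mathrm{SNR}^{-2}$ and the second factor is $\Theta(1)$ by the first bullet (absorbing the $\log(T^*)$ gap into the $\Theta$, per the paper's convention). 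For the third bullet, Proposition~\ref{Proposition: range} already yields $|\omega_{j,r,i}^{(t)}| \le 2\max_{i,j,r}\{|\la\wb_{j,r}^{(0)},\bmu\ra|, |\la\wb_{j,r}^{(0)},\bxi_i\ra|\} + 40 n\sqrt{\log(6n^2/\delta)/d}\cdot\log(T^*)$. Conditional on $\bxi_i$, $\la\wb_{j,r}^{(0)},\bxi_i\ra$ is centered Gaussian with variance $\sigma_0^2\|\bxi_i\|_2^2 = \Theta(\sigma_0^2\sigma_p^2 d)$ by standard chi-squared concentration (justified by Condition~\ref{condition:d_sigma0_eta}), so a Gaussian tail and union bound over the $2mn$ triples yields $\max|\la\wb_{j,r}^{(0)},\bxi_i\ra| = O(\sigma_0\sigma_p\sqrt{d\log(mn/\delta)})$, which dominates the analogous $O(\sigma_0\|\bmu\|_2\sqrt{\log(m/\delta)})$ bound on $\max|\la\wb_{j,r}^{(0)},\bmu\ra|$ because $\sigma_p^2 d \gtrsim \|\bmu\|_2^2$ (Condition~\ref{condition:d_sigma0_eta}). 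Rewriting the $n\sqrt{\log(6n^2/\delta)/d}\cdot\log(T^*)$ term as $O(\sqrt{\log(n/\delta)}\log(T^*)\cdot n/\sqrt{d})$ and taking the maximum of the two recovers the stated expression.

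The main obstacle will be the lower bound in the first bullet: one must show that the activation count $|\{i : y_i = j,\, r \in S_i^{(0)}\}|$ is $\Omega(n)$ \emph{uniformly} over all $2m$ pairs $(j,r)$, and carefully justify that the three random objects $y_i$, $\bxi_i$, $\wb_{j,r}^{(0)}$ are genuinely independent so that the coupling probability is $1/4$. The remaining bullets are essentially arithmetic consequences of the first bullet combined with Proposition~\ref{Proposition: range} and off-the-shelf Gaussian tail bounds.
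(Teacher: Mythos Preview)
Your arguments for the first and third bullets match the paper's. The set $\{i : y_i = j,\, r \in S_i^{(0)}\}$ is exactly what the paper calls $S_{j,r}^{(0)}$, and your Hoeffding argument is precisely how the paper establishes $|S_{j,r}^{(0)}|\ge n/8$ in its preliminary section; combined with Lemma~\ref{lemma:stage 1} and the monotonicity in Proposition~\ref{Proposition: range} this gives the $\Omega(n)$ lower bound. The third bullet is likewise a direct readout from the second part of Proposition~\ref{Proposition: range} plus the paper's Gaussian tail bounds on $\la \wb_{j,r}^{(0)},\bxi_i\ra$ and $\la \wb_{j,r}^{(0)},\bmu\ra$.

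For the second bullet, however, your route loses a genuine $\log(T^*)$ factor. Your identity
\[
\frac{\sum_i \zeta_{j,r,i}^{(t)}}{\gamma_{j',r'}^{(t)}} \;=\; \Theta(\mathrm{SNR}^{-2})\cdot \frac{\sum_i \zeta_{j,r,i}^{(t)}}{\sum_i \zeta_{j',r',i}^{(t)}}
\]
is fine, but the first bullet only pins each sum to the window $[\Omega(n),\,O(n\log T^*)]$, so the cross-neuron ratio on the right is only $\tilde\Theta(1)$, not $\Theta(1)$. Absorbing this into $\Theta$ is \emph{not} the paper's convention: the paper uses $\tilde O,\tilde\Omega$ for hidden logs and states the second bullet with a plain $\Theta$, and downstream the test-error exponent $n\|\bmu\|_2^4/(C_2\sigma_p^4 d)$ needs $C_2$ to be an absolute constant independent of $T^*$. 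The paper avoids this slack by inducting directly on the ratio $\sum_i \zeta_{j,r,i}^{(t)}/\gamma_{j',r'}^{(t)}$ across $t$: using the balanced-logit property $\ell_i'^{(t)}/\ell_k'^{(t)}\le C_2$ (Lemma~\ref{lemma: logit ratio bound}) together with $|S_{j,r}^{(t)}|=\Theta(n)$, one shows that the per-step increments satisfy $\Delta\bigl(\sum_i \zeta_{j,r,i}\bigr)=\Theta(\eta\sigma_p^2 d/m)\cdot|\ell'|$ and $\Delta\gamma_{j',r'}=\Theta(\eta\|\bmu\|_2^2/m)\cdot|\ell'|$, which traps the ratio in $[\,\sigma_p^2 d/(16C_2\|\bmu\|_2^2),\,9C_2\sigma_p^2 d/\|\bmu\|_2^2\,]$ uniformly in $t$. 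This increment-level comparison, rather than bounding numerator and denominator separately, is the missing ingredient in your plan.
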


By applying the scale of $\gamma_{j,r}^{(t)},\zeta_{j,r,i}^{(t)},\omega_{j,r,i}^{(t)}$ given in Lemma \ref{order of coef} and Gaussian concentration of Lipschitz function, we can directly get the test error upper bound (the second part of Theorem \ref{thm:signal_learning_main})
using a similar idea as 
the proofs of Theorem 1 in \citet{chatterji2021finite} and Lemma 3 in \citet{frei2022benign}. To prove the test error lower bound (the third part of Theorem \ref{thm:signal_learning_main}), we first lower bound wrong prediction probability term $\mathbb{P} \big(\hat{y} f(\Wb^{(t)},\xb) \leq 0\big)$ by
\begin{equation*}
    0.5\PP\bigg(\underbrace{\Big|\sum_{j,r}j\sigma(\la\wb_{j,r}^{(t)},\bxi\ra)\Big|\geq C_6\max_{j}\Big\{\sum_{r}\gamma_{j,r}^{(t)}\Big\}}_{\text{event } \Omega}\bigg),
\end{equation*}
where all the randomness is on the left-hand side, which can be treated as a function of Gaussian random vector $\bxi$. Next, we give our key lemma, which can be proved by leveraging decomposition of $\wb_{j,r}^{(t)}$ and scale of decomposition coefficients given in Lemma \ref{order of coef}.

\begin{lemma}\label{lemma:g lower bound}
    For $t\in[T_1,T^*]$, denote $g(\bxi)=\sum_{j, r}j\sigma(\la\wb_{j,r}^{(t)},\bxi\ra)$. There exists a fixed vector $\vb$ with $\|\vb\|_{2} \leq 0.06 \sigma_p$ such that
    \begin{equation}
    \sum_{j' \in \{\pm 1\}}[g(j'\bxi + \vb) - g(j'\bxi)] \geq 4C_{6}\max_{j \in \{\pm 1\}}\Big\{\sum_{r}\gamma_{j,r}^{(t)}\Big\},
    \end{equation}
    for all $\bxi \in \RR^{d}$.
\end{lemma}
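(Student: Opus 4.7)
My plan is to construct a fixed $\vb$ aligned with the network's aggregate noise-memorization direction, apply an exact ReLU identity, and combine the signal-noise decomposition with the coefficient orders of Lemma \ref{order of coef}.

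First I would establish the pointwise identity, valid for any $a, b \in \RR$,
\begin{equation*}
\sigma(a+b) + \sigma(-a+b) - \sigma(a) - \sigma(-a) = b + (|b| - |a|)_{+},
\end{equation*}
which follows from $\sigma(x) = (x+|x|)/2$ and $|a+b|+|a-b| = 2\max(|a|, |b|)$. Plugging $a_{j,r} := \la\wb_{j,r}^{(t)},\bxi\ra$ and $b_{j,r} := \la\wb_{j,r}^{(t)},\vb\ra$ and summing with weight $j$ over all $(j,r)$, the lemma's LHS equals $\la\sum_{j,r}j\wb_{j,r}^{(t)},\vb\ra + \sum_{j,r}j(|b_{j,r}|-|a_{j,r}|)_{+}$. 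The second summand vanishes as $\|\bxi\|_2 \to \infty$, so the deterministic first term alone must already carry the bound $\geq 4C_6 M$.

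Next I would take $\vb = 0.06\sigma_p\cdot \sum_{i=1}^n y_i\bxi_i/\|\sum_i y_i\bxi_i\|_2$, so $\|\vb\|_2 = 0.06\sigma_p$. Using Definition \ref{def:w_decomposition} and standard Gaussian concentration bounds (Conditions \ref{condition:d_sigma0_eta}(1) and (5) control initialization terms and near-orthogonality of the $\bxi_i$'s), the dominant contribution to $\la\wb_{j,r}^{(t)},\bxi_i\ra$ is $\rho_{j,r,i}^{(t)}$, and $\|\sum_i y_i\bxi_i\|_2 \approx \sigma_p\sqrt{nd}$. This gives $b_{j,r} \approx (0.06/\sqrt{nd})\cdot j\psi_{j,r}$, where $\psi_{j,r} := \sum_{i: y_i = j}\zeta_{j,r,i}^{(t)} + \sum_{i: y_i = -j}|\omega_{j,r,i}^{(t)}|$, so that $jb_{j,r} > 0$ for every $(j,r)$. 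By Lemma \ref{lemma:stage 1} together with concentration of the initial activation pattern, $\psi_{j,r} \geq \Omega(n)$, so the deterministic term is $\sum_{j,r}|b_{j,r}| = \Omega(m\sqrt{n/d})$.

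Because $jb_{j,r} > 0$, the infimum of the $\bxi$-dependent sum is $-\sum_r|b_{-1,r}|$ (achieved when $|a_{-1,r}|=0$ and $|a_{+1,r}|\geq|b_{+1,r}|$, simultaneously feasible since $m \leq d$). The resulting lower bound on the LHS is $\sum_r b_{+1,r} = \Omega(m\sqrt{n/d})$. By Lemma \ref{order of coef}, $M := \max_j\sum_r\gamma_{j,r}^{(t)} = \Theta(mn\|\bmu\|_2^2/(\sigma_p^2 d))$, and the low-SNR hypothesis $n\|\bmu\|_2^4 \leq C_3\sigma_p^4 d$ then gives $M \lesssim \sqrt{C_3}\cdot m\sqrt{n/d}$, so a sufficiently small absolute constant $C_3$ secures $\geq 4C_6 M$. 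The main obstacle is precisely this worst-case analysis over $\bxi$: without the sign structure $jb_{j,r} > 0$ the adversarial $\bxi$-dependent term could cancel the deterministic one, and ensuring this sign uniformly across $(j,r)$ requires both the noise-direction choice of $\vb$ and tight control of all error terms in the signal-noise decomposition.
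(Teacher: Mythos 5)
Your proposal is correct, and it takes a genuinely different path from the paper's proof, so let me compare the two.

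The paper chooses $\vb = \lambda \sum_{i}\ind(y_i=1)\bxi_i$ (an unsigned sum over the label class that WLOG achieves the max of $\sum_r\gamma_{j,r}^{(t)}$, with tunable scale $\lambda = C_7\|\bmu\|_2^2/(\sigma_p^2 d)$). It then handles the two halves of $g$ asymmetrically: for $j = 1$ neurons it adds two convexity inequalities to get the shift is at least $\la\wb_{1,r}^{(t)},\vb\ra$, and for $j=-1$ neurons it uses $1$-Lipschitzness to bound the shift above by $2|\la\wb_{-1,r}^{(t)},\vb\ra|$, which is controlled since $\la\wb_{-1,r}^{(t)},\vb\ra\approx\lambda\sum_{y_i=1}\omega_{-1,r,i}^{(t)}\le 0$. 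You instead take $\vb\propto\sum_i y_i\bxi_i$, normalized so $\|\vb\|_2 = 0.06\sigma_p$, and invoke the exact ReLU identity $\sigma(a+b)+\sigma(-a+b)-\sigma(a)-\sigma(-a)=b+(|b|-|a|)_+$, reducing the LHS to a deterministic linear term $\la\sum_{j,r}j\wb_{j,r}^{(t)},\vb\ra$ plus a pointwise-nonnegative-for-$j{=}1$, pointwise-bounded-for-$j{=}-1$ remainder, followed by an explicit worst-case-over-$\bxi$ analysis. The signed choice of $\vb$ makes both $\la\wb_{1,r}^{(t)},\vb\ra$ and $-\la\wb_{-1,r}^{(t)},\vb\ra$ strictly positive and of order $\Omega(\sqrt{n/d})$, removing the need for the paper's WLOG and the asymmetric treatment; the exact identity subsumes both the convexity and the Lipschitz bound. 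The two routes are comparable in length; yours is cleaner in that the $j$-symmetry is built in, while the paper's is slightly more elementary in its ReLU manipulation.

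One step deserves more care: you write $M = \Theta(mn\|\bmu\|_2^2/(\sigma_p^2 d))$ from Lemma \ref{order of coef} and separately bound $\sum_r b_{1,r}=\Omega(m\sqrt{n/d})$, then compare the two. But the first bullet of that lemma only gives $\sum_i\zeta_{j,r,i}^{(t)}\in[\Omega(n),O(n\log T^*)]$, so $M$ is really $\tilde\Theta(mn\cdot\mathrm{SNR}^2)$; if $M$ sits at the upper end and $\sum_r b_{1,r}$ at the lower end, a $\log T^*$ gap appears that $C_3$ cannot absorb. The fix is exactly the time-invariant ratio you already cite: both $b_{1,r}\approx (0.06/\sqrt{nd})\sum_i\zeta_{1,r,i}^{(t)}$ and $\gamma_{1,r}^{(t)}=\Theta(\mathrm{SNR}^2)\sum_i\zeta_{1,r,i}^{(t)}$ carry the same $\sum_i\zeta$ factor, so the ratio $b_{1,r}/\gamma_{1,r}^{(t)}=\Theta(\mathrm{SNR}^{-2}/\sqrt{nd})$ is log-free, and the low-SNR hypothesis then makes it $\ge 4C_6$ directly. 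This is precisely how the paper phrases the final chain of inequalities, and it is the cleaner way to present your comparison; with that rewording, your argument is complete.
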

Based on Lemma \ref{lemma:g lower bound}, by the pigeonhole principle, there must exist 
one among 
$\bxi$, $\bxi + \vb$, $-\bxi$, $-\bxi + \vb$ 
that 
belongs to $\Omega$, that is, $\Omega \cup (-\Omega) \cup (\Omega - \vb) \cup (-\Omega - \vb) = \RR^{d}$. By union bound, it follows that
\begin{equation}\label{omega prob1}
    \PP(\Omega)+\PP(-\Omega)+\PP(\Omega - \vb)+\PP(-\Omega - \vb)\geq1.
\end{equation}
Since the noise $\bxi$ follows symmetric distribution, we have that $\PP(\Omega) = \PP(-\Omega)$. We can use some techniques based on the total variation (TV) distance  
to show that 
\begin{equation}
\begin{aligned}\label{omega prob2}
&|\mathbb{P}(\Omega) - \mathbb{P}(\Omega - \vb)|, |\mathbb{P}(-\Omega) - \mathbb{P}(-\Omega - \vb)| \leq 0.03.
\end{aligned}
\end{equation}
By \eqref{omega prob1} and \eqref{omega prob2}, we have proved that $\mathbb{P}(\Omega) \geq 0.22$. By plugging $\mathbb{P}(\Omega) \geq 0.22$ into \eqref{eq:test error}, we complete the proof of test error lower bound.

\section{Experiments}
In this section, we present simulations of synthetic data to back up our theoretical analysis in the previous section. The code for our experiments can be found on Github \footnote{ \href{https://github.com/uclaml/Benign_ReLU_CNN}{https://github.com/uclaml/Benign\_ReLU\_CNN}}.

\paragraph{Synthetic-data experiments.} 
Here we generate synthetic data exactly following Definition~\ref{def:data}. Specifically, we set training data size $n = 20$ and label-flipping noise to $0.1$. Since the learning problem is rotation-invariant, without loss of generality, we set $\bmu = \| \bmu \|_2 \cdot [1,0,\ldots,0]^\top $. We then generate the noise vector $\bxi$ from the Gaussian distribution $\cN (\mathbf{0}, \sigma_p^2 \Ib)$ with fixed standard deviation $\sigma_{p} = 1$. 

We train a two-layer CNN model defined in Section~\ref{sec:prob} with ReLU activation function. The number of filters is set as $m=10$. We use the default initialization method in PyTorch to initialize the CNN parameters and train the CNN with full-batch gradient descent with a learning rate of $0.1$ for $100$ iterations. We consider different dimensions $d$ ranging from $100$ to $1100$, and different signal strengths $\|\bmu\|_{2}$ ranging from $1$ to $11$. 
Based on our results, for any dimension $d$ and signal strength $\mu$ setting we consider, our training setup can guarantee a training loss smaller than $0.01$. 
After training, we estimate the test error for each case using $1000$ test data points. The results are given as a heatmap on parameters $d$ and $\| \bmu \|_2$ in Figure~\ref{fig:synthetic}.

\begin{figure}[ht!]
	\begin{center}
  \subfigure[Original Test Error Heatmap]{\includegraphics[width=0.37\textwidth]{./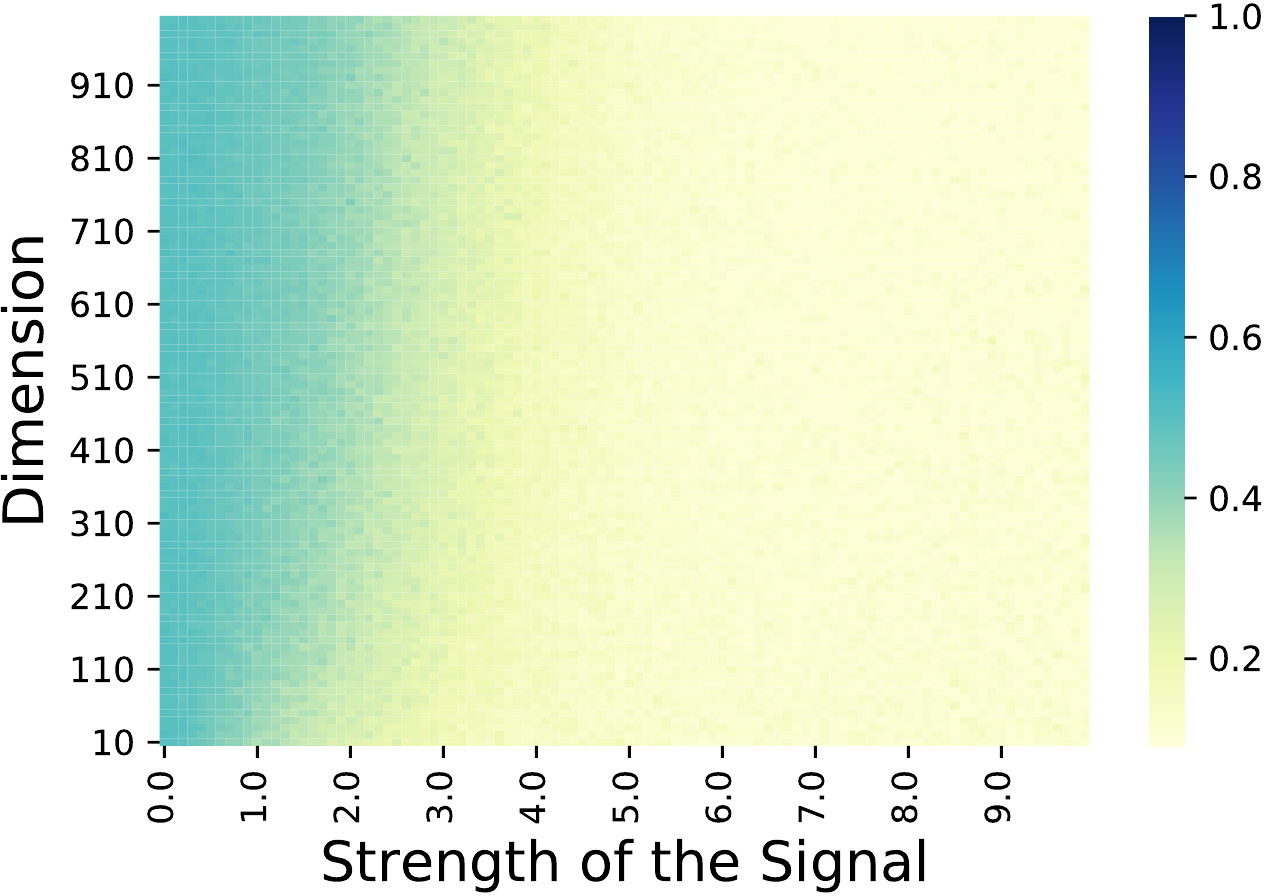}}
\subfigure[Cutoff Test Error Heatmap]{\includegraphics[width=0.37\textwidth]{./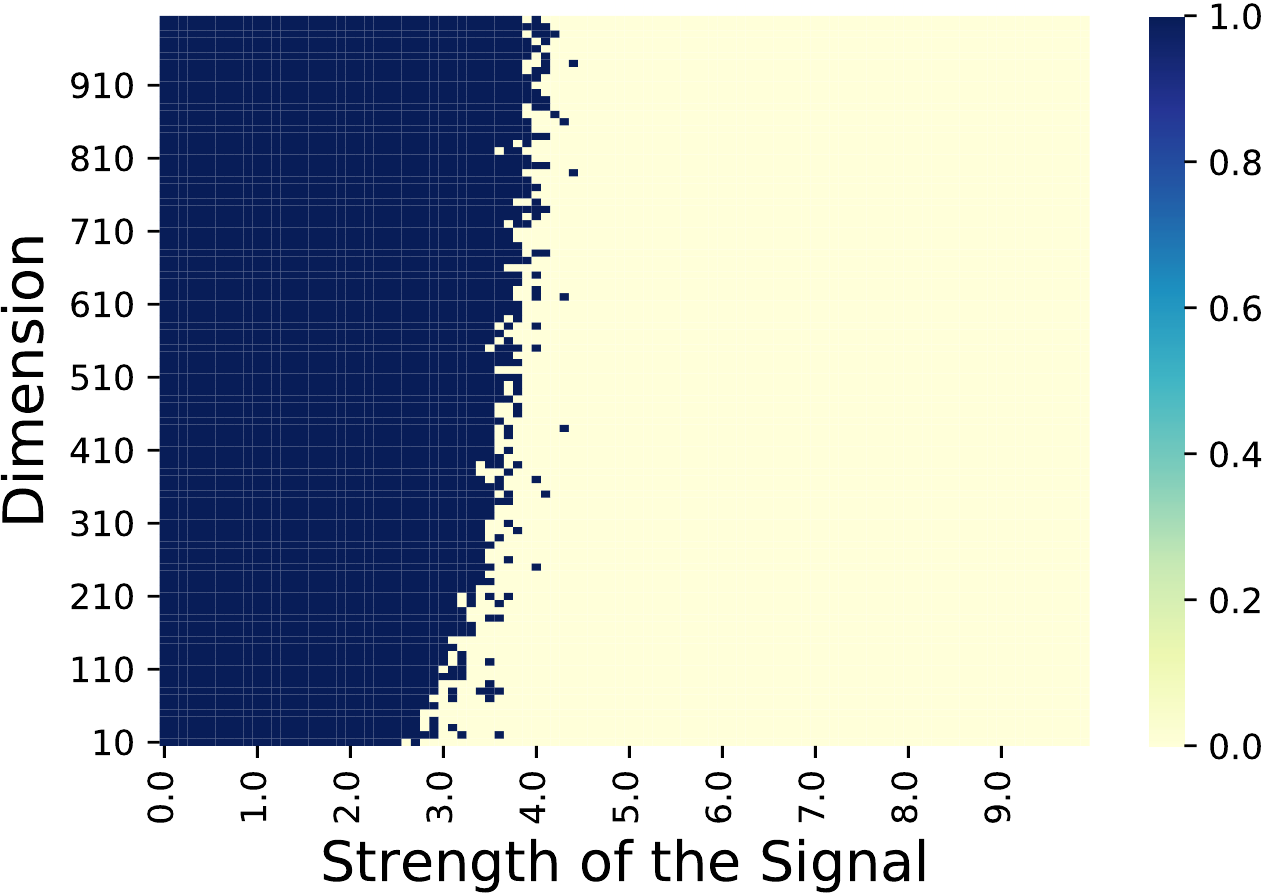}}
	\end{center}
	\caption{a) is a heatmap of test error on synthetic data under different dimensions $d$ and signal strengths $\bmu$. High test errors are marked in blue, and low test errors are marked in yellow. b) is a cutoff value heatmap that sets the values smaller than $0.2$ to be $0$ (yellow) and the values greater than $0.2$ to be $1$ (blue). }
	\label{fig:synthetic}
\end{figure}

For the specific case $\|\bmu\|_{2} = 5$ and $d = 100$, we plot the training loss, test loss, and test error throughout training in Figure~\ref{fig:synthetic3}. As we can see from the figure, the test error reaches the Bayesian optimal error of $0.1$, while the training loss converges to zero. 

\begin{figure}[ht!]
	\begin{center}
			\includegraphics[width=0.75\linewidth,angle=0]{./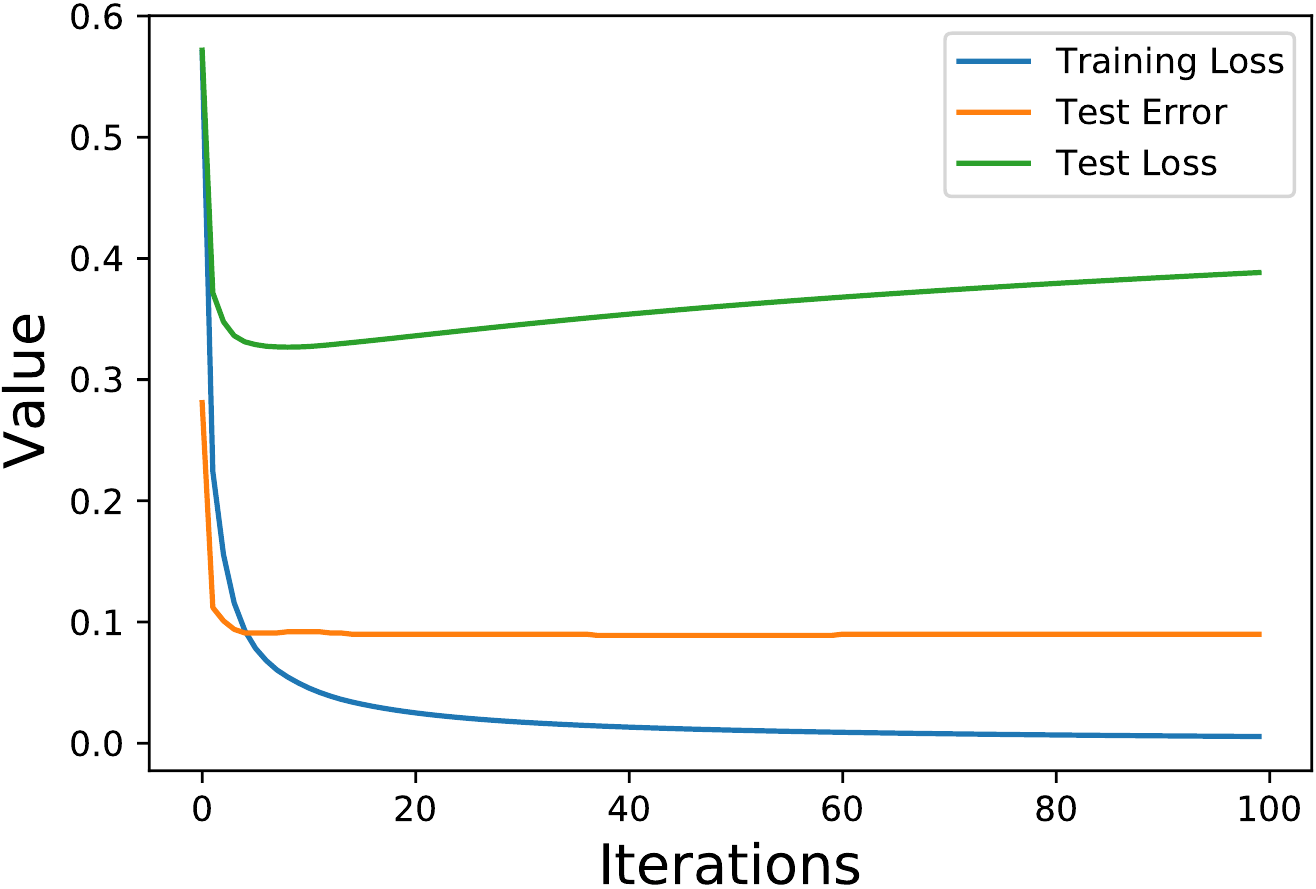}
	\end{center}
	\caption{Training loss, test loss and test error throughout $100$ iterations with $\|\bmu\|_{2} = 5$ and $d = 100$.}
	\label{fig:synthetic3}
\end{figure}

In Section~\ref{sec:proof}, we directly used the
activation pattern and data structure to characterize the
loss of each sample, and proved that $y_i\cdot f(\Wb^{(t)},\xb_i)-y_k\cdot f(\Wb^{(t)},\xb_k)\leq C_4$ for $t \leq T^{*}$ and any $i, k \in [n]$ in Lemma~\ref{logit ratio}. To demonstrate this, we conduct another experiment for the case $\|\bmu\|_{2} = 5$, $d = 100$ and plot $\max  y_i\cdot f(\Wb^{(t)},\xb_i)$ and $\min  y_i\cdot f(\Wb^{(t)},\xb_i)$ (margin) for each iteration. As we can see from Figure~\ref{fig:synthetic4}, the difference between them never grows too large during training (bounded by $6$).

\begin{figure}[ht!]
	\begin{center}
			\includegraphics[width=0.75\linewidth,angle=0]{./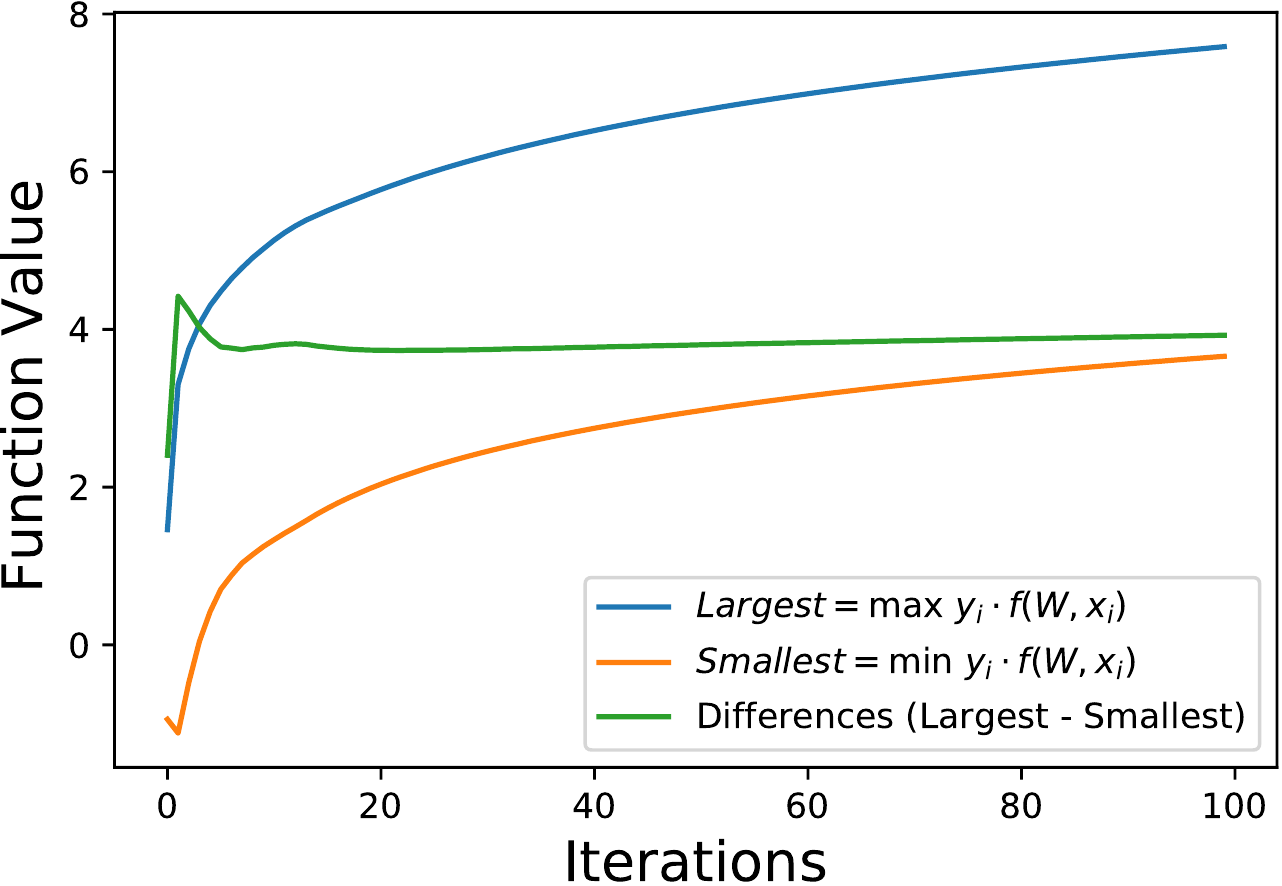}
	\end{center}
	\caption{$\max  y_i\cdot f(\Wb^{(t)},\xb_i)$ and $\min  y_i\cdot f(\Wb^{(t)},\xb_i)$ (margin) throughout $100$ iterations with $\|\bmu\|_{2} = 5$ and $d = 100$.}
	\label{fig:synthetic4}
\end{figure}

\section{Conclusion and Future Work}
This paper studies benign overfitting in two-layer ReLU CNNs with label-flipping noise. We generalize the signal-noise decomposition technique first proposed by \citet{cao2022benign} and propose three key techniques: \emph{time-invariant coefficient ratio}, \emph{automatic balance of coefficient updates} and \emph{algorithm-dependent test error analysis}. With the help of these techniques, 
we prove the convergence of training loss, give exact conditions under which the CNN achieves test error close to the noise rate, and reveal a sharp phase transition between benign and harmful overfitting. 
Our results theoretically demonstrate how and when benign overfitting can happen in ReLU neural networks. An important future work direction is to generalize our analysis to deep ReLU neural networks in learning other data models.

\section*{Acknowledgements}

We thank the anonymous reviewers for their helpful comments. YK, ZC, YC and QG are supported in part by the National Science Foundation CAREER Award 1906169 and IIS-2008981, and the Sloan Research Fellowship. The views and conclusions contained in this paper are those of the authors and should not be interpreted as representing any funding agencies.


\bibliography{deeplearningreferences}
\bibliographystyle{icml2023}

\newpage
\appendix
\onecolumn

\section{Comparison of Conditions Made by Related Works}
In this section, we present the difference between Condition \ref{condition:d_sigma0_eta} and the conditions on parameters made by two related works \citep{frei2022benign,cao2022benign} in the following two tables (Tables~\ref{table: comp with frei} and \ref{table: comp with cao}). 

\begin{table}[ht]
\centering
\begin{tabular}{|c|c|c|}
\hline
\multirow{2}{*}{Number of samples} & \citet{frei2022benign} & $n\geq C\log(1/\delta)$ \\ \cline{2-3}  & Ours & $n\geq C\log(m/\delta)$  \\ \hline
\multirow{2}{*}{Neural network width}  & \citet{frei2022benign} & - \\ \cline{2-3} & Ours & $m\geq C\log(n/\delta)$\\ \hline
\multirow{2}{*}{Dimension} & \citet{frei2022benign} & $d\geq\max\{n\|\bmu\|_2^2,n^2\log(n/\delta)\}$  \\ \cline{2-3} & Ours & $d \geq C \max\{n\sigma_p^{-2}\|\bmu\|_{2}^{2}\log(T^*),n^{2}\log(nm/\delta) (\log (T^{*}))^2\}$ \\ \hline
\multirow{2}{*}{Norm of the signal} & \citet{frei2022benign} & $\|\bmu\|_2^2\geq C\cdot\log(n/\delta)$ \\ \cline{2-3} & Ours & $\|\bmu\|_2^2 \geq C \cdot \sigma_p^2 \log(n / \delta)$\\ \hline
\multirow{2}{*}{Noise rate} & \citet{frei2022benign} & $p\leq 1/C$ \\ \cline{2-3}  & Ours & $p\leq 1/C$ \\ \hline
\multirow{2}{*}{Learning rate}   & \citet{frei2022benign} & $\eta\leq(C\max\{1,H/\sqrt{m}\}d^2)^{-1}$ \\ \cline{2-3} & Ours & $\eta\leq \big(C\max\big\{\sigma_p^2 d/n,\sigma_p^2 d^{3/2}\big/\big(n^2 m\cdot\sqrt{\log(n/\delta)}\big)\big\}\big)^{-1}$ \\ \hline
\multirow{2}{*}{Initialization variance} & \citet{frei2022benign} & $\sigma_0\leq \eta/\sqrt{md}$ \\ \cline{2-3} & Ours & $\sigma_0\leq \big(C\max\big\{\sigma_{p}d/\sqrt{n},\sqrt{\log(m/\delta)}\cdot\|\bmu\|_{2}\big\}\big)^{-1}$ \\ \hline
\end{tabular}
\caption{Comparison of conditions with \citet{frei2022benign}. $H$ is the smoothness of leaky ReLU activation under the setting of \citet{frei2022benign}. In our paper, $\sigma_p$ is the noise scale that can be treated as a constant.}
\label{table: comp with frei}
\end{table}

\begin{table}[ht]
\centering
\begin{tabular}{|c|c|c|}
\hline
\multirow{2}{*}{Number of samples} & \citet{cao2022benign} & $n=\Omega(\polylog(d))$ \\ \cline{2-3} & Ours & $n\geq C\log(m/\delta)$ \\ \hline
\multirow{2}{*}{Neural network width} & \citet{cao2022benign} & $m=\Omega(\polylog(d))$ \\ \cline{2-3} & Ours & $m\geq C\log(n/\delta)$  \\ \hline
\multirow{2}{*}{Dimension} & \citet{cao2022benign} & $d=\Omega(m^{2\lor[4/(q-2)]}n^{4\lor[(2q-2)/(q-2)]})$   \\ \cline{2-3} & Ours & $d \geq C \max\{n\sigma_p^{-2}\|\bmu\|_{2}^{2}\log(T^*),n^{2}\log(nm/\delta) (\log (T^{*}))^2\}$ \\ \hline
\multirow{2}{*}{Norm of the signal}& \citet{cao2022benign} & -  \\ \cline{2-3} & Ours & $\|\bmu\|_2^2 \geq C \cdot \sigma_p^2 \log(n / \delta)$ \\ \hline
\multirow{2}{*}{Noise rate} & \citet{cao2022benign} & $p=0$ \\ \cline{2-3}  & Ours & $p\leq 1/C$ \\ \hline
\multirow{2}{*}{Learning rate}  & \citet{cao2022benign} & $\eta\leq\tilde{O}(\min\{\|\bmu\|_2^{-2},\sigma_p^{-2}d^{-1}\})$ \\ \cline{2-3} & Ours & $\eta\leq \big(C\max\big\{\sigma_p^2 d/n,\sigma_p^2 d^{3/2}\big/\big(n^2 m\cdot\sqrt{\log(n/\delta)}\big)\big\}\big)^{-1}$ \\ \hline
\multirow{2}{*}{Initialization variance}  & \citet{cao2022benign} & \makecell[c]{$\sigma_0\leq\tilde{O}(m^{-2/(q-2)}n^{-[1/(q-2)\lor1]})\cdot\min\{(\sigma_p\sqrt{d})^{-1},\|\bmu\|_2^{-1}\}$,\\
$\sigma_{0}\geq\tilde{O}(nd^{-1/2})\cdot\min\{(\sigma_p\sqrt{d})^{-1},\|\bmu\|_2^{-1}\}$} \\ \cline{2-3} & Ours & $\sigma_0\leq \big(C\max\big\{\sigma_{p}d/\sqrt{n},\sqrt{\log(m/\delta)}\cdot\|\bmu\|_{2}\big\}\big)^{-1}$ \\ \hline
\end{tabular}
\caption{Comparison of conditions with \citet{cao2022benign}. $q$ is the order of polynomial ReLU activation function under the setting of \citet{cao2022benign}.}
\label{table: comp with cao}
\end{table}

\section{Preliminary Lemmas}\label{sec: initial}
In this section, we present some pivotal lemmas that illustrate some important properties of the data and neural network parameters at their random initialization. 

We first give some concentration lemmas regarding the data set $S$. Let $S_{+} = \{i| y_{i} = \hat{y}_{i}\}$ and $S_{-} = \{i| y_{i} \not= \hat{y}_{i}\}$ denote index sets corresponding to data points with true and flipped labels, respectively. We first have the following lemma.


\begin{lemma}
Given $\delta>0$, with probability at least $1-\delta$, 
\begin{equation*}\label{lm: estimate |S+|}
    \big||S_{+}|-(1-p)n\big|\leq\sqrt{\frac{n}{2}\log\Big(\frac{4}{\delta}\Big)},\, \big||S_{-}|-pn\big|\leq\sqrt{\frac{n}{2}\log\Big(\frac{4}{\delta}\Big)}.
\end{equation*}
\end{lemma}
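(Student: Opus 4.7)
The plan is to recognize $|S_{-}|$ as a sum of $n$ independent Bernoulli random variables and apply Hoeffding's inequality, then derive the bound on $|S_{+}|$ for free from the identity $|S_{+}| + |S_{-}| = n$.

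More precisely, by Definition \ref{def:data}, the label-flipping events $\{y_i \neq \hat{y}_i\}$ are mutually independent across $i \in [n]$, each occurring with probability $p$. Writing
\[
|S_{-}| = \sum_{i=1}^{n} \mathbf{1}(y_i \neq \hat{y}_i),
\]
I obtain a sum of $n$ i.i.d.\ Bernoulli($p$) random variables with mean $\mathbb{E}[|S_{-}|] = pn$ and each summand bounded in $[0,1]$. Hoeffding's inequality for bounded independent random variables then yields, for any $t > 0$,
\[
\mathbb{P}\!\left(\bigl||S_{-}| - pn\bigr| \geq t\right) \leq 2\exp\!\left(-\frac{2t^2}{n}\right).
\]
Choosing $t = \sqrt{(n/2)\log(4/\delta)}$ makes the right-hand side equal to $\delta/2 \leq \delta$, which gives the claimed tail bound on $|S_{-}|$ with probability at least $1 - \delta$. (The factor $4/\delta$ rather than $2/\delta$ is a mild slack that is convenient later when this lemma is union-bounded against other high-probability events.)

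For the bound on $|S_{+}|$, I simply use the deterministic identity $|S_{+}| = n - |S_{-}|$. Since $(1-p)n = n - pn$, on the same event I have
\[
\bigl||S_{+}| - (1-p)n\bigr| = \bigl|(n - |S_{-}|) - (n - pn)\bigr| = \bigl||S_{-}| - pn\bigr| \leq \sqrt{\tfrac{n}{2}\log(4/\delta)},
\]
so both inequalities hold simultaneously on the single high-probability event without any additional union bound. There is no genuine obstacle in this proof; the only minor subtlety is tracking the constant in the logarithm, which is handled simply by choosing $t$ appropriately in Hoeffding's bound.
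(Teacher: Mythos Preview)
Your proof is correct and uses the same core tool as the paper (Hoeffding's inequality on the indicator sum), but with one small improvement: the paper applies Hoeffding separately to $|S_{+}|$ and $|S_{-}|$ and then union-bounds, which is why the $4/\delta$ appears. You instead observe the deterministic identity $|S_{+}| = n - |S_{-}|$, so a single application of Hoeffding suffices and the two bounds hold on the \emph{same} event; in your version the $4/\delta$ is indeed unnecessary slack (you could use $2/\delta$). This is a minor but genuine simplification.
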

\begin{proof}[Proof of Lemma \ref{lm: estimate |S+|}]Since $|S_{+}|=\sum_{i=1}^{n}\ind[\hat{y}_i=y_i],\,|S_{-}|=\sum_{i=1}^{n}\ind[\hat{y}_i\neq y_i]$, according to Hoeffding's inequality, we have for arbitrary $t>0$ that
\begin{equation*}
    \mathbb{P}\big(\big||S_{+}|-\mathbb{E}[|S_{+}|]\big|\geq t\big)\leq 2\exp\Big(-\frac{2t^2}{n}\Big),\,\mathbb{P}\big(\big||S_{-}|-\mathbb{E}[|S_{-}|]\big|\geq t\big)\leq 2\exp\Big(-\frac{2t^2}{n}\Big).
\end{equation*}
By the data distribution $\cD$ defined in Definition \ref{def:data}, we have $\mathbb{E}[S_{+}]=(1-p)n$, $\mathbb{E}[S_{-}]=pn$. Setting $t=\sqrt{(n/2)\log(4/\delta)}$ and taking a union bound, it follows that with probability at least $1-\delta$, 
\begin{equation*}
    \big||S_{+}|-(1-p)n\big|\leq\sqrt{\frac{n}{2}\log\Big(\frac{4}{\delta}\Big)},\, \big||S_{-}|-pn\big|\leq\sqrt{\frac{n}{2}\log\Big(\frac{4}{\delta}\Big)},
\end{equation*}
which completes the proof.
\end{proof}

Next, let $S_{1}=\{i|y_i=1\}$ and $S_{-1}=\{i|y_i=-1\}$. We have the following lemmas characterizing their sizes. 

\begin{lemma}\label{lm: estimate |S1|}
Suppose that $\delta>0$ and $n\geq 8\log(4/\delta)$. Then with probability at least $1-\delta$, 
\begin{equation*}
    |S_{1}|,|S_{-1}| \in [n/4, 3n/4].
\end{equation*}
\end{lemma}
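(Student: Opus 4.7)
The plan is to apply Hoeffding's inequality to $|S_1|$ after first observing that $y_i$ is itself a Rademacher random variable. Indeed, by Definition~\ref{def:data}, $\hat{y}_i$ is Rademacher and $y_i$ is obtained by an independent symmetric flip with probability $p$, so by the law of total probability $\PP[y_i = 1] = (1-p)\cdot\tfrac12 + p\cdot\tfrac12 = \tfrac12$, independently across $i$. Therefore $|S_1| = \sum_{i=1}^n \ind[y_i = 1]$ is a sum of $n$ i.i.d.\ Bernoulli$(1/2)$ random variables with $\EE[|S_1|] = n/2$, and $|S_{-1}| = n - |S_1|$.

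Next I would apply Hoeffding's inequality to the bounded i.i.d.\ sum $|S_1|$, obtaining
\begin{equation*}
\PP\bigl(\bigl||S_1| - n/2\bigr| \geq t\bigr) \leq 2\exp\!\bigl(-2t^2/n\bigr)
\end{equation*}
for any $t > 0$. Choosing $t = n/4$ gives the right-hand side equal to $2\exp(-n/8)$, and the assumption $n \geq 8\log(4/\delta)$ makes this at most $\delta$. On the complement of this bad event we have $|S_1| \in [n/4,\, 3n/4]$, and since $|S_{-1}| = n - |S_1|$, the same inclusion holds for $|S_{-1}|$ automatically, so no union bound is even required.

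There is essentially no obstacle here; the only subtlety worth flagging is that one should not accidentally condition on $\hat{y}_i$ and thereby lose the symmetry of $y_i$. Once the marginal distribution of $y_i$ is identified as $\mathrm{Rademacher}(1/2)$, the rest is a one-line Hoeffding calculation, and the threshold $n \geq 8\log(4/\delta)$ matches exactly the exponent produced by $t = n/4$.
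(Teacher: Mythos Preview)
Your proof is correct and follows essentially the same approach as the paper: identify $y_i$ as Rademacher via the law of total probability, then apply Hoeffding's inequality to $|S_1|=\sum_i \ind[y_i=1]$. Your version is in fact slightly cleaner, since you observe that $|S_{-1}|=n-|S_1|$ and thereby avoid the union bound the paper uses over the two events.
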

\begin{proof}[Proof of Lemma \ref{lm: estimate |S1|}]
According the data distribution $\cD$ defined in Definition \ref{def:data}, for $(\xb, y) \sim \cD$, we have 
\begin{align*}
    \PP(y=1) &= \PP(\hat{y} = 1) \times \PP(y = \hat{y}) + \PP(\hat{y} = -1) \times \PP(y = -\hat{y}) \\
    &= \frac{1}{2} (1-p) + \frac{1}{2} p\\
    &= \frac{1}{2}, 
\end{align*}
and hence $\PP(y=-1) = 1/2$ as well. Since $|S_{1}|=\sum_{i=1}^{n}\ind[y_i=1]$, $|S_{-1}|=\sum_{i=1}^{n}\ind[y_i=-1]$, we have $\mathbb{E}[|S_{1}|]=\mathbb{E}[|S_{-1}|]=n/2$. By Hoeffding’s inequality, for arbitrary $t>0$ the following holds: 
\begin{align*}
    &\mathbb{P}\big(\big||S_1|-\mathbb{E}[|S_1|]\big|\geq t\big)\leq2\exp\Big(-\frac{2t^2}{n}\Big),\\
    &\mathbb{P}\big(\big||S_{-1}|-\mathbb{E}[|S_{-1}|]\big|\geq t\big)\leq2\exp\Big(-\frac{2t^2}{n}\Big).
\end{align*}
Setting $t=\sqrt{(n/2)\log(4/\delta)}$ and taking a union bound, it follows that with probability at least $1-\delta$, 
\begin{equation*}
    \Big||S_{1}|-\frac{n}{2}\Big|\leq\sqrt{\frac{n}{2}\log\Big(\frac{4}{\delta}\Big)},\Big||S_{-1}|-\frac{n}{2}\Big|\leq\sqrt{\frac{n}{2}\log\Big(\frac{4}{\delta}\Big)}.
\end{equation*}
Therefore, as long as $n\geq8\log(4/\delta)$, we have $\sqrt{n\log(4/\delta)/2}\leq n/4$ and hence $3n/4\geq |S_{1}|,|S_{-1}|\geq n/4$.

\end{proof}

\begin{lemma}\label{lm: estimate S cap S}
For $|S_{+}\cap S_{y}|$ and $|S_{-}\cap S_{y}|$ where $y\in\{\pm 1\}$, it holds with probability at least $1-\delta(\delta>0)$ that
\begin{equation*}
    \Big||S_{+}\cap S_{y}|-\frac{(1-p)n}{2}\Big|\leq\sqrt{\frac{n}{2}\log\Big(\frac{8}{\delta}\Big)}, \Big||S_{-}\cap S_{y}|-\frac{pn}{2}\Big|\leq\sqrt{\frac{n}{2}\log\Big(\frac{8}{\delta}\Big)}, \forall\, y\in\{\pm 1\}. 
\end{equation*}
\end{lemma}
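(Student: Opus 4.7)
The plan is to apply Hoeffding's inequality to four sums of independent Bernoulli indicators, exactly paralleling the proofs of Lemmas~\ref{lm: estimate |S+|} and~\ref{lm: estimate |S1|}, and then combine them via a union bound. First I would express each random cardinality as a sum of i.i.d.\ indicators across the $n$ samples, e.g.\ $|S_{+}\cap S_{y}| = \sum_{i=1}^{n} \ind[y_i = \hat{y}_i,\, y_i = y]$ and analogously for $|S_{-}\cap S_{y}|$.

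The second step is to compute the means. From Definition~\ref{def:data}, $\hat{y}_i$ is Rademacher and, conditionally on $\hat{y}_i$, we have $y_i = \hat{y}_i$ with probability $1-p$ and $y_i = -\hat{y}_i$ with probability $p$. Therefore
\begin{align*}
\PP(y_i = \hat{y}_i,\, y_i = y) &= \PP(\hat{y}_i = y)\,\PP(y_i = \hat{y}_i \mid \hat{y}_i = y) = \tfrac{1}{2}(1-p),\\
\PP(y_i \neq \hat{y}_i,\, y_i = y) &= \PP(\hat{y}_i = -y)\,\PP(y_i \neq \hat{y}_i \mid \hat{y}_i = -y) = \tfrac{1}{2}p,
\end{align*}
giving $\EE|S_{+}\cap S_{y}| = (1-p)n/2$ and $\EE|S_{-}\cap S_{y}| = pn/2$ for each $y\in\{\pm 1\}$, which matches the centers in the stated bounds.

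The third step is to apply Hoeffding's inequality to each of the four indicator sums: for any $t>0$,
\[
\PP\bigl(\bigl||S_{\pm}\cap S_{y}| - \EE|S_{\pm}\cap S_{y}|\bigr|\geq t\bigr) \leq 2\exp\!\bigl(-2t^{2}/n\bigr).
\]
Choosing $t = \sqrt{(n/2)\log(8/\delta)}$ makes each of these four probabilities at most $\delta/4$. A union bound over the four events (two choices of sign, two choices of $y$) then yields the claim with probability at least $1-\delta$.

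There is essentially no main obstacle here; the lemma is a routine concentration statement and is strictly analogous to Lemmas~\ref{lm: estimate |S+|} and~\ref{lm: estimate |S1|} that the paper has just proved. The only very small care point is getting the mean computation right (handling the joint event on $y_i$ and $\hat y_i$) and tracking the factor $8$ inside the logarithm, which arises from a union bound over four, rather than two, Hoeffding events and hence requires $t^{2} \geq (n/2)\log(8/\delta)$ rather than $(n/2)\log(4/\delta)$.
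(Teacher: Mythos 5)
Your proposal is correct and follows the same route as the paper: represent each cardinality as a sum of $n$ i.i.d.\ indicators, compute the means $\tfrac{(1-p)n}{2}$ and $\tfrac{pn}{2}$, apply Hoeffding with $t=\sqrt{(n/2)\log(8/\delta)}$, and take a union bound over the four events. The factor $8$ inside the logarithm is indeed explained exactly as you say, by the union bound over four rather than two events.
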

\begin{proof}
Since $|S_{+}\cap S_{y}|=\sum_{i=1}^{n}\ind[\hat{y}_i=y_i=y]$, $|S_{-}\cap S_{y}|=\sum_{i=1}^{n}\ind[\hat{y}_i\neq y_i, y_i=y]$, according to Hoeffding's inequality, we have
\begin{align*}
    &\mathbb{P}\big(\big||S_{+}\cap S_{y}|-\mathbb{E}[|S_{+}\cap S_{y}|]\big|\geq t\big)\leq 2\exp\Big(-\frac{2t^2}{n}\Big),\,\forall\, y\in\{\pm 1\},\\
    &\mathbb{P}\big(\big||S_{-}\cap S_{y}|-\mathbb{E}[|S_{-}\cap S_{y}|]\big|\geq t\big)\leq 2\exp\Big(-\frac{2t^2}{n}\Big),\,\forall\, y\in\{\pm 1\}.
\end{align*}
According to the definition of $\cD$ in Definition \ref{def:data}, we have $\mathbb{E}[|S_{+}\cap S_{y}|]=(1-p)n/2$, $\mathbb{E}[|S_{-}\cap S_{y}|]=pn/2$. It follows with probability at least $1-\delta$ that
\begin{equation*}
    \Big||S_{+}\cap S_{y}|-\frac{(1-p)n}{2}\Big|\leq\sqrt{\frac{n}{2}\log\Big(\frac{8}{\delta}\Big)}, \Big||S_{-}\cap S_{y}|-\frac{pn}{2}\Big|\leq\sqrt{\frac{n}{2}\log\Big(\frac{8}{\delta}\Big)}, \forall\, y\in\{\pm 1\},
\end{equation*}
which completes the proof.
\end{proof}

The following lemma estimates the norms of the noise vectors $\bxi_i$, $i\in [n]$, and gives an upper bound of their inner products with each other and with the signal vector $\bmu$. 

\begin{lemma}\label{lm: data inner products}
Suppose that $\delta > 0$ and $d = \Omega( \log(6n / \delta) ) $. Then with probability at least $1 - \delta$, 
\begin{align*}
    &\sigma_p^2 d / 2\leq \| \bxi_i \|_2^2 \leq 3\sigma_p^2 d / 2,\\
    & |\la \bxi_i, \bxi_{i'} \ra| \leq 2\sigma_p^2 \cdot \sqrt{d \log(6n^2 / \delta)}, \\
    & |\la\bxi_i,\bmu\ra|\leq \|\bmu\|_2\sigma_p\cdot\sqrt{2\log(6n/\delta)}
\end{align*}
for all $i,i'\in [n]$.
\end{lemma}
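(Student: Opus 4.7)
Each $\bxi_i \sim \cN(\mathbf{0}, \sigma_p^2 \Ib)$ is an independent Gaussian vector, so all three bounds reduce to standard Gaussian / chi-squared concentration. I will prove each of the three inequalities with failure probability at most $\delta/3$, then union bound over pairs $(i,i')$ and singletons $i$, using the cardinality estimates $n(n-1)/2 \leq n^2$ and $n$.

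\textbf{Step 1 (norm bound).} The quantity $\|\bxi_i\|_2^2 / \sigma_p^2$ is a $\chi^2$ random variable with $d$ degrees of freedom. By the Laurent--Massart tail inequality, for any $t > 0$,
\begin{equation*}
\PP\bigl( \bigl| \|\bxi_i\|_2^2 - \sigma_p^2 d \bigr| \geq 2\sigma_p^2 \sqrt{dt} + 2\sigma_p^2 t \bigr) \leq 2e^{-t}.
\end{equation*}
Setting $t = \log(6n/\delta)$ and using the assumption $d = \Omega(\log(6n/\delta))$ (so that $2\sqrt{dt} + 2t \leq d/2$), a union bound over $i \in [n]$ delivers $\sigma_p^2 d/2 \leq \|\bxi_i\|_2^2 \leq 3\sigma_p^2 d/2$ simultaneously for all $i$ with failure probability at most $\delta/3$.

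\textbf{Step 2 (cross-product bound).} For fixed $i \neq i'$, condition on $\bxi_{i'}$. Since $\bxi_i$ is independent of $\bxi_{i'}$, the inner product $\la \bxi_i, \bxi_{i'}\ra$ is a univariate Gaussian with mean $0$ and variance $\sigma_p^2 \|\bxi_{i'}\|_2^2$. A standard Gaussian tail bound gives
\begin{equation*}
\PP\bigl( |\la \bxi_i, \bxi_{i'}\ra| \geq \sigma_p \|\bxi_{i'}\|_2 \sqrt{2\log(6n^2/\delta)} \,\bigm|\, \bxi_{i'} \bigr) \leq \frac{\delta}{3n^2}.
\end{equation*}
On the event from Step 1, $\|\bxi_{i'}\|_2 \leq \sqrt{3\sigma_p^2 d/2} \leq \sqrt{2\sigma_p^2 d}$, so combining and union bounding over the at most $n^2$ ordered pairs yields $|\la \bxi_i, \bxi_{i'}\ra| \leq 2\sigma_p^2 \sqrt{d \log(6n^2/\delta)}$ with failure probability at most $\delta/3$.

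\textbf{Step 3 (signal inner product) and conclusion.} The random variable $\la \bxi_i, \bmu\ra$ is exactly $\cN(0, \sigma_p^2 \|\bmu\|_2^2)$ since $\bmu$ is deterministic, so a Gaussian tail bound with parameter $\log(6n/\delta)$ together with a union bound over $i \in [n]$ gives $|\la \bxi_i, \bmu\ra| \leq \|\bmu\|_2 \sigma_p \sqrt{2\log(6n/\delta)}$ for all $i$ with failure probability at most $\delta/3$. Combining all three events via a final union bound completes the proof. There is no substantive obstacle here; the only mild care point is ensuring the chi-squared deviation in Step 1 actually stays below $d/2$, which is why the hypothesis $d = \Omega(\log(6n/\delta))$ is required (and already present in the statement).
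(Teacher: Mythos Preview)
Your proposal is correct and follows essentially the same approach as the paper: sub-exponential/Gaussian concentration for each of the three quantities, followed by a union bound with the $\delta/3$ budget split. The paper invokes Bernstein's inequality directly for both the norm bound and the cross-product $\la \bxi_i,\bxi_{i'}\ra$ (treating the latter as a sum of sub-exponential terms), whereas you use Laurent--Massart for the norm and a conditioning-then-Gaussian-tail argument for the cross-product; these are interchangeable standard tools and lead to the same conclusion.
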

\begin{proof}[Proof of Lemma~\ref{lm: data inner products}] By Bernstein's inequality, with probability at least $1 - \delta / (3n)$ we have
\begin{align*}
    \big| \| \bxi_i \|_2^2 - \sigma_p^2 d \big| = O(\sigma_p^2 \cdot \sqrt{d \log(6n / \delta)}).
\end{align*}
Therefore, if we set appropriately $d = \Omega( \log(6n / \delta) )$, we get 
\begin{align*}
     \sigma_p^2 d /2  \leq \| \bxi_i \|_2^2 \leq 3\sigma_p^2 d / 2.
\end{align*}
Moreover, clearly $\la \bxi_i, \bxi_{i'} \ra$ has mean zero. 
For any $i,i'$ with $i\neq i'$, by Bernstein's inequality, with probability at least $1 - \delta / (3n^2)$ we have
\begin{align*}
    | \la \bxi_i, \bxi_{i'} \ra| \leq 2\sigma_p^2 \cdot \sqrt{d \log(6n^2 / \delta)}.
\end{align*}
Finally, note that $\la\bxi_i,\bmu\ra\sim\cN(0,\|\bmu\|_2^2\sigma_p^2)$. By Gaussian tail bounds, with probability at least $1-\delta/3n$ we have
\begin{equation*}
    |\la\bxi_i,\bmu\ra|\leq\|\bmu\|_2\sigma_p\cdot\sqrt{2\log(6n/\delta)}.
\end{equation*}
Applying a union bound completes the proof.
\end{proof}

Now turning to network initialization, the following lemma studies the inner product between a randomly initialized CNN convolutional filter $\wb_{j,r}^{(0)}$ ($j\in \{\pm1\}$ and $r\in [m]$) and the signal/noise vectors in the training data. The calculations characterize how the neural network at initialization randomly captures signal and noise information.

\begin{lemma}\label{lm: initialization inner products} Suppose that $d = \Omega(\log(mn/\delta))$, $ m = \Omega(\log(1 / \delta))$. Then with probability at least $1 - \delta$, 
\begin{align*}
    & \sigma_0^2 d / 2 \leq \| \wb_{j, r}^{(0)} \|_2^2 \leq 3 \sigma_0^2 d / 2, \\
    &|\la \wb_{j,r}^{(0)}, \bmu \ra | \leq \sqrt{2 \log(12 m/\delta)} \cdot \sigma_0 \| \bmu \|_2,\\
    &| \la \wb_{j,r}^{(0)}, \bxi_i \ra | \leq 2\sqrt{ \log(12 mn/\delta)}\cdot \sigma_0 \sigma_p \sqrt{d} 
\end{align*}
for all $r\in [m]$,  $j\in \{\pm 1\}$ and $i\in [n]$. Moreover, 
\begin{align*}
    &\sigma_0 \| \bmu \|_2 / 2 \leq \max_{r\in[m]} j\cdot \la \wb_{j,r}^{(0)}, \bmu \ra \leq \sqrt{2 \log(12 m/\delta)} \cdot \sigma_0 \| \bmu \|_2,\\
    &\sigma_0 \sigma_p \sqrt{d} / 4 \leq \max_{r\in[m]} j\cdot \la \wb_{j,r}^{(0)}, \bxi_i \ra \leq 2\sqrt{ \log(12 mn/\delta)} \cdot \sigma_0 \sigma_p \sqrt{d}
\end{align*}
for all $j\in \{\pm 1\}$ and $i\in [n]$.
\end{lemma}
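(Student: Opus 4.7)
My plan is to combine standard Gaussian tail bounds and chi-squared (Bernstein) concentration with a union bound across $j \in \{\pm 1\}$, $r \in [m]$, and $i \in [n]$, distributing the failure probability $\delta$ across all the events. I would condition throughout on the event provided by Lemma~\ref{lm: data inner products}, which controls $\|\bxi_i\|_2$ and holds with probability $\geq 1-\delta/2$; it then remains to control the initialization randomness with failure probability $\delta/2$.

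For the upper bounds, the random variable $\|\wb_{j,r}^{(0)}\|_2^2/\sigma_0^2$ is distributed as $\chi^2_d$, so Bernstein's inequality gives $|\|\wb_{j,r}^{(0)}\|_2^2 - \sigma_0^2 d| = O\big(\sigma_0^2\sqrt{d\log(1/\delta')}\big)$ with failure probability $\delta'$. Taking $\delta' = \Theta(\delta/m)$ and invoking $d = \Omega(\log(mn/\delta))$ sandwiches the squared norm between $\sigma_0^2 d/2$ and $3\sigma_0^2 d/2$. The signal inner product $\la \wb_{j,r}^{(0)}, \bmu\ra$ is a centred Gaussian with variance $\sigma_0^2\|\bmu\|_2^2$, so the Gaussian tail bound yields $|\la\wb_{j,r}^{(0)},\bmu\ra| \leq \sqrt{2\log(12m/\delta)}\cdot\sigma_0\|\bmu\|_2$ with failure probability $\leq \delta/(12m)$. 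For the noise inner product I would condition on $\bxi_i$: then $\la\wb_{j,r}^{(0)},\bxi_i\ra \mid \bxi_i \sim \cN(0,\sigma_0^2\|\bxi_i\|_2^2)$. Combining the Gaussian tail bound with $\|\bxi_i\|_2 \leq \sigma_p\sqrt{3d/2}$ gives $|\la\wb_{j,r}^{(0)},\bxi_i\ra| \leq \sqrt{3\log(12mn/\delta)}\cdot \sigma_0\sigma_p\sqrt{d} \leq 2\sqrt{\log(12mn/\delta)}\cdot\sigma_0\sigma_p\sqrt{d}$. A union bound over $j$, $r$, $i$ closes all three upper-bound statements.

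For the lower bounds on the maxima, I would set $Z_r := j\la\wb_{j,r}^{(0)},\bmu\ra / (\sigma_0\|\bmu\|_2)$, which by Gaussian symmetry are i.i.d.\ $\cN(0,1)$ across $r$. Letting $c_0 := \PP(\cN(0,1) \geq 1/2) > 0$ be an absolute constant, the bound $\PP(\max_r Z_r < 1/2) \leq (1-c_0)^m$ combined with $m = \Omega(\log(1/\delta))$ gives $\max_r Z_r \geq 1/2$ with the desired failure probability, i.e., $\max_r j\la\wb_{j,r}^{(0)},\bmu\ra \geq \sigma_0\|\bmu\|_2/2$. The noise analogue conditions on $\bxi_i$ and uses the lower bound $\|\bxi_i\|_2 \geq \sigma_p\sqrt{d/2}$ from Lemma~\ref{lm: data inner products} to conclude $\max_r j\la\wb_{j,r}^{(0)},\bxi_i\ra \geq \sigma_0\|\bxi_i\|_2/2 \geq \sigma_0\sigma_p\sqrt{d}/(2\sqrt{2}) \geq \sigma_0\sigma_p\sqrt{d}/4$, with failure probability $(1-c_0)^m$; a union bound over $(j,i)$ requires $m = \Omega(\log(n/\delta))$, which is guaranteed by the ambient Condition~\ref{condition:d_sigma0_eta}.

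There is no deep obstacle here -- the proof is essentially bookkeeping. The only care required is to cleanly separate the two layers of randomness (the data $\{\bxi_i\}$ and the initialization $\Wb^{(0)}$) before applying Gaussian concentration to the initialization conditional on the data, and to partition the $\delta$ budget so that all $O(mn)$ upper-bound events and $O(n)$ lower-bound events fail with total probability $\leq \delta$.
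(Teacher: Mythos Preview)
Your proposal is correct and follows essentially the same approach as the paper: Bernstein for the chi-squared norm, Gaussian tail bounds for the inner products (conditioning on the $\bxi_i$'s via Lemma~\ref{lm: data inner products} for the noise terms), and the independence-based argument $\PP(\max_r Z_r < 1/2) \leq (1-c_0)^m$ for the lower bounds on the maxima. You are in fact slightly more careful than the paper in flagging that the noise lower bound requires a union bound over $i\in[n]$ and hence needs $m=\Omega(\log(n/\delta))$ rather than just $m=\Omega(\log(1/\delta))$; the paper's proof glosses over this and simply says the noise case follows ``the same proof outline,'' implicitly relying on Condition~\ref{condition:d_sigma0_eta} as you do.
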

\begin{proof}[Proof of Lemma~\ref{lm: initialization inner products}]
First of all, the initial weights $\wb_{j,r}^{(0)} \sim \cN (\mathbf{0}, \sigma_0 \Ib)$. 
By Bernstein's inequality, with probability at least $1 - \delta / (6m)$ we have
\begin{align*}
    \big| \| \wb_{j,r}^{(0)} \|_2^2 - \sigma_0^2 d \big| = O(\sigma_0^2 \cdot \sqrt{d \log(12m / \delta)}).
\end{align*}
Therefore, if we set appropriately $d = \Omega( \log(mn / \delta) )$, we have with probability at least $1 - \delta / 3$, for all $j\in \{\pm 1\}$ and $r\in [m]$,  
\begin{align*}
     \sigma_0^2 d /2  \leq \| \wb_{j,r}^{(0)} \|_2^2 \leq 3\sigma_0^2 d / 2.
\end{align*}

Next, it is clear that for each $r\in [m]$, $j\cdot \la \wb_{j,r}^{(0)}, \bmu \ra$ is a Gaussian random variable with mean zero and variance $\sigma_0^2 \| \bmu \|_2^2$. Therefore, by Gaussian tail bound and union bound, with probability at least $1 - \delta/6$, for all $j\in \{\pm 1\}$ and $r\in [m]$, 
\begin{align*}
    j\cdot \la \wb_{j,r}^{(0)}, \bmu \ra \leq |\la \wb_{j,r}^{(0)}, \bmu \ra| \leq \sqrt{2\log(12m/\delta)} \cdot \sigma_0 \| \bmu \|_2.
\end{align*}
Moreover, $\PP( \sigma_0 \| \bmu \|_2 / 2 > j\cdot \la \wb_{j,r}^{(0)}, \bmu \ra )$ is an absolute constant, and therefore with the condition $m = \Omega (\log(1 / \delta))$, we have
\begin{align*}
    \PP\big( \sigma_0 \| \bmu \|_2 / 2 \leq \max_{r\in[m]} j\cdot \la \wb_{j,r}^{(0)}, \bmu \ra ) &= 1 - \PP( \sigma_0 \| \bmu \|_2 / 2 > \max_{r\in[m]} j\cdot \la \wb_{j,r}^{(0)}, \bmu \ra \big) \\
    &= 1 - \PP\big( \sigma_0 \| \bmu \|_2 / 2 > j\cdot \la \wb_{j,r}^{(0)}, \bmu \ra \big)^{2m} \\
    &\geq 1 - \delta / 6,
\end{align*}
hence with probability at least $1 - \delta / 3$, we have $\sigma_0 \| \bmu \|_2 / 2 \leq \max_{r\in[m]} j\cdot \la \wb_{j,r}^{(0)}, \bmu \ra \leq \sqrt{2 \log(12 m/\delta)} \cdot \sigma_0 \| \bmu \|_2$. 

Finally, under the results of Lemma~\ref{lm: data inner products}, we have $ \sigma_p \sqrt{d} / \sqrt{2} \leq \| \bxi_i \|_2 \leq \sqrt{3/2}\cdot \sigma_p \sqrt{d}$ for all $i\in [n]$. Therefore, we can get the result for $\la \wb_{j,r}^{(0)}, \bxi_i\ra$ with probability at least $1 - \delta / 3$, following the same proof outline as $j\cdot \la \wb_{j,r}^{(0)}, \bmu \ra$. 
\end{proof}

Next, we denote $S_{i}^{(0)}$ as $\{ r\in[m]:\la\wb_{y_i,r}^{(0)},\bxi_i\ra>0\}$ and $S_{j,r}^{(t)}$ as $\{i\in[n]:y_i=j,\la\wb_{j,r}^{(t)},\bxi_i\ra>0\}$, $j\in\{\pm 1\}$, $r\in[m]$. We give a lower bound of $|S_{i}^{(0)}|$ and $|S_{j,r}^{(0)}|$ in the following two lemmas. 
\begin{lemma}\label{lm: number of initial activated neurons}
Suppose that $\delta>0$ and $m\geq50\log(2n/\delta)$. Then with probability at least $1-\delta$, 
\begin{equation*}
    |S_{i}^{(0)}|\geq 0.4m,\, \forall i\in[n].
\end{equation*}
\end{lemma}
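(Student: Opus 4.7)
The plan is to exploit the symmetry of the Gaussian initialization. For each fixed $i \in [n]$, I would condition on $\bxi_i$, which is independent of $\wb_{y_i, r}^{(0)}$. Conditional on $\bxi_i$ (which is nonzero almost surely), the scalar $\langle \wb_{y_i,r}^{(0)}, \bxi_i \rangle$ is a centered Gaussian with variance $\sigma_0^2 \|\bxi_i\|_2^2 > 0$, and its distribution is symmetric about zero. Hence $\PP(\langle \wb_{y_i,r}^{(0)}, \bxi_i \rangle > 0 \mid \bxi_i) = 1/2$ for each $r \in [m]$. Moreover, since $\wb_{y_i,1}^{(0)}, \dots, \wb_{y_i,m}^{(0)}$ are independent, the indicators $\ind[\langle \wb_{y_i,r}^{(0)}, \bxi_i \rangle > 0]$ are conditionally i.i.d.\ Bernoulli$(1/2)$ given $\bxi_i$, so $|S_i^{(0)}|$ is distributed as $\mathrm{Binomial}(m, 1/2)$.

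From here I would apply Hoeffding's inequality to this binomial: for any $t > 0$,
\begin{equation*}
\PP\bigl(|S_i^{(0)}| \leq (1/2 - t) m \,\big|\, \bxi_i\bigr) \leq \exp(-2 t^2 m).
\end{equation*}
Choosing $t = 0.1$ gives $\PP(|S_i^{(0)}| \leq 0.4 m \mid \bxi_i) \leq \exp(-m/50)$, and since the bound does not depend on $\bxi_i$, the same inequality holds unconditionally.

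Finally, I would take a union bound over $i \in [n]$:
\begin{equation*}
\PP\bigl(\exists\, i \in [n]:\; |S_i^{(0)}| < 0.4 m\bigr) \leq n \exp(-m/50).
\end{equation*}
Under the hypothesis $m \geq 50 \log(2n/\delta)$, the right-hand side is at most $\delta/2 \leq \delta$, which yields the claim with probability at least $1-\delta$.

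There is essentially no serious obstacle here; the argument is a routine symmetry-plus-Chernoff computation. The only thing to be careful about is to justify that the events for different $r$ (for a fixed $i$) are conditionally independent given $\bxi_i$, which follows immediately from the independence of the filters $\wb_{y_i,r}^{(0)}$ at initialization. No structural properties of the decomposition coefficients or the dynamics are needed, so this lemma can be stated purely in terms of the random initialization.
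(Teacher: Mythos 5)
Your proof is correct and follows essentially the same route as the paper's: both arguments reduce $|S_i^{(0)}|$ to a $\mathrm{Binomial}(m,1/2)$ random variable (the paper states $\PP(\langle\wb_{y_i,r}^{(0)},\bxi_i\rangle>0)=1/2$ directly, while you spell out the conditioning on $\bxi_i$, which is a harmless clarification) and then apply Hoeffding's inequality followed by a union bound over $i\in[n]$. The paper uses the two-sided Hoeffding bound $|\,|S_i^{(0)}|/m - 1/2| \leq \sqrt{\log(2n/\delta)/(2m)}$ whereas you use a one-sided tail, but the resulting condition $m \geq 50\log(2n/\delta)$ and conclusion are identical.
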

\begin{proof}[Proof of Lemma \ref{lm: number of initial activated neurons}]
Note that $|S_{i}^{(0)}|=\sum_{r=1}^{m}\ind[\la\wb_{y_i,r}^{(0)},\bxi_i\ra>0]$ and $P(\la\wb_{y_i,r}^{(0)},\bxi_i\ra>0)=1/2$, then by Hoeffding’s inequality, with probability at least $1-\delta/n$, we have
\begin{equation*}
    \bigg| \frac{|S_{i}^{(0)}|} {m} - \frac{1}{2}\bigg|  \leq\sqrt{\frac{\log(2n/\delta)}{2m}}.
\end{equation*}
Therefore, as long as $m\geq50\log(2n/\delta)$, by applying union bound, with probability at least $1-\delta$, we have
\begin{equation*}
    |S_{i}^{(0)}|\geq0.4m,\, \forall i\in[n].
\end{equation*}
\end{proof}

\begin{lemma}\label{lm: number of initial activated neurons 2}
Suppose that $\delta>0$ and $n\geq32\log(4m/\delta)$. Then with probability at least $1-\delta$,
\begin{equation*}
    |S_{j,r}^{(0)}|\geq n/8,\,\forall j\in\{\pm 1\}, r\in[m].
\end{equation*}
\end{lemma}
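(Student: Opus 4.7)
The plan is to express $|S_{j,r}^{(0)}|$ as a sum of indicator random variables, apply a concentration inequality conditional on $\wb_{j,r}^{(0)}$, and then take a union bound over $j \in \{\pm 1\}$ and $r \in [m]$. By definition,
\[
|S_{j,r}^{(0)}| \;=\; \sum_{i=1}^n \ind[y_i = j]\cdot \ind\bigl[\la \wb_{j,r}^{(0)}, \bxi_i\ra > 0\bigr] \;=:\; \sum_{i=1}^n W_i^{(j,r)}.
\]

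The first step is to identify the distribution of each $W_i^{(j,r)}$. Since, by the data model in Definition~\ref{def:data}, the label $y_i$ is a Rademacher random variable independent of $\bxi_i$ (the randomness in $y_i$ comes from the Rademacher $\hat y_i$ and the independent label-flip), we have $\PP(y_i = j) = 1/2$. Moreover, $\wb_{j,r}^{(0)}\sim\cN(\mathbf 0,\sigma_0^2\Ib)$ is drawn independently of the data, so conditional on $\wb_{j,r}^{(0)}$ (which is almost surely nonzero), $\la \wb_{j,r}^{(0)},\bxi_i\ra \mid \wb_{j,r}^{(0)} \sim \cN(0,\sigma_0^2\|\bxi_i\|_2^2)$-like; more importantly, it is a continuous symmetric random variable independent of $y_i$, so $\PP(\la \wb_{j,r}^{(0)},\bxi_i\ra > 0\mid \wb_{j,r}^{(0)}) = 1/2$. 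Consequently, conditional on $\wb_{j,r}^{(0)}$, the $W_i^{(j,r)}$ for $i\in[n]$ are i.i.d.\ Bernoulli$(1/4)$.

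The second step is to apply Hoeffding's inequality conditionally on $\wb_{j,r}^{(0)}$. This gives, for any $t\geq 0$,
\[
\PP\Bigl(\sum_{i=1}^n W_i^{(j,r)} \leq n/4 - t \,\Bigm|\, \wb_{j,r}^{(0)}\Bigr) \;\leq\; \exp(-2t^2/n).
\]
Choosing $t = n/8$ yields $\PP(|S_{j,r}^{(0)}| < n/8 \mid \wb_{j,r}^{(0)}) \leq \exp(-n/32)$, and since the bound does not depend on $\wb_{j,r}^{(0)}$, the same bound holds unconditionally.

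Finally, I take a union bound over the $2m$ pairs $(j,r)\in\{\pm 1\}\times[m]$:
\[
\PP\bigl(\exists\,(j,r): |S_{j,r}^{(0)}| < n/8\bigr) \leq 2m\,\exp(-n/32).
\]
Under the assumption $n \geq 32\log(4m/\delta)$, we have $2m\exp(-n/32) \leq 2m\cdot \delta/(4m) = \delta/2 \leq \delta$, so the desired conclusion holds with probability at least $1-\delta$. The proof is essentially routine—no step here is expected to be a serious obstacle—the only conceptual point worth checking is the independence of $y_i$ from the sign of $\la \wb_{j,r}^{(0)},\bxi_i\ra$, which follows from the fact that $\bxi_i$ is drawn independently of the label-generation mechanism and the filter weights are drawn independently of the data.
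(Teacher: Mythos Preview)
Your proof is correct and follows essentially the same approach as the paper: write $|S_{j,r}^{(0)}|$ as a sum of i.i.d.\ Bernoulli$(1/4)$ indicators, apply Hoeffding's inequality, and union bound over the $2m$ choices of $(j,r)$. The only cosmetic difference is that you condition explicitly on $\wb_{j,r}^{(0)}$ before invoking independence, whereas the paper applies Hoeffding directly; both yield the same bound.
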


\begin{proof}[Proof of Lemma \ref{lm: number of initial activated neurons 2}]
Note that $|S_{j,r}^{(0)}|=\sum_{i=1}^{n}\ind[y_i=j]\ind[\la\wb_{j,r}^{(0)},\bxi_i\ra>0]$ and $\mathbb{P}(y_i=j,\la\wb_{j,r}^{(0)},\bxi_i\ra>0)=1/4$, then by Hoeffding's inequality, with probability at least $1-\delta/2m$, we have
\begin{equation*}
    \big||S_{j,r}^{(0)}|/n-1/4\big|\leq\sqrt{\frac{\log(4m/\delta)}{2n}}.
\end{equation*}
Therefore, as long as $n\geq 32\log(4m/\delta)$, by applying union bound, we have with probability at least $1 - \delta$, 
\begin{equation*}
    |S_{j,r}^{(0)}|\geq n/8,\,\forall j\in\{\pm 1\}, r\in[m].
\end{equation*}
\end{proof}

\section{Signal-noise Decomposition Coefficient Analysis}\label{section:decompositionproof}

In this section, we establish a series of results on the signal-noise decomposition. These results are based on the conclusions in Appendix~\ref{sec: initial}, which hold with high probability. Denote by  $\cE_{\mathrm{prelim}}$ the event that all the results in Appendix~\ref{sec: initial} hold (for a given $\delta$, we see $\PP (\cE_{\mathrm{prelim}}) \geq 1 - 7\delta$ by a union bound). For simplicity and clarity, we state all the results in this and the following sections conditional on $\cE_{\mathrm{prelim}}$. 

\subsection{Iterative Expression for Decomposition Coefficients}

We begin by analyzing the coefficients in the signal-noise decomposition in Definition~\ref{def:w_decomposition}. The first lemma presents an iterative expression for the coefficients. 

\begin{lemma}\label{lm: coefficient iterative} (Restatement of Lemma~\ref{lemma:coefficient_iterative_proof})
The coefficients $\gamma_{j,r}^{(t)},\zeta_{j,r,i}^{(t)},\omega_{j,r,i}^{(t)}$ defined in Definition~\ref{def:w_decomposition} satisfy the following iterative equations:
\begin{align*}
    &\gamma_{j,r}^{(0)}, \zeta_{j,r,i}^{(0)}, \omega_{j,r,i}^{(0)} = 0,\\
    &\gamma_{j,r}^{(t+1)} = \gamma_{j,r}^{(t)} - \frac{\eta}{nm} \cdot \bigg[\sum_{i\in S_{+}} \ell_i'^{(t)} \sigma'(\la\wb_{j,r}^{(t)}, \hat{y}_{i} \cdot \bmu\ra) - \sum_{i\in S_{-}} \ell_i'^{(t)} \sigma'(\la\wb_{j,r}^{(t)}, \hat{y}_{i} \cdot \bmu\ra)\bigg]  \cdot \| \bmu \|_2^2, \\
    &\zeta_{j,r,i}^{(t+1)} = \zeta_{j,r,i}^{(t)} - \frac{\eta}{nm} \cdot \ell_i'^{(t)}\cdot \sigma'(\la\wb_{j,r}^{(t)}, \bxi_{i}\ra) \cdot \| \bxi_i \|_2^2 \cdot \ind(y_{i} = j), \\
    &\omega_{j,r,i}^{(t+1)} = \omega_{j,r,i}^{(t)} + \frac{\eta}{nm} \cdot \ell_i'^{(t)}\cdot \sigma'(\la\wb_{j,r}^{(t)}, \bxi_{i}\ra) \cdot \| \bxi_i \|_2^2 \cdot \ind(y_{i} = -j),
\end{align*}
for all $r\in [m]$,  $j\in \{\pm 1\}$ and $i\in [n]$.
\end{lemma}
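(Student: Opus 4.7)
The plan is a direct induction on $t$, substituting the signal-noise decomposition at iteration $t$ into the gradient descent recursion~\eqref{eq:gdupdate} and matching coefficients. The base case $t=0$ is immediate, since $\wb_{j,r}^{(0)}$ corresponds to all coefficients $\gamma_{j,r}^{(0)}$, $\zeta_{j,r,i}^{(0)}$, $\omega_{j,r,i}^{(0)}$ being zero on the right-hand side of~\eqref{eq:w_decomposition}.

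For the inductive step, I first observe that under Condition~\ref{condition:d_sigma0_eta} (in particular $d \gg n$), the Gaussian noise vectors $\bxi_1,\dots,\bxi_n$ together with $\bmu$ are linearly independent almost surely, so any vector of the form $\wb_{j,r}^{(0)} + c_0 \bmu + \sum_i c_i \bxi_i$ uniquely determines the coefficients $c_0,\dots,c_n$. This is what makes the decomposition well-defined and allows us to read off updates by matching. Plugging the decomposition of $\wb_{j,r}^{(t)}$ from \eqref{eq:w_decomposition} into \eqref{eq:gdupdate} and identifying the coefficients of $\bmu$ and of each $\bxi_k$ gives
\[
\gamma_{j,r}^{(t+1)} = \gamma_{j,r}^{(t)} - \frac{\eta}{nm}\,\|\bmu\|_2^2 \sum_{i=1}^{n} \ell_i'^{(t)}\,\sigma'(\la \wb_{j,r}^{(t)}, \hat{y}_i \bmu\ra)\,\hat{y}_i y_i,
\]
and
\[
\rho_{j,r,k}^{(t+1)} - \rho_{j,r,k}^{(t)} = -\frac{\eta}{nm}\,\ell_k'^{(t)}\,\sigma'(\la \wb_{j,r}^{(t)}, \bxi_k\ra)\,\|\bxi_k\|_2^2 \cdot j y_k,
\]
after clearing the factors $j\|\bmu\|_2^{-2}$ and $\|\bxi_k\|_2^{-2}$ that appear in~\eqref{eq:w_decomposition}.

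From here the three stated recursions follow by a simple case split. Since $\hat{y}_i y_i = +1$ on $S_+$ and $-1$ on $S_-$, splitting the sum for $\gamma$ into these two index sets recovers~\eqref{iterative equation2}. For the noise coefficient, note that $\ell_k'^{(t)} < 0$ and $\sigma'(\cdot) \geq 0$, so the sign of the $\rho$-increment is dictated entirely by $-j y_k$: it is non-negative when $y_k = j$ and non-positive when $y_k = -j$. Combined with $\rho_{j,r,k}^{(0)} = 0$, this shows $\rho_{j,r,k}^{(t)} \geq 0$ throughout training when $y_k = j$ and $\rho_{j,r,k}^{(t)} \leq 0$ when $y_k = -j$, so $\zeta_{j,r,k}^{(t)} = \rho_{j,r,k}^{(t)}\ind(y_k = j)$ and $\omega_{j,r,k}^{(t)} = \rho_{j,r,k}^{(t)}\ind(y_k = -j)$. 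Inserting these identities into the $\rho$ update yields \eqref{iterative equation3} and \eqref{iterative equation4}.

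There is essentially no difficult step here; the whole argument is a bookkeeping exercise. The only subtlety is ensuring that the decomposition in Definition~\ref{def:w_decomposition} is well-defined so that coefficients can be identified unambiguously, which is exactly the uniqueness claim in that definition and follows from almost-sure linear independence of $\{\bmu,\bxi_1,\dots,\bxi_n\}$ in the high-dimensional regime assumed in Condition~\ref{condition:d_sigma0_eta}.
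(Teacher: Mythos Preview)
Your proof is correct and follows essentially the same approach as the paper: both rely on the almost-sure linear independence of $\{\bmu,\bxi_1,\dots,\bxi_n\}$ to match coefficients in the gradient descent update, then use the sign of $\ell_i'^{(t)}$ to identify $\zeta$ and $\omega$ with the positive and negative parts of $\rho$. The paper unrolls the recursion to a closed form first and then reads off the iterative versions, while you proceed by direct induction, but the substance is identical; one minor slip is that the sign of the $\rho$-increment is governed by $jy_k$ rather than $-jy_k$, though your stated conclusion (non-negative when $y_k=j$, non-positive when $y_k=-j$) is correct.
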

\begin{proof}[Proof of Lemma \ref{lm: coefficient iterative}]

First, we iterate the gradient descent update rule \eqref{eq:gdupdate} $t$ times and get
\begin{align*}
    \wb_{j,r}^{(t+1)} &=\wb_{j,r}^{(0)}- \frac{\eta}{nm}\sum_{s=0}^{t}\sum_{i=1}^n\ell_i'^{(s)}\cdot  \sigma'(\la\wb_{j,r}^{(s)}, \bxi_{i}\ra)\cdot j y_{i}\bxi_{i}\\
    &\quad\quad-\frac{\eta}{nm}\sum_{s=0}^{t}\sum_{i=1}^n \ell_i'^{(s)} \cdot \sigma'(\la\wb_{j,r}^{(s)}, \hat{y}_{i} \bmu\ra)\cdot \hat{y}_{i}y_i j\bmu.
\end{align*}
According to the definition of $\gamma_{j,r}^{(t)}$ and $\rho_{j,r,i}^{(t)}$, 
\begin{align*}
    \wb_{j,r}^{(t)} = \wb_{j,r}^{(0)} + j \cdot \gamma_{j,r}^{(t)} \cdot \| \bmu \|_2^{-2} \cdot \bmu + \sum_{ i = 1}^n \rho_{j,r,i}^{(t) }\cdot \| \bxi_i \|_2^{-2} \cdot \bxi_{i}.
\end{align*}
Note that $\bxi_i$ and $\bmu$ are linearly independent with probability $1$, under which condition we have the unique representation
\begin{align*}
    & \gamma_{j,r}^{(t)} = -\frac{\eta}{nm}\sum_{s=0}^{t}\sum_{i=1}^n \ell_i'^{(s)} \cdot \sigma'(\la\wb_{j,r}^{(s)}, \hat{y}_{i} \bmu\ra)\cdot\| \bmu \|_2^{2}\cdot \hat{y}_{i}y_i,\\
    & \rho_{j,r,i}^{(t)} = -\frac{\eta}{nm}\sum_{s=0}^{t}\ell_i'^{(s)}\cdot  \sigma'(\la\wb_{j,r}^{(s)}, \bxi_{i}\ra)\cdot\|\bxi_i\|_{2}^{2}\cdot j y_{i}.
\end{align*}
Recall $S_{+}=\{i|y_i=\hat{y}_i\}$, $S_{-}=\{i|y_i\neq \hat{y}_i\}$, we can further write
\begin{equation}
    \gamma_{j,r}^{(t)}=-\frac{\eta}{nm}\sum_{s=0}^{t}\sum_{i\in S_{+}} \ell_i'^{(s)} \cdot \sigma'(\la\wb_{j,r}^{(s)}, \hat{y}_{i} \bmu\ra)\cdot\| \bmu \|_2^{2}+\frac{\eta}{nm}\sum_{s=0}^{t}\sum_{i\in S_{-}} \ell_i'^{(s)} \cdot \sigma'(\la\wb_{j,r}^{(s)}, \hat{y}_{i} \bmu\ra)\cdot\| \bmu \|_2^{2}.\label{eq: gamma update rule}
\end{equation}
Now with the notation $\zeta_{j,r,i}^{(t)} := \rho_{j,r,i}^{(t)}\ind(\rho_{j,r,i}^{(t)} \geq 0)$, $\omega_{j,r,i}^{(t)} := \rho_{j,r,i}^{(t)}\ind(\rho_{j,r,i}^{(t)} \leq 0)$ and the fact $\ell_i'^{(s)}<0$, we get 
\begin{align}
    \zeta_{j,r,i}^{(t)}&=-\frac{\eta}{nm}\sum_{s=0}^{t}\ell_i'^{(s)}\cdot  \sigma'(\la\wb_{j,r}^{(s)}, \bxi_{i}\ra)\cdot\|\bxi_i\|_{2}^{2}\cdot\ind(y_i=j),\label{eq: zeta update rule} \\
    \omega_{j,r,i}^{(t)}&=\frac{\eta}{nm}\sum_{s=0}^{t}\ell_i'^{(s)}\cdot  \sigma'(\la\wb_{j,r}^{(s)}, \bxi_{i}\ra)\cdot\|\bxi_i\|_{2}^{2}\cdot\ind(y_i=-j).\label{eq: omega update rule}
\end{align}
Writing out the iterative versions of \eqref{eq: gamma update rule}, \eqref{eq: zeta update rule} and \eqref{eq: omega update rule} completes the proof.
\end{proof}

\subsection{Scale of Decomposition Coefficients}
The rest of this section will be dedicated to the proof of the following Proposition~\ref{proposition: range of gamma,zeta,omega}, which shows that the coefficients in the signal-noise decomposition will stay within a reasonable range for a considerable amount of time. Consider the training period $0 \leq t \leq T^*$, where $T^* = \eta^{-1} \poly(\epsilon^{-1}, d, n,m)$, as defined in Theorem~\ref{thm:signal_learning_main}, is the maximum admissible iteration. Now denote
\begin{align}
    & \alpha := 4 \log(T^*), \label{def: alpha} \\
    & \beta := 2 \max_{i, j, r} \{ | \la \wb_{j, r}^{(0)}, \bmu \ra |, | \la \wb_{j, r}^{(0)}, \bxi_i \ra | \}, \label{def: beta}\\
    & \mathrm{SNR} := \|\bmu\|_2/(\sigma_p\sqrt{d}). \label{def: SNR}
\end{align}
By Lemma~\ref{lm: initialization inner products}, $\beta$ can be bounded by $4\sigma_0\cdot \max \{ \sqrt{\log (12mn / \delta)} \cdot\sigma_p \sqrt{d}, \sqrt{\log(12m/\delta)}\cdot\| \bmu \|_2 \}$. Then, by Condition \ref{condition:d_sigma0_eta}, by choosing a large constant $C$, it is straightforward to verify the following inequality:
\begin{equation}
    \max \bigg\{ \beta, \mathrm{SNR} \sqrt{\frac{32 \log(6n / \delta)}{d}} n \alpha, 5 \sqrt{ \frac{\log (6 n^2 / \delta)} {d}} n \alpha \bigg\} \leq \frac{1}{12}. \label{ineq: alpha beta upper bound}
\end{equation}

\begin{proposition} \label{proposition: range of gamma,zeta,omega} (Partial restatement of Proposition~\ref{Proposition: range})
Under Condition \ref{condition:d_sigma0_eta}, for $0\leq t\leq T^*$, we have that
\begin{align}
    &\gamma_{j,r}^{(0)},\zeta_{j,r,i}^{(0)},\omega_{j,r,i}^{(0)}=0\label{initial gamma,zeta,omega}\\
    &0\leq \zeta_{j,r,i}^{(t)}\leq\alpha,\label{ineq: range of zeta}\\
    &0\geq\omega_{j,r,i}^{(t)}\geq-\beta-10\sqrt{\frac{\log(6n^2/\delta)}{d}} n\alpha\geq-\alpha,\label{ineq: range of omega}
\end{align}
and there exists a positive constant $C'$ such that
\begin{equation}
    0\leq \gamma_{j,r}^{(t)}\leq C'\hat{\gamma}\alpha,\label{ineq: range of gamma}
\end{equation}
for all $r\in[m]$, $j\in\{\pm 1\}$ and $i\in[n]$, where $\hat{\gamma}:=n\cdot\mathrm{SNR}^2$. Besides, $\gamma_{j,r}^{(t)}$ is non-decreasing for $0\leq t\leq T^*$.
\end{proposition}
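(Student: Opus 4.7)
The proof is a joint induction on the iteration $t$. The base case $t = 0$ is immediate since all coefficients are initialized to zero by \eqref{initial gamma,zeta,omega}. For the inductive step, I would assume \eqref{ineq: range of zeta}, \eqref{ineq: range of omega}, \eqref{ineq: range of gamma}, and the monotonicity of $\gamma$ hold for all $t' \leq t$, and establish them at $t+1$. A recurring sub-step is to convert the decomposition back into inner products: using Lemma~\ref{lm: data inner products} the cross-contamination $\sum_{k \neq i}\rho_{j,r,k}^{(t)}\la\bxi_k,\bxi_i\ra/\|\bxi_k\|_2^2$ is bounded by $4\sqrt{\log(6n^2/\delta)/d}\cdot n\alpha$ under the inductive hypothesis, which is negligible by \eqref{ineq: alpha beta upper bound}. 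Hence
\begin{equation*}
\la \wb_{j,r}^{(t)}, \bxi_i\ra \;\approx\; \la \wb_{j,r}^{(0)}, \bxi_i\ra + \rho_{j,r,i}^{(t)}, \qquad \la\wb_{j,r}^{(t)},\bmu\ra \;\approx\; \la\wb_{j,r}^{(0)},\bmu\ra + j\gamma_{j,r}^{(t)},
\end{equation*}
up to negligible error, which is the workhorse for reading off activation patterns.

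The monotonicity $\zeta_{j,r,i}^{(t+1)} \geq \zeta_{j,r,i}^{(t)} \geq 0$ and $\omega_{j,r,i}^{(t+1)} \leq \omega_{j,r,i}^{(t)} \leq 0$ are immediate from the iterative equations in Lemma~\ref{lm: coefficient iterative} together with $\ell_i'^{(t)}<0$. For the upper bound $\zeta_{j,r,i}^{(t)} \leq \alpha$, I would use a self-regulating argument: once $\sum_r \zeta_{y_i,r,i}^{(t)}$ reaches order $\alpha = 4\log T^*$, the logit $y_i\cdot f(\Wb^{(t)}, \xb_i)$ is of comparable order (the decomposition approximation above plus a small contribution from $\gamma$), so $|\ell_i'^{(t)}| \lesssim \exp(-\alpha/2) \leq 1/T^{*2}$. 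The single-step increment of $\zeta$, namely $(\eta/(nm))|\ell_i'^{(t)}|\|\bxi_i\|_2^2$, is then $O(\eta\sigma_p^2 d / (nm T^{*2}))$, and summing over at most $T^*$ steps stays well under $\alpha$ by the learning-rate condition in Condition~\ref{condition:d_sigma0_eta}. For the lower bound on $\omega$, I note that $\omega_{j,r,i}^{(t)}$ can decrease only while $\la\wb_{j,r}^{(t)},\bxi_i\ra > 0$. Using the decomposition approximation, this fails once $\omega_{j,r,i}^{(t)} \leq -\la\wb_{j,r}^{(0)},\bxi_i\ra - 5\sqrt{\log(6n^2/\delta)/d}\cdot n\alpha \geq -\beta/2 - 5\sqrt{\log(6n^2/\delta)/d}\cdot n\alpha$; once this happens the ReLU goes dead on sample $i$ and $\omega$ is frozen, yielding \eqref{ineq: range of omega} after a safety factor of two.

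The difficult piece is the non-decreasing property of $\gamma_{j,r}^{(t)}$ together with its upper bound. By the inductive hypothesis $\gamma_{j,r}^{(t)} \geq 0$ and the decomposition approximation, $\sigma'(\la\wb_{j,r}^{(t)}, \hat{y}_i\bmu\ra) = \ind(\hat{y}_i = j)$ up to negligible error. Plugging into \eqref{iterative equation2},
\begin{equation*}
\gamma_{j,r}^{(t+1)} - \gamma_{j,r}^{(t)} \;\approx\; \frac{\eta \|\bmu\|_2^2}{nm}\bigg[\sum_{i \in S_+,\, \hat{y}_i = j}|\ell_i'^{(t)}| \;-\; \sum_{i \in S_-,\, \hat{y}_i = j}|\ell_i'^{(t)}|\bigg].
\end{equation*}
The ratio bound $|\ell_i'^{(t)}|/|\ell_k'^{(t)}| \leq C_5$ from Lemma~\ref{lemma: logit ratio bound} together with the cardinality estimates $|S_+ \cap \{\hat{y}_i = j\}| \gtrsim (1-p)n/2$ and $|S_- \cap \{\hat{y}_i=j\}| \lesssim pn/2$ from Lemma~\ref{lm: estimate S cap S} show the first sum dominates, provided $p$ is small enough relative to $1/C_5$; this is exactly the role of $p \leq 1/C$ in Condition~\ref{condition:d_sigma0_eta}. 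This is the automatic-balance mechanism. The upper bound $\gamma_{j,r}^{(t)} \leq C'\hat\gamma\alpha$ then follows by comparing the per-step increments of $\gamma_{j,r}^{(t)}$ and of $\sum_i \zeta_{j,r,i}^{(t)}$ from \eqref{iterative equation2} and \eqref{iterative equation3}: the activation pattern analysis of $S_{j,r}^{(t)} \supseteq S_{j,r}^{(0)}$ (so $|S_{j,r}^{(t)}| = \Theta(n)$ by Lemma~\ref{lm: number of initial activated neurons 2}) gives a time-invariant ratio $\gamma_{j,r}^{(t)}/\sum_i \zeta_{j,r,i}^{(t)} = \Theta(\|\bmu\|_2^2/(\sigma_p^2 d)) = \Theta(\mathrm{SNR}^2/n)\cdot n$, and then \eqref{ineq: range of zeta} yields $\gamma_{j,r}^{(t)} = O(\mathrm{SNR}^2 n \alpha) = O(\hat\gamma\alpha)$.

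The main obstacle is the circular dependency: the non-decreasing property of $\gamma$ requires Lemma~\ref{lemma: logit ratio bound}, whose proof in turn rests on coefficient bounds of the present proposition. I would close the loop by folding Lemma~\ref{lemma:zeta difference} into the same induction: at step $t+1$, first re-establish $\sum_r \zeta_{y_i,r,i}^{(t+1)} - \sum_r \zeta_{y_k,r,k}^{(t+1)} \leq \kappa$ via the two-case dichotomy sketched after \eqref{zeta difference update} (using only the $\zeta,\omega$ bounds already inherited from step $t$), then derive the balance bound \eqref{logit ratio}, and only after that deploy it to conclude $\gamma_{j,r}^{(t+1)} \geq \gamma_{j,r}^{(t)}$ and the matching upper bound. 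All the estimates go through thanks to the slack left by \eqref{ineq: alpha beta upper bound}.
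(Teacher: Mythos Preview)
Your overall strategy matches the paper's: a joint induction that simultaneously maintains the coefficient bounds and the balanced-logit lemma (the paper packages the latter as Lemma~\ref{lm: balanced logit}, proved under the inductive hypothesis of the proposition). The handling of $\omega$ via the dead-ReLU freezing argument, of $\zeta$ via a self-regulating/last-time argument, and of the $\gamma$ upper bound via a time-invariant ratio against $\zeta$ (the paper tracks $\gamma_{j,r}^{(t)}/\zeta_{j,r,i^*}^{(t)}$ for a fixed $i^*\in S_{j,r}^{(0)}$) are all the same ideas.

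There is one genuine slip in your monotonicity step for $\gamma$. You write that, by $\gamma_{j,r}^{(t)}\geq 0$ and the decomposition approximation, $\sigma'(\la\wb_{j,r}^{(t)},\hat y_i\bmu\ra)=\ind(\hat y_i=j)$. That is not correct: $\sigma'(\la\wb_{j,r}^{(t)},\hat y_i\bmu\ra)=\ind\big(\hat y_i\cdot\la\wb_{j,r}^{(t)},\bmu\ra\geq 0\big)$, which depends on the sign of $\la\wb_{j,r}^{(t)},\bmu\ra$, not on $j$. At early times $\gamma_{j,r}^{(t)}$ is tiny (it can be $0$), so the sign is governed by the random initialization $\la\wb_{j,r}^{(0)},\bmu\ra$ and may well be opposite to $j$; your ``up to negligible error'' does not salvage this. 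The paper handles this correctly by case-splitting on $\la\wb_{j,r}^{(t)},\bmu\ra\geq 0$ versus $<0$: in the first case the activated samples are $S_+\cap S_1$ versus $S_-\cap S_{-1}$, in the second $S_+\cap S_{-1}$ versus $S_-\cap S_1$. In either case, Lemma~\ref{lm: estimate S cap S} plus the ratio bound $\ell_i'^{(t)}/\ell_k'^{(t)}\leq C_2$ make the clean sum dominate the flipped one under $p\leq 1/C$, yielding $\gamma_{j,r}^{(t+1)}\geq\gamma_{j,r}^{(t)}$. With this fix your argument goes through as written.
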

We will use induction to prove Proposition \ref{proposition: range of gamma,zeta,omega}. We first introduce several technical lemmas (Lemmas \ref{lm: inner product range}, \ref{lm: mismatch fj upper bound} and \ref{lm: noise alignment lower bound}) that will be used for the inductive proof of Proposition \ref{proposition: range of gamma,zeta,omega}. 

\begin{lemma}\label{lm: inner product range}
Under Condition~\ref{condition:d_sigma0_eta}, suppose \eqref{ineq: range of zeta}, \eqref{ineq: range of omega} and \eqref{ineq: range of gamma} hold at iteration $t$. Then, for all $r\in[m]$, $j\in\{\pm1\}$ and $i\in[n]$, 
\begin{align}
    &\big|\la\wb_{j,r}^{(t)}-\wb_{j,r}^{(0)},\bmu\ra-j\cdot\gamma_{j,r}^{(t)}\big|\leq\mathrm{SNR}\sqrt{\frac{32\log(6n/\delta)}{d}}n\alpha,\label{ineq: feature product bound}\\
    &\big|\la\wb_{j,r}^{(t)}-\wb_{j,r}^{(0)},\bxi_{i}\ra-\omega_{j,r,i}^{(t)}\big|\leq 5\sqrt{\frac{\log(6n^2/\delta)}{d}}n\alpha,\,j\neq y_i,\label{ineq: noise product bound1}\\
    &\big|\la\wb_{j,r}^{(t)}-\wb_{j,r}^{(0)},\bxi_{i}\ra-\zeta_{j,r,i}^{(t)}\big|\leq 5\sqrt{\frac{\log(6n^2/\delta)}{d}}n\alpha,\,j= y_i. \label{ineq: noise product bound2}
\end{align}
\end{lemma}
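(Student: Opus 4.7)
The plan is to substitute the signal--noise decomposition from Definition~\ref{def:w_decomposition} into $\wb_{j,r}^{(t)}-\wb_{j,r}^{(0)}$ and take inner products with $\bmu$ or $\bxi_i$, peel off the diagonal contribution, and control the remaining cross terms via the data--concentration estimates of Lemma~\ref{lm: data inner products} combined with the inductive hypotheses \eqref{ineq: range of zeta}--\eqref{ineq: range of gamma}. Setting $\rho_{j,r,i'}^{(t)} := \zeta_{j,r,i'}^{(t)} + \omega_{j,r,i'}^{(t)}$, the iterative formulas in Lemma~\ref{lm: coefficient iterative} imply that $\zeta_{j,r,i'}^{(t)} = 0$ whenever $j \neq y_{i'}$ and $\omega_{j,r,i'}^{(t)} = 0$ whenever $j = y_{i'}$, so in the two noise bounds the self-term at $i' = i$ collapses cleanly to $\omega_{j,r,i}^{(t)}$ or $\zeta_{j,r,i}^{(t)}$ as appropriate.

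For \eqref{ineq: feature product bound}, taking the inner product with $\bmu$ extracts $j\gamma_{j,r}^{(t)}$ exactly, leaving the residual $\sum_{i'} \rho_{j,r,i'}^{(t)}\|\bxi_{i'}\|_2^{-2}\la\bxi_{i'},\bmu\ra$. Using $|\rho_{j,r,i'}^{(t)}|\leq \alpha$ from \eqref{ineq: range of zeta}--\eqref{ineq: range of omega}, the lower bound $\|\bxi_{i'}\|_2^{-2}\leq 2/(\sigma_p^2 d)$, and $|\la\bxi_{i'},\bmu\ra|\leq \|\bmu\|_2\sigma_p\sqrt{2\log(6n/\delta)}$ from Lemma~\ref{lm: data inner products}, a triangle inequality bounds the residual by $\mathrm{SNR}\sqrt{8\log(6n/\delta)/d}\,n\alpha$, which is within the claimed constant $\sqrt{32}$.

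For the two noise bounds \eqref{ineq: noise product bound1}--\eqref{ineq: noise product bound2}, taking the inner product with $\bxi_i$ produces three kinds of contributions: (a) the signal cross term $j\gamma_{j,r}^{(t)}\|\bmu\|_2^{-2}\la\bmu,\bxi_i\ra$; (b) the self-noise contribution at $i' = i$, which reduces to $\omega_{j,r,i}^{(t)}$ (when $j\neq y_i$) or $\zeta_{j,r,i}^{(t)}$ (when $j=y_i$) by the zero-pattern noted above; and (c) the off-diagonal noise sum $\sum_{i'\neq i}\rho_{j,r,i'}^{(t)}\|\bxi_{i'}\|_2^{-2}\la\bxi_{i'},\bxi_i\ra$. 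Combining $|\gamma_{j,r}^{(t)}|\leq C'n\,\mathrm{SNR}^2\alpha$ from \eqref{ineq: range of gamma} with $|\la\bmu,\bxi_i\ra|\leq\|\bmu\|_2\sigma_p\sqrt{2\log(6n/\delta)}$, term (a) is at most $O(n\,\mathrm{SNR}\alpha\sqrt{\log(6n/\delta)/d})$; the estimate $|\la\bxi_i,\bxi_{i'}\ra|\leq 2\sigma_p^2\sqrt{d\log(6n^2/\delta)}$ bounds term (c) by $4n\alpha\sqrt{\log(6n^2/\delta)/d}$. Since Condition~\ref{condition:d_sigma0_eta} guarantees $n\,\mathrm{SNR}^2 \leq 1/(C\log T^*)$, contribution (a) is dominated by (c), and their sum fits under the stated constant $5$.

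There is no deep analytic obstacle; the argument is pure substitution plus triangle-inequality bookkeeping. The single point to watch is making sure Condition~\ref{condition:d_sigma0_eta} is invoked strongly enough to absorb the signal cross term in (a) into the noise-cross-term budget $5\sqrt{\log(6n^2/\delta)/d}\,n\alpha$, which follows from the assumed lower bound $d \geq C n \sigma_p^{-2}\|\bmu\|_2^{2}\log(T^*)$ so that $\mathrm{SNR}^2$ is polynomially small in $T^*$.
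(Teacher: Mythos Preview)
Your proposal is correct and follows essentially the same approach as the paper: expand via the signal--noise decomposition, isolate the diagonal term, and bound the cross terms using Lemma~\ref{lm: data inner products} together with the inductive coefficient bounds. One small wording slip: from $d \geq Cn\sigma_p^{-2}\|\bmu\|_2^2\log(T^*)$ you only get $\mathrm{SNR}^2 \leq 1/(Cn\log T^*)$, which is logarithmically (not polynomially) small in $T^*$; but this is already enough, since all the argument actually needs is that $\mathrm{SNR}$ is bounded by a sufficiently small absolute constant (the paper uses $\mathrm{SNR}\leq 1/\sqrt{8C'^2}$) so that term~(a) contributes at most the slack $1\cdot\sqrt{\log(6n^2/\delta)/d}\,n\alpha$ left over by term~(c).
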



\begin{proof}[Proof of Lemma \ref{lm: inner product range}]
First, for any time $t\geq 0$, we have from the signal-noise decomposition \eqref{eq:w_decomposition} that
\begin{align*}
    \la\wb_{j,r}^{(t)}-\wb_{j,r}^{(0)},\bmu\ra&=j\cdot\gamma_{j,r}^{(t)}+\sum_{i'=1}^{n}\zeta_{j,r,i'}^{(t)}\|\bxi_{i'}\|_{2}^{-2}\cdot\la\bxi_{i'},\bmu\ra+\sum_{i'=1}^{n}\omega_{j,r,i'}^{(t)}\|\bxi_{i'}\|_{2}^{-2}\cdot\la\bxi_{i'},\bmu\ra
\end{align*}
According to Lemma \ref{lm: data inner products}, we have
\begin{align*}
    &\quad \Bigg| \sum_{i'=1}^{n}\zeta_{j,r,i'}^{(t)}\|\bxi_{i'}\|_{2}^{-2}\cdot\la\bxi_{i'},\bmu\ra+\sum_{i'=1}^{n}\omega_{j,r,i'}^{(t)}\|\bxi_{i'}\|_{2}^{-2}\cdot\la\bxi_{i'},\bmu\ra\Bigg|\\
    &\leq\sum_{i'=1}^{n}|\zeta_{j,r,i'}^{(t)}|\|\bxi_{i'}\|_{2}^{-2}\cdot|\la\bxi_{i'},\bmu\ra|+\sum_{i'=1}^{n}|\omega_{j,r,i'}^{(t)}|\|\bxi_{i'}\|_{2}^{-2}\cdot|\la\bxi_{i'},\bmu\ra|\\
    &\leq\frac{2\|\bmu\|_2\sqrt{2\log(6n/\delta)}}{\sigma_p d}\bigg(\sum_{i'=1}^{n}|\zeta_{j,r,i'}^{(t)}|+\sum_{i'=1}^{n}|\omega_{j,r,i'}^{(t)}|\bigg)\\
    &=\mathrm{SNR}\sqrt{\frac{8\log(6n/\delta)}{d}}\bigg(\sum_{i'=1}^{n}|\zeta_{j,r,i'}^{(t)}|+\sum_{i'=1}^{n}|\omega_{j,r,i'}^{(t)}|\bigg)\\
    &\leq\mathrm{SNR}\sqrt{\frac{32\log(6n/\delta)}{d}}n\alpha,
\end{align*}
where the first inequality is by triangle inequality, the second inequality is by Lemma 
\ref{lm: data inner products}, the equality is by the definition of $\mathrm{SNR}=\|\bmu\|_2/ (\sigma_p \sqrt{d})$, and the last inequality is by \eqref{ineq: range of zeta}, \eqref{ineq: range of omega}. It follows that
\begin{equation*}
    \big|\la\wb_{j,r}^{(t)}-\wb_{j,r}^{(0)},\bmu\ra-j\cdot\gamma_{j,r}^{(t)}\big|\leq\mathrm{SNR}\sqrt{\frac{32\log(6n/\delta)}{d}}n\alpha.
\end{equation*}

Second, for $j\neq y_i$ and any $t\geq 0$, we have $\zeta_{j,r,i}^{(t)}=0$, and so
\begin{align*}
    \la\wb_{j,r}^{(t)}-\wb_{j,r}^{(0)},\bxi_{i}\ra&=j\cdot\gamma_{j,r}^{(t)}\|\bmu\|_2^{-2}\cdot\la\bmu,\bxi_{i}\ra+\sum_{i'=1}^{n}\zeta_{j,r,i'}^{(t)}\|\bxi_{i'}\|_{2}^{-2}\cdot\la\bxi_{i'},\bxi_{i}\ra+\sum_{i'=1}^{n}\omega_{j,r,i'}^{(t)}\|\bxi_{i'}\|_{2}^{-2}\cdot\la\bxi_{i'},\bxi_{i}\ra\\
    &=\omega_{j,r,i}^{(t)}+j\cdot\gamma_{j,r}^{(t)}\|\bmu\|_2^{-2}\cdot\la\bmu,\bxi_{i}\ra+\sum_{i'\neq i}\omega_{j,r,i'}^{(t)}\|\bxi_{i'}\|_{2}^{-2}\cdot\la\bxi_{i'},\bxi_{i}\ra+\sum_{i'\neq i}\zeta_{j,r,i'}^{(t)}\|\bxi_{i'}\|_{2}^{-2}\cdot\la\bxi_{i'},\bxi_{i}\ra.
\end{align*}
Now we look at 
\begin{align*}
    &\quad \bigg| j\cdot\gamma_{j,r}^{(t)}\|\bmu\|_2^{-2}\cdot\la\bmu,\bxi_{i}\ra+\sum_{i'\neq i}\omega_{j,r,i'}^{(t)}\|\bxi_{i'}\|_{2}^{-2}\cdot\la\bxi_{i'},\bxi_{i}\ra+\sum_{i'\neq i}\zeta_{j,r,i'}^{(t)}\|\bxi_{i'}\|_{2}^{-2}\cdot\la\bxi_{i'},\bxi_{i}\ra\bigg|\\
    &\leq\gamma_{j,r}^{(t)}\|\bmu\|_2^{-2}\cdot|\la\bmu,\bxi_{i}\ra|+\sum_{i'\neq i}(|\omega_{j,r,i'}^{(t)}|+|\zeta_{j,r,i'}^{(t)}|)\|\bxi_{i'}\|_{2}^{-2}\cdot|\la\bxi_{i'},\bxi_{i}\ra|\\
    &\leq\gamma_{j,r}^{(t)}\|\bmu\|_2^{-1}\sigma_p\sqrt{2\log(6n/\delta)}+4\sqrt{\frac{\log(6n^2/\delta)}{d}}\bigg(\sum_{i'\neq i}|\zeta_{j,r,i'}^{(t)}|+\sum_{i'\neq i}|\omega_{j,r,i'}^{(t)}|\bigg)\\
    &=\mathrm{SNR}^{-1}\sqrt{\frac{2\log(6n/\delta)}{d}}\gamma_{j,r}^{(t)}+4\sqrt{\frac{\log(6n^2/\delta)}{d}}\bigg(\sum_{i'\neq i}|\zeta_{j,r,i'}^{(t)}|+\sum_{i'\neq i}|\omega_{j,r,i'}^{(t)}|\bigg)\\
    &\leq \mathrm{SNR}\sqrt{\frac{8C'^2\log(6n/\delta)}{d}}n\alpha+4\sqrt{\frac{\log(6n^2/\delta)}{d}}n\alpha\\
    &\leq 5\sqrt{\frac{\log(6n^2/\delta)}{d}}n\alpha, 
\end{align*}
where the first inequality is by triangle inequality and $\gamma_{j,r}^{(t)}\geq0$; the second inequality is by Lemma \ref{lm: data inner products}; the equality is by the definition of $\mathrm{SNR} = \|\bmu\|_2 / \sigma_p \sqrt{d}$; the second last inequality is by \eqref{ineq: range of omega} and \eqref{ineq: range of gamma}; the last inequality is by $\mathrm{SNR} \leq 1/\sqrt{8C'^{2}}$. It follows that for $j\neq y_i$
\begin{equation*}
    \big|\la\wb_{j,r}^{(t)}-\wb_{j,r}^{(0)},\bxi_{i}\ra-\omega_{j,r,i}^{(t)}\big|\leq 5\sqrt{\frac{\log(6n^2/\delta)}{d}}n\alpha.
\end{equation*}

Similarly, for $y_i=j$, we have that $\omega_{j,r,i}^{(t)}=0$ and 
\begin{align*}
    \la \wb_{j,r}^{(t)} - \wb_{j,r}^{(0)}, \bxi_{i} \ra
    &= j \cdot \gamma_{j,r}^{(t)} \| \bmu \|_2^{-2} \cdot \la \bmu, \bxi_{i} \ra + \sum_{i' = 1}^{n} \zeta_{j,r,i'}^{(t)} \|\bxi_{i'}\|_{2}^{-2} \cdot \la \bxi_{i'}, \bxi_{i} \ra + \sum_{i' = 1}^{n} \omega_{j, r, i'}^{(t)} \| \bxi_{i'} \|_{2}^{-2} \cdot \la \bxi_{i'}, \bxi_{i} \ra \\
    &=\zeta_{j,r,i}^{(t)}+j\cdot\gamma_{j,r}^{(t)}\|\bmu\|_2^{-2}\cdot\la\bmu,\bxi_{i}\ra+\sum_{i'\neq i}\zeta_{j,r,i'}^{(t)}\|\bxi_{i'}\|_{2}^{-2}\cdot\la\bxi_{i'},\bxi_{i}\ra+\sum_{i'\neq i} \omega_{j,r,i'}^{(t)}\|\bxi_{i'}\|_{2}^{-2}\cdot\la\bxi_{i'},\bxi_{i}\ra,
\end{align*}
and also
\begin{align*}
    &\bigg|j\cdot\gamma_{j,r}^{(t)}\|\bmu\|_2^{-2}\cdot\la\bmu,\bxi_{i}\ra+\sum_{i'\neq i}\zeta_{j,r,i'}^{(t)}\|\bxi_{i'}\|_{2}^{-2}\cdot\la\bxi_{i'},\bxi_{i}\ra+\sum_{i'\neq i}\omega_{j,r,i'}^{(t)}\|\bxi_{i'}\|_{2}^{-2}\cdot\la\bxi_{i'},\bxi_{i}\ra\bigg|\\
    &\leq\mathrm{SNR}^{-1}\sqrt{\frac{2\log(6n/\delta)}{d}}\gamma_{j,r}^{(t)}+4\sqrt{\frac{\log(6n^2/\delta)}{d}}\bigg(\sum_{i'\neq i}|\zeta_{j,r,i'}^{(t)}|+\sum_{i'\neq i}|\omega_{j,r,i'}^{(t)}|\bigg)\\
    &\leq \mathrm{SNR}\sqrt{\frac{8C'^2\log(6n/\delta)}{d}}n\alpha+4\sqrt{\frac{\log(6n^2/\delta)}{d}}n\alpha\\
    &\leq 5\sqrt{\frac{\log(6n^2/\delta)}{d}}n\alpha,
\end{align*}
where the second last inequality is by \eqref{ineq: range of zeta}, \eqref{ineq: range of gamma}; the last inequality is by $\mathrm{SNR}\leq 1/\sqrt{8C'^{2}}$. It follows that for $j=y_i$
\begin{equation*}
    \big|\la\wb_{j,r}^{(t)}-\wb_{j,r}^{(0)},\bxi_{i}\ra-\zeta_{j,r,i}^{(t)}\big|\leq 5\sqrt{\frac{\log(6n^2/\delta)}{d}}n\alpha,
\end{equation*}
which completes the proof.
\end{proof}

\begin{lemma}\label{lm: mismatch fj upper bound}
Under Condition~\ref{condition:d_sigma0_eta}, suppose \eqref{ineq: range of zeta}, \eqref{ineq: range of omega} and \eqref{ineq: range of gamma} hold at iteration $t$. Then, for all $j\neq y_i$, $j\in\{\pm 1\}$ and $i\in[n]$, $F_j(\Wb_j^{(t)},\xb_i) \leq 0.5$. 
\end{lemma}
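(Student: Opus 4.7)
The plan is to use the signal-noise decomposition \eqref{eq:w_decomposition} to expand both $\langle \wb_{j,r}^{(t)}, \hat y_i \bmu\rangle$ and $\langle \wb_{j,r}^{(t)}, \bxi_i\rangle$, control the cross terms via Lemma \ref{lm: inner product range}, and then bound each ReLU contribution separately before averaging over $r$ and the two patches.

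For the signal term, decomposing and applying \eqref{ineq: feature product bound} gives
\begin{equation*}
\langle \wb_{j,r}^{(t)}, \hat y_i\bmu\rangle = \hat y_i \langle \wb_{j,r}^{(0)}, \bmu\rangle + \hat y_i j\,\gamma_{j,r}^{(t)} + E_1, \qquad |E_1|\leq \mathrm{SNR}\sqrt{32\log(6n/\delta)/d}\,n\alpha.
\end{equation*}
Using $\sigma(x)\leq |x|$ together with the triangle inequality, the inductive nonnegativity $\gamma_{j,r}^{(t)}\geq 0$ from \eqref{ineq: range of gamma}, and the initialization bound $|\langle\wb_{j,r}^{(0)},\bmu\rangle|\leq \beta/2$, I obtain $\sigma(\langle \wb_{j,r}^{(t)}, \hat y_i\bmu\rangle) \leq \beta/2 + \gamma_{j,r}^{(t)} + \mathrm{SNR}\sqrt{32\log(6n/\delta)/d}\,n\alpha$.

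For the noise term, the hypothesis $j\neq y_i$ lets me invoke \eqref{ineq: noise product bound1} to write $\langle \wb_{j,r}^{(t)}, \bxi_i\rangle = \langle \wb_{j,r}^{(0)}, \bxi_i\rangle + \omega_{j,r,i}^{(t)} + E_2$ with $|E_2| \leq 5\sqrt{\log(6n^2/\delta)/d}\,n\alpha$. The key observation is that $\omega_{j,r,i}^{(t)}\leq 0$ by \eqref{ineq: range of omega}, so I can drop it when upper-bounding the ReLU, yielding $\sigma(\langle \wb_{j,r}^{(t)}, \bxi_i\rangle) \leq |\langle \wb_{j,r}^{(0)}, \bxi_i\rangle| + |E_2| \leq \beta/2 + 5\sqrt{\log(6n^2/\delta)/d}\,n\alpha$.

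Summing the two bounds over $r\in[m]$, dividing by $m$, and applying \eqref{ineq: alpha beta upper bound} (which bounds each of $\beta$, $\mathrm{SNR}\sqrt{32\log(6n/\delta)/d}\,n\alpha$, and $5\sqrt{\log(6n^2/\delta)/d}\,n\alpha$ by $1/12$) together with $\gamma_{j,r}^{(t)}\leq C'\hat{\gamma}\alpha = C'n\,\mathrm{SNR}^2\alpha$ from \eqref{ineq: range of gamma} gives $F_j(\Wb_j^{(t)},\xb_i)\leq 1/4 + C'\,n\,\mathrm{SNR}^2\alpha$. The dimension condition $d\geq Cn\sigma_p^{-2}\|\bmu\|_2^2\log(T^*)$ in Condition~\ref{condition:d_sigma0_eta} translates to $n\,\mathrm{SNR}^2\alpha \leq 4/C$, so choosing $C$ sufficiently large relative to $C'$ guarantees $F_j(\Wb_j^{(t)},\xb_i)\leq 1/2$, as required.

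The main subtlety is the label-flipped situation where $\hat y_i j = +1$: here the signal contribution $\hat y_i j\,\gamma_{j,r}^{(t)}$ is positive and cannot be dropped, so the entire argument hinges on the inductive upper bound $\gamma_{j,r}^{(t)}\leq C'n\,\mathrm{SNR}^2\alpha$ being small, which in turn comes from the low signal-to-noise regime enforced by Condition~\ref{condition:d_sigma0_eta}. Aside from this, the proof is a direct algebraic consequence of Lemma~\ref{lm: inner product range} plus the sign information $\gamma_{j,r}^{(t)}\geq 0$ and $\omega_{j,r,i}^{(t)}\leq 0$.
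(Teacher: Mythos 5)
Your proof is correct and follows essentially the same route as the paper's: expand $F_j$ via the signal-noise decomposition, invoke Lemma~\ref{lm: inner product range} to control the approximation errors, drop $\omega_{j,r,i}^{(t)}\leq 0$ by ReLU monotonicity, bound $\gamma_{j,r}^{(t)}$ via \eqref{ineq: range of gamma}, and finish with \eqref{ineq: alpha beta upper bound} together with the dimension condition. You are slightly more explicit than the paper about where the smallness of $C'\hat\gamma\alpha$ comes from (the $d\gtrsim n\sigma_p^{-2}\|\bmu\|_2^2\log T^*$ part of Condition~\ref{condition:d_sigma0_eta}), which the paper glosses over by citing only \eqref{ineq: alpha beta upper bound}, but this is a matter of exposition rather than substance.
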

\begin{proof}[Proof of Lemma \ref{lm: mismatch fj upper bound}]
According to Lemma \ref{lm: inner product range}, we have
\begin{align*}
    F_j(\Wb_j^{(t)},\xb_i)&=\frac{1}{m}\sum_{r=1}^{m}[\sigma(\la\wb_{j,r}^{(t)},\hat{y}_i\bmu\ra)+\sigma(\la\wb_{j,r}^{(t)},\bxi_i\ra)]\\
    &\leq5\max\bigg\{|\la\wb_{j,r}^{(0)},\hat{y}_i\bmu\ra|,|\la\wb_{j,r}^{(0)},\bxi_i\ra|,\mathrm{SNR}\sqrt{\frac{32\log(6n/\delta)}{d}}n\alpha,5\sqrt{\frac{\log(6n^2/\delta)}{d}}n\alpha,C'\hat{\gamma}\alpha\bigg\}\\
    &\leq5\max\bigg\{\beta,\mathrm{SNR}\sqrt{\frac{32\log(6n/\delta)}{d}}n\alpha,5\sqrt{\frac{\log(6n^2/\delta)}{d}}n\alpha,C'\hat{\gamma}\alpha\bigg\}\\
    &< 0.5,
\end{align*}
where the first inequality is by \eqref{ineq: feature product bound}, \eqref{ineq: noise product bound1} and \eqref{ineq: noise product bound2}; the second inequality is due to the definition of $\beta$; the third inequality is by \eqref{ineq: alpha beta upper bound}.
\end{proof}

\begin{lemma}\label{lm: noise alignment lower bound}
Under Condition~\ref{condition:d_sigma0_eta}, suppose \eqref{ineq: range of zeta}, \eqref{ineq: range of omega} and \eqref{ineq: range of gamma} hold at iteration $t$.  Then, it holds that
\begin{align*}
    \la\wb_{y_i,r}^{(t)},\bxi_{i}\ra&\geq-0.25, \\
    \la\wb_{y_i,r}^{(t)},\bxi_{i}\ra&\leq\sigma(\la\wb_{y_i,r}^{(t)},\bxi_{i}\ra)\leq\la\wb_{y_i,r}^{(t)},\bxi_{i}\ra+0.25,
\end{align*}
for any $i\in[n]$.
\end{lemma}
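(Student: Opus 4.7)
The plan is to reduce both inequalities to a careful application of the noise-coordinate bound \eqref{ineq: noise product bound2} from Lemma~\ref{lm: inner product range}, combined with the uniform smallness inequality \eqref{ineq: alpha beta upper bound}. The key observation is that when $j=y_i$, the coefficient $\zeta_{y_i,r,i}^{(t)}$ is nonnegative by \eqref{ineq: range of zeta}, so it can only push $\la\wb_{y_i,r}^{(t)},\bxi_i\ra$ upward relative to its initialization; the only contributions that can make this inner product negative are the initial weight alignment $\la\wb_{y_i,r}^{(0)},\bxi_i\ra$ (controlled by $\beta/2$) and the small cross-term error from other coefficients (controlled by $5\sqrt{\log(6n^2/\delta)/d}\,n\alpha$).

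For the first inequality, I would write the decomposition
\begin{align*}
\la\wb_{y_i,r}^{(t)},\bxi_i\ra
= \la\wb_{y_i,r}^{(0)},\bxi_i\ra + \zeta_{y_i,r,i}^{(t)} + \bigl(\la\wb_{y_i,r}^{(t)}-\wb_{y_i,r}^{(0)},\bxi_i\ra - \zeta_{y_i,r,i}^{(t)}\bigr),
\end{align*}
and then lower bound each of the three pieces: the first by $-\beta/2$ (definition of $\beta$), the second by $0$ (from \eqref{ineq: range of zeta}), and the third by $-5\sqrt{\log(6n^2/\delta)/d}\,n\alpha$ (from \eqref{ineq: noise product bound2}). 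By \eqref{ineq: alpha beta upper bound}, both $\beta$ and $5\sqrt{\log(6n^2/\delta)/d}\,n\alpha$ are at most $1/12$, so the sum is bounded below by $-\beta/2 - 1/12 \geq -1/24 - 1/12 > -0.25$, giving the claim.

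The second chain of inequalities is then essentially free. The left inequality $z\leq\sigma(z)$ is a pointwise property of ReLU. For the right inequality $\sigma(z)\leq z+0.25$, split into cases: if $\la\wb_{y_i,r}^{(t)},\bxi_i\ra\geq 0$ then $\sigma(z)=z$ and the bound is trivial; if $\la\wb_{y_i,r}^{(t)},\bxi_i\ra<0$ then $\sigma(z)=0$, so the bound reduces to $-\la\wb_{y_i,r}^{(t)},\bxi_i\ra\leq 0.25$, which is exactly the first part of the lemma. There is no real obstacle here — the lemma is a clean bookkeeping consequence of the inductive hypotheses packaged through Lemma~\ref{lm: inner product range}; the only thing to watch is using the correct case ($j=y_i$) of that lemma so that $\zeta$ (not $\omega$) appears and sign information is preserved.
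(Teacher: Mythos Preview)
Your proposal is correct and follows essentially the same approach as the paper: apply \eqref{ineq: noise product bound2} with $j=y_i$, drop the nonnegative $\zeta_{y_i,r,i}^{(t)}$, bound the initialization term via $\beta$ and the cross term via \eqref{ineq: alpha beta upper bound}, then use the first part plus a case split on the sign of $\la\wb_{y_i,r}^{(t)},\bxi_i\ra$ to get the ReLU sandwich. The only cosmetic difference is that you use the sharper $-\beta/2$ (which the definition of $\beta$ indeed gives) while the paper uses the looser $-\beta$; both comfortably yield $-0.25$.
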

\begin{proof}[Proof of Lemma \ref{lm: noise alignment lower bound}]
According to \eqref{ineq: noise product bound2} in Lemma \ref{lm: inner product range}, we have
\begin{align*}
    \la\wb_{y_i,r}^{(t)},\bxi_i\ra&\geq\la\wb_{y_i,r}^{(0)},\bxi_i\ra+\zeta_{y_i,r,i}^{(t)}-5n\sqrt{\frac{\log(4n^2/\delta)}{d}}\alpha\\
    &\geq-\beta-5n\sqrt{\frac{\log(4n^2/\delta)}{d}}\alpha\\
    &\geq -0.25,
\end{align*}
where the second inequality is due to $\zeta_{y_i,r,i}^{(t)}\geq0$, the third inequality is due to $\beta<1/8$ and $5n\sqrt{\log(4n^2/\delta)/d}\cdot\alpha<1/8$. 

For the second inequality, LHS holds naturally since $z\leq\sigma(z)$. For RHS, if $\la\wb_{y_i,r}^{(t)},\bxi_{i}\ra\leq 0$, then
\begin{equation*}
    \sigma(\la\wb_{y_i,r}^{(t)},\bxi_{i}\ra)=0\leq\la\wb_{y_i,r}^{(t)},\bxi_{i}\ra+0.25.
\end{equation*}
If $\la\wb_{y_i,r}^{(t)},\bxi_{i}\ra> 0$, then
\begin{equation*}
    \sigma(\la\wb_{y_i,r}^{(t)},\bxi_{i}\ra)=\la\wb_{y_i,r}^{(t)},\bxi_{i}\ra<\la\wb_{y_i,r}^{(t)},\bxi_{i}\ra+0.25.
\end{equation*}
\end{proof}

Next we present an important Lemma \ref{lm: balanced logit}, which ensures the logits $\ell_i'^{(t)}$ for different $i \in[n] $ are balanced. As we will see later, this guarantees the coefficients $\gamma_{j, r}^{(t)}$ are monotone with respect to $t$ despite label-flipping noise, which is essential for the proof of Proposition \ref{proposition: range of gamma,zeta,omega}. In preparation, we first present a supplementary lemma. 

\begin{lemma}\label{lm: basic ANA}
Let $g(z) = \ell'(z) = -1/(1+\exp(z))$, then for all $z_{2} - c \geq z_{1} \geq -1$ where $c \geq 0$ we have that 
\begin{align*}
\frac{\exp(c)}{4}\leq \frac{g(z_{1})}{g(z_{2})} \leq \exp(c).    
\end{align*}
\end{lemma}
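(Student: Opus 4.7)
The plan is to rewrite $g(z_1)/g(z_2) = (1+\exp(z_2))/(1+\exp(z_1))$, reducing both inequalities to elementary algebraic manipulations. Since this ratio is monotone increasing in $z_2$ for fixed $z_1$, the tight case under the hypothesis $z_2 \geq z_1 + c$ is $z_2 = z_1 + c$; both bounds should then follow from analyzing this extremal case.

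For the upper bound, I would substitute $z_2 = z_1 + c$ so that the ratio becomes $(1 + \exp(c)\exp(z_1))/(1+\exp(z_1))$. Applying the elementary fact that $(1+Cu)/(1+u) \leq C$ whenever $C \geq 1$ and $u \geq 0$ (with $C = \exp(c)$ and $u = \exp(z_1)$) yields $g(z_1)/g(z_2) \leq \exp(c)$ immediately, without needing the constraint $z_1 \geq -1$.

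For the lower bound, I would again reduce to $z_2 = z_1 + c$ and then drop the $1$ in the numerator to obtain
\begin{equation*}
\frac{1+\exp(z_1+c)}{1+\exp(z_1)} \geq \frac{\exp(z_1+c)}{1+\exp(z_1)} = \frac{\exp(c)}{1+\exp(-z_1)}.
\end{equation*}
The condition $z_1 \geq -1$ then gives $\exp(-z_1) \leq e$, so the denominator is at most $1+e \approx 3.72 < 4$, yielding $g(z_1)/g(z_2) \geq \exp(c)/4$.

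The argument itself is short; the only subtle point is that the constant $4$ in the lemma is tied tightly to the hypothesis $z_1 \geq -1$ via the numerical fact $1+e < 4$. Any relaxation of this hypothesis (say to $z_1 \geq -2$) would force the denominator constant to grow accordingly. Beyond this, I do not anticipate any real obstacle.
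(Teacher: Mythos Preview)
Your overall approach matches the paper's: write $g(z_1)/g(z_2) = (1+\exp(z_2))/(1+\exp(z_1))$ and bound it by elementary algebra. For the lower bound your argument (drop the $1$ in the numerator and use $1+\exp(-z_1)\le 1+e<4$) is essentially the paper's, which instead rewrites the ratio as $(\exp(-z_1)+\exp(z_2-z_1))/(\exp(-z_1)+1)$ and uses that $(a+\exp(c))/(a+1)$ is decreasing in $a$ to reach the same denominator $1+e<4$.

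The gap is in your upper-bound reduction. You correctly observe that the ratio is increasing in $z_2$, but then conclude that under $z_2\ge z_1+c$ the ``tight case'' for \emph{both} bounds is $z_2=z_1+c$. For the upper bound this is the wrong extremal: increasing $z_2$ increases the ratio, so establishing $(1+e^{c}u)/(1+u)\le e^{c}$ at $z_2=z_1+c$ says nothing about larger $z_2$. Indeed, the upper bound as literally stated is false under $z_2-z_1\geq c$ (take $z_1=0$, $c=0$, $z_2\to\infty$). The paper's own proof writes $\max\{1,\exp(z_2-z_1)\}=\exp(c)$, which likewise only holds when $z_2-z_1=c$; the lemma is evidently intended (and is only ever applied downstream) with $c=z_2-z_1$. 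Under that reading there is no reduction step to perform, and your upper-bound computation is correct and equivalent to the paper's.
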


\begin{proof}[Proof of Lemma~\ref{lm: basic ANA}]
On one hand, we have
\begin{align*}
 \frac{1+\exp(z_{2})}{1+\exp(z_{1})}\leq \max\{1, \exp(z_{2}-z_{1})\} = \exp(c),    
\end{align*}
while on the other hand, we have
\begin{align*}
 \frac{1+\exp(z_{2})}{1+\exp(z_{1})} =  \frac{\exp(-z_{1})+\exp(z_{2}-z_{1})}{\exp(-z_{1}) + 1} \geq \frac{\exp(-z_{1})+\exp(c)}{\exp(-z_{1}) + 1} \geq \frac{\exp(1)+\exp(c)}{\exp(1) + 1} \geq \frac{\exp(c)}{4}. 
\end{align*}
\end{proof}

\begin{lemma} \label{lm: balanced logit}
    Under Condition \ref{condition:d_sigma0_eta}, suppose \eqref{ineq: range of zeta}, \eqref{ineq: range of omega} and \eqref{ineq: range of gamma} hold for any iteration $t' \leq t$. Then, the following conditions hold for any iteration $t' \leq t$: 
    \begin{enumerate}
        \item $\sum_{r=1}^{m}\big[\zeta_{y_i,r,i}^{(t')}-\zeta_{y_k,r,k}^{(t')}\big]\leq\kappa$ for all $i,k\in[n]$. 
        \item $y_i\cdot f(\Wb^{(t')},\xb_i)-y_k\cdot f(\Wb^{(t')},\xb_k)\leq C_1$ for all $i,k\in[n]$, 
        \item $\ell_i'^{(t')}/\ell_k'^{(t')}\leq C_2=\exp(C_1)$ for all $i,k\in[n]$. 
        \item $S_i^{(0)} \subseteq S_i^{(t')}$, where $S_{i}^{(t')}:=\{r\in[m]:\la\wb_{y_i,r}^{(t')},\bxi_i\ra>0\}$, and hence $|S_{i}^{(t')}|\geq0.4m$ for all $i\in[n]$.
        \item $S_{j,r}^{(0)} \subseteq S_{j,r}^{(t')}$ , where $S_{j,r}^{(t')}:=\{i\in[n]:y_i=j,\la\wb_{j,r}^{(t')},\bxi_i\ra>0\}$, and hence $|S_{j,r}^{(t')}|\geq n/8$ for all $j\in\{\pm1\},r\in[m]$.
    \end{enumerate}
    Here we take $\kappa$ and $C_1$ as $3.25$ and $5$ respectively.
\end{lemma}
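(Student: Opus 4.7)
The plan is to prove all five items by simultaneous induction on $t'$. The base case $t'=0$ is immediate: items 1--3 hold trivially because every decomposition coefficient vanishes and $|y_i f(\Wb^{(0)},\xb_i)|$ is controlled by the initialization scale $\beta \ll 1$ from Lemma~\ref{lm: initialization inner products}, so pairwise differences are $O(1)$; items 4--5 are exactly Lemmas~\ref{lm: number of initial activated neurons} and~\ref{lm: number of initial activated neurons 2}. For the inductive step, assuming all five items hold for every iteration up to $t$, I prove them at $t+1$ in the order 4, 5, 1, 2, 3, so each claim leans only on previously established bounds.

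For items 4 and 5 at $t+1$, expand $\la \wb_{y_i, r}^{(t+1)}, \bxi_i\ra$ using the gradient update~\eqref{eq:gdupdate}. When $r \in S_i^{(t)}$ (inductive assumption), the self-term $-\frac{\eta}{nm}\ell_i'^{(t)}\|\bxi_i\|_2^2$ is positive of order $\Omega(\eta \sigma_p^2 d/(nm))$, while every cross-sample contribution is controlled via Lemma~\ref{lm: data inner products} (both $|\la \bxi_i, \bxi_{i'}\ra|$ and $|\la\bmu, \bxi_i\ra|$ are small relative to $\|\bxi_i\|_2^2$) and the ratio bound from item 3 at $t$ (no single $\ell_{i'}'^{(t)}$ can amplify). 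The dimension requirement $d \gtrsim n^2 \log(nm/\delta)(\log T^*)^2$ in Condition~\ref{condition:d_sigma0_eta} guarantees the self-term dominates the aggregate, so $\la \wb_{y_i, r}^{(t+1)}, \bxi_i\ra > 0$ and $S_i^{(t)} \subseteq S_i^{(t+1)}$. Chaining from $t=0$ and invoking Lemma~\ref{lm: number of initial activated neurons} gives $|S_i^{(t+1)}| \geq 0.4m$; item 5 follows symmetrically with Lemma~\ref{lm: number of initial activated neurons 2}.

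The heart of the induction is item 1. Set $\Delta_{ik}^{(t)} := \sum_r[\zeta_{y_i,r,i}^{(t)} - \zeta_{y_k,r,k}^{(t)}]$; then \eqref{zeta difference update} reads
\[
\Delta_{ik}^{(t+1)} = \Delta_{ik}^{(t)} - \frac{\eta}{nm}\Big(|S_i^{(t)}|\ell_i'^{(t)}\|\bxi_i\|_2^2 - |S_k^{(t)}|\ell_k'^{(t)}\|\bxi_k\|_2^2\Big).
\]
I split into two regimes. In the \emph{small-difference regime} $\Delta_{ik}^{(t)} \leq \kappa - c$ for a small absolute constant $c$, the per-step change is bounded by $O(\eta \sigma_p^2 d/m)$, which is at most $c$ under the learning-rate bound of Condition~\ref{condition:d_sigma0_eta}, so $\Delta_{ik}^{(t+1)} \leq \kappa$. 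In the \emph{large-difference regime} $\Delta_{ik}^{(t)} \in (\kappa - c, \kappa]$, item 2 at $t$ forces $y_i f(\Wb^{(t)}, \xb_i) - y_k f(\Wb^{(t)}, \xb_k) = \Omega(1)$, so Lemma~\ref{lm: basic ANA} makes the ratio $|\ell_k'^{(t)}|/|\ell_i'^{(t)}|$ sufficiently large; combined with $|S_i^{(t)}|, |S_k^{(t)}| \in [n/8, n]$ from item 5 at $t$ and $\|\bxi_i\|_2^2, \|\bxi_k\|_2^2 \in [\sigma_p^2 d/2, 3\sigma_p^2 d/2]$ from Lemma~\ref{lm: data inner products}, the bracketed quantity becomes positive, giving $\Delta_{ik}^{(t+1)} \leq \Delta_{ik}^{(t)} \leq \kappa$. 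Choosing $\kappa = 3.25$ leaves enough slack for both regimes to close.

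Items 2 and 3 at $t+1$ then follow quickly. Decompose $y_i f(\Wb^{(t+1)},\xb_i) = y_i[F_{y_i}(\Wb_{y_i}^{(t+1)}, \xb_i) - F_{-y_i}(\Wb_{-y_i}^{(t+1)}, \xb_i)]$: Lemma~\ref{lm: mismatch fj upper bound} bounds the second piece by $0.5$, and Lemma~\ref{lm: inner product range} approximates the first by $\tfrac{1}{m}\sum_r[\sigma(\la\wb_{y_i,r}^{(t+1)}, \hat{y}_i \bmu\ra) + \zeta_{y_i,r,i}^{(t+1)}]$ up to $O(1)$ error, with the signal contribution controlled by~\eqref{ineq: range of gamma}. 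Subtracting the analogous expression for $k$ and invoking item 1 at $t+1$ gives item 2 with $C_1 = 5$; item 3 with $C_2 = e^{5}$ is then immediate from Lemma~\ref{lm: basic ANA}. The main obstacle will be the constant bookkeeping in item 1: the slack $c$, the approximation error from Lemma~\ref{lm: inner product range}, the $0.5$ bound from Lemma~\ref{lm: mismatch fj upper bound}, and the signal contribution must all fit inside the gap $C_1 - \kappa = 1.75$, which the parameter conditions just barely accommodate; threading these constants through both regimes without letting either blow past the prescribed thresholds is the delicate part of the argument.
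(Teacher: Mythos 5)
Your overall plan matches the paper: simultaneous induction on $t'$ over all five items, with a two-regime case split on the size of $\Delta_{ik}^{(t)} := \sum_r[\zeta_{y_i,r,i}^{(t)} - \zeta_{y_k,r,k}^{(t)}]$ to control item 1, followed by a gradient-expansion argument for the set inclusions in items 4--5. The reordering (proving 4--5 before 1--3) is harmless since 4--5 at $t+1$ only consume item 3 at $t$, and item 1 at $t+1$ only consumes item 4 at $t$.

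However, there is a real gap in your large-difference regime. You write that ``item 2 at $t$ forces $y_i f(\Wb^{(t)},\xb_i) - y_k f(\Wb^{(t)},\xb_k) = \Omega(1)$.'' Item 2 is the one-sided bound $y_i f(\Wb^{(t)},\xb_i) - y_k f(\Wb^{(t)},\xb_k) \le C_1$, so it cannot yield a \emph{lower} bound on this difference. What the argument actually requires is the two-sided approximation
\begin{equation*}
\Big|\,y_i f(\Wb^{(t)},\xb_i) - y_k f(\Wb^{(t)},\xb_k) - \tfrac{1}{m}\Delta_{ik}^{(t)}\,\Big| \le 1.75,
\end{equation*}
which the paper establishes as a standalone intermediate step (its \eqref{ineq: output difference approximation}), derived from the coefficient-range hypotheses \eqref{ineq: range of zeta}--\eqref{ineq: range of gamma} together with Lemmas~\ref{lm: inner product range}, \ref{lm: mismatch fj upper bound}, and \ref{lm: noise alignment lower bound}. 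This approximation does not come from item 2; rather, item 2 is a \emph{consequence} of it and of item 1, which is how you in fact use it later when you ``decompose $y_i f(\Wb^{(t+1)},\xb_i)$.'' Without deriving the lower-bound direction of the approximation separately, the implication from ``$\Delta_{ik}^{(t)}$ near the threshold'' to ``the loss gradients are unbalanced enough to make the update non-increasing'' is unsupported, so the large-difference regime is open. To close it you need to prove the two-sided approximation first (as the paper does, by splitting $y_i f - y_k f$ into the mismatched-sign piece bounded by Lemma~\ref{lm: mismatch fj upper bound}, the signal piece bounded via \eqref{ineq: feature product bound}, and the noise piece bounded via \eqref{ineq: noise product bound2} and Lemma~\ref{lm: noise alignment lower bound}), and only then combine its lower bound with Lemma~\ref{lm: basic ANA} to get the contraction of $\ell_i'^{(t)}/\ell_k'^{(t)}$.

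Two smaller points. The per-step increment bound should be $O(\eta\sigma_p^2 d/n)$ rather than $O(\eta\sigma_p^2 d/m)$: it comes from $\frac{\eta}{nm}|S_i^{(t)}|\,\|\bxi_i\|_2^2 \le \frac{\eta}{nm}\cdot m \cdot \tfrac{3}{2}\sigma_p^2 d$, and it is the learning-rate condition $\eta \le C^{-1}n\sigma_p^{-2}d^{-1}$ that makes this small; the version with $m$ is not the one the stated conditions control. And in your derivation of item 2 from item 1, be aware that the linkage passes through the $\tfrac{1}{m}$-averaged quantity $\tfrac{1}{m}\Delta_{ik}$, so the constants $\kappa$ and $C_1$ must be budgeted against that normalization; simply ``subtracting the analogous expression for $k$ and invoking item 1'' glosses over exactly the constant bookkeeping you flag at the end as delicate.
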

\begin{proof}[Proof of Lemma \ref{lm: balanced logit}]
We prove this lemma by induction. When $t'=0$, the fourth and fifth conditions hold naturally, so we only need to verify the first three hypotheses. Since according to \eqref{initial gamma,zeta,omega} we have $\zeta_{j,r,i}^{(0)}=0$ for any $j,r,i$, it follows that $\sum_{r=1}^{m}\big[\zeta_{y_i,r,i}^{(0)}-\zeta_{y_k,r,k}^{(0)}\big]=0$ for all $i, k \in [n]$, and so the first condition holds for $t' = 0$. For the second condition, we have for any $i, k\in[n]$
\begin{align*}
    &\quad y_i\cdot f(\Wb^{(0)},\xb_i)-y_k\cdot f(\Wb^{(0)},\xb_k)\\
    &= F_{y_i}(\Wb_{y_i}^{(0)},\xb_i)-F_{-y_i}(\Wb_{-y_i}^{(0)},\xb_i)+F_{-y_k}(\Wb_{-y_k}^{(0)},\xb_i)-F_{y_k}(\Wb_{y_k}^{(0)},\xb_i)\\
    &\leq F_{y_i}(\Wb_{y_i}^{(0)},\xb_i)+F_{-y_k}(\Wb_{-y_k}^{(0)},\xb_i)\\
    &=\frac{1}{m}\sum_{r=1}^{m}[\sigma(\la\wb_{y_i,r}^{(t)},\hat{y}_i\bmu\ra)+\sigma(\la\wb_{y_i,r}^{(t)},\bxi_i\ra)]+\frac{1}{m}\sum_{r=1}^{m}[\sigma(\la\wb_{-y_k,r}^{(t)},\hat{y}_k\bmu\ra)+\sigma(\la\wb_{-y_k,r}^{(t)},\bxi_i\ra)]\\
    &\leq 2\beta\leq 1/3 \leq C_1,
\end{align*}
where the first inequality is by the fact that $F_{j}(\Wb_{j}^{(0)}, \xb_i) > 0$ for all $i \in [n], j \in [m]$, the second inequality is by the definition of $\beta$ in \eqref{def: beta}, while the third inequality follows from \eqref{ineq: alpha beta upper bound}. Finally, using the second condition, the third condition follows by
\begin{equation*}
    \frac{\ell_i'^{(0)}}{\ell_k'^{(0)}}\leq\exp\big(y_k\cdot f(\Wb^{(0)},\xb_k)-y_i\cdot f(\Wb^{(0)},\xb_i)\big)\leq\exp(C_1),
\end{equation*}
according to Lemma \ref{lm: basic ANA}. 

Now suppose there exists $\tilde{t} \leq t$ such that these five conditions hold for any $0\leq t'\leq \tilde{t} - 1$. We aim to prove that these conditions also hold for $t'=\tilde{t}$. 

We first show that, for any $0 \leq t' \leq t$, $y_i\cdot f(\Wb^{(t')},\xb_i)-y_k\cdot f(\Wb^{(t')},\xb_k)$ can be approximated by $\sum_{r=1}^{m} \big[\zeta_{y_i,r,i}^{(t')}-\zeta_{y_k,r,k}^{(t')}\big]$ with a small constant approximation error. We begin by writing out
\begin{equation}
\begin{aligned}\label{eq: decomposition of F difference}
    &\quad y_i\cdot f(\Wb^{(t')},\xb_i)-y_k\cdot f(\Wb^{(t')},\xb_k)\\
    &= y_i\sum_{j\in\{\pm1\}}j\cdot F_{j}(\Wb_{j}^{(t')},\xb_i)-y_k\sum_{j\in\{\pm1\}}j\cdot F_{j}(\Wb_{j}^{(t')},\xb_k)\\
    &= F_{-y_k}(\Wb_{-y_k}^{(t')},\xb_k)-F_{-y_i}(\Wb_{-y_i}^{(t')},\xb_i)+F_{y_i}(\Wb_{y_i}^{(t')},\xb_i)-F_{y_k}(\Wb_{y_k}^{(t')},\xb_k)\\
    &= F_{-y_k}(\Wb_{-y_k}^{(t')},\xb_k)-F_{-y_i}(\Wb_{-y_i}^{(t')},\xb_i)+\frac{1}{m}\sum_{r=1}^{m}[\sigma(\la\wb_{y_i,r}^{(t')},\hat{y}_{i}\cdot\bmu\ra)+\sigma(\la\wb_{y_i,r}^{(t')},\bxi_{i}\ra)]\\
    &\qquad -\frac{1}{m}\sum_{r=1}^{m}[\sigma(\la\wb_{y_k,r}^{(t')},\hat{y}_{k}\cdot\bmu\ra)+\sigma(\la\wb_{y_k,r}^{(t')},\bxi_{k}\ra)]\\
    &= \underbrace{F_{-y_k}(\Wb_{-y_k}^{(t')},\xb_k)-F_{-y_i}(\Wb_{-y_i}^{(t')},\xb_i)}_{I_1}+\underbrace{\frac{1}{m}\sum_{r=1}^{m}[\sigma(\la\wb_{y_i,r}^{(t')},\hat{y}_{i}\cdot\bmu\ra)-\sigma(\la\wb_{y_k,r}^{(t')},\hat{y}_{k}\cdot\bmu\ra)]}_{I_2}\\
    &\qquad +\underbrace{\frac{1}{m}\sum_{r=1}^{m}[\sigma(\la\wb_{y_i,r}^{(t')},\bxi_{i}\ra)-\sigma(\la\wb_{y_k,r}^{(t')},\bxi_{k}\ra)]}_{I_3},
\end{aligned}
\end{equation}
where all the equalities are due to the network definition. Next we estimate $I_1$, $I_2$ and $I_3$ one by one. For $|I_1|$, we have the following upper bound according to Lemma \ref{lm: mismatch fj upper bound}: 
\begin{equation}
    |I_1|\leq|F_{-y_k}(\Wb_{-y_k}^{(t')},\xb_k)|+|F_{-y_i}(\Wb_{-y_i}^{(t')},\xb_i)|= F_{-y_k}(\Wb_{-y_k}^{(t')},\xb_k)+F_{-y_i}(\Wb_{-y_i}^{(t')},\xb_i)\leq1.\label{ineq: |I1| upper bound}
\end{equation}
For $|I_2|$, we have the following upper bound: 
\begin{equation}\label{ineq: |I2| upper bound}
    \begin{aligned}
    |I_2|&\leq\max\bigg\{\frac{1}{m}\sum_{r=1}^{m}\sigma(\la\wb_{y_i,r}^{(t')},\hat{y}_{i}\cdot\bmu\ra),\frac{1}{m}\sum_{r=1}^{m}\sigma(\la\wb_{y_k,r}^{(t')},\hat{y}_{k}\cdot\bmu\ra)\bigg\}\\
    &\leq3\max\Bigg\{|\la\wb_{y_i,r}^{(0)},\hat{y}_{i}\cdot\bmu\ra|,|\la\wb_{y_k,r}^{(0)},\hat{y}_{k}\cdot\bmu\ra|,\gamma_{j,r}^{(t')},\mathrm{SNR}\sqrt{\frac{32\log(6n/\delta)}{d}}n\alpha\Bigg\}\\
    &\leq3\max\Bigg\{\beta,C'\hat{\gamma}\alpha,\mathrm{SNR}\sqrt{\frac{32\log(6n/\delta)}{d}}n\alpha\Bigg\}\\
    &\leq0.25,
    \end{aligned}
\end{equation}
where the second inequality is due to \eqref{ineq: feature product bound}; the second inequality is due to the definition of $\beta$ and \eqref{ineq: range of gamma}; the last inequality is due to Condition \ref{condition:d_sigma0_eta} and \eqref{ineq: alpha beta upper bound}.

For $I_3$, we have the following upper bound
\begin{equation}\label{ineq: I3 upper bound}
\begin{aligned}
    I_3&=\frac{1}{m}\sum_{r=1}^{m}\big[ \sigma(\la\wb_{y_i,r}^{(t')},\bxi_{i}\ra)-\sigma(\la\wb_{y_k,r}^{(t')},\bxi_{k}\ra) \big] \\
    &\leq\frac{1}{m}\sum_{r=1}^{m} \big[ \la \wb_{y_k, r}^{(t')}, \bxi_{i}\ra - \la \wb_{y_k, r}^{(t')}, \bxi_{k}\ra \big] + 0.25\\
    &\leq\frac{1}{m}\sum_{r=1}^{m}\bigg[\zeta_{y_i,r,i}^{(t')}-\zeta_{y_k,r,k}^{(t')}+10 \sqrt{\frac{\log(6n^2/\delta)}{d}} n\alpha+0.25\bigg]\\
    &\leq\frac{1}{m}\sum_{r=1}^{m}\big[\zeta_{y_i,r,i}^{(t')}-\zeta_{y_k,r,k}^{(t')}\big]+0.5,
\end{aligned}
\end{equation}
where the first inequality is due to $\sigma(\la\wb_{y_i,r}^{(t')},\bxi_{i}\ra)\leq\la\wb_{y_i,r}^{(t')},\bxi_{i}\ra+0.25$ and $\sigma(\la\wb_{y_k,r}^{(t')},\bxi_{k}\ra)\geq\la\wb_{y_k,r}^{(t')},\bxi_{k}\ra$ according to Lemma \ref{lm: noise alignment lower bound}; the second inequality is due to \eqref{ineq: noise product bound2} in Lemma \ref{lm: inner product range}; the last inequality is due to $5\sqrt{\log(6n^2/\delta)/d} n\alpha\leq1/8$ according to Condition \ref{condition:d_sigma0_eta}.
Similarly, we have the following lower bound
\begin{equation}\label{ineq: I3 lower bound}
    \begin{aligned}
    I_3&=\frac{1}{m}\sum_{r=1}^{m} \big[ \sigma(\la\wb_{y_i,r}^{(t')},\bxi_{i}\ra)-\sigma(\la\wb_{y_k,r}^{(t')},\bxi_{k}\ra) \big]\\
    &\geq\frac{1}{m} \sum_{r=1}^{m} \big[\la\wb_{y_i,r}^{(t')},\bxi_{i}\ra-\la\wb_{y_k,r}^{(t')},\bxi_{k}\ra \big] - 0.25\\
    &\geq\frac{1}{m}\sum_{r=1}^{m}\bigg[\zeta_{y_i,r,i}^{(t')}-\zeta_{y_k,r,k}^{(t')}-10 \sqrt{\frac{\log(6n^2/\delta)}{d}} n\alpha-0.25\bigg]\\
    &\geq\frac{1}{m}\sum_{r=1}^{m}\big[\zeta_{y_i,r,i}^{(t')}-\zeta_{y_k,r,k}^{(t')}\big]-0.5,
    \end{aligned}
\end{equation}
where the first inequality is due to $\sigma(\la\wb_{y_i,r}^{(t')},\bxi_{i}\ra)\geq\la\wb_{y_i,r}^{(t')},\bxi_{i}\ra$ and $\sigma(\la\wb_{y_k,r}^{(t')},\bxi_{k}\ra)\leq\la\wb_{y_k,r}^{(t')},\bxi_{k}\ra+0.25$ according to Lemma \ref{lm: noise alignment lower bound}; the second inequality is due to \eqref{ineq: noise product bound2} in Lemma \ref{lm: inner product range}; the last inequality is due to $5\sqrt{\log(6n^2/\delta)/d} n\alpha\leq1/8$ according to Condition \ref{condition:d_sigma0_eta}. 
Now, by plugging \eqref{ineq: |I1| upper bound}-\eqref{ineq: I3 upper bound} into \eqref{eq: decomposition of F difference}, we get
\begin{align*}
    y_i\cdot f(\Wb^{(t')},\xb_i)-y_k\cdot f(\Wb^{(t')},\xb_k)&\leq |I_1|+|I_2|+I_3\leq\frac{1}{m}\sum_{r=1}^{m}\big[\zeta_{y_i,r,i}^{(t')}-\zeta_{y_k,r,k}^{(t')}\big]+1.75\\
    y_i\cdot f(\Wb^{(t')},\xb_i)-y_k\cdot f(\Wb^{(t')},\xb_k)&\geq -|I_1|-|I_2|+I_3\geq\frac{1}{m}\sum_{r=1}^{m}\big[\zeta_{y_i,r,i}^{(t')}-\zeta_{y_k,r,k}^{(t')}\big]-1.75,
\end{align*}
which is equivalent to  
\begin{equation}\label{ineq: output difference approximation}
    \bigg|y_i\cdot f(\Wb^{(t')},\xb_i)-y_k\cdot f(\Wb^{(t')},\xb_k)-\frac{1}{m}\sum_{r=1}^{m}\big[\zeta_{y_i,r,i}^{(t')}-\zeta_{y_k,r,k}^{(t')}\big]\bigg|\leq 1.75.
\end{equation}
With this, we see that when the first condition holds for $t'$, the second condition immediately follows for $t'$. 

Next, we prove the first condition holds for $t'=\tilde{t}$. We first write an iterative update rule for $\sum_{r=1}^{m}\big[\zeta_{y_i,r,i}^{(\tilde{t})}-\zeta_{y_k,r,k}^{(\tilde{t})}]$. Recall that from Lemma \ref{lemma:coefficient_iterative_proof} that
\begin{equation*}
    \zeta_{j,r,i}^{(t+1)} = \zeta_{j,r,i}^{(t)} - \frac{\eta}{nm} \cdot \ell_i'^{(t)}\cdot\sigma'(\la\wb_{j,r}^{(t)},\bxi_i\ra)\cdot\ind(y_i=j)\|\bxi_i\|_2^2
\end{equation*}
for all $j\in\{\pm 1\}, r\in[m], i\in[n], t\in[0,T^*]$. Also recall the definition of $S_{i}^{(t)} = \{r\in[m]:\la\wb_{y_i,r}^{(t)},\bxi_i\ra>0\}$, it follows that
\begin{equation*}
    \sum_{r=1}^{m}\big[\zeta_{y_i,r,i}^{(t+1)}-\zeta_{y_k,r,k}^{(t+1)}\big]=\sum_{r=1}^{m}\big[\zeta_{y_i,r,i}^{(t)}-\zeta_{y_k,r,k}^{(t)}\big]-\frac{\eta}{nm}\cdot\big(|S_{i}^{(t)}|\ell_i'^{(t)}\cdot\|\bxi_i\|_2^2-|S_{k}^{(t)}|\ell_k'^{(t)}\cdot\|\bxi_k\|_2^2\big),
\end{equation*}
for all $i, k\in [n]$ and $0 \leq t \leq T^*$. Now we consider two separate cases: $\sum_{r=1}^{m}\big[\zeta_{y_i,r,i}^{(\tilde{t}-1)}-\zeta_{y_k,r,k}^{(\tilde{t}-1)}\big]\leq0.9\kappa$ and $\sum_{r=1}^{m}\big[\zeta_{y_i,r,i}^{(\tilde{t}-1)}-\zeta_{y_k,r,k}^{(\tilde{t}-1)}\big]>0.9\kappa$. 

For when $\sum_{r=1}^{m}\big[\zeta_{y_i,r,i}^{(\tilde{t}-1)}-\zeta_{y_k,r,k}^{(\tilde{t}-1)}\big]\leq0.9\kappa$, we have
\begin{align*}
    \sum_{r=1}^{m}\big[\zeta_{y_i,r,i}^{(\tilde{t})}-\zeta_{y_k,r,k}^{(\tilde{t})}\big]&=\sum_{r=1}^{m}\big[\zeta_{y_i,r,i}^{(\tilde{t}-1)}-\zeta_{y_k,r,k}^{(\tilde{t}-1)}\big] -\frac{\eta}{nm}\cdot\Big(|S_{i}^{(\tilde{t}-1)}|\ell_i'^{(\tilde{t}-1)}\cdot\|\bxi_i\|_2^2-|S_{k}^{(\tilde{t}-1)}|\ell_k'^{(\tilde{t}-1)}\cdot\|\bxi_k\|_2^2\Big)\\
    &\leq\sum_{r=1}^{m}\big[\zeta_{y_i,r,i}^{(\tilde{t}-1)}-\zeta_{y_k,r,k}^{(\tilde{t}-1)}\big]-\frac{\eta}{nm}\cdot|S_{i}^{(\tilde{t}-1)}|\ell_i'^{(\tilde{t}-1)}\cdot\|\bxi_i\|_2^2\\
    &\leq\sum_{r=1}^{m}\big[\zeta_{y_i,r,i}^{(\tilde{t}-1)}-\zeta_{y_k,r,k}^{(\tilde{t}-1)}\big]+\frac{\eta}{n}\cdot\|\bxi_i\|_2^2\\
    &\leq0.9\kappa+0.1\kappa\\
    &=\kappa,
\end{align*}
where the first inequality is due to $\ell_i'^{(\tilde{t}-1)}<0$; the second inequality is due to $|S_{i}^{(\tilde{t}-1)}|\leq m$ and $-\ell_i'^{(\tilde{t}-1)}<1$; the third inequality is due to the assumption in this case and $\eta \leq C^{-1} \cdot n\sigma_p^{-2}d^{-1}$ from Condition~\ref{condition:d_sigma0_eta}.

On the other hand, for when $\sum_{r=1}^{m}\big[\zeta_{y_i,r,i}^{(\tilde{t}-1)}-\zeta_{y_k,r,k}^{(\tilde{t}-1)}\big]>0.9\kappa$, we have from the \eqref{ineq: output difference approximation} that 
\begin{equation}
    \begin{aligned}
    y_i\cdot f(\Wb^{(\tilde{t}-1)},\xb_i)-y_k\cdot f(\Wb^{(\tilde{t}-1)},\xb_k)
    &\geq\frac{1}{m}\sum_{r=1}^{m}\big[\zeta_{y_i,r,i}^{(\tilde{t}-1)}-\zeta_{y_k,r,k}^{(\tilde{t}-1)}\big]-1.75\\
    &\geq0.9\kappa-0.54\kappa\\
    &=0.36\kappa,
    \end{aligned}
\end{equation}
where the second inequality is due to $\kappa= 3.25$. Thus, according to Lemma \ref{lm: basic ANA}, we have
\begin{equation*}
    \frac{\ell_i'^{(\tilde{t}-1)}}{\ell_k'^{(\tilde{t}-1)}}\leq\exp\big(y_k\cdot f(\Wb^{(\tilde{t}-1)},\xb_k)-y_i\cdot f(\Wb^{(\tilde{t}-1)},\xb_i)\big)\leq\exp(-0.36\kappa).
\end{equation*}
Since we have $|S_{i}^{(\tilde{t}-1)}|\leq m$ and $|S_{k}^{(\tilde{t}-1)}|\geq 0.4m$ according to the fourth condition, it follows that
\begin{equation*}
    \frac{\big|S_{i}^{(\tilde{t}-1)}\big|\ell_i'^{(\tilde{t}-1)}}{\big|S_{k}^{(\tilde{t}-1)}\big|\ell_k'^{(\tilde{t}-1)}}\leq 2.5\exp(-0.36\kappa)<0.8.
\end{equation*}
According to Lemma \ref{lm: data inner products}, under event $\cE_{\mathrm{prelim}}$, we have
\begin{equation*}
    \big|\|\bxi_i\|_2^2-d\cdot\sigma_p^2\big|=O\big(\sigma_p^2\cdot\sqrt{d\log(6n/\delta)}\big),\, \forall i\in[n].
\end{equation*}
Note that $d=\Omega(\log(6n/\delta))$ from Condition \ref{condition:d_sigma0_eta}, it follows that
\begin{equation*}
    |S_{i}^{(\tilde{t}-1)}|(-\ell_i'^{(\tilde{t}-1)})\cdot\|\bxi_i\|_2^2<|S_{k}^{(\tilde{t}-1)}|(-\ell_k'^{(\tilde{t}-1)})\cdot\|\bxi_k\|_2^2.
\end{equation*}
Then we have
\begin{equation*}
    \sum_{r=1}^{m}\big[\zeta_{y_i,r,i}^{(\tilde{t})}-\zeta_{y_k,r,k}^{(\tilde{t})}\big]\leq\sum_{r=1}^{m}\big[\zeta_{y_i,r,i}^{(\tilde{t}-1)}-\zeta_{y_k,r,k}^{(\tilde{t}-1)}\big]\leq\kappa,
\end{equation*}
which completes the proof of the first hypothesis at iteration $t'=\tilde{t}$. Next, by applying the approximation in \eqref{ineq: output difference approximation}, we are ready to verify the second hypothesis at iteration $\tilde{t}$. In fact, we have
\begin{align*}
    y_i\cdot f(\Wb^{(\tilde{t})},\xb_i)-y_k\cdot f(\Wb^{(\tilde{t})},\xb_k)&\leq\frac{1}{m}\sum_{r=1}^{m}\big[\zeta_{y_i,r,i}^{(\tilde{t})}-\zeta_{y_k,r,k}^{(\tilde{t})}\big]+1.75\leq C_1,
\end{align*}
where the first inequality is by \eqref{ineq: output difference approximation}; the last inequality is by induction hypothesis and taking $\kappa$ as 3.25 and $C_1$ as 5. And the third hypothesis directly follows by noting that
\begin{equation*}
    \frac{\ell_i'^{(\tilde{t})}}{\ell_k'^{(\tilde{t})}}\leq\exp\big(y_k\cdot f(\Wb^{(\tilde{t})},\xb_k)-y_i\cdot f(\Wb^{(\tilde{t})},\xb_i)\big)\leq\exp(C_1)=C_2.
\end{equation*}
To verify the fourth hypothesis, according to the gradient descent rule, we have
\begin{align*}
    \la\wb_{y_i,r}^{(\tilde{t})},\bxi_i\ra&=\la\wb_{y_i,r}^{(\tilde{t}-1)},\bxi_i\ra-\frac{\eta}{nm}\cdot\sum_{i'=1}^{n}\ell_{i'}^{(\tilde{t}-1)}\cdot\sigma'(\la\wb_{y_i,r}^{(\tilde{t}-1)},\hat{y}_{i'}\bmu\ra)\cdot\la\hat{y}_{i'}\bmu,\bxi_i\ra\\
    &\qquad-\frac{\eta}{nm}\cdot\sum_{i'=1}^{n}\ell_{i'}^{(\tilde{t}-1)}\cdot\sigma'(\la\wb_{y_i,r}^{(\tilde{t}-1)},\bxi_{i'}\ra)\cdot\la\bxi_{i'},\bxi_{i}\ra\\
    &=\la\wb_{y_i,r}^{(\tilde{t}-1)},\bxi_i\ra-\frac{\eta}{nm}\cdot\sum_{i'=1}^{n}\ell_{i'}^{(\tilde{t}-1)}\cdot\sigma'(\la\wb_{y_i,r}^{(\tilde{t}-1)},\hat{y}_{i'}\bmu\ra)\cdot\la\hat{y}_{i'}\bmu,\bxi_i\ra\\
    &\qquad-\frac{\eta}{nm}\cdot\ell_{i}^{(\tilde{t}-1)}\cdot\sigma'(\la\wb_{y_i,r}^{(\tilde{t}-1)},\bxi_{i}\ra)\cdot\|\bxi_i\|_2^2-\frac{\eta}{nm}\cdot\sum_{i'\neq i}\ell_{i'}^{(\tilde{t}-1)}\cdot\sigma'(\la\wb_{y_i,r}^{(\tilde{t}-1)},\bxi_{i'}\ra)\cdot\la\bxi_{i'},\bxi_{i}\ra\\
    &=\la\wb_{y_i,r}^{(\tilde{t}-1)},\bxi_i\ra-\frac{\eta}{nm}\cdot\underbrace{\ell_{i}^{(\tilde{t}-1)}\cdot\|\bxi_i\|_2^2}_{I_4}-\frac{\eta}{nm}\cdot\underbrace{\sum_{i'\neq i}\ell_{i'}^{(\tilde{t}-1)}\cdot\sigma'(\la\wb_{y_i,r}^{(\tilde{t}-1)},\bxi_{i'}\ra)\cdot\la\bxi_{i'},\bxi_{i}\ra}_{I_5}\\
    &\qquad -\frac{\eta}{nm}\cdot\underbrace{\sum_{i'=1}^{n}\ell_{i'}^{(\tilde{t}-1)}\cdot\sigma'(\la\wb_{y_i,r}^{(\tilde{t}-1)},\hat{y}_{i'}\bmu\ra)\cdot\la\hat{y}_{i'}\bmu,\bxi_i\ra}_{I_6},
\end{align*}
for any $r\in S_i^{(\tilde{t}-1)}$, where the last equality is by $\la\wb_{y_i,r}^{(\tilde{t}-1)},\bxi_i\ra>0$. Then we respectively estimate $I_4,I_5,I_6$. For $I_4$, according to Lemma \ref{lm: data inner products}, we have
\begin{equation*}
    -I_4\geq|\ell_{i}^{(\tilde{t}-1)}|\cdot\sigma_p^2 d/2.
\end{equation*}
For $I_5$, we have following upper bound
\begin{align*}
    |I_5|&\leq\sum_{i'\neq i}|\ell_{i'}^{(\tilde{t}-1)}|\cdot\sigma'(\la\wb_{y_i,r}^{(\tilde{t}-1)},\bxi_{i'}\ra)\cdot|\la\bxi_{i'},\bxi_{i}\ra|\\
    &\leq\sum_{i'\neq i}|\ell_{i'}^{(\tilde{t}-1)}|\cdot|\la\bxi_{i'},\bxi_{i}\ra|\\
    &\leq\sum_{i'\neq i}|\ell_{i'}^{(\tilde{t}-1)}|\cdot2\sigma_p^2\cdot\sqrt{d\log(6n^2/\delta)}\\
    &\leq nC_2|\ell_{i}^{(\tilde{t}-1)}|\cdot2\sigma_p^2\cdot\sqrt{d\log(6n^2/\delta)},
\end{align*}
where the first inequality is due to triangle inequality; the second inequality is due to $\sigma'(z)\in\{0,1\}$; the third inequality is due to Lemma \ref{lm: data inner products}; the last inequality is due to the third hypothesis at iteration $\tilde{t}-1$.

For $I_6$, we have following upper bound
\begin{align*}
    |I_6|&\leq\sum_{i'=1}^{n}|\ell_{i'}^{(\tilde{t}-1)}|\cdot\sigma'(\la\wb_{y_i,r}^{(\tilde{t}-1)},\hat{y}_{i'}\bmu\ra)\cdot|\la\hat{y}_{i'}\bmu,\bxi_i\ra|\\
    &\leq\sum_{i'=1}^{n}|\ell_{i'}^{(\tilde{t}-1)}|\cdot|\la\hat{y}_{i'}\bmu,\bxi_i\ra|\\
    &\leq\sum_{i'=1}^{n}|\ell_{i'}^{(\tilde{t}-1)}|\cdot\|\bmu\|_2\sigma_p\sqrt{2\log(6n/\delta)}\\
    &\leq nC_2|\ell_{i}^{(\tilde{t}-1)}|\cdot\|\bmu\|_2\sigma_p\sqrt{2\log(6n/\delta)},
\end{align*}
where the first inequality is by triangle inequality; the second inequality is due to $\sigma'(z)\in\{0,1\}$; the third inequality is by Lemma \ref{lm: data inner products}; the last inequality is due to the third hypothesis at iteration $\tilde{t}-1$. Since $d\geq \max\{32C_2^2n^2\cdot\log(6n^2/\delta),4C_2n\|\bmu\|\sigma_p^{-1}\sqrt{2\log(6n/\delta)}\}$, we have $-I_4\geq\max\{|I_5|/2,|I_{6}|/2\}$ and hence $-I_4\geq |I_5|+|I_{6}|$. It follows that
\begin{equation*}
    \la\wb_{y_i,r}^{(\tilde{t})},\bxi_i\ra\geq\la\wb_{y_i,r}^{(\tilde{t}-1)},\bxi_i\ra>0,
\end{equation*}
for any $r\in S_{i}^{(\tilde{t}-1)}$. Therefore, $S_{i}^{(0)}\subseteq S_{i}^{(\tilde{t}-1)}\subseteq S_{i}^{(\tilde{t})}$. And it directly follows by Lemma \ref{lm: number of initial activated neurons} that $|S_{i}^{(\tilde{t})}|\geq0.4m,\, \forall i\in[n]$, which implies that the fourth hypothesis holds for $t'=\tilde{t}$. For the fifth hypothesis, similar to the proof of the fourth hypothesis, we also have
\begin{align*}
    \la\wb_{j,r}^{(\tilde{t})},\bxi_i\ra&=\la\wb_{j,r}^{(\tilde{t}-1)},\bxi_i\ra-\frac{\eta}{nm}\cdot\ell_{i}^{(\tilde{t}-1)}\cdot\|\bxi_i\|_2^2-\frac{\eta}{nm}\cdot\sum_{i'\neq i}\ell_{i'}^{(\tilde{t}-1)}\cdot\sigma'(\la\wb_{y_i,r}^{(\tilde{t}-1)},\bxi_{i'}\ra)\cdot\la\bxi_{i'},\bxi_{i}\ra\\
    &\qquad -\frac{\eta}{nm}\cdot\sum_{i'=1}^{n}\ell_{i'}^{(\tilde{t}-1)}\cdot\sigma'(\la\wb_{y_i,r}^{(\tilde{t}-1)},\hat{y}_{i'}\bmu\ra)\cdot\la\hat{y}_{i'}\bmu,\bxi_i\ra
\end{align*}
for any $i\in S_{j,r}^{(\tilde{t}-1)}$, where the equality holds due to $\la\wb_{j,r}^{(\tilde{t}-1)},\bxi_i\ra>0$ and $y_i=j$. By applying the same technique used in the proof of the fourth hypothesis, it follows that
\begin{equation*}
    \la\wb_{j,r}^{(\tilde{t})},\bxi_i\ra\geq\la\wb_{j,r}^{(\tilde{t}-1)},\bxi_i\ra>0,
\end{equation*}
for any $i\in S_{j,r}^{(\tilde{t}-1)}$. Thus, we have $S_{j,r}^{(0)}\subseteq S_{j,r}^{(\tilde{t}-1)}\subseteq S_{j,r}^{(\tilde{t})}$. And it directly follows by Lemma \ref{lm: number of initial activated neurons 2} that $|S_{j,r}^{(\tilde{t})}|\geq n/8$, which implies that the fourth hypothesis holds for $t'=\tilde{t}$.
Therefore, the five hypotheses hold for $t'=\tilde{t}$, which completes the induction.

\end{proof}

Now we are ready to prove Proposition \ref{proposition: range of gamma,zeta,omega}. 

\begin{proof}[Proof of Proposition \ref{proposition: range of gamma,zeta,omega}]
Our proof is based on induction. The results are obvious at $t=0$ as all the coefficients are zero. Suppose that there exists $\tilde{T}\leq T^*$ such that the results in Proposition \ref{proposition: range of gamma,zeta,omega} hold for all time $0\leq t\leq\tilde{T}-1$. We aim to prove that they also hold for $t=\tilde{T}$. Note that according to Lemma \ref{lm: balanced logit}, we also have for any $0\leq t\leq \tilde{T}-1$ that
\begin{enumerate}
    \item $\sum_{r=1}^{m}\big[\zeta_{y_i,r,i}^{(t)}-\zeta_{y_k,r,k}^{(t)}\big]\leq\kappa$ for all $i,k\in[n]$. 
    \item $y_i\cdot f(\Wb^{(t)},\xb_i)-y_k\cdot f(\Wb^{(t)},\xb_k)\leq C_1$ for all $i,k\in[n]$, 
    \item $\ell_i'^{(t)}/\ell_k'^{(t)}\leq C_2=\exp(C_1)$ for all $i,k\in[n]$. 
    \item $S_i^{(0)} \subseteq S_i^{(t)}$ for all $i\in[n]$, where $S_{i}^{(t)}:=\{r\in[m]:\la\wb_{y_i,r}^{(t)},\bxi_i\ra>0\}$, and hence $|S_{i}^{(t)}|\geq0.4m$, for all $i\in[n]$.
    \item $S_{j,r}^{(0)} \subseteq S_{j,r}^{(t)}$ , where $S_{j,r}^{(t)}:=\{i\in[n]:y_i=j,\la\wb_{j,r}^{(t)},\bxi_i\ra>0\}$, and hence $|S_{j,r}^{(t)}|\geq n/8$ for all $j\in\{\pm1\},r\in[m]$.
\end{enumerate}

We first prove that \eqref{ineq: range of omega} holds for $t=\tilde{T}$, i.e., $\omega_{j,r,i}^{(t)} \geq - \beta - 10 \sqrt{ \log(6n^2/\delta) / d} \cdot n \alpha$ for $t=\tilde{T}$ and any $r\in[m]$, $j\in\{\pm1\}$ and $i\in[n]$. Notice that $\omega_{j,r,i}^{(t)}=0$ for $j=y_i$, therefore we only need to consider the case that $j\neq y_i$. When $\omega_{j,r,t}^{(\tilde{T}-1)}<-0.5\beta-5 \sqrt{ \log(6n^2/\delta) / d} \cdot n\alpha$, by \eqref{ineq: noise product bound2} in Lemma \ref{lm: inner product range} we have that
\begin{equation*}
    \la \wb_{j, r}^{(\tilde{T} - 1)}, \bxi_i \ra \leq \omega_{j, r, i}^{(\tilde{T} - 1)} + \la \wb_{j, r}^{(0)}, \bxi_i \ra + 5 \sqrt{\frac{\log(6 n^2 / \delta)}{d}} n \alpha < 0,
\end{equation*}
and thus
\begin{align*}
    \omega_{j, r, i}^{(\tilde{T})}&=\omega_{j,r,i}^{(\tilde{T}-1)}+\frac{\eta}{nm}\cdot\ell_i'^{(\tilde{T}-1)}\cdot\ind(\la\wb_{j,r}^{(\tilde{T}-1)},\bxi_i\ra \geq 0)\cdot\ind(y_i=-j)\|\bxi_i\|_2^2\\
    &=\omega_{j,r,i}^{(\tilde{T}-1)}\\
    &\geq-\beta-10\sqrt{\frac{\log(6n^2/\delta)}{d}} n\alpha,
\end{align*}
where the last inequality is by induction hypothesis. When $\omega_{j,r,t}^{(\tilde{T}-1)}\geq-0.5\beta-5 \sqrt{ \log(6n^2/\delta) / d} \cdot n\alpha$, we have
\begin{align*}
    \underline{\rho}_{j,r,i}^{(\tilde{T})} &= \omega_{j,r,i}^{(\tilde{T}-1)} + \frac{\eta}{nm}\cdot\ell_i'^{(\tilde{T}-1)}\cdot\ind(\la\wb_{j,r}^{(\tilde{T}-1)},\bxi_i\ra\geq0)\cdot\ind(y_i=-j)\|\bxi_i\|_2^2\\
    &\geq-0.5\beta-5\sqrt{\frac{\log(6n^2/\delta)}{d}} n\alpha- \frac{3\eta\sigma_p^2d}{2nm}\\
    &\geq-0.5\beta-10\sqrt{\frac{\log(6n^2/\delta)}{d}} n\alpha\\
    &\geq-\beta-10\sqrt{\frac{\log(6n^2/\delta)}{d}} n\alpha,
\end{align*}
where the first equality is by $\ell_i'^{(\tilde{T}-1)}\in(-1,0)$ and $\|\bxi_i\|_2^2 \leq (3/2) \sigma_p^2 d$ by Lemma~\ref{lm: data inner products}; the second inequality is due to  $5\sqrt{ \log(6n^2/\delta) / d} \cdot n \alpha \geq 3\eta \sigma_p^2 d /2 nm$ by the condition for $\eta$ in Condition \ref{condition:d_sigma0_eta}.

Next we prove \eqref{ineq: range of zeta} holds for $t=\tilde{T}$. Consider
\begin{equation}
    \begin{aligned}\label{ineq: logit}
        |\ell_i'^{(t)}|&=\frac{1}{1+\exp\{y_i\cdot[F_{+1}(\Wb_{+1}^{(t)},\xb_i)-F_{-1}(\Wb_{-1}^{(t)},\xb_i)]\}}\\
        &\leq\exp\{-y_i\cdot[F_{+1}(\Wb_{+1}^{(t)},\xb_i)-F_{-1}(\Wb_{-1}^{(t)},\xb_i)]\}\\
        &\leq\exp\{-F_{y_i}(\Wb_{y_i}^{(t)},\xb_i)+0.5\},
    \end{aligned}
\end{equation}
where the last inequality is by $F_{j}(\Wb_{j}^{(t)},\xb_i)\leq 0.5$ for $j\neq y_i$ according to Lemma \ref{lm: mismatch fj upper bound}. Now recall the iterative update rule of $\zeta_{j,r,i}^{(t)}$:
\begin{equation*}
    \zeta_{j,r,i}^{(t+1)} = \zeta_{j,r,i}^{(t)} - \frac{\eta}{nm}\cdot\ell_i'^{(t)}\cdot\ind(\la\wb_{j,r}^{(t)},\bxi_i\ra \geq 0) \cdot \ind(y_i=j) \|\bxi_i\|_{2}^{2}.
\end{equation*}

Let $t_{j,r,i}$ be the last time $t<T^*$ that $\zeta_{j,r,i}^{(t)}\leq0.5\alpha$. Then by iterating the update rule from $t = t_{j, r, i}$ to $t = \tilde{T} - 1$, we get
\begin{equation}\label{eq: zeta}
    \begin{aligned}
    \zeta_{j,r,i}^{(\tilde{T})}&=\zeta_{j,r,i}^{(t_{j,r,i})}-\underbrace{\frac{\eta}{nm}\cdot\ell_i'^{(t_{j,r,i})}\cdot\ind(\la\wb_{j,r}^{(t_{j,r,i})},\bxi_i\ra \geq 0) \cdot \ind(y_i=j) \|\bxi_i\|_2^2}_{I_7} \\
    &\qquad-\underbrace{\sum_{t_{j,r,i}<t<\tilde{T}}\frac{\eta}{nm}\cdot\ell_i'^{(t)}\cdot\ind(\la\wb_{j,r}^{(t)},\bxi_i\ra \geq 0) \cdot\ind(y_i=j)\|\bxi_i\|_2^2}_{I_8}.
    \end{aligned}
\end{equation}

We first bound $I_7$ as follows: 
\begin{equation*}
    |I_7|
    \leq (\eta / nm) \cdot\|\bxi_i\|_2^2
    \leq (\eta / nm) \cdot 3\sigma_p^2d/2 
    \leq 1
    \leq 0.25\alpha,
\end{equation*}
where the first inequality is by $\ell_i'^{(t_{j,r,i})}\in(-1,0)$; the second inequality is by Lemma \ref{lm: data inner products}; the third inequality is by $\eta\leq C^{-1} \cdot n/ (\sigma_p^2 d)$ from Condition~\ref{condition:d_sigma0_eta}; the last inequality is by our choice of $\alpha=4\log(T^*)$ and $T^*\geq e$.

Second, we bound $I_8$. For $t_{j,r,i} < t < \tilde{T}$ and $y_i=j$, we can lower bound the inner product $\la \wb_{j,r}^{(t)}, \bxi_i \ra$ as follows
\begin{equation}\label{ineq: inner product w,bxi}
    \begin{aligned}
    \la \wb_{j,r}^{(t)}, \bxi_i \ra &\geq \la \wb_{j,r}^{(0)}, \bxi_i \ra + \zeta_{j,r,i}^{(t)} - 5 \sqrt{ \frac{ \log(6 n^2 / \delta)} {d}} n \alpha \\
    &\geq-0.5\beta+0.5\alpha-5\sqrt{\frac{\log(6n^2/\delta)}{d}} n\alpha\\
    &\geq0.25\alpha,
    \end{aligned}
\end{equation}
where the first inequality is by \eqref{ineq: noise product bound1} in Lemma \ref{lm: inner product range}; 
the second inequality is by $\zeta_{j,r,i}^{(t)}>0.5\alpha$ and $\la\wb_{j,r}^{(0)},\bxi_i\ra\geq-0.5\beta$ due to the definition of $t_{j,r,i}$ and $\beta$; 
the last inequality is by $\beta\leq 1/8 \leq 0.1\alpha$ and $5 \sqrt{ \log( 6 n^2 / \delta) / d} \cdot n \alpha \leq 0.2 \alpha$ by $d \geq C \cdot n^{2} \log(nm/\delta) (\log T^{*})^2$ from Condition~\ref{condition:d_sigma0_eta}. 
Thus, plugging the lower bounds of $\la\wb_{j,r}^{(t)},\bxi_i\ra$ into $I_8$ gives
\begin{align*}
    |I_8|&\leq\sum_{t_{j,r,i}<t<\tilde{T}}\frac{\eta}{nm}\cdot\exp(-\sigma(\la\wb_{j,r}^{(t)},\bxi_i\ra)+0.5)\cdot\sigma'(\la\wb_{j,r}^{(t)},\bxi_i\ra)\cdot\ind(y_i=j)\|\bxi_i\|_2^2\\
    &\leq\frac{2\eta(\tilde{T}-t_{j,r,i}-1)}{nm}\cdot\exp(-0.25\alpha)\cdot\frac{3\sigma_p^2 d}{2}\\
    &\leq\frac{2\eta T^*}{nm}\cdot\exp(-\log(T^*))\cdot\frac{3\sigma_p^2 d}{2}\\
    &=\frac{2\eta}{nm}\cdot\frac{3\sigma_p^2 d}{2}\leq1\leq0.25\alpha,
\end{align*}
where the first inequality is by \eqref{ineq: logit}; 
the second inequality is by \eqref{ineq: inner product w,bxi}; 
the third inequality is by $\alpha = 4 \log(T^*)$; 
the fourth inequality is by $\eta\leq C^{-1} n^2 m \sqrt{\log(n/\delta)}\sigma_p^{-2} d^{-3/2} \leq nm / (3\sigma_p^2 d)$ based on the conditions for $\eta$ and $d$ in Condition~\ref{condition:d_sigma0_eta}; 
the last inequality is by $\log(T^*) \geq1$ and $\alpha=4\log(T^*)$.  
Plugging the bound of $I_7, I_8$ into \eqref{eq: zeta} completes the proof for $\zeta$. 

Next, we prove \eqref{ineq: range of gamma} holds for $t=\tilde{T}$. Recall the iterative update rule of $\gamma_{j,r}^{(t)}$, we have
\begin{align*}
    \gamma_{j,r}^{(\tilde{T})} &= \gamma_{j,r}^{(\tilde{T}-1)} - \frac{\eta}{nm} \cdot \bigg[\sum_{i\in S_{+}} \ell_i'^{(\tilde{T}-1)} \sigma'(\la\wb_{j,r}^{(\tilde{T}-1)}, \hat{y}_{i} \cdot \bmu\ra) - \sum_{i\in S_{-}} \ell_i'^{(\tilde{T}-1)} \sigma'(\la\wb_{j,r}^{(\tilde{T}-1)}, \hat{y}_{i} \cdot \bmu\ra)\bigg] \cdot\|\bmu\|_2^2
\end{align*}
We first prove that the coefficients $\gamma_{j, r}^{(\tilde{T})}\geq\gamma_{j, r}^{(\tilde{T}-1)}$ and hence $\gamma_{j,r}^{(\tilde{T})}\geq\gamma_{j,r}^{(0)}=0$ for any $j\in\{\pm1\},r\in[m]$. Recall the definition of $S_{+} = \{i| y_{i} = \hat{y}_{i}\}$, $S_{-} = \{i| y_{i} \not= \hat{y}_{i}\}$, $S_{1}=\{i|\hat{y}_i=1\}$ and $S_{-1}=\{i|\hat{y}_i=-1\}$. We will consider the following two cases separately: $\la\wb_{j, r}^{(\tilde{T}-1)}, \bmu \ra \geq 0$ and $\la\wb_{j, r}^{(\tilde{T}-1)}, \bmu \ra < 0$. If $\la\wb_{j, r}^{(\tilde{T}-1)}, \bmu \ra \geq 0 $, then \begin{align*}
    & \quad -\sum_{i\in S_{+}} \ell_i'^{(\tilde{T}-1)} \sigma'(\la\wb_{j,r}^{(\tilde{T}-1)}, \hat{y}_{i} \cdot \bmu\ra)+ \sum_{i\in S_{-}} \ell_i'^{(\tilde{T}-1)} \sigma'(\la\wb_{j,r}^{(\tilde{T}-1)}, \hat{y}_{i} \cdot \bmu\ra) \\
    &= -\sum_{i \in S_{+}} \ell_i'^{(\tilde{T}-1)} \ind (\hat{y}_{i} \cdot \la \wb_{j, r}^{(\tilde{T}-1)}, \bmu \ra \geq 0)+ \sum_{i\in S_{-}} \ell_i'^{(\tilde{T}-1)} \ind (\hat{y}_{i} \cdot \la \wb_{j, r}^{(\tilde{T}-1)}, \bmu \ra \geq 0) \\
    &=\sum_{i\in S_{+}\cap S_{1}}|\ell_i'^{(\tilde{T}-1)}|-\sum_{i\in S_{-}\cap S_{-1}}|\ell_i'^{(\tilde{T}-1)}| \\
    &\geq|S_{+}\cap S_{1}|\cdot\min_{i\in S_{+}\cap S_{1}}|\ell_i'^{(\tilde{T}-1)}| -|S_{-}\cap S_{-1}|\cdot\max_{i\in S_{-}\cap S_{-1}}|\ell_i'^{(\tilde{T}-1)}|,
\end{align*}
where the second equality is due to $\ell_i'^{(\tilde{T}-1)} < 0$. If $\la\wb_{j,r}^{(\tilde{T}-1)},\bmu\ra<0$, then with a similar reasoning we have
\begin{align*}
    & \quad -\sum_{i\in S_{+}} \ell_i'^{(\tilde{T}-1)} \sigma'(\la\wb_{j,r}^{(\tilde{T}-1)}, \hat{y}_{i} \cdot \bmu\ra)+\sum_{i\in S_{-}} \ell_i'^{(\tilde{T}-1)} \sigma'(\la\wb_{j,r}^{(\tilde{T}-1)}, \hat{y}_{i} \cdot \bmu\ra) \\
    &= -\sum_{i\in S_{+}} \ell_i'^{(\tilde{T}-1)} \ind (\hat{y}_{i} \cdot \la\wb_{j,r}^{(\tilde{T}-1)}, \bmu\ra \geq 0) + \sum_{i\in S_{-}} \ell_i'^{(\tilde{T}-1)} \ind (\hat{y}_{i} \cdot \la\wb_{j,r}^{(\tilde{T}-1)}, \bmu\ra \geq 0) \\
    &=\sum_{i\in S_{+}\cap S_{-1}}|\ell_i'^{(\tilde{T}-1)}|-\sum_{i\in S_{-}\cap S_{1}}|\ell_i'^{(\tilde{T}-1)}| \\
    &\geq|S_{+}\cap S_{-1}|\cdot\min_{i\in S_{+}\cap S_{-1}}|\ell_i'^{(\tilde{T}-1)}| -|S_{-}\cap S_{1}|\cdot\max_{i\in S_{-}\cap S_{1}}|\ell_i'^{(\tilde{T}-1)}|.
\end{align*}
According to Lemma \ref{lm: estimate S cap S} and the third statement from Lemma \ref{lm: balanced logit} that $\ell_i '^{(\tilde{T}-1)} / \ell_k '^{(\tilde{T}-1)}\leq C_2, \forall i, k \in [n]$, under event $\cE_{\mathrm{prelim}}$, we have
\begin{align*}
    &\frac{|S_{+}\cap S_{1}|\cdot\min_{i\in S_{+}\cap S_{1}}|\ell_i'^{(\tilde{T}-1)}|}{|S_{-}\cap S_{-1}|\cdot\max_{i\in S_{-}\cap S_{-1}}|\ell_i'^{(\tilde{T}-1)}|}\geq\frac{|S_{+}\cap S_{1}|}{C_2|S_{-}\cap S_{-1}|}\geq\frac{(1-p)n-\sqrt{2n\log(8/\delta)}}{C_2\cdot(pn+\sqrt{2n\log(8/\delta)})},\\
    &\frac{|S_{+}\cap S_{-1}|\cdot\min_{i\in S_{+}\cap S_{-1}}|\ell_i'^{(\tilde{T}-1)}|}{|S_{-}\cap S_{1}|\cdot\max_{i\in S_{-}\cap S_{1}}|\ell_i'^{(\tilde{T}-1)}|}\geq\frac{|S_{+}\cap S_{-1}|}{C_2|S_{-}\cap S_{1}|}\geq\frac{(1-p)n-\sqrt{2n\log(8/\delta)}}{C_2\cdot(pn+\sqrt{2n\log(8/\delta)})}.
\end{align*}
As long as $p<1/[2(1+C_2)]$ and $n\geq 8(C_2+1)^2\log(8/\delta)$, we have
\begin{equation*}
    \frac{|S_{+}\cap S_{1}|\cdot\min_{i\in S_{+}\cap S_{1}}|\ell_i'^{(\tilde{T}-1)}|}{|S_{-}\cap S_{-1}|\cdot\max_{i\in S_{-}\cap S_{-1}}|\ell_i'^{(\tilde{T}-1)}|}\geq1,\,\frac{|S_{+}\cap S_{-1}|\cdot\min_{i\in S_{+}\cap S_{-1}}|\ell_i'^{(\tilde{T}-1)}|}{|S_{-}\cap S_{1}|\cdot\max_{i\in S_{-}\cap S_{1}}|\ell_i'^{(\tilde{T}-1)}|}\geq1.
\end{equation*}
And it follows for both cases $\la\wb_{j,r}^{(\tilde{T}-1)},\bmu\ra\geq0$ and $\la\wb_{j,r}^{(\tilde{T}-1)},\bmu\ra<0$ that
\begin{equation}
    \gamma_{j,r}^{(\tilde{T})}\geq\gamma_{j,r}^{(\tilde{T}-1)},\label{ineq: gamma increases}
\end{equation}
and hence
\begin{equation*}
    \gamma_{j,r}^{(\tilde{T})}\geq\gamma_{j,r}^{(0)}=0.
\end{equation*}

For the other part of \eqref{ineq: range of gamma}, we prove a strengthened hypothesis that there exists a $i^*\in[n]$ with $y_{i^*}=j$ such that for $1\leq t\leq T^*$ we have that
\begin{equation*}
    \gamma_{j,r}^{(t)}/\zeta_{j,r,i^*}^{(t)}\leq C'n\|\bmu\|_2^2/\sigma_p^2 d,
\end{equation*}
and $i^*$ can be taken as any sample from set $S_{j,r}^{(0)}$ and $C'$ can be taken as $2C_2$.

Recall the update rule of $\gamma_{j,r}^{(t)}$ and $\omega_{j,r,i}^{(t)}$, we have
\begin{align*}
    &\gamma_{j,r}^{(\tilde{T})} = \gamma_{j,r}^{(\tilde{T}-1)} - \frac{\eta}{nm} \cdot \bigg[\sum_{i\in S_{+}} \ell_i'^{(\tilde{T}-1)} \ind(\la\wb_{j,r}^{(\tilde{T}-1)}, \hat{y}_{i} \cdot \bmu\ra\geq0) - \sum_{i\in S_{-}} \ell_i'^{(\tilde{T}-1)} \ind(\la\wb_{j,r}^{(\tilde{T}-1)}, \hat{y}_{i} \cdot \bmu\ra\geq0)\bigg] \cdot\|\bmu\|_2^2,\\
    &\zeta_{j,r,i}^{(\tilde{T})}=\zeta_{j,r,i}^{(\tilde{T}-1)}-\frac{\eta}{nm}\cdot\ell_i'^{(\tilde{T}-1)}\cdot\ind(\la\wb_{j,r}^{(\tilde{T}-1)},\bxi_i\ra \geq 0) \cdot \ind(y_i=j) \|\bxi_i\|_{2}^{2}.
\end{align*}

According to the fifth statement of Lemma \ref{lm: balanced logit}, for any $i^*\in S_{j,r}^{(0)}$ it holds that $j=y_{i^*}$ and $\la\wb_{j,r}^{(t)},\bxi_{i^*}\ra \geq 0$ for any $0\leq t\leq \tilde{T}-1$. Thus, we have
\begin{equation*}
    \zeta_{j,r,i^*}^{(\tilde{T})}=\zeta_{j,r,i^*}^{(\tilde{T}-1)}-\frac{\eta}{nm}\cdot\ell_{i^*}'^{(\tilde{T}-1)}\cdot\|\bxi_{i^*}\|_{2}^{2}\geq\zeta_{j,r,i^*}^{(\tilde{T}-1)}-\frac{\eta}{nm}\cdot\ell_{i^*}'^{(\tilde{T}-1)}\cdot\sigma_p^2 d/2.
\end{equation*}
For the update rule of $\gamma_{j,r}^{(\tilde{T})}$, we have
\begin{align*}
    \Bigg|\sum_{i\in S_{+}} \ell_i'^{(\tilde{T}-1)} \ind(\la\wb_{j,r}^{(\tilde{T}-1)}, \hat{y}_{i} \cdot \bmu\ra\geq0) - \sum_{i\in S_{-}} \ell_i'^{(\tilde{T}-1)} \ind(\la\wb_{j,r}^{(\tilde{T}-1)}, \hat{y}_{i} \cdot \bmu\ra\geq0)\Bigg|&\leq\sum_{i=1}^{n}|\ell_i'^{(\tilde{T}-1)}|\cdot\|\bmu\|_{2}^{2}\\
    &\leq C_2n\cdot|\ell_{i^*}'^{(\tilde{T}-1)}|\cdot\|\bmu\|_{2}^{2},
\end{align*}
where the first inequality is due to triangle inequality; the second inequality is due to the third statement of Lemma \ref{lm: balanced logit} where $C_2$ is a positive constant. Then, we have

\begin{equation*}
    \frac{\gamma_{j,r}^{(\tilde{T})}}{\zeta_{j,r,i^*}^{(\tilde{T})}}\leq\max\Bigg\{\frac{\gamma_{j,r}^{(\tilde{T}-1)}}{\zeta_{j,r,i^*}^{(\tilde{T}-1)}},\frac{C_2n\cdot|\ell_{i^*}'^{(\tilde{T}-1)}|\cdot\|\bmu\|_{2}^{2}}{|\ell_{i^*}'^{(\tilde{T}-1)}|\cdot\sigma_p^2 d/2}\Bigg\}=\max\Bigg\{\frac{\gamma_{j,r}^{(\tilde{T}-1)}}{\zeta_{j,r,i^*}^{(\tilde{T}-1)}},\frac{2C_2 n\|\bmu\|_2^2}{\sigma_p^2 d}\Bigg\}\leq \frac{2C_2 n\|\bmu\|_2^2}{\sigma_p^2 d},
\end{equation*}
where the last inequality is by $\gamma_{j,r}^{(\tilde{T}-1)}/\zeta_{j,r,i^*}^{(\tilde{T}-1)}\leq C'\hat{\gamma}=C'n\|\bmu\|_2^2/\sigma_p^2 d$ and $C'$ can be taken as $2C_2$, which completes the induction.
\end{proof}

By then, we have already proved Proposition \ref{proposition: range of gamma,zeta,omega}. Then, according to Lemma \ref{lm: balanced logit}, next proposition directly follows.

\begin{proposition}\label{proposition: balanced logit}
Under Condition \ref{condition:d_sigma0_eta}, for $0\leq t\leq T^*$, we have that
\begin{enumerate}
    \item $\sum_{r=1}^{m}\big[\zeta_{y_i,r,i}^{(t)}-\zeta_{y_k,r,k}^{(t)}\big]\leq\kappa$ for all $i,k\in[n]$. 
    \item $y_i\cdot f(\Wb^{(t)},\xb_i)-y_k\cdot f(\Wb^{(t)},\xb_k)\leq C_1$ for all $i,k\in[n]$, 
    \item $\ell_i'^{(t)}/\ell_k'^{(t)}\leq C_2=\exp(C_1)$ for all $i,k\in[n]$. 
    \item $S_i^{(0)} \subseteq S_i^{(t)}$, where $S_{i}^{(t)}:=\{r\in[m]:\la\wb_{y_i,r}^{(t)},\bxi_i\ra>0\}$, and hence $|S_{i}^{(t)}|\geq0.4m$ for all $i\in[n]$.
    \item $S_{j,r}^{(0)} \subseteq S_{j,r}^{(t)}$ , where $S_{j,r}^{(t)}:=\{i\in[n]:y_i=j,\la\wb_{j,r}^{(t)},\bxi_i\ra>0\}$, and hence $|S_{j,r}^{(t)}|\geq n/8$ for all $j\in\{\pm1\},r\in[m]$.
    \end{enumerate}
Here $\kappa$ and $C_1$ can be taken as 3.25 and 5 respectively. 
\end{proposition}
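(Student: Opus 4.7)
The plan is to observe that this proposition is essentially an immediate corollary of the combination of Proposition \ref{proposition: range of gamma,zeta,omega} and Lemma \ref{lm: balanced logit}, which have already been established. All the inductive heavy lifting, namely the coupled induction that simultaneously controls the signal-noise decomposition coefficients and the five ``balanced dynamics'' conditions, was carried out in the proof of Proposition \ref{proposition: range of gamma,zeta,omega}; what remains is to package those results cleanly.

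More precisely, first I would invoke Proposition \ref{proposition: range of gamma,zeta,omega} to conclude that, under Condition \ref{condition:d_sigma0_eta}, the decomposition bounds \eqref{ineq: range of zeta}, \eqref{ineq: range of omega} and \eqref{ineq: range of gamma} hold \emph{simultaneously} for every iteration $t \in [0, T^*]$. This is the nontrivial input: it tells us that the hypothesis of Lemma \ref{lm: balanced logit} is satisfied not just at a single time but throughout the entire admissible training horizon. Next, I would apply Lemma \ref{lm: balanced logit} with the parameter $t$ set to $T^*$. Since the lemma concludes that the five balanced-dynamics statements hold for all $t' \leq t$ whenever the decomposition bounds hold up to $t$, instantiating with $t = T^*$ yields the five conditions for every $t' \in [0, T^*]$, which is exactly what Proposition \ref{proposition: balanced logit} asserts, with the same constants $\kappa = 3.25$ and $C_1 = 5$ (and correspondingly $C_2 = \exp(C_1)$).

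There is essentially no main obstacle at this stage, since all the delicate work, especially the case analysis splitting on whether $\sum_{r} \zeta_{y_i,r,i}^{(t)} - \sum_{r}\zeta_{y_k,r,k}^{(t)}$ is below or above $0.9\kappa$, the reduction of $|y_i f(\Wb^{(t)},\xb_i) - y_k f(\Wb^{(t)},\xb_k)|$ to this coefficient difference via \eqref{ineq: output difference approximation}, and the activation-pattern persistence arguments showing $S_i^{(0)} \subseteq S_i^{(t)}$ and $S_{j,r}^{(0)} \subseteq S_{j,r}^{(t)}$, was completed inside Lemma \ref{lm: balanced logit} and threaded into the induction of Proposition \ref{proposition: range of gamma,zeta,omega}. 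The only subtlety worth flagging is consistency of constants: the constants $C_1$, $\kappa$ and $C_2$ appearing here must match those fixed in Lemma \ref{lm: balanced logit}, and the logical dependency must respect that Proposition \ref{proposition: range of gamma,zeta,omega} in fact invokes the conclusions of Lemma \ref{lm: balanced logit} along the way, so no circularity arises: the induction establishes the decomposition bounds at iteration $\tilde T$ using the five conditions up to iteration $\tilde T - 1$, and once the decomposition bounds are known for the full range $[0, T^*]$, Lemma \ref{lm: balanced logit} delivers the five conditions on the full range, which is precisely the statement of Proposition \ref{proposition: balanced logit}.
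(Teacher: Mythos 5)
Your proposal is correct and matches the paper's own (one-line) proof exactly: the paper simply notes that once Proposition~\ref{proposition: range of gamma,zeta,omega} establishes the decomposition bounds for all $t \in [0, T^*]$, applying Lemma~\ref{lm: balanced logit} with $t = T^*$ immediately yields all five conditions on the full horizon. Your observation about the non-circularity of the logical ordering (the induction in Proposition~\ref{proposition: range of gamma,zeta,omega} only needs the five conditions at $\tilde{T}-1$ to push the decomposition bounds to $\tilde{T}$) is accurate and correctly identifies the only subtlety worth checking.
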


\section{Decoupling with a Two-Stage Analysis}
We utilize a two-stage analysis to decouple the complicated relations between the coefficients $\gamma_{j,r}^{(t)}$, $\zeta_{j,r,i}^{(t)}$ and $\omega_{j,r,i}^{(t)}$.  Intuitively, the initial neural network weights are small enough so that the neural network at initialization has constant level cross-entropy loss derivatives on all the training data: $\ell_i'^{(0)}=\ell'[y_i\cdot f(\Wb^{(0)},\xb_i)]=\Theta(1)$ for all $i\in[n]$. Motivated by this, we can consider the first stage of the training process where $\ell_i'^{(0)}=\Theta(1)$, in which case we can show significant scale differences among $\gamma_{j,r}^{(t)}$, $\zeta_{j,r,i}^{(t)}$ and $\omega_{j,r,i}^{(t)}$. Based on the result in the first stage, we then proceed to the second stage of the training process where the loss derivatives are no longer at a constant level and show that the training loss can be optimized to be arbitrarily small and meanwhile, the scale differences shown in the first learning stage remain the same throughout the training process. Recall that we denote $\alpha=4\log(T^*)$, $\beta=2\max_{i,j,r}\{|\la\wb_{j,r}^{(0)},\bmu\ra|,|\la\wb_{j,r}^{(0)},\bxi_i\ra|\}$ and $\mathrm{SNR}=\|\bmu\|_2/(\sigma_p\sqrt{d})$. We remind the readers that the proofs in this section are based on the results in Section \ref{section:decompositionproof}, which hold with high probability.

\subsection{First Stage}

\begin{lemma}\label{lm: first stage}
If we denote
\begin{equation*}
    n\cdot\mathrm{SNR}^{2}=\hat{\gamma},
\end{equation*}
then there exist
\begin{equation*}
    T_1=C_3\eta^{-1}nm\sigma_p^{-2}d^{-1}, T_2=C_4\eta^{-1}nm\sigma_p^{-2}d^{-1}
\end{equation*}
where $C_3=\Theta(1)$ is a large constant and $C_4=\Theta(1)$ is a small constant, such that
\begin{itemize}
    \item $\zeta_{j,r^*,i}^{(T_1)}\geq 2$ for any $r^*\in S_{i}^{(0)}=\{r\in[m]:\la\wb_{y_i,r}^{(0)},\bxi_i\ra>0\}$, $j\in\{\pm 1\}$ and $i\in[n]$ with $y_i=j$.
    \item $\max_{j,r}\gamma_{j,r}^{(t)}=O(\hat{\gamma})$ for all $0\leq t\leq T_1$.
    \item $\max_{j,r,i}|\omega_{j,r,i}^{(t)}|=\max\{O\big(\sqrt{\log(mn/\delta)}\cdot\sigma_0\sigma_p\sqrt{d}\big),O \big(n \sqrt{\log(n / \delta)} \log(T^*)/\sqrt{d} \big)\}$ for all $0\leq t\leq T_1$.
    \item $\min_{j,r}\gamma_{j,r}^{(t)}=\Omega(\hat{\gamma})$ for all $t\geq T_2$.
    \item $\max_{j,r}\zeta_{j,r,i}^{(T_1)}=O(1)$ for all $i\in[n]$.
\end{itemize}
\end{lemma}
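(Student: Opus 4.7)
The plan is a two-stage decoupling argument. I would set $T_1 := C_3 \eta^{-1} nm \sigma_p^{-2} d^{-1}$ with $C_3$ a sufficiently large absolute constant and $T_2 := C_4 \eta^{-1} nm \sigma_p^{-2} d^{-1}$ with $C_4$ a suitable small constant, and proceed by induction on $t \in [0, T_1]$ under the hypothesis that $-\ell_i'^{(t)} \in [c_0, 1]$ for all $i \in [n]$, where $c_0 > 0$ is an absolute constant. At $t = 0$ this is immediate, since Lemma~\ref{lm: initialization inner products} together with Condition~\ref{condition:d_sigma0_eta} gives $|f(\Wb^{(0)}, \xb_i)| = O(\beta) = O(1)$, so Lemma~\ref{lm: basic ANA} supplies the required bound on $-\ell_i'^{(0)}$.

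Under this induction hypothesis the update rule \eqref{iterative equation3} for $\zeta$ becomes an essentially linear recursion. By the activation-persistence conclusions of Proposition~\ref{proposition: balanced logit}, for any $r^{*} \in S_i^{(0)}$ the indicator $\sigma'(\la \wb_{y_i, r^{*}}^{(t)}, \bxi_i \ra)$ is identically $1$ throughout $[0, T_1]$. Combined with $\|\bxi_i\|_2^2 \in [\sigma_p^2 d / 2, 3\sigma_p^2 d / 2]$ from Lemma~\ref{lm: data inner products}, this shows the per-iteration increment of $\zeta_{y_i, r^{*}, i}^{(t)}$ lies in $\Theta(\eta \sigma_p^2 d / (nm))$. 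Summing over $T_1$ steps gives $\zeta_{y_i, r^{*}, i}^{(T_1)} \geq 2$ by choosing $C_3$ large, establishing the first bullet, while applying the matching upper bound $-\ell_i'^{(t)} \leq 1$ yields $\zeta_{j, r, i}^{(T_1)} = O(C_3) = O(1)$, which is the fifth bullet.

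For $\gamma$, I would invoke the time-invariant coefficient ratio $\gamma_{j,r}^{(t)} = \Theta(\|\bmu\|_2^2/(\sigma_p^2 d)) \cdot \sum_{i=1}^n \zeta_{j,r,i}^{(t)}$ from Proposition~\ref{Proposition: range}. Combined with $\zeta_{j,r,i}^{(t)} = O(1)$ this gives $\max_{j,r} \gamma_{j,r}^{(t)} = O(n \|\bmu\|_2^2/(\sigma_p^2 d)) = O(\hat\gamma)$, the second bullet. For the fourth bullet, the lower-bound version of the $\zeta$ recursion, applied to the $\geq n/8$ indices $i \in S_{j,r}^{(0)}$ guaranteed by Proposition~\ref{proposition: balanced logit}, gives $\sum_i \zeta_{j,r,i}^{(T_2)} = \Omega(n)$ after choosing $C_4$ appropriately; the time-invariant ratio then yields $\min_{j,r} \gamma_{j,r}^{(T_2)} = \Omega(\hat\gamma)$, and monotonicity of $\gamma_{j,r}^{(t)}$ from Proposition~\ref{proposition: range of gamma,zeta,omega} extends this to all $t \geq T_2$. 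The third bullet on $|\omega|$ is essentially inherited from Proposition~\ref{proposition: range of gamma,zeta,omega}, whose bound $|\omega_{j,r,i}^{(t)}| \leq \beta + 10\sqrt{\log(6n^2/\delta)/d} \cdot n\alpha$ matches the two terms in the claimed maximum after substituting $\beta = O(\sqrt{\log(mn/\delta)}\sigma_0\sigma_p\sqrt{d})$ from Lemma~\ref{lm: initialization inner products} and $\alpha = 4\log(T^{*})$.

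The hard part will be closing the induction loop: one must show that under these coefficient bounds, $|y_i f(\Wb^{(t)}, \xb_i)| = O(1)$ throughout $[0, T_1]$ so that Lemma~\ref{lm: basic ANA}, applied to $g(z) = \ell'(z)$, returns $-\ell_i'^{(t)} \in [c_0, 1]$. Via the decomposition $f = F_{+1} - F_{-1}$, Lemma~\ref{lm: mismatch fj upper bound}, and Lemma~\ref{lm: inner product range}, this reduces to verifying that the $\gamma$, $\zeta$, and $|\omega|$ contributions sum to a bounded constant, using Condition~\ref{condition:d_sigma0_eta} to absorb the small cross-correlation error terms. The bookkeeping of constants is the delicate part, but no structural idea beyond Propositions~\ref{proposition: range of gamma,zeta,omega} and \ref{proposition: balanced logit} is required.
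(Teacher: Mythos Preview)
Your plan is largely on track and invokes the right ingredients (activation persistence from Proposition~\ref{proposition: balanced logit}, linear $\zeta$ growth under $-\ell_i'=\Theta(1)$, and the $|\omega|$ bound from Proposition~\ref{proposition: range of gamma,zeta,omega}), but two points differ from the paper and, as written, create gaps.

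\emph{First}, for the fourth bullet you need the \emph{lower-bound} direction of the ratio $\gamma_{j,r}^{(t)}=\Theta(\mathrm{SNR}^2)\sum_i\zeta_{j,r,i}^{(t)}$. In the appendix only the upper-bound direction is established inside Proposition~\ref{proposition: range of gamma,zeta,omega}; the two-sided statement is Lemma~\ref{lm: zeta,gamma ratio}, whose base case at $t=T_1$ invokes the present lemma, so your route is circular in the paper's ordering. The paper instead derives $\min_{j,r}\gamma_{j,r}^{(T_2)}=\Omega(\hat\gamma)$ directly from~\eqref{iterative equation2}: on $[0,T_2]$ one has $-\ell_i'^{(t)}\geq C$, and the cardinality estimates of Lemma~\ref{lm: estimate S cap S} give $C|S_+\cap S_{\pm1}|-|S_-\cap S_{\mp1}|\geq\Omega(n)$, whence $\gamma_{j,r}^{(t+1)}-\gamma_{j,r}^{(t)}\geq C\eta\|\bmu\|_2^2/(8m)$, which sums to $\Omega(\hat\gamma)$. (The $\gamma$ upper bound is likewise obtained directly as $\gamma_{j,r}^{(t)}\leq\eta t\|\bmu\|_2^2/m$ from $|\ell_i'|\leq1$, without going through the ratio.)

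\emph{Second}, maintaining $-\ell_i'^{(t)}\geq c_0$ with $c_0$ absolute on the entire window $[0,T_1]$ runs into a constant-balance problem: closing the loop forces $c_0\gtrsim e^{-\max_r\zeta_{y_i,r,i}^{(t)}}\gtrsim e^{-\Theta(C_3)}$, so one cannot simultaneously take $C_3$ large and achieve $c_0 C_3\cdot\Theta(1)\geq2$. The paper sidesteps this by running the constant-derivative argument only up to the stopping time $T_1^{(i)}:=\max\{t\leq T_1:\max_{j,r}|\rho_{j,r,i}^{(t)}|\leq2\}$; on $[0,T_1^{(i)}]$ one has $|F_{\pm1}(\Wb^{(t)},\xb_i)|=O(1)$ with absolute constants and hence $-\ell_i'^{(t)}\geq C$ for an absolute $C$, so $\zeta_{y_i,r^*,i}$ reaches $2$ within $T_1=(4/C)\eta^{-1}nm\sigma_p^{-2}d^{-1}$ steps, and monotonicity of $\zeta$ (Proposition~\ref{proposition: range of gamma,zeta,omega}) then carries $\zeta_{y_i,r^*,i}\geq2$ to $T_1$.
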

\begin{proof}[Proof of Lemma \ref{lm: first stage}]
By Proposition \ref{proposition: range of gamma,zeta,omega}, we have that $\omega_{j,r,i}^{(t)}\geq-\beta-10n\sqrt{\frac{\log(6n^2/\delta)}{d}}\alpha$ for all $j\in\{\pm 1\}$, $r\in[m]$, $i\in[n]$ and $0\leq t\leq T^*$. According to Lemma \ref{lm: initialization inner products}, for $\beta$ we have
\begin{align*}
    \beta&=2\max_{i,j,r}\{|\la\wb_{j,r}^{(0)},\bmu\ra|,|\la\wb_{j,r}^{(0)},\bxi_i\ra|\}\\
    &\leq2\max\{\sqrt{2\log(12m/\delta)}\cdot\sigma_0\|\bmu\|_2,2\sqrt{\log(12mn/\delta)}\cdot\sigma_0\sigma_p\sqrt{d}\}\\
    &=O\big(\sqrt{\log(mn/\delta)}\cdot\sigma_0\sigma_p\sqrt{d}\big)\,
\end{align*}
where the last equality is by the first condition of Condition \ref{condition:d_sigma0_eta}.
Since $\omega_{j,r,i}^{(t)}\leq 0$, we have that
\begin{align*}
    \max_{j,r,i}|\omega_{j,r,i}^{(t)}|&=\max_{j,r,i}-\omega_{j,r,i}^{(t)}\\
    &\leq\beta+10\sqrt{\frac{\log(4n^2/\delta)}{d}}n\alpha\\
    &=\max \bigg \{O\big(\sqrt{\log(mn/\delta)}\cdot\sigma_0\sigma_p\sqrt{d}\big), O \big( \sqrt{\log(n / \delta)} \log(T^*) \cdot n/\sqrt{d} \big) \bigg\}.
\end{align*}
Next, for the growth of $\gamma_{j,r}^{(t)}$, we have following upper bound
\begin{align*}
    \gamma_{j,r}^{(t+1)} &= \gamma_{j,r}^{(t)} - \frac{\eta}{nm} \cdot \bigg[\sum_{i\in S_{+}} \ell_i'^{(t)} \sigma'(\la\wb_{j,r}^{(t)}, \hat{y}_{i} \cdot \bmu\ra) - \sum_{i\in S_{-}} \ell_i'^{(t)} \sigma'(\la\wb_{j,r}^{(t)}, \hat{y}_{i} \cdot \bmu\ra)\bigg]  \cdot \| \bmu \|_2^2\\
    &=\gamma_{j,r}^{(t)}-\frac{\eta}{nm}\cdot\sum_{i=1}^{n}\ell_i'^{(t)}\cdot\sigma'(\la\wb_{j,r}^{(t)},\hat{y}_i\cdot\bmu\ra)\|\bmu\|_2^2\\
    &\leq\gamma_{j,r}^{(t)}+\frac{\eta}{m}\cdot\|\bmu\|_2^2,
\end{align*}
where the inequality is by $|\ell'|\leq1$. Note that $\gamma_{j,r}^{(0)}=0$ and recursively use the inequality $t$ times we have
\begin{equation}
    \gamma_{j,r}^{(t)}\leq\frac{\eta t}{m}\cdot\|\bmu\|_2^2.\label{ineq: gamma upbound}
\end{equation}
Since $n\cdot\mathrm{SNR}^{2}=n\|\bmu\|_2^2/\sigma_p^2 d=\hat{\gamma}$, we have
\begin{equation*}
    T_1=C_3\eta^{-1}nm\sigma_p^{-2}d^{-1}=C_3\eta^{-1}m\|\bmu\|_2^{-2}\hat{\gamma}.
\end{equation*}
And it follows that
\begin{equation*}
    \gamma_{j,r}^{(t)}\leq\frac{\eta t}{m}\cdot\|\bmu\|_2^2\leq\frac{\eta T_1}{m}\cdot\|\bmu\|_2^2\leq C_3\hat{\gamma},
\end{equation*}
for all $0\leq t\leq T_1$. 

For $\zeta_{j,r,i}^{(t)}$, recall from \eqref{iterative equation3} that
\begin{equation*}
    \zeta_{j,r,i}^{(t+1)}=\zeta_{j,r,i}^{(t)}-\frac{\eta}{nm}\cdot\ell_i'^{(t)}\cdot\sigma'(\la\wb_{j,r}^{(t)},\bxi_i\ra)\cdot\ind(y_i=j)\|\bxi_i\|_2^2.
\end{equation*}
According to Proposition \ref{proposition: balanced logit}, for any $r^*\in S_{i}^{(0)}=\{r\in[m]:\la\wb_{y_i,r}^{(0)},\bxi_i\ra>0\}$, we have $\la\wb_{y_i,r^*}^{(t)},\bxi_i\ra>0$ for all $0\leq t\leq T^*$ and hence
\begin{equation*}
    \zeta_{y_i,r^*,i}^{(t+1)}=\zeta_{y_i,r^*,i}^{(t)}-\frac{\eta}{nm}\cdot\ell_i'^{(t)}\cdot\|\bxi_i\|_2^2.
\end{equation*}
Note that $\zeta_{y_i,r^*,i}^{(0)}=0$ and recursively use the equation $t$ times, we have
\begin{equation*}
    \zeta_{y_i,r^*,i}^{(t)}=-\frac{\eta}{nm}\cdot\sum_{s=0}^{t-1}\ell_i'^{(s)}\cdot\|\bxi_i\|_2^2.
\end{equation*}
For each $i$, denote by $T_1^{(i)}$ the last time in the period $[0,T_1]$ satisfying that $\max_{j,r}|\rho_{j,r,i}^{(t)}|\leq2$. Then for $0\leq t\leq T_1^{(i)}$, $\max_{j,r}\{|\zeta_{j,r,i}^{(t)}|,|\omega_{j,r,i}^{(t)}|\}=O(1)$ and $\max_{j,r}\gamma_{j,r}^{(t)}=O(1)$. Therefore, we know that $F_{-1}(\Wb^{(t)},\xb_i), F_{+1}(\Wb^{(t)},\xb_i)=O(1)$. Thus there exists a positive constant $C$ such that $-\ell_i'^{(t)}\geq C$ for $0\leq t\leq T_1^{(i)}$. Then we have
\begin{equation*}
    \zeta_{y_i,r^*,i}^{(t)}\geq\frac{C\eta\sigma_p^2 dt}{2nm}.
\end{equation*}
Therefore, $\zeta_{y_i,r^*,i}^{(t)}$ will reach 2 within
\begin{equation*}
    T_1=C_3\eta^{-1}nm\sigma_p^{-2}d^{-1}
\end{equation*}
iterations for any $r^*\in S_{i}^{(0)}$, where $C_3$ can be taken as $4/C$. 

Next, we will discuss the lower bound of the growth of $\gamma_{j,r}^{(t)}$. For $\zeta_{j,r,i}^{(t)}$, we have
\begin{align*}
    \zeta_{j,r,i}^{(t+1)}&=\zeta_{j,r,i}^{(t)}-\frac{\eta}{nm}\cdot\ell_i'^{(t)}\cdot\sigma'(\la\wb_{j,r}^{(t)},\bxi_i\ra)\cdot\ind(y_i=j)\|\bxi_i\|_2^2\leq \zeta_{j,r,i}^{(t)}+\frac{\eta}{nm}\|\bxi_i\|_2^2\leq\zeta_{j,r,i}^{(t)}+\frac{3\eta\sigma_p^2 d}{2nm},
\end{align*}
where the first inequality is by $-\ell_i'\in(0,1)$ and $\sigma'\in\{0,1\}$; the second inequality is by Lemma \ref{lm: data inner products}. According to \eqref{ineq: gamma upbound} and $\zeta_{j,r,i}^{(0)}=0$, it follows that
\begin{equation}
    \zeta_{j,r,i}^{(t)}\leq\frac{3\eta\sigma_p^2 dt}{2nm}, \gamma_{j,r}^{(t)}\leq\frac{\eta t}{m}\cdot\|\bmu\|_2^2.\label{ineq6}
\end{equation}
Therefore, $\max_{j,r,i}\zeta_{j,r,i}^{(t)}$ will be smaller than $1$ and $\gamma_{j,r}^{(t)}$ smaller than $\Theta(n\|\bmu\|_2^2/\sigma_p^2 d)=\Theta(n\cdot\mathrm{SNR}^2)=\Theta(\hat{\gamma})=O(1)$ within
\begin{equation*}
    T_2=C_4\eta^{-1}nm\sigma_p^{-2} d^{-1}
\end{equation*}
iterations, where $C_4$ can be taken as $2/3$. Therefore, we know that $F_{-1}(\Wb^{(t)},\xb_i), F_{+1}(\Wb^{(t)},\xb_i)=O(1)$ in $[0,T_2]$. Thus there exists a positive constant $C$ such that $-\ell_i'^{(t)}\geq C$ for $0\leq t\leq T_2$.

Recall that we denote $\{i\in[n]|y_i=y\}$ as $S_{y}$. For the growth of $\gamma_{j,r}^{(t)}$, if $\la\wb_{j,r}^{(t)},\bmu\ra\geq0$, we have
\begin{equation}
    \begin{aligned}
    \gamma_{j,r}^{(t+1)} &= \gamma_{j,r}^{(t)} - \frac{\eta}{nm} \cdot \bigg[\sum_{i\in S_{+}} \ell_i'^{(t)} \sigma'(\la\wb_{j,r}^{(t)}, \hat{y}_{i} \cdot \bmu\ra) - \sum_{i\in S_{-}} \ell_i'^{(t)} \sigma'(\la\wb_{j,r}^{(t)}, \hat{y}_{i} \cdot \bmu\ra)\bigg]  \cdot \| \bmu \|_2^2\\
    &=\gamma_{j,r}^{(t)} - \frac{\eta}{nm}\cdot\bigg[\sum_{i\in S_{+}\cap S_{1}}\ell_i'^{(t)}-\sum_{i\in S_{-}\cap S_{-1}}\ell_i'^{(t)}\bigg]\cdot \| \bmu \|_2^2\\
    &\geq\gamma_{j,r}^{(t)} + \frac{\eta}{nm}\cdot(C|S_{+}\cap S_{1}|-|S_{-}\cap S_{-1}|)\cdot \| \bmu \|_2^2.\label{ineq9}
    \end{aligned}
\end{equation}
And if $\la\wb_{j,r}^{(t)},\bmu\ra<0$, we have
\begin{equation}
    \begin{aligned}
    \gamma_{j,r}^{(t+1)}&=\gamma_{j,r}^{(t)} - \frac{\eta}{nm}\cdot\bigg[\sum_{i\in S_{+}\cap S_{-1}}\ell_i'^{(t)}-\sum_{i\in S_{-}\cap S_{1}}\ell_i'^{(t)}\bigg]\cdot \| \bmu \|_2^2\\
    &\geq\gamma_{j,r}^{(t)} + \frac{\eta}{nm}\cdot(C|S_{+}\cap S_{-1}|-|S_{-}\cap S_{1}|)\cdot \| \bmu \|_2^2.\label{ineq10}
    \end{aligned}
\end{equation}

According to Lemma \ref{lm: estimate S cap S}, under event $\cE_{\mathrm{prelim}}$, we have
\begin{equation}
\begin{aligned}
    &\frac{|S_{+}\cap S_{1}|}{|S_{-}\cap S_{-1}|},\frac{|S_{+}\cap S_{-1}|}{|S_{-}\cap S_{1}|}\geq\frac{(1-p)n-\sqrt{2n\log(8/\delta)}}{pn+\sqrt{2n\log(8/\delta)}},\\
    &|S_{+}\cap S_{1}|,|S_{+}\cap S_{-1}|\geq(1-p)n-\sqrt{2n\log(8/\delta)}.\label{ineq11}
\end{aligned}
\end{equation}

As long as $p<C/6$ and $n\geq 72C^{-2}\log(8/\delta)$, it follows that
\begin{align*}
    &\frac{|S_{+}\cap S_{1}|}{|S_{-}\cap S_{-1}|},\frac{|S_{+}\cap S_{-1}|}{|S_{-}\cap S_{1}|}\geq2/C,\\
    &|S_{+}\cap S_{1}|,|S_{+}\cap S_{-1}|\geq n/4.
\end{align*}

Therefore, we have
\begin{equation}
    \begin{aligned}
    &\gamma_{j,r}^{(t+1)}\geq\gamma_{j,r}^{(t)} + \frac{C\eta}{2nm}\cdot|S_{+}\cap S_{-1}|\cdot \| \bmu \|_2^2\geq\gamma_{j,r}^{(t)} + \frac{C\eta}{8m}\cdot \| \bmu \|_2^2, \text{ if }\la\wb_{j,r}^{(t)},\bmu\ra\geq0,\\
    &\gamma_{j,r}^{(t+1)}\geq\gamma_{j,r}^{(t)} + \frac{C\eta}{2nm}\cdot|S_{+}\cap S_{1}|\cdot \| \bmu \|_2^2\geq\gamma_{j,r}^{(t)} + \frac{C\eta}{8m}\cdot \| \bmu \|_2^2, \text{ if }\la\wb_{j,r}^{(t)},\bmu\ra<0.\label{ineq12}
    \end{aligned}
\end{equation}
Note that $\gamma_{j,r}^{(0)}=0$, it follows that
\begin{equation*}
    \gamma_{j,r}^{(t)}\geq\frac{C\|\bmu\|_2^2\eta t}{8m},\,\gamma_{j,r}^{(T_2)}\geq\frac{CC_4 n\|\bmu\|_2^2}{8\sigma_p^2 d}=\Theta(n\cdot\mathrm{SNR}^2)=\Theta(\hat{\gamma}).
\end{equation*}
Note that we have proved \eqref{ineq: gamma increases} in Lemma \ref{proposition: range of gamma,zeta,omega} that $\gamma_{j,r}^{(t)}$ is increasing for $0\leq t\leq T^*$, thus we have
\begin{equation*}
    \gamma_{j,r}^{(t)}=\Omega(\hat{\gamma})
\end{equation*}
for $T_2 \leq t\leq T^*$. And it follows directly from \eqref{ineq6} that
\begin{equation*}
    \zeta_{j,r,i}^{(T_1)}\leq\frac{3\eta\sigma_p^2 dT_1}{2nm}=\frac{3C_3}{2},\,\zeta_{j,r,i}^{(T_1)}=O(1),
\end{equation*}
which completes the proof.
\end{proof}
\subsection{Second Stage}
By the signal-noise decomposition, at the end of the first stage, we have
\begin{equation*}
    \wb_{j,r}^{(T_1)}=\wb_{j,r}^{(0)}+j\cdot\gamma_{j,r}^{(T_1)}\cdot\frac{\bmu}{\|\bmu\|_2^2}+\sum_{i=1}^{n}\zeta_{j,r,i}^{(T_1)}\cdot\frac{\bxi_i}{\|\bxi_i\|_2^2}+\sum_{i=1}^{n}\omega_{j,r,i}^{(T_1)}\cdot\frac{\bxi_i}{\|\bxi_i\|_2^2}
\end{equation*}
for $j\in[\pm1]$ and $r\in[m]$. By the results we get in the first stage, we know that at the beginning of this stage, we have the following property holds:
\begin{itemize}
    \item $\zeta_{j,r^*,i}^{(T_1)}\geq 2$ for any $r^*\in S_{i}^{(0)}=\{r\in[m]:\la\wb_{y_i,r}^{(0)},\bxi_i\ra>0\}$, $j\in\{\pm 1\}$ and $i\in[n]$ with $y_i=j$.
    \item $\max_{j,r,i}|\omega_{j,r,i}^{(T_1)}|=\max\{O\big(\sqrt{\log(mn/\delta)}\cdot\sigma_0\sigma_p\sqrt{d}\big),O \big(n \sqrt{\log(n / \delta)} \log(T^*)/\sqrt{d} \big)\}$.
    \item $\gamma_{j,r}^{(T_1)}=\Theta(\hat{\gamma})$ for any $j\in\{\pm 1\},r\in[m]$.
\end{itemize}
where $\hat{\gamma}=n\cdot\mathrm{SNR}^2$. Now we choose $\Wb^*$ as follows 
\begin{equation*}
    \wb_{j,r}^{*}=\wb_{j,r}^{(0)}+5\log(2/\epsilon)\Big[\sum_{i=1}^{n}\ind(j=y_i)\cdot\frac{\bxi_i}{\|\bxi_i\|_2^2}\Big].
\end{equation*}
\begin{lemma}\label{lm: W difference}
    Under the same conditions as Theorem \ref{thm:signal_learning_main}, we have that $\|\Wb^{(T_1)}-\Wb^*\|_{F}\leq\tilde{O}(m^{1/2}n^{1/2}\sigma_p^{-1}d^{-1/2})$.
\end{lemma}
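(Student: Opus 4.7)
\textbf{Proof proposal for Lemma \ref{lm: W difference}.} The plan is to exploit the explicit signal-noise decomposition of $\wb_{j,r}^{(T_1)}$ from Definition \ref{def:w_decomposition} together with the given form of $\wb_{j,r}^*$, so that the difference
\[
\wb_{j,r}^{(T_1)} - \wb_{j,r}^* = j \gamma_{j,r}^{(T_1)} \frac{\bmu}{\|\bmu\|_2^2} + \sum_{i=1}^n \bigl[\zeta_{j,r,i}^{(T_1)} - 5\log(2/\epsilon)\,\ind(j=y_i)\bigr]\frac{\bxi_i}{\|\bxi_i\|_2^2} + \sum_{i=1}^n \omega_{j,r,i}^{(T_1)} \frac{\bxi_i}{\|\bxi_i\|_2^2}
\]
has no $\wb_{j,r}^{(0)}$ term left. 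From here I would apply the triangle inequality to split $\|\wb_{j,r}^{(T_1)}-\wb_{j,r}^*\|_2$ into a signal piece and a noise piece, estimate each piece using the first-stage bounds from Lemma \ref{lm: first stage}, and finally square and sum over the $2m$ filters.

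For the signal piece, I would use $\gamma_{j,r}^{(T_1)} = O(\hat{\gamma}) = O(n\|\bmu\|_2^2/(\sigma_p^2 d))$ to get a contribution of order $O(n\|\bmu\|_2/(\sigma_p^2 d))$; by Condition \ref{condition:d_sigma0_eta}, $n\|\bmu\|_2^2/\sigma_p^2 \leq d/\log(T^*)$, so the squared signal piece is $\tilde{O}(n/(\sigma_p^2 d))$, comparable to the noise piece. For the noise piece, the key is that every coefficient of $\bxi_i/\|\bxi_i\|_2^2$ is $\tilde{O}(1)$: by Lemma \ref{lm: first stage} the $\zeta_{j,r,i}^{(T_1)}$ are $O(1)$, the shift $5\log(2/\epsilon)$ is $\tilde{O}(1)$, and $|\omega_{j,r,i}^{(T_1)}|$ is $\tilde{O}(1)$ once we use Condition \ref{condition:d_sigma0_eta} to bound both $\sigma_0\sigma_p\sqrt{d}$ and $n\log(T^*)/\sqrt{d}$ by constants.

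The central calculation is to bound $\bigl\|\sum_i a_i \bxi_i/\|\bxi_i\|_2^2\bigr\|_2$ for coefficients $a_i = \tilde{O}(1)$. Expanding the squared norm and invoking Lemma \ref{lm: data inner products} ($\|\bxi_i\|_2^2 \geq \sigma_p^2 d/2$ and $|\la \bxi_i,\bxi_k\ra| \leq 2\sigma_p^2\sqrt{d\log(6n^2/\delta)}$ for $i\neq k$) gives
\[
\Bigl\|\sum_i a_i \frac{\bxi_i}{\|\bxi_i\|_2^2}\Bigr\|_2^2 \leq \frac{2\sum_i a_i^2}{\sigma_p^2 d} + \frac{C \sum_{i\neq k}|a_i a_k|\sqrt{\log(6n^2/\delta)}}{\sigma_p^2 d^{3/2}} = \tilde{O}\Bigl(\frac{n}{\sigma_p^2 d}\Bigr),
\]
where the cross-term is absorbed thanks to the assumption $d \geq C n^2 \log(nm/\delta)(\log T^*)^2$ in Condition \ref{condition:d_sigma0_eta}. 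Combining the signal and noise pieces yields $\|\wb_{j,r}^{(T_1)}-\wb_{j,r}^*\|_2^2 = \tilde{O}(n/(\sigma_p^2 d))$, and summing over $(j,r)\in \{\pm 1\}\times [m]$ gives the claimed $\|\Wb^{(T_1)}-\Wb^*\|_F = \tilde{O}(m^{1/2}n^{1/2}\sigma_p^{-1}d^{-1/2})$.

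The only non-routine step is the cross-term control in the noise piece; this is the main obstacle because the $\bxi_i$ are not exactly orthogonal, and we need the dimension condition to ensure the cross terms of order $n^2/(\sigma_p^2 d^{3/2})$ do not dominate the diagonal $n/(\sigma_p^2 d)$ bound. Everything else is a direct plug-in of the first-stage estimates already established in Lemma \ref{lm: first stage} and the initialization/noise concentration bounds from Appendix \ref{sec: initial}.
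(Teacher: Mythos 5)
Your proposal is correct and follows essentially the same route as the paper: both rely on the signal-noise decomposition of $\wb_{j,r}^{(T_1)}$, the first-stage coefficient bounds from Lemma \ref{lm: first stage}, the noise concentration facts in Lemma \ref{lm: data inner products}, and the dimension condition in Condition \ref{condition:d_sigma0_eta} to control the cross terms. The only cosmetic difference is that you cancel $\wb_{j,r}^{(0)}$ directly in forming $\wb_{j,r}^{(T_1)} - \wb_{j,r}^{*}$, whereas the paper applies a triangle inequality through $\Wb^{(0)}$ and bounds $\|\Wb^{(T_1)}-\Wb^{(0)}\|_F$ and $\|\Wb^{*}-\Wb^{(0)}\|_F$ separately; the underlying estimates are identical.
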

\begin{proof}[Proof of Lemma \ref{lm: W difference}]
We have
\begin{align*}
    \|\Wb^{(T_1)}-\Wb^*\|_{F}&\leq\|\Wb^{(T_1)}-\Wb^{(0)}\|_{F}+\|\Wb^{*}-\Wb^{(0)}\|_{F}\\
    &\leq O(\sqrt{m})\max_{j,r}\gamma_{j,r}^{(T_1)}\|\bmu\|_2^{-1}+O(\sqrt{m})\max_{j,r}\bigg\|\sum_{i=1}^{n}\zeta_{j,r,i}^{(T_1)}\cdot\frac{\bxi_i}{\|\bxi_i\|_2^2}+\sum_{i=1}^{n}\omega_{j,r,i}^{(T_1)}\cdot\frac{\bxi_i}{\|\bxi_i\|_2^2}\bigg\|_2\\
    &\qquad+O(m^{1/2}n^{1/2}\log(1/\epsilon)\sigma_p^{-1}d^{-1/2})\\
    &=O(m^{1/2}\hat{\gamma}\|\bmu\|_2^{-1})+\tilde{O}(m^{1/2}n^{1/2}\sigma_p^{-1}d^{-1/2})+O(m^{1/2}n^{1/2}\log(1/\epsilon)\sigma_p^{-1}d^{-1/2})\\
    &=O(m^{1/2}n\cdot\mathrm{SNR}\cdot\sigma_p^{-1}d^{-1/2})+\tilde{O}(m^{1/2}n^{1/2}\log(1/\epsilon)\sigma_p^{-1}d^{-1/2})\\
    &=\tilde{O}(m^{1/2}n^{1/2}\sigma_p^{-1}d^{-1/2}),
\end{align*}
where the first inequality is by triangle inequality, the second inequality and the first equality are by our decomposition of $\Wb^{(T_1)}$, $\Wb^*$ and Lemma \ref{lm: data inner products}; the second equality is by $n\cdot\mathrm{SNR}^2=\Theta(\hat{\gamma})$ and $\mathrm{SNR}=\|\bmu\|/\sigma_pd^{1/2}$; the third equality is by $n^{1/2}\cdot\mathrm{SNR}=O(1)$.
\end{proof}

\begin{lemma}\label{lm: gradient W alignment}
Under the same conditions as Theorem \ref{thm:signal_learning_main}, we have that
\begin{equation*}
    y_i\la\nabla f(\Wb^{(t)},\xb_i),\Wb^*\ra\geq\log(2/\epsilon)
\end{equation*}
for all $T_1\leq t\leq T^*$.
\end{lemma}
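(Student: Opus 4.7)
My plan is to expand $\langle \nabla f(\Wb^{(t)},\xb_i), \Wb^{*}\rangle$ using the explicit formula for the CNN gradient, then isolate the dominant contribution coming from the noise patch on the filters with $j=y_i$, and finally control the remaining cross terms using the data and initialization concentration bounds from Appendix~\ref{sec: initial}.

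First, I would write out
\[
y_i\langle \nabla f(\Wb^{(t)},\xb_i), \Wb^{*}\rangle
= \sum_{j\in\{\pm 1\}}\sum_{r=1}^{m}\frac{y_i j}{m}\Big[\sigma'(\langle \wb_{j,r}^{(t)}, \hat{y}_i\bmu\rangle)\,\langle \hat{y}_i\bmu, \wb_{j,r}^{*}\rangle + \sigma'(\langle \wb_{j,r}^{(t)}, \bxi_i\rangle)\,\langle \bxi_i, \wb_{j,r}^{*}\rangle\Big].
\]
By the definition of $\Wb^{*}$, the only place where $\log(2/\epsilon)$ enters with a positive sign and is not suppressed by $1/\|\bxi_k\|_2^2$ or $1/\|\bmu\|_2^2$ times a small inner product is the self term $k=i$ in the noise inner product $\langle \bxi_i, \wb_{y_i,r}^{*}\rangle$, which contributes exactly $5\log(2/\epsilon)$. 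Concretely, I would show
\[
\langle \bxi_i, \wb_{y_i,r}^{*}\rangle \;\geq\; 5\log(2/\epsilon) - |\langle \bxi_i, \wb_{y_i,r}^{(0)}\rangle| - 5\log(2/\epsilon)\sum_{k\neq i,\, y_k=y_i}\frac{|\langle \bxi_i,\bxi_k\rangle|}{\|\bxi_k\|_2^2},
\]
and apply Lemmas~\ref{lm: data inner products} and~\ref{lm: initialization inner products} to bound the error terms by $O(\sigma_0\sigma_p\sqrt{d\log(mn/\delta)})$ and $O(n\sqrt{\log(6n^2/\delta)/d}\cdot\log(1/\epsilon))$ respectively; by Condition~\ref{condition:d_sigma0_eta}, both are $o(\log(2/\epsilon))$. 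Hence $\langle \bxi_i, \wb_{y_i,r}^{*}\rangle \geq 4\log(2/\epsilon)$.

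Second, I would argue that the remaining three types of terms are negligible. The signal terms $\langle \hat{y}_i\bmu, \wb_{j,r}^{*}\rangle = \hat{y}_i\langle\bmu,\wb_{j,r}^{(0)}\rangle + 5\log(2/\epsilon)\hat{y}_i\sum_{k:y_k=j}\langle\bmu,\bxi_k\rangle/\|\bxi_k\|_2^2$ are $o(1)$ by Lemma~\ref{lm: data inner products} together with $\mathrm{SNR}\sqrt{\log(6n/\delta)/d}\cdot n = o(1)$, and the cross noise terms with $j=-y_i$ give $\langle \bxi_i, \wb_{-y_i,r}^{*}\rangle$ which contains only off-diagonal $k\neq i$ contributions and the initial inner product, both of which are $o(\log(2/\epsilon))$. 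Since each of these spurious contributions is multiplied by $\sigma'(\cdot)\in\{0,1\}$ and $1/m$, summing over $r\in[m]$ still yields an $o(\log(2/\epsilon))$ bound. Now the key positive term, using Proposition~\ref{proposition: balanced logit} which guarantees $|S_i^{(t)}|=|\{r:\langle \wb_{y_i,r}^{(t)},\bxi_i\rangle>0\}|\geq 0.4m$ for all $T_1\leq t\leq T^{*}$, is
\[
\sum_{r\in S_i^{(t)}} \frac{1}{m}\,\langle \bxi_i, \wb_{y_i,r}^{*}\rangle \;\geq\; 0.4\cdot 4\log(2/\epsilon) \;=\; 1.6\log(2/\epsilon).
\]
Combining with the $o(\log(2/\epsilon))$ bounds on the other terms yields $y_i\langle \nabla f(\Wb^{(t)},\xb_i),\Wb^{*}\rangle \geq \log(2/\epsilon)$.

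The main obstacle will be verifying that the activation pattern on the noise patch stays favorable, i.e., $\sigma'(\langle \wb_{y_i,r}^{(t)},\bxi_i\rangle)=1$ for at least $0.4m$ indices throughout the second stage, so that the lower bound holds \emph{uniformly} in $t\in[T_1,T^{*}]$ rather than only at initialization. This is precisely where the monotonicity result $S_i^{(0)}\subseteq S_i^{(t)}$ from Proposition~\ref{proposition: balanced logit} is essential, and it is also why the $O(1/m)$ averaging in the CNN is harmless. Once that is in hand, the rest is a careful but routine accounting of the error terms using the concentration inequalities from Appendix~\ref{sec: initial}.
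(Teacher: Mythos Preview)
Your proposal is correct and follows essentially the same approach as the paper's own proof: expand the gradient inner product, isolate the dominant self-noise term $5\log(2/\epsilon)$ coming from $k=i$ in $\langle\bxi_i,\wb_{y_i,r}^{*}\rangle$, bound all cross terms via Lemmas~\ref{lm: data inner products} and~\ref{lm: initialization inner products}, and then invoke $|S_i^{(t)}|\geq 0.4m$ from Proposition~\ref{proposition: balanced logit} to harvest at least $0.4\cdot 5\log(2/\epsilon)=2\log(2/\epsilon)$ from the main term. The paper organizes the bookkeeping slightly differently (it labels the pieces $I_9$--$I_{14}$ and bounds the error terms globally rather than per-filter), but the ingredients and logic are identical, including your correct identification of the activation-pattern monotonicity as the crucial step.
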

\begin{proof}[Proof of Lemma \ref{lm: gradient W alignment}]
Recall that $f(\Wb^{(t)})=(1/m)\sum_{j,r}j\cdot[\sigma(\la\wb_{j,r},y_i\cdot\bmu\ra)+\sigma(\la\wb_{j,r},\bxi_i\ra)]$, thus we have
\begin{align}
    &y_i\la\nabla f(\Wb^{(t)},\xb_i),\Wb^*\ra\notag\\
    &=\frac{1}{m}\sum_{j,r}\sigma'(\la\wb_{j,r}^{(t)},\hat{y}_i\bmu\ra)\la\bmu,j\wb_{j,r}^{*}\ra+\frac{1}{m}\sum_{j,r}\sigma'(\la\wb_{j,r}^{(t)},\bxi_i\ra)\la y_i\bxi_i,j\wb_{j,r}^*\ra\notag\\
    &=\frac{1}{m}\sum_{j,r}\sum_{i'=1}^{n}\sigma'(\la\wb_{j,r}^{(t)},\bxi_i\ra)5\log(2/\epsilon)\ind(j=y_{i'})\cdot\frac{\la\bxi_{i'},\bxi_{i}\ra}{\|\bxi_{i'}\|_2^2}\notag\\
    &\qquad+\frac{1}{m}\sum_{j,r}\sum_{i'=1}^{n}\sigma'(\la\wb_{j,r}^{(t)},\hat{y}_{i}\bmu\ra)5\log(2/\epsilon)\ind(j=y_{i'})\cdot\frac{\la\bmu,\bxi_{i'}\ra}{\|\bxi_{i'}\|_2^2}\notag\\
    &\qquad+\frac{1}{m}\sum_{j,r}\sigma'(\la\wb_{j,r}^{(t)},\hat{y}_i\bmu\ra)\la\bmu,j\wb_{j,r}^{(0)}\ra+\frac{1}{m}\sum_{j,r}\sigma'(\la\wb_{j,r}^{(t)},\bxi_i\ra)\la y_i\bxi_i,j\wb_{j,r}^{(0)}\ra\notag\\
    &\geq\frac{1}{m}\sum_{j=y_i,r}\sigma'(\la\wb_{j,r}^{(t)},\bxi_i\ra)5\log(2/\epsilon)-\frac{1}{m}\sum_{j,r}\sum_{i'\neq i}\sigma'(\la\wb_{j,r}^{(t)},\bxi_i\ra)5\log(2/\epsilon)\cdot\frac{|\la\bxi_{i'},\bxi_{i}\ra|}{\|\bxi_{i'}\|_2^2}\notag\\
    &\qquad-\frac{1}{m}\sum_{j,r}\sum_{i'=1}^{n}\sigma'(\la\wb_{j,r}^{(t)},\hat{y}_{i}\bmu\ra)5\log(2/\epsilon)\cdot\frac{|\la\bmu,\bxi_{i'}\ra|}{\|\bxi_{i'}\|_2^2}\notag\\
    &\qquad-\frac{1}{m}\sum_{j,r}\sigma'(\la\wb_{j,r}^{(t)},\hat{y}_i\bmu\ra)O\big(\sqrt{\log(m/\delta)}\cdot\sigma_0\|\bmu\|_2\big)-\frac{1}{m}\sum_{j,r}\sigma'(\la\wb_{j,r}^{(t)},\bxi_i\ra)O\big(\sqrt{\log(mn/\delta)}\cdot\sigma_0\sigma_p\sqrt{d}\big)\notag\\
    &\geq\underbrace{\frac{1}{m}\sum_{j=y_i,r}\sigma'(\la\wb_{j,r}^{(t)},\bxi_i\ra)5\log(2/\epsilon)}_{I_9}-\underbrace{\frac{1}{m}\sum_{j,r}\sigma'(\la\wb_{j,r}^{(t)},\bxi_i\ra)5\log(2/\epsilon)O\big(n\sqrt{\log(n/\delta)}/\sqrt{d}\big)}_{I_{10}}\notag\\
    &\qquad-\underbrace{\frac{1}{m}\sum_{j,r}\sigma'(\la\wb_{j,r}^{(t)},\hat{y}_i\bmu\ra)5\log(2/\epsilon)O\big(n\sqrt{\log(n/\delta)}\cdot\mathrm{SNR}\cdot d^{-1/2}\big)}_{I_{11}}\label{ineq1}\\
    &\qquad-\underbrace{\frac{1}{m}\sum_{j,r}\sigma'(\la\wb_{j,r}^{(t)},y_i\bmu\ra)O\big(\sqrt{\log(m/\delta)}\cdot\sigma_0\|\bmu\|_2\big)}_{I_{12}}-\underbrace{\frac{1}{m}\sum_{j,r}\sigma'(\la\wb_{j,r}^{(t)},\bxi_i\ra)O\big(\sqrt{\log(mn/\delta)}\cdot\sigma_0\sigma_p\sqrt{d}\big)}_{I_{14}},\notag
\end{align}
where the first inequality is by Lemma \ref{lm: initialization inner products} and the last inequality is by Lemma \ref{lm: data inner products}. Next, we will bound the inner-product terms in \eqref{ineq1} respectively. For $I_{10}$, $I_{11}$, $I_{12}$, $I_{14}$, note that $\sigma'\in\{0,1\}$ we have that
\begin{equation}\label{ineq: I9-11 upper bound}
    \begin{aligned}
        &|I_{10}|\leq \log(2/\epsilon)O\big(n\sqrt{\log(n/\delta)}/\sqrt{d}\big),\, |I_{11}|\leq \log(2/\epsilon)O\big(n\sqrt{\log(n/\delta)}\cdot\mathrm{SNR}\cdot d^{-1/2}\big),\\
        &|I_{12}|\leq O\big(\sqrt{\log(m/\delta)}\cdot\sigma_0\|\bmu\|_2\big),\, |I_{14}|\leq O\big(\sqrt{\log(mn/\delta)}\cdot\sigma_0\sigma_p\sqrt{d}\big).
    \end{aligned}
\end{equation}
For $j=y_i$ and $r\in S_{i}^{(0)}$, according to Lemma \ref{lm: inner product range}, we have
\begin{align*}
    \la\wb_{j,r}^{(t)},\bxi_i\ra&\geq\la\wb_{j,r}^{(0)},\bxi_i\ra+\zeta_{j,r,i}^{(t)}-5n\sqrt{\frac{\log(4n^2/\delta)}{d}}\alpha\\
    &\geq 2-\beta-5n\sqrt{\frac{\log(4n^2/\delta)}{d}}\alpha\\
    &\geq1
\end{align*}
where the first inequality is by Lemma \ref{lm: inner product range}; the last inequality is by $\beta\leq 0.5$ and $5n\sqrt{\frac{\log(4n^2/\delta)}{d}}\leq 0.5$. Therefore, for $I_{9}$, according to the fourth statement of Proposition \ref{proposition: balanced logit}, we have
\begin{equation}\label{ineq: I8 lower bound}
    I_9\geq\frac{1}{m}|S_{i}^{(t)}|5\log(2/\epsilon)\geq2\log(2/\epsilon).
\end{equation}
By plugging \eqref{ineq: I9-11 upper bound} and \eqref{ineq: I8 lower bound} into \eqref{ineq1} and according to triangle inequality we have
\begin{equation*}
    y_i\la\nabla f(\Wb^{(t)},\xb_i),\Wb^*\ra\geq I_9-|I_{10}|-|I_{11}|-|I_{12}|-|I_{14}|\geq\log(2/\epsilon),
\end{equation*}
which completes the proof.
\end{proof}

\begin{lemma}\label{lm: gradient upbound}
Under Condition~\ref{condition:d_sigma0_eta}, for $0 \leq t\leq T^{*}$, the following result holds.
\begin{align*}
\|\nabla L_{S}(\Wb^{(t)})\|_{F}^{2} \leq  O(\max\{\|\bmu\|_{2}^{2}, \sigma_{p}^{2}d\}) L_{S}(\Wb^{(t)}).   
\end{align*}
\end{lemma}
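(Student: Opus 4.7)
The plan is to bound the gradient norm by first controlling the per-sample feature-gradient and then invoking a self-bounding property of the logistic loss. First, I would invoke the chain rule as in \eqref{eq:gdupdate} to write
\[
\nabla_{\wb_{j,r}} L_S(\Wb^{(t)}) = \frac{1}{n}\sum_{i=1}^n \ell_i'^{(t)}\, y_i\, \nabla_{\wb_{j,r}} f(\Wb^{(t)}, \xb_i),
\]
where $\nabla_{\wb_{j,r}} f(\Wb^{(t)}, \xb_i) = (j/m)\bigl[\sigma'(\la\wb_{j,r}^{(t)},\hat{y}_i\bmu\ra)\hat{y}_i\bmu + \sigma'(\la\wb_{j,r}^{(t)},\bxi_i\ra)\bxi_i\bigr]$. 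Since $\sigma'(\cdot)\in[0,1]$ and Lemma~\ref{lm: data inner products} (available on $\cE_{\mathrm{prelim}}$) gives $\|\bxi_i\|_2^2 \leq 3\sigma_p^2 d/2$, the triangle inequality together with $(a+b)^2 \leq 2(a^2+b^2)$ and summation over the $2m$ filters yields
\[
\|\nabla_{\Wb} f(\Wb^{(t)}, \xb_i)\|_F^2 \leq \frac{4}{m}\cdot\max\{\|\bmu\|_2^2,\sigma_p^2 d\}.
\]

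Second, applying the triangle inequality in the outer average and then Cauchy--Schwarz on the scalar coefficients $|\ell_i'^{(t)}|$ gives
\[
\|\nabla L_S(\Wb^{(t)})\|_F^2 \leq \left(\frac{1}{n}\sum_{i=1}^n |\ell_i'^{(t)}|\cdot\|\nabla_{\Wb} f(\Wb^{(t)},\xb_i)\|_F\right)^2 \leq \frac{4}{m}\max\{\|\bmu\|_2^2,\sigma_p^2 d\}\cdot\frac{1}{n}\sum_{i=1}^n |\ell_i'^{(t)}|^2.
\]
It then remains to absorb the $\sum_i |\ell_i'^{(t)}|^2$ term into $L_S(\Wb^{(t)})$, which is a standard self-bounding property of the logistic loss $\ell(z)=\log(1+e^{-z})$: I would verify that $|\ell'(z)|^2 \leq \ell(z)$ for every $z \in \RR$ by noting $|\ell'(z)| = 1/(1+e^z) \leq 1$ together with the elementary inequality $\log(1+u) \geq u/(1+u)$ (for $u\geq 0$) applied to $u=e^{-z}$, which gives $|\ell'(z)| \leq \log(1+e^{-z}) = \ell(z)$; multiplying the two bounds yields $|\ell'(z)|^2 \leq \ell(z)$. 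Hence $\frac{1}{n}\sum_i |\ell_i'^{(t)}|^2 \leq L_S(\Wb^{(t)})$, and absorbing the prefactor $4/m$ (valid since $m\geq 1$) into the $O(\cdot)$ completes the proof.

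There is no genuine obstacle in this argument: the estimate is purely local in $t$, requires no induction, and does not invoke the signal-noise decomposition or Proposition~\ref{proposition: range of gamma,zeta,omega}. The only care needed is (i) to keep track of which norm is being bounded at each step (per-neuron $\ell_2$ vs.\ full Frobenius), so that the factor $1/m$ appears with the correct sign, and (ii) to record the self-bounding inequality $|\ell'(z)|^2 \leq \ell(z)$ explicitly, since this is the one nontrivial ingredient connecting the magnitude of the logistic derivative to the loss value itself.
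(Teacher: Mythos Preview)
Your proof is correct and follows essentially the same approach as the paper: bound the per-sample feature gradient $\|\nabla_\Wb f(\Wb^{(t)},\xb_i)\|_F$ via Lemma~\ref{lm: data inner products}, apply the triangle inequality and Cauchy--Schwarz, then invoke the logistic self-bounding inequality $-\ell'\le\ell$. The only cosmetic differences are that you track an extra $1/m$ factor (which you then discard) and use $|\ell'|^2\le\ell$ rather than the paper's direct $-\ell'\le\ell$ after Cauchy--Schwarz; both routes are equivalent here.
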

\begin{proof}[Proof of Lemma~\ref{lm: gradient upbound}]
We first prove that 
\begin{align}
 \|\nabla f(\Wb^{(t)}, \xb_{i}\big)\|_{F} = O(\max\{\|\bmu\|_{2}, \sigma_{p}\sqrt{d}\}).  \label{eq: exptail2}  
\end{align}
Without loss of generality,  
we suppose that $\hat{y}_{i} = 1$ and $\xb_{i} = [\bmu^{\top},\bxi_{i}]$. Then we have that 
\begin{align*}
\|\nabla f(\Wb^{(t)}, \xb_{i}) \|_{F}&\leq \frac{1}{m}\sum_{j,r}\bigg\|  \big[\sigma'(\la\wb_{j,r}^{(t)},\bmu\ra)\bmu + \sigma'(\la\wb_{j,r}^{(t)}, \bxi_i\ra)\bxi_i\big] \bigg\|_{2}\\
&\leq \frac{1}{m}\sum_{j,r}\sigma'(\la\wb_{j,r}^{(t)}, \bmu\ra)\|\bmu\|_{2} +   \frac{1}{m}\sum_{j,r}\sigma'(\la\wb_{j,r}^{(t)}, \bxi_{i}\ra)\|\bxi_{i}\|_{2}\\
&\leq 4\max\{\|\bmu\|_{2}, 2\sigma_{p}\sqrt{d}\} ,
\end{align*}
where the first and second inequalities are by triangle inequality, the third inequality is by Lemma~\ref{lm: data inner products} and $\sigma' \leq 1$. Now we can upper bound the gradient norm $\|\nabla L_{S}(\Wb^{(t)})\|_{F}$ as follows,
\begin{align*}
\|\nabla L_{S}(\Wb^{(t)})\|_{F}^{2} &\leq \bigg[\frac{1}{n}\sum_{i=1}^{n}\ell'\big(y_{i}f(\Wb^{(t)},\xb_{i})\big)\|\nabla f(\Wb^{(t)}, \xb_{i})\|_{F}\bigg]^{2}\\
&\leq \bigg[\frac{1}{n}\sum_{i=1}^{n}O(\max\{\|\bmu\|_{2}^{2}, \sigma_{p}^{2}d\})-\ell'\big(y_{i}f(\Wb^{(t)},\xb_{i})\big)\bigg]^{2}\\
&\leq O(\max\{\|\bmu\|_{2}^{2}, \sigma_{p}^{2}d\}) \cdot \frac{1}{n}\sum_{i=1}^{n}-\ell'\big(y_{i}f(\Wb^{(t)},\xb_{i})\big)\\
&\leq O(\max\{\|\bmu\|_{2}^{2}, \sigma_{p}^{2}d\}) L_{S}(\Wb^{(t)}),
\end{align*}
where the first inequality is by triangle inequality, the second inequality is by \eqref{eq: exptail2}, the third inequality is by Cauchy-Schwartz inequality and the last inequality is due to the property of the cross entropy loss $-\ell' \leq \ell$.
\end{proof}

\begin{lemma}\label{lm: F-norm difference}
Under the same conditions as Theorem \ref{thm:signal_learning_main}, we have that
\begin{equation*}
    \|\Wb^{(t)}-\Wb^*\|_{F}^{2}-\|\Wb^{(t+1)}-\Wb^*\|_{F}^{2}\geq\eta L_{S}(\Wb^{(t)})-\eta\epsilon
\end{equation*}
for all $T_1\leq t\leq T^*$.
\end{lemma}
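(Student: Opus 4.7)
The statement is a one-step Polyak-style descent inequality measuring progress toward the reference point $\Wb^*$ chosen at the end of the first stage. I would start by expanding, using the gradient descent rule $\Wb^{(t+1)}=\Wb^{(t)}-\eta\,\nabla L_S(\Wb^{(t)})$,
\begin{equation*}
    \|\Wb^{(t)}-\Wb^*\|_F^2-\|\Wb^{(t+1)}-\Wb^*\|_F^2
    \;=\; 2\eta\,\langle \nabla L_S(\Wb^{(t)}),\,\Wb^{(t)}-\Wb^*\rangle \;-\;\eta^2\,\|\nabla L_S(\Wb^{(t)})\|_F^2,
\end{equation*}
so the task reduces to lower bounding the cross-term and upper bounding the squared gradient.

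For the cross-term I would exploit the $1$-homogeneity of the ReLU activation, which gives the key identity $\langle \nabla_{\Wb} f(\Wb^{(t)},\xb_i),\Wb^{(t)}\rangle = f(\Wb^{(t)},\xb_i)$ (this follows since $\sigma'(z)z=\sigma(z)$ applied term by term in the sum defining $f$). Writing out $\nabla L_S(\Wb^{(t)}) = \frac{1}{n}\sum_i \ell_i'^{(t)} y_i \nabla_{\Wb} f(\Wb^{(t)},\xb_i)$ and combining with this identity, I would then apply convexity of the logistic loss $\ell$ to each per-sample term: for $b_i = y_i f(\Wb^{(t)},\xb_i)$ and $a_i = y_i\langle \nabla_{\Wb}f(\Wb^{(t)},\xb_i),\Wb^*\rangle$, convexity yields $\ell_i'^{(t)}(b_i-a_i)\geq \ell(b_i)-\ell(a_i)$. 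By Lemma~\ref{lm: gradient W alignment}, $a_i\geq \log(2/\epsilon)$, and since $\ell$ is decreasing we get $\ell(a_i)\leq \ell(\log(2/\epsilon))=\log(1+\epsilon/2)\leq \epsilon/2$. Averaging over $i$ gives
\begin{equation*}
    \langle \nabla L_S(\Wb^{(t)}),\,\Wb^{(t)}-\Wb^*\rangle
    \;\geq\; L_S(\Wb^{(t)}) - \epsilon/2.
\end{equation*}

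For the squared gradient term I would plug in Lemma~\ref{lm: gradient upbound}, which gives $\|\nabla L_S(\Wb^{(t)})\|_F^2 \leq O(\max\{\|\bmu\|_2^2,\sigma_p^2 d\})\,L_S(\Wb^{(t)})$. Combining,
\begin{equation*}
    \|\Wb^{(t)}-\Wb^*\|_F^2-\|\Wb^{(t+1)}-\Wb^*\|_F^2
    \;\geq\; \eta\bigl(2-\eta\cdot O(\max\{\|\bmu\|_2^2,\sigma_p^2 d\})\bigr) L_S(\Wb^{(t)}) \;-\; \eta\,\epsilon.
\end{equation*}
The step-size condition on $\eta$ in Condition~\ref{condition:d_sigma0_eta} (in particular $\eta\lesssim 1/(\sigma_p^2 d)$, with $\|\bmu\|_2^2\lesssim \sigma_p^2 d$ by the SNR regime) forces the parenthetical coefficient to be at least $1$, yielding the claimed inequality.

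The main obstacle is step two: the function $f$ is non-convex because of ReLU, so we cannot directly use convexity of $L_S$ in $\Wb$. The trick is to leverage convexity of the \emph{scalar} loss $\ell$ on the logits together with ReLU's positive homogeneity so that $\langle \nabla f(\Wb^{(t)},\xb_i),\Wb^{(t)}\rangle$ collapses to $f(\Wb^{(t)},\xb_i)$; all the heavy lifting about how $\Wb^*$ aligns with each sample is packaged into Lemma~\ref{lm: gradient W alignment}, which in turn relied on the first-stage noise-memorization bounds and the invariance of the activation pattern $S_i^{(0)}\subseteq S_i^{(t)}$ from Proposition~\ref{proposition: balanced logit}.
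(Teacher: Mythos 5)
Your proof is correct and matches the paper's argument essentially step for step: expand the squared Frobenius gap via the gradient-descent update, use positive homogeneity of ReLU so that $\langle\nabla f(\Wb^{(t)},\xb_i),\Wb^{(t)}\rangle=f(\Wb^{(t)},\xb_i)$, apply convexity of the scalar logistic loss together with the alignment bound from Lemma~\ref{lm: gradient W alignment}, and absorb the $\eta^2\|\nabla L_S(\Wb^{(t)})\|_F^2$ term using Lemma~\ref{lm: gradient upbound} and the small step-size condition. The only cosmetic difference is the order of operations in bounding the cross-term --- you invoke convexity first and then bound $\ell(a_i)\le\epsilon/2$, while the paper first replaces $a_i$ by $\log(2/\epsilon)$ inside the factor multiplying $\ell_i'^{(t)}$ and then applies convexity; these are equivalent.
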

\begin{proof}[Proof of Lemma \ref{lm: F-norm difference}]
We have
\begin{align*}
    &\|\Wb^{(t)}-\Wb^*\|_{F}^{2}-\|\Wb^{(t+1)}-\Wb^*\|_{F}^{2}\\
    &=2\eta\la\nabla L_{S}(\Wb^{(t)}),\Wb^{(t)}-\Wb^*\ra-\eta^2\|\nabla L_{S}(\Wb^{(t)})\|_{F}^{2}\\
    &=\frac{2\eta}{n}\sum_{i=1}^{n}\ell_i'^{(t)}[y_i f(\Wb^{(t)},\xb_i)-\la\nabla f(\Wb^{(t)},\xb_i),\Wb^*\ra]-\eta^2\|\nabla L_{S}(\Wb^{(t)})\|_{F}^{2}\\
    &\geq\frac{2\eta}{n}\sum_{i=1}^{n}\ell_i'^{(t)}[y_i f(\Wb^{(t)},\xb_i)-\log(2/\epsilon)]-\eta^2\|\nabla L_{S}(\Wb^{(t)})\|_{F}^{2}\\
    &\geq\frac{2\eta}{n}\sum_{i=1}^{n}[\ell\big(y_i f(\Wb^{(t)},\xb_i)\big)-\epsilon/2]-\eta^2\|\nabla L_{S}(\Wb^{(t)})\|_{F}^{2}\\
    &\geq\eta L_{S}(\Wb^{(t)})-\eta\epsilon,
\end{align*}
where the first inequality is by Lemma \ref{lm: gradient W alignment}; the second inequality is due to the convexity of the
cross entropy function; the last inequality is due to Lemma \ref{lm: gradient upbound}.
\end{proof}


\begin{lemma}\label{lm: loss upper bound}
Under the same conditions as Theorem \ref{thm:signal_learning_main}, for all $T_1\leq t\leq T^{*}$, we have $\max_{j,r,i}|\omega_{j,r,i}^{(t)}|=\max\big\{O\big(\sqrt{\log(mn/\delta)}\cdot\sigma_0\sigma_p\sqrt{d}\big),O \big(n \sqrt{\log(n / \delta)} \log(T^*)/\sqrt{d} \big)\big\}$.
Besides, 
\begin{equation*}
    \frac{1}{t-T_1+1}\sum_{s=T_1}^{t}L_S(\Wb^{(s)})\leq\frac{\|\Wb^{(T_1)}-\Wb^*\|_F^2}{\eta(t-T_1+1)}+\epsilon
\end{equation*}
for all $T_1\leq t\leq T^{*}$. Therefore, we can find an iterate with training loss smaller than $2\epsilon$ within $T=T_1+\Big\lfloor \|\Wb^{(T_1)}-\Wb^*\|_F^2 / (\eta\epsilon) \Big \rfloor=T_1+\tilde{O}(\eta^{-1}\epsilon^{-1}mnd^{-1}\sigma_p^{-2})$ iterations.
\end{lemma}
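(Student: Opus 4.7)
\textbf{Proof proposal for Lemma~\ref{lm: loss upper bound}.}
The plan is to combine the structural result already established for the coefficients $\omega_{j,r,i}^{(t)}$ with a telescoping argument based on the per-iteration descent property in Lemma~\ref{lm: F-norm difference}. For the first claim, the bound on $\max_{j,r,i}|\omega_{j,r,i}^{(t)}|$ follows verbatim from the same computation already performed at the end of the first stage in Lemma~\ref{lm: first stage}: Proposition~\ref{proposition: range of gamma,zeta,omega} gives $0 \geq \omega_{j,r,i}^{(t)} \geq -\beta - 10\sqrt{\log(6n^2/\delta)/d}\cdot n\alpha$ uniformly for $0 \leq t \leq T^{*}$, and Lemma~\ref{lm: initialization inner products} together with $\alpha = 4\log(T^*)$ converts this into the stated $\max\{O(\sqrt{\log(mn/\delta)}\sigma_0\sigma_p\sqrt{d}), O(n\sqrt{\log(n/\delta)}\log(T^*)/\sqrt{d})\}$ bound. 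So this piece requires no new work beyond pointing to Lemma~\ref{lm: first stage}.

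For the average loss bound, I would telescope Lemma~\ref{lm: F-norm difference} over $s = T_1, T_1+1, \ldots, t$. Summing the inequality $\|\Wb^{(s)}-\Wb^*\|_F^2 - \|\Wb^{(s+1)}-\Wb^*\|_F^2 \geq \eta L_S(\Wb^{(s)}) - \eta\epsilon$ and using $\|\Wb^{(t+1)}-\Wb^*\|_F^2 \geq 0$ yields
\begin{equation*}
\|\Wb^{(T_1)}-\Wb^*\|_F^2 \geq \eta \sum_{s=T_1}^{t} L_S(\Wb^{(s)}) - \eta (t-T_1+1)\epsilon,
\end{equation*}
which rearranges into the stated average-loss inequality. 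This step is just convex-optimization bookkeeping; the real work was already done in Lemmas~\ref{lm: gradient W alignment} and~\ref{lm: F-norm difference} via the reference predictor $\Wb^*$.

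For the last claim, I would set $T - T_1 + 1 = \lfloor \|\Wb^{(T_1)}-\Wb^*\|_F^2/(\eta\epsilon) \rfloor$, so that the upper bound on the averaged training loss is at most $2\epsilon$. Since an average of nonnegative numbers can be $\leq 2\epsilon$ only if at least one summand is $\leq 2\epsilon$, some $s \in [T_1, T]$ must satisfy $L_S(\Wb^{(s)}) \leq 2\epsilon$. Plugging Lemma~\ref{lm: W difference}'s estimate $\|\Wb^{(T_1)}-\Wb^*\|_F = \tilde{O}(m^{1/2}n^{1/2}\sigma_p^{-1}d^{-1/2})$ gives $\|\Wb^{(T_1)}-\Wb^*\|_F^2 = \tilde{O}(mn\sigma_p^{-2}d^{-1})$, whence $T - T_1 = \tilde{O}(\eta^{-1}\epsilon^{-1}mn\sigma_p^{-2}d^{-1})$, as claimed.

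The only genuine obstacle is the consistency check that $T \leq T^{*}$, which is required for Lemma~\ref{lm: F-norm difference} (and hence the telescoping) to be valid at every step. This holds because $T^* = \eta^{-1}\poly(\epsilon^{-1}, d, n, m)$ is chosen to be any polynomial iteration count at least $\tilde{\Omega}(\eta^{-1}\epsilon^{-1}mnd^{-1}\sigma_p^{-2})$, which majorizes $T_1 + \tilde{O}(\eta^{-1}\epsilon^{-1}mnd^{-1}\sigma_p^{-2})$ after accounting for logarithmic factors. Once this bookkeeping is verified, all three bullets follow without additional analysis.
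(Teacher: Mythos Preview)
Your proposal is correct and follows essentially the same approach as the paper: the $\omega$ bound is inherited from Proposition~\ref{proposition: range of gamma,zeta,omega} exactly as in Lemma~\ref{lm: first stage}, the average-loss inequality comes from telescoping Lemma~\ref{lm: F-norm difference}, and the existence of a good iterate follows by averaging together with Lemma~\ref{lm: W difference}. Your extra consistency check that $T \leq T^{*}$ is a nice touch the paper leaves implicit; the only cosmetic slip is that the lemma defines $T - T_1 = \lfloor \|\Wb^{(T_1)}-\Wb^*\|_F^2/(\eta\epsilon)\rfloor$ rather than $T - T_1 + 1$, but this off-by-one does not affect the argument.
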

\begin{proof}[Proof of Lemma \ref{lm: loss upper bound}]
Note that $\max_{j,r,i}|\omega_{j,r,i}^{(t)}|=\max\big\{O\big(\sqrt{\log(mn/\delta)}\cdot\sigma_0\sigma_p\sqrt{d}\big),O \big(n \sqrt{\log(n / \delta)} \log(T^*)/\sqrt{d} \big)\big\}$ can be proved in the same way as Lemma \ref{lm: first stage}, we eliminate the proof details here. For any $t\in[T_1,T]$,
by taking a summation of the inequality in Lemma \ref{lm: F-norm difference} and dividing $(t-T_1+1)$ on both sides, we obtain that
\begin{equation*}
    \frac{1}{t-T_1+1}\sum_{s=T_1}^{t}L_{S}(\Wb^{(s)})\leq\frac{\|\Wb^{(T_1)}-\Wb^*\|_F^2}{\eta(t-T_1+1)}+\epsilon
\end{equation*}
for all $T_1\leq t\leq T$. According to the definition of $T$, we have
\begin{equation*}
    \frac{1}{T-T_1+1}\sum_{s=T_1}^{T}L_{S}(\Wb^{(s)})\leq2\epsilon.
\end{equation*}
Then there exists iteration $T_1\leq t\leq T$ such that the training loss is smaller than $\epsilon$.
\end{proof}

Besides, we have the following lemma about the order of $\zeta_{j,r,i}^{(t)},\gamma_{j,r}^{(t)}$ ratio when training loss is smaller than $\epsilon$. And this lemma will help us prove the theorem about test error.

\begin{lemma}\label{lm: zeta,gamma ratio}
Under the same conditions as Theorem \ref{thm:signal_learning_main}, we have
\begin{equation}
    \sum_{i=1}^{n}\zeta_{j,r,i}^{(t)}/\gamma_{j',r'}^{(t)}=\Theta(\mathrm{SNR^{-2}})\label{eq: order of zeta,gamma ratio}
\end{equation}
for all $j, j'\in\{\pm 1\}$, $r, r'\in[m]$ and $T_1\leq t\leq T^*$.
\end{lemma}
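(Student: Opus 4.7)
The plan is to show that both $\sum_i \zeta_{j,r,i}^{(t)}$ and $\gamma_{j',r'}^{(t)}$ are built up from one-step increments whose ratio is pinned at $\Theta(\mathrm{SNR}^{-2})$ uniformly over $t\in[T_1,T^*]$ and over $(j,r),(j',r')$, and that their values at the starting time $T_1$ are already in this ratio. Telescoping then transports the ratio through all later iterates for any pair of indices.

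To compare one-step increments I would work directly with \eqref{iterative equation2} and \eqref{iterative equation3}. On the noise side, writing $\sigma'(\la\wb_{j,r}^{(t)},\bxi_i\ra)\ind(y_i=j)=\ind(i\in S_{j,r}^{(t)})$ gives
\[
\sum_i \zeta_{j,r,i}^{(t+1)}-\sum_i \zeta_{j,r,i}^{(t)}=\frac{\eta}{nm}\sum_{i\in S_{j,r}^{(t)}}|\ell_i'^{(t)}|\,\|\bxi_i\|_2^2,
\]
and Lemma \ref{lm: data inner products} combined with $n/8\leq|S_{j,r}^{(t)}|\leq 3n/4$ (from Proposition \ref{proposition: balanced logit} and Lemma \ref{lm: estimate |S1|}) and the loss-balance estimate $|\ell_i'^{(t)}|/|\ell_k'^{(t)}|\leq C_2$ pin this increment to $\Theta((\eta\sigma_p^2 d/m)\,\bar\ell^{(t)})$, where $\bar\ell^{(t)}:=n^{-1}\sum_i|\ell_i'^{(t)}|$. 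On the signal side, splitting on $\mathrm{sgn}(\la\wb_{j',r'}^{(t)},\bmu\ra)$ turns the $\gamma$-update into a difference of the form $|S_+\cap S_c|-|S_-\cap S_c|$ (with $S_c\in\{S_1,S_{-1}\}$) weighted by loss gradients; Lemma \ref{lm: estimate S cap S} together with the smallness assumption $p\leq1/C$ (with $C$ chosen large relative to $C_2$) forces the clean-label sum to dominate the flipped-label sum by a constant factor, yielding $\gamma_{j',r'}^{(t+1)}-\gamma_{j',r'}^{(t)}=\Theta((\eta\|\bmu\|_2^2/m)\,\bar\ell^{(t)})$. Dividing the two increments cancels the $(\eta/m)\bar\ell^{(t)}$ factor and leaves the invariant $\Theta(\sigma_p^2 d/\|\bmu\|_2^2)=\Theta(\mathrm{SNR}^{-2})$, and crucially the hidden constants depend on none of $(j,r),(j',r')$ or $t$.

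For the starting time $T_1$, Lemma \ref{lm: first stage} already gives $\gamma_{j,r}^{(T_1)}=\Theta(\hat\gamma)=\Theta(n\cdot\mathrm{SNR}^2)$ uniformly in $(j,r)$ (upper bound from its second bullet, lower bound from its fourth bullet, using $T_1\geq T_2$). The matching bound on the noise side is
\[
\tfrac14 n\leq 2|S_{j,r}^{(0)}|\leq\sum_i \zeta_{j,r,i}^{(T_1)}\leq|S_j|\cdot\max_{j,r,i}\zeta_{j,r,i}^{(T_1)}=O(n),
\]
where the lower bound uses the first bullet of Lemma \ref{lm: first stage} (every $r^*\in S_i^{(0)}$ with $y_i=j$ contributes at least $2$, and $|S_{j,r}^{(0)}|\geq n/8$ by Lemma \ref{lm: number of initial activated neurons 2}) and the upper bound uses the fifth bullet of Lemma \ref{lm: first stage}. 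Hence $\sum_i\zeta_{j,r,i}^{(T_1)}/\gamma_{j',r'}^{(T_1)}=\Theta(n)/\Theta(n\mathrm{SNR}^2)=\Theta(\mathrm{SNR}^{-2})$ already at $t=T_1$. Telescoping $\Delta A(s)=c(s)\cdot\Delta B(s)$ with $c(s)\in[c_1\mathrm{SNR}^{-2},c_2\mathrm{SNR}^{-2}]$ from $s=T_1$ to $s=t-1$ and adding the initial ratio gives $\sum_i\zeta_{j,r,i}^{(t)}=\Theta(\mathrm{SNR}^{-2})\gamma_{j',r'}^{(t)}$ for all admissible indices and times.

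The main obstacle is the \emph{lower} bound on the $\gamma$-increment in the presence of label-flipping noise: a priori the signed sum $\sum_{S_+\cap S_c}|\ell_i'^{(t)}|-\sum_{S_-\cap S_c}|\ell_i'^{(t)}|$ could cancel or even be negative. The argument only goes through because one can certify, for every $t\in[T_1,T^*]$ simultaneously, that (i) $|\ell_i'^{(t)}|$ is uniformly of order $\bar\ell^{(t)}$ (third bullet of Proposition \ref{proposition: balanced logit}) and (ii) the clean-label index set exceeds a constant multiple of $C_2$ times the flipped-label index set, which is precisely the quantitative requirement $(1-p)-p C_2=\Theta(1)$ built into the assumption $p\leq1/C$. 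This is the same mechanism that drove the monotonicity of $\gamma_{j,r}^{(t)}$ in Proposition \ref{proposition: range of gamma,zeta,omega}, and reusing it here is what makes the time-invariant coefficient ratio extend beyond the single index pair $(j,r)=(j',r')$ to all four indices.
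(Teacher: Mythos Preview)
Your proposal is correct and follows essentially the same approach as the paper: both arguments reduce to showing that the one-step increments of $\sum_i\zeta_{j,r,i}^{(t)}$ and $\gamma_{j',r'}^{(t)}$ are each $\Theta(\bar\ell^{(t)})$ times $\eta\sigma_p^2 d/m$ and $\eta\|\bmu\|_2^2/m$ respectively (via the balanced-logit bound, the cardinality bounds on $S_{j,r}^{(t)}$, and the clean-dominates-flipped estimate for the $\gamma$-update), and then propagate the $\Theta(\mathrm{SNR}^{-2})$ ratio forward from $T_1$. The only cosmetic difference is that the paper phrases the propagation as a one-step induction using the mediant inequality $\min(a/b,a'/b')\leq (a+a')/(b+b')\leq\max(a/b,a'/b')$, whereas you telescope all increments from $T_1$ to $t$; the substance is identical.
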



\begin{proof}[Proof of Lemma \ref{lm: zeta,gamma ratio}]
We will prove this lemma by using induction. We first verify that \eqref{eq: order of zeta,gamma ratio} holds for $t=T_1$. By Lemma \ref{lm: first stage}, we have $\gamma_{j',r'}^{(T_1)}=\Theta(\hat{\gamma})=\Theta(n\cdot\mathrm{SNR}^2)$ and $\sum_{i=1}^{n}\zeta_{j,r,i}^{(T_1)}=\Theta(n)$, and \eqref{eq: order of zeta,gamma ratio} follows directly. Now suppose
that there exists $\tilde{T}\in[T_1,T^*]$ such that $\sum_{i=1}^{n}\zeta_{j,r,i}^{(t)}/\gamma_{j',r'}^{(t)}=\Theta(\mathrm{SNR}^2)$ for all $t\in[T_1,\tilde{T}-1]$. Then for $\zeta_{j,r,i}^{(t)}$, according to Lemma \ref{lm: coefficient iterative}, we have
\begin{align*}
    \zeta_{j,r,i}^{(t+1)}&=\zeta_{j,r,i}^{(t)}-\frac{\eta}{nm}\cdot\ell_i'^{(t)}\cdot\sigma'(\la\wb_{j,r}^{(t)},\bxi_i\ra)\cdot\ind(y_i=j)\|\bxi_i\|_2^2.\\
    \gamma_{j',r'}^{(t+1)} &= \gamma_{j',r'}^{(t)} - \frac{\eta}{nm} \cdot \bigg[\sum_{i\in S_{+}} \ell_i'^{(t)} \sigma'(\la\wb_{j',r'}^{(t)}, \hat{y}_{i} \cdot \bmu\ra) - \sum_{i\in S_{-}} \ell_i'^{(t)} \sigma'(\la\wb_{j',r'}^{(t)}, \hat{y}_{i} \cdot \bmu\ra)\bigg]  \cdot \| \bmu \|_2^2
\end{align*}
It follows that
\begin{equation}
    \begin{aligned}
    \sum_{i=1}^{n}\zeta_{j,r,i}^{(\tilde{T})}=\sum_{i:y_i=j}\zeta_{j,r,i}^{(\tilde{T})}&=\sum_{i: y_i=j}\zeta_{j,r,i}^{(\tilde{T}-1)}-\frac{\eta}{nm}\cdot\sum_{i: y_i=j}\ell_i'^{(\tilde{T}-1)}\cdot\sigma'(\la\wb_{j,r}^{(\tilde{T}-1)},\bxi_i\ra)\|\bxi_i\|_2^2\\
    &=\sum_{i=1}^{n}\zeta_{j,r,i}^{(\tilde{T}-1)}-\frac{\eta}{nm}\cdot\sum_{i\in S_{j,r}^{(\tilde{T}-1)}}\ell_i'^{(\tilde{T}-1)}\|\bxi_i\|_2^2\\
    &\geq\sum_{i=1}^{n}\zeta_{j,r,i}^{(\tilde{T}-1)}+\frac{\eta\sigma_p^2 d}{16m}\cdot\min_{i\in S_{j,r}^{(\tilde{T}-1)}}|\ell_i'^{(\tilde{T}-1)}|,
    \end{aligned}\label{ineq7}
\end{equation}
where the last equality is by the definition of $S_{j,r}^{(\tilde{T}-1)}$ as $\{i\in[n]:y_i=j,\la\wb_{j,r}^{(\tilde{T}-1)},\bxi_i\ra>0\}$; the last inequality is by Lemma \ref{lm: data inner products} and the fifth statement of Proposition \ref{proposition: balanced logit}. And 
\begin{equation}
    \begin{aligned}
    \gamma_{j',r'}^{(\tilde{T})} &\leq \gamma_{j',r'}^{(\tilde{T}-1)} - \frac{\eta}{nm} \cdot \sum_{i\in S_{+}} \ell_i'^{(\tilde{T}-1)} \sigma'(\la\wb_{j',r'}^{(\tilde{T}-1)}, \hat{y}_{i} \cdot \bmu\ra)\cdot \|\bmu\|_2^2\\
    &\leq\gamma_{j',r'}^{(\tilde{T}-1)}+\frac{\eta\|\bmu\|_2^2}{m}\cdot\max_{i\in S_{+}}|\ell_i'^{(\tilde{T}-1)}|.\label{ineq8}
    \end{aligned}
\end{equation}
According to the third statement of Proposition \ref{proposition: balanced logit}, we have $\max_{i\in S_{+}}|\ell_i'^{(\tilde{T}-1)}|\leq C_2\min_{i\in S_{j,r}^{(\tilde{T}-1)}}|\ell_i'^{(\tilde{T}-1)}|$. Then by combining \eqref{ineq7} and \eqref{ineq8}, we have
\begin{equation}
    \frac{\sum_{i=1}^{n}\zeta_{j,r,i}^{(\tilde{T})}}{\gamma_{j',r'}^{(\tilde{T})}}\geq\min\Bigg\{\frac{\sum_{i=1}^{n}\zeta_{j,r,i}^{(\tilde{T}-1)}}{\gamma_{j',r'}^{(\tilde{T}-1)}},\frac{\sigma_p^2 d}{16C_2\|\bmu\|_2^2}\Bigg\}=\Theta(\mathrm{SNR}^{-2}).\label{ineq16}
\end{equation}

On the other hand, according to \eqref{ineq7} and by Lemma \ref{lm: data inner products}, we have
\begin{equation}
    \sum_{i=1}^{n}\zeta_{j,r,i}^{(\tilde{T})}\leq\sum_{i=1}^{n}\zeta_{j,r,i}^{(\tilde{T}-1)}+\frac{9\eta\sigma_p^2 d}{8m}\cdot\max_{i\in S_{j,r}^{(\tilde{T}-1)}}|\ell_i'^{(\tilde{T}-1)}|,\label{ineq14}
\end{equation}
where the inequality is by $|S_{j,r}^{(\tilde{T}-1)}|\leq |S_{j}|\leq 3n/4$. And by arguing in a similar way as \eqref{ineq9}, \eqref{ineq10}, \eqref{ineq11} and \eqref{ineq12}, we can obtain that as long as $q<C_2/6$ and $n\geq 72C_{2}^{-2}\log(8/\delta)$, it holds that
\begin{equation*}
    \sum_{i\in S_{+}} |\ell_i'^{(\tilde{T}-1)}| \sigma'(\la\wb_{j',r'}^{(\tilde{T}-1)}, \hat{y}_{i} \cdot \bmu\ra)\geq2\sum_{i\in S_{-}} |\ell_i'^{(\tilde{T}-1)}| \sigma'(\la\wb_{j',r'}^{(\tilde{T}-1)}, \hat{y}_{i} \cdot \bmu\ra)
\end{equation*}
and hence
\begin{align*}
    \gamma_{j',r'}^{(\tilde{T})} &= \gamma_{j',r'}^{(\tilde{T}-1)} - \frac{\eta}{nm} \cdot \bigg[\sum_{i\in S_{+}} \ell_i'^{(\tilde{T}-1)} \sigma'(\la\wb_{j',r'}^{(\tilde{T}-1)}, \hat{y}_{i} \cdot \bmu\ra) - \sum_{i\in S_{-}} \ell_i'^{(\tilde{T}-1)} \sigma'(\la\wb_{j',r'}^{(\tilde{T}-1)}, \hat{y}_{i} \cdot \bmu\ra)\bigg]  \cdot \| \bmu \|_2^2\\
    &\geq\gamma_{j',r'}^{(\tilde{T}-1)} - \frac{\eta}{2nm} \cdot\sum_{i\in S_{+}} \ell_i'^{(\tilde{T}-1)} \sigma'(\la\wb_{j',r'}^{(\tilde{T}-1)}, \hat{y}_{i} \cdot \bmu\ra) \cdot \| \bmu \|_2^2.
\end{align*}
Then we have
\begin{equation}
    \begin{aligned}
    \gamma_{j',r'}^{(\tilde{T})}&\geq\gamma_{j',r'}^{(\tilde{T}-1)}- \frac{\eta}{2nm} \sum_{i\in S_{+}\cap S_1} \ell_i'^{(\tilde{T}-1)}\cdot \| \bmu \|_2^2\geq\gamma_{j',r'}^{(\tilde{T}-1)}+\frac{\eta\| \bmu \|_2^2}{8m}\min_{i\in S_{+}\cap S_1}\ell_i'^{(\tilde{T}-1)}, \text{ if }\la\wb_{j',r'}^{(\tilde{T}-1)}, \bmu\ra\geq0,\\
    \gamma_{j',r'}^{(\tilde{T})}&\geq\gamma_{j',r'}^{(\tilde{T}-1)}- \frac{\eta}{2nm} \sum_{i\in S_{+}\cap S_{-1}} \ell_i'^{(\tilde{T}-1)}\cdot \| \bmu \|_2^2\geq\gamma_{j',r'}^{(\tilde{T}-1)}+\frac{\eta\| \bmu \|_2^2}{8m}\min_{i\in S_{+}\cap S_{-1}}\ell_i'^{(\tilde{T}-1)}, \text{ if }\la\wb_{j',r'}^{(\tilde{T}-1)}, \bmu\ra<0,\label{ineq15}
    \end{aligned}
\end{equation}
where the second inequality is by Lemma \ref{lm: estimate S cap S}. According to the fourth statement of Proposition \ref{proposition: balanced logit}, we have $\max_{i\in S_{j,r}^{(\tilde{T}-1)}}|\ell_i'^{(\tilde{T}-1)}|\leq C_2\min_{i\in S_{+}\cap S_{1}}\ell_i'^{(\tilde{T}-1)}$ and $\max_{i\in S_{j,r}^{(\tilde{T}-1)}}|\ell_i'^{(\tilde{T}-1)}|\leq C_2\min_{i\in S_{+}\cap S_{-1}}\ell_i'^{(\tilde{T}-1)}$. Then by combining \eqref{ineq14} and \eqref{ineq15}, we have
\begin{equation}
    \frac{\sum_{i=1}^{n}\zeta_{j,r,i}^{(\tilde{T})}}{\gamma_{j',r'}^{(\tilde{T})}}\leq\max\Bigg\{\frac{\sum_{i=1}^{n}\zeta_{j,r,i}^{(\tilde{T}-1)}}{\gamma_{j',r'}^{(\tilde{T}-1)}},\frac{9C_2\sigma_p^2 d}{\|\bmu\|_2^2}\Bigg\}=\Theta(\mathrm{SNR}^{-2}).\label{ineq17}
\end{equation}
By \eqref{ineq16} and \eqref{ineq17}, we have
\begin{equation*}
    \frac{\sum_{i=1}^{n}\zeta_{j,r,i}^{(\tilde{T})}}{\gamma_{j',r'}^{(\tilde{T})}}=\Theta(\mathrm{SNR}^{-2}),
\end{equation*}
which completes the induction.

\end{proof}

Actually, the result in Lemma \ref{lm: zeta,gamma ratio} also holds for $0\leq t\leq T^*$, that is,
\begin{lemma}
Under the same conditions as Theorem \ref{thm:signal_learning_main}, we have
\begin{equation}
    \sum_{i=1}^{n}\zeta_{j,r,i}^{(t)}/\gamma_{j',r'}^{(t)}=\Theta(\mathrm{SNR^{-2}})
\end{equation}
for all $j, j'\in\{\pm 1\}$, $r, r'\in[m]$ and $0\leq t\leq T^*$.    
\end{lemma}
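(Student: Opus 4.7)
The plan is to extend the induction of Lemma~\ref{lm: zeta,gamma ratio} so that its inductive step runs from $t=1$ rather than from $t=T_{1}$; the stated range $[T_{1},T^{*}]$ then becomes $[0,T^{*}]$, with $t=0$ being vacuous since both $\sum_{i}\zeta_{j,r,i}^{(0)}$ and $\gamma_{j',r'}^{(0)}$ vanish. The key observation is that every ingredient of the induction step in Lemma~\ref{lm: zeta,gamma ratio}---the activation count $|S_{j,r}^{(t)}|\ge n/8$, the balanced-logit bound $\ell_{i}'^{(t)}/\ell_{k}'^{(t)}\le C_{2}$, the concentration $\|\bxi_{i}\|_{2}^{2}=\Theta(\sigma_{p}^{2}d)$, and the clean/noisy count balance---holds throughout $[0,T^{*}]$, by Proposition~\ref{proposition: balanced logit}, Lemma~\ref{lm: data inner products}, and Lemma~\ref{lm: estimate S cap S}. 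Hence the contraction inequalities \eqref{ineq16} and \eqref{ineq17} propagate the ratio from any valid base case all the way to $T^{*}$.

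For the base case at $t=1$, I would compute the single-step updates directly from Lemma~\ref{lm: coefficient iterative}. Writing $\sum_{i}\zeta_{j,r,i}^{(1)}=-(\eta/nm)\sum_{i\in S_{j,r}^{(0)}}\ell_{i}'^{(0)}\|\bxi_{i}\|_{2}^{2}$, Lemma~\ref{lm: number of initial activated neurons 2} gives $|S_{j,r}^{(0)}|\ge n/8$, Lemma~\ref{lm: data inner products} gives $\|\bxi_{i}\|_{2}^{2}=\Theta(\sigma_{p}^{2}d)$, and $|\ell_{i}'^{(0)}|=\Theta(1)$ since $F_{\pm1}(\Wb^{(0)},\xb_{i})=O(\beta)=o(1)$; these together yield $\sum_{i}\zeta_{j,r,i}^{(1)}=\Theta(\eta\sigma_{p}^{2}d/m)$. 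For $\gamma_{j',r'}^{(1)}$, the clean/noisy case split carried out in \eqref{ineq9}--\eqref{ineq12} in the proof of Proposition~\ref{proposition: range of gamma,zeta,omega} already supplies a lower bound of order $\eta\|\bmu\|_{2}^{2}/m$ using Lemma~\ref{lm: estimate S cap S}, and the triangle-inequality upper bound $|\gamma_{j',r'}^{(1)}|\le\eta\|\bmu\|_{2}^{2}/m$ matches it, so $\gamma_{j',r'}^{(1)}=\Theta(\eta\|\bmu\|_{2}^{2}/m)$. Dividing, the ratio at $t=1$ is $\Theta(\sigma_{p}^{2}d/\|\bmu\|_{2}^{2})=\Theta(\mathrm{SNR}^{-2})$, as required.

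The main obstacle is purely bookkeeping: ensuring the constants produced in the base case sit inside the thresholds $\sigma_{p}^{2}d/(16C_{2}\|\bmu\|_{2}^{2})$ and $9C_{2}\sigma_{p}^{2}d/\|\bmu\|_{2}^{2}$ appearing in \eqref{ineq16} and \eqref{ineq17}, so that the inductive step cannot push the ratio outside the $\Theta(\mathrm{SNR}^{-2})$ corridor that Lemma~\ref{lm: zeta,gamma ratio} maintains. Because all the constants involved depend only on the noise rate $p$ and on the universal constants in Condition~\ref{condition:d_sigma0_eta}, this can be absorbed into the implicit $\Theta(\cdot)$ by a single up-front choice of hidden constants, at which point the induction closes and the ratio bound holds for every $1\le t\le T^{*}$, extending Lemma~\ref{lm: zeta,gamma ratio} to the full interval $[0,T^{*}]$.
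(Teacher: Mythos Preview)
Your proposal is correct and follows precisely the route the paper indicates: the paper simply states that ``the proof argument is nearly the same as Lemma~\ref{lm: zeta,gamma ratio}'' and omits all details, and you have correctly identified that the inductive step there relies only on ingredients---Proposition~\ref{proposition: balanced logit}, Lemma~\ref{lm: data inner products}, and Lemma~\ref{lm: estimate S cap S}---that are already established on the whole interval $[0,T^{*}]$, so the same contraction inequalities \eqref{ineq16} and \eqref{ineq17} carry through from the directly computed base case at $t=1$. Your write-up is in fact more explicit than the paper's own treatment.
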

The proof argument is nearly the same as Lemma \ref{lm: zeta,gamma ratio}, and we only need to use Lemma \ref{lm: zeta,gamma ratio} in later arguments, so we eliminate the proof details here.
 
\section{Test Error Analysis}

\subsection{Test Error Upper Bound}

Next, we give an upper bound for the test error at iteration $t$ defined in Theorem \ref{thm:signal_learning_main} when the training loss converges to $\epsilon$. First of all, notice that $T_1\leq t\leq T^*$ by Lemma \ref{lm: loss upper bound}, we can summarize previous results into the following:

\begin{itemize}
    \item $\sum_{i=1}^{n}\zeta_{j,r,i}^{(t)}/\gamma_{j',r'}^{(t)}=\Theta(\mathrm{SNR^{-2}})$ (from Lemma~\ref{lm: zeta,gamma ratio}), 
    \item $\sum_{i=1}^{n} \zeta_{j,r,i}^{(t)} = \Omega(n) = O(n \log(T^*)) = \tilde{\Theta}(n)$ (from Proposition~\ref{proposition: range of gamma,zeta,omega} and Lemma~\ref{lm: first stage})
    \item $\max_{j, r, i} | \omega_{j, r, i}^{(t)} | = \max \big\{O\big(\sqrt{\log(mn/\delta)}\cdot\sigma_0\sigma_p\sqrt{d}\big), O \big( \sqrt{\log(n / \delta)} \log(T^*) \cdot n / \sqrt{d} \big) \big\}$ (from Lemma~\ref{lm: loss upper bound}). 
\end{itemize}

Additionally, recalling the definition $\hat{\gamma} = n \cdot \mathrm{SNR}^2$, from the first two conclusions, we have $\gamma_{j, r}^{(t)} = \tilde{\Theta} (\hat{\gamma})$ for all $j, r$. Also note that from the third conclusion, since $\sigma_0 \sigma_p \sqrt{d} = \tilde{O} (\sqrt{n} / \sqrt{d}) = o(1)$ and $\sqrt{\log(n / \delta)} \log(T^*) \cdot n / \sqrt{d} = O (1)$ from Condition~\ref{condition:d_sigma0_eta}, we have $\max_{j, r, i} | \omega_{j, r, i}^{(t)} | = O (1)$ and so $\sum_{i=1}^n | \omega_{\hat{y}, r, i}^{(t)} | = O \big( \sum_{i=1}^n \zeta_{\hat{y}, r, i}^{(t)} \big)$, hence we can ignore the sum of $\omega$ whenever it appears together with the sum of $\zeta$. We are now ready to analyze the test error in the following theorem. 


\begin{theorem}[Second part of Theorem~\ref{thm:signal_learning_main}]
    Under the same conditions as Theorem~\ref{thm:signal_learning_main}, then there exists a large constant $C_1$ such that when $n\|\bmu\|_2^2\geq C_1\sigma_p^4 d$, for time $t$ defined in Lemma~\ref{lm: loss upper bound}, we have the test error 
    \begin{equation*}
        \mathbb{P}_{(\xb, y) \sim \cD} \big( y \neq \sign (f(\Wb^{(t)},\xb)) \big) \leq p + \exp\bigg(-  n\|\bmu\|_{2}^{4}/(C_2\sigma_{p}^{4}d)\bigg), 
    \end{equation*}
    where $C_2 = O(1)$. 
\end{theorem}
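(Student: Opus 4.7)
The plan is to begin from the label-noise decomposition \eqref{eq:test error}, which reduces the task to proving $\PP_{(\xb,y)\sim\cD}(\hat{y} f(\Wb^{(t)},\xb)\leq 0)\leq \exp(-n\|\bmu\|_2^4/(C_2\sigma_p^4 d))$. Conditional on the fresh label $\hat{y}$, I would split the prediction by patch as
\[
\hat{y}f(\Wb^{(t)},\xb)\;=\;\underbrace{\tfrac{\hat{y}}{m}\sum_{j,r}j\sigma(\la\wb_{j,r}^{(t)},\hat{y}\bmu\ra)}_{A}\;+\;\underbrace{\tfrac{\hat{y}}{m}\sum_{j,r}j\sigma(\la\wb_{j,r}^{(t)},\bxi\ra)}_{g(\bxi)},
\]
treating $\Wb^{(t)}$ as fixed and regarding all randomness as living in the fresh test example. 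The goal becomes showing that $A$ is at least a constant multiple of $\hat{\gamma}:=n\|\bmu\|_2^2/(\sigma_p^2 d)$ and that $g(\bxi)$ is concentrated on a scale much smaller than $\hat{\gamma}$.

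For the signal term $A$, I invoke Lemma~\ref{lm: inner product range} to approximate $\la\wb_{j,r}^{(t)},\hat{y}\bmu\ra$ by $j\hat{y}\gamma_{j,r}^{(t)}$ up to $o(\hat{\gamma})$ error. The $j=\hat{y}$ summands then contribute $\gamma_{\hat{y},r}^{(t)}>0$ inside the ReLU, whereas the $j=-\hat{y}$ summands have negative argument $\approx-\gamma_{-\hat{y},r}^{(t)}$ and are killed. Combined with the order estimate $\gamma_{j,r}^{(t)}=\tilde{\Theta}(\hat{\gamma})$ from Proposition~\ref{Proposition: range} and Lemma~\ref{order of coef}, this yields $A\geq c_1\hat{\gamma}$ for some absolute constant $c_1>0$.

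For the noise term, $g(\bxi)$ is an $L$-Lipschitz function of the Gaussian vector $\bxi$ with constant $L\leq m^{-1}\sum_{j,r}\|\wb_{j,r}^{(t)}\|_2$. Substituting the signal-noise decomposition and using the near-orthogonality of $\bmu,\bxi_1,\dots,\bxi_n$ from Lemma~\ref{lm: data inner products} together with the coefficient scales in Proposition~\ref{Proposition: range} and Lemma~\ref{order of coef}, the dominant contribution comes from the $\zeta_{j,r,i}^{(t)}$ coefficients and gives $\|\wb_{j,r}^{(t)}\|_2^2=\tilde{O}(n/(\sigma_p^2 d))$, so $L\sigma_p=\tilde{O}(\sqrt{n/d})$. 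Because $\la\wb_{j,r}^{(t)},\bxi\ra\sim\cN(0,\sigma_p^2\|\wb_{j,r}^{(t)}\|_2^2)$, the mean satisfies $|\EE g(\bxi)|\leq L\sigma_p/\sqrt{2\pi}$. Under the hypothesis $n\|\bmu\|_2^4\geq C_1\sigma_p^4 d$ with $C_1$ large enough, a direct comparison of scales yields $L\sigma_p\leq c_1\hat{\gamma}/2$, hence $|\EE g(\bxi)|\leq c_1\hat{\gamma}/2$. Gaussian concentration for Lipschitz functions then gives
\[
\PP(g(\bxi)\leq -c_1\hat{\gamma})\;\leq\;\PP(g(\bxi)-\EE g(\bxi)\leq -c_1\hat{\gamma}/2)\;\leq\;\exp(-c_1^2\hat{\gamma}^2/(8L^2\sigma_p^2)),
\]
and plugging in $\hat{\gamma}=n\|\bmu\|_2^2/(\sigma_p^2 d)$ and $L^2\sigma_p^2=\tilde{O}(n/d)$ collapses the exponent to $-n\|\bmu\|_2^4/(C_2\sigma_p^4 d)$, which combined with \eqref{eq:test error} yields the claim.

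The main obstacle I expect is controlling the bias $\EE g(\bxi)$. Since $\sigma=\mathrm{ReLU}$ is not antisymmetric, the individual expectations $\EE\sigma(\la\wb_{j,r}^{(t)},\bxi\ra)$ do not cancel across $j=\pm 1$, so $g$ need not be mean-zero; one must bound the bias via $|\EE g(\bxi)|\leq L\sigma_p$. It is precisely the SNR condition $n\|\bmu\|_2^4\geq C_1\sigma_p^4 d$ that forces $L\sigma_p$ to be small compared with the signal magnitude $\hat{\gamma}$, and the same comparison of scales determines the exponent in the final bound, which is why this hypothesis appears as a sharp phase-transition threshold.
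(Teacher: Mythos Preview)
Your proposal is correct and follows essentially the same route as the paper: reduce via \eqref{eq:test error} to $\PP(\hat{y}f\le 0)$, separate the signal and noise patches, lower-bound the signal contribution by $\Theta(\gamma_{\hat{y},r}^{(t)})$ using Lemma~\ref{lm: inner product range}, bound the Lipschitz constant of the noise part via $\|\wb_{j,r}^{(t)}\|_2$, and apply Gaussian concentration for Lipschitz functions.

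Two minor differences are worth noting. First, the paper drops the favorable term $\sum_r\sigma(\la\wb_{\hat{y},r}^{(t)},\bxi\ra)\ge 0$ before applying concentration, so its $g(\bxi)=\sum_r\sigma(\la\wb_{-\hat{y},r}^{(t)},\bxi\ra)$ involves only the $j=-\hat{y}$ filters; this makes both the Lipschitz constant and the mean exactly $\sum_r\|\wb_{-\hat{y},r}^{(t)}\|_2$ (times $\sigma_p/\sqrt{2\pi}$) rather than a signed combination, but your two-sided version works equally well up to a factor of~$2$. Second, to obtain $C_2=O(1)$ with no polylogarithmic contamination, the paper bounds the \emph{ratio} $A/(L\sigma_p)$ directly using $\sum_i\zeta_{j,r,i}^{(t)}/\gamma_{j',r'}^{(t)}=\Theta(\mathrm{SNR}^{-2})$ from Lemma~\ref{order of coef}, which cancels the common $\log(T^*)$ growth in numerator and denominator; your separate bounds $A\ge c_1\hat{\gamma}$ and $L^2\sigma_p^2=\tilde{O}(n/d)$ would as written leave $C_2=\tilde{O}(1)$, so you should invoke that ratio lemma at the comparison step rather than estimating the two scales independently.
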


\begin{proof}
For the sake of convenience, we use $(\xb, \hat{y}, y) \sim \cD$ to denote the following: data point $(\xb, y)$ follows distribution $\cD$ defined in Definition~\ref{def:data}, and $\hat{y}$ is its true label. We can write out the test error as
\begin{equation}
\begin{aligned}
    &\quad \mathbb{P}_{(\xb,y)\sim\cD} \big(y\neq \sign(f(\Wb^{(t)},\xb))\big) \\
    &= \mathbb{P}_{(\xb,y)\sim\cD}\big(y f(\Wb^{(t)},\xb) \leq 0\big)\\
    &= \mathbb{P}_{(\xb, y) \sim \cD} \big(y f(\Wb^{(t)},\xb) \leq 0, y \neq \hat{y} \big) + \mathbb{P}_{(\xb, \hat{y}, y)\sim\cD} \big(y f(\Wb^{(t)},\xb) \leq 0, y = \hat{y} \big)\\
    &= p\cdot \mathbb{P}_{(\xb, \hat{y}, y) \sim \cD} \big( \hat{y} f(\Wb^{(t)},\xb) \geq 0\big) + (1-p) \cdot \mathbb{P}_{(\xb, \hat{y}, y) \sim \cD} \big( \hat{y} f(\Wb^{(t)},\xb) \leq 0 \big)  \\
    &\leq p+\mathbb{P}_{(\xb,\hat{y}, y)\sim\cD}\big(\hat{y} f(\Wb^{(t)},\xb) \leq 0\big), \label{ineq: test error analysis 1}
\end{aligned}
\end{equation}
where in the second equation we used the definition of $\cD$ in Definition~\ref{def:data}. 
It therefore suffices to provide an upper bound for $\mathbb{P}_{(\xb,\hat{y})\sim\cD}\big(\hat{y} f(\Wb^{(t)},\xb) \leq 0\big)$. To achieve this, we write $\xb = (\hat{y}\bmu, \bxi)$, and get
\begin{align}
    \hat{y} f(\Wb^{(t)},\xb)&=\frac{1}{m}\sum_{j,r}\hat{y}j[\sigma(\la\wb_{j,r}^{(t)},\hat{y}\bmu\ra)+\sigma(\la\wb_{j,r}^{(t)},\bxi\ra)] \notag \\
    &=\frac{1}{m}\sum_{r}[\sigma(\la\wb_{\hat{y},r}^{(t)},\hat{y}\bmu\ra)+\sigma(\la\wb_{\hat{y},r}^{(t)},\bxi\ra)]-\frac{1}{m}\sum_{r}[\sigma(\la\wb_{-\hat{y},r}^{(t)},\hat{y}\bmu\ra)+\sigma(\la\wb_{-\hat{y},r}^{(t)},\bxi\ra)] \label{eq: test logit decomposition}
\end{align}

Now consider first the expressions $\la \wb_{j, r} ^{(t)}, \hat{y} \bmu \ra$ for $j = \pm \hat{y}$. Recall from \eqref{eq:w_decomposition} the signal-noise decomposition of $\wb_{j,r}^{(t)}$: 
\begin{equation*}
    \wb_{j,r}^{(t)} = \wb_{j,r}^{(0)} + j \cdot \gamma_{j,r}^{(t)} \cdot \| \bmu \|_2^{-2} \cdot \bmu + \sum_{ i = 1}^n \zeta_{j,r,i}^{(t) }\cdot \| \bxi_i \|_2^{-2} \cdot \bxi_{i} + \sum_{ i = 1}^n \omega_{j,r,i}^{(t) }\cdot \| \bxi_i \|_2^{-2} \cdot \bxi_{i},
\end{equation*}
hence the inner product with $j = \hat{y}$ can be bounded as 
\begin{equation}
    \begin{aligned}
    \la \wb_{\hat{y}, r}^{(t)}, \hat{y} \bmu \ra &= \la \wb_{\hat{y}, r}^{(0)}, \hat{y} \bmu \ra + \gamma_{\hat{y}, r}^{(t)} + \sum_{i=1}^n \zeta_{\hat{y},r,i}^{(t)} \cdot \|\bxi_i\|_2^{-2} \cdot \la \bxi_i, \hat{y} \bmu \ra + \sum_{i=1}^n \omega_{\hat{y},r,i}^{(t)} \cdot \|\bxi_i\|_2^{-2} \cdot \la \bxi_i, \hat{y} \bmu \ra \\
    &\geq \gamma_{\hat{y}, r}^{(t)} - \sqrt{2 \log(12m / \delta)} \cdot \sigma_0 \|\bmu\|_2 \\
    &\qquad - \sqrt{2\log(6n/\delta)} \cdot \sigma_p \|\bmu\|_2 \cdot (\sigma_p^2 d / 2)^{-1} \bigg[ \sum_{i=1}^n \zeta_{\hat{y},r,i}^{(t)} + \sum_{i=1}^n |\omega_{\hat{y},r,i}^{(t)}| \bigg] \\
    &= \gamma_{\hat{y}, r}^{(t)} - \Theta ( \sqrt{\log(m / \delta)} \sigma_0 \|\bmu\|_2) - \Theta \big( \sqrt{\log(n / \delta)} \cdot (\sigma_p d)^{-1} \|\bmu\|_2 \big) \cdot \Theta (\mathrm{SNR}^{-2}) \cdot \gamma_{\hat{y}, r}^{(t)} \\
    &= \big[ 1 - \Theta \big( \sqrt{\log (n / \delta)} \cdot \sigma_p / \| \bmu \|_2 \big) \big] \gamma_{\hat{y}, r}^{(t)} -  \Theta \big( \sqrt{\log(m / \delta)} (\sigma_p d)^{-1} \sqrt{n} \| \bmu \|_2 \big) \\
    &= \Theta(\gamma_{\hat{y}, r}^{(t)}), \label{ineq: test inner product 1}
    \end{aligned}
\end{equation}
where 
the inequality is by 
Lemma~\ref{lm: data inner products} and Lemma~\ref{lm: initialization inner products}; 
the second equality is obtained by plugging in the coefficient orders  we summarized at the start of the section; 
the third equality is by the condition $\sigma_0 \leq C^{-1} (\sigma_p d)^{-1} \sqrt{n}$ in Condition~\ref{condition:d_sigma0_eta} and $\mathrm{SNR}=\|\bmu\|_2 / \sigma_p \sqrt{d}$; 
for the fourth equality, notice that $\gamma_{j, r}^{(t)} = \Omega (\hat{\gamma})$, also $\sqrt{\log (n / \delta)} \cdot \sigma_p / \| \bmu \|_2 \leq 1/\sqrt{C}$ and $\sqrt{\log(m / \delta)}(\sigma_p d)^{-1} \sqrt{n} \| \bmu \|_2 / \hat{\gamma} = \sqrt{\log(m / \delta)} \sigma_p / (\sqrt{n} \| \bmu \|_2) \leq \sqrt{\log(m / \delta) / n} \cdot 1 / \big( \sqrt{C \log(n/\delta)} \big) \leq 1/(C\sqrt{\log(n/\delta)})$ holds by $\|\bmu\|_2^2 \geq C \cdot \sigma_p^2 \log(n / \delta)$ and $n\geq C\log(m/\delta)$ in Condition \ref{condition:d_sigma0_eta}, so for sufficiently large constant $C$ the equality holds. Moreover, we can deduce in a similar manner that
\begin{equation}
    \begin{aligned}
    \la \wb_{-\hat{y}, r}^{(t)}, \hat{y} \bmu \ra &= \la \wb_{-\hat{y}, r}^{(0)}, \hat{y} \bmu \ra - \gamma_{-\hat{y}, r}^{(t)} + \sum_{i=1}^n \zeta_{-\hat{y},r,i}^{(t)} \cdot \|\bxi_i\|_2^{-2} \cdot \la \bxi_i, -\hat{y} \bmu \ra + \sum_{i=1}^n \omega_{-\hat{y},r,i}^{(t)} \cdot \|\bxi_i\|_2^{-2} \cdot \la \bxi_i, \hat{y} \bmu \ra \\
    &\leq - \gamma_{-\hat{y}, r}^{(t)} + \sqrt{2 \log(8m / \delta)} \cdot \sigma_0 \|\bmu\|_2 \\
    &\quad + \sqrt{2\log(6n/\delta)} \cdot \sigma_p \|\bmu\|_2 \cdot (\sigma_p^2 d / 2)^{-1} \bigg[ \sum_{i=1}^n \zeta_{-\hat{y},r,i}^{(t)} + \sum_{i=1}^n |\omega_{-\hat{y},r,i}^{(t)}| \bigg] \\
    &= -\Theta (\gamma_{-\hat{y}, r}^{(t)}) < 0, \label{ineq: test inner product 2}
    \end{aligned}
\end{equation}
where the second equality holds based on similar analyses as in \eqref{ineq: test inner product 1}. 


Denote $g(\bxi)$ as $\sum_{r}\sigma(\la\wb_{-\hat{y},r}^{(t)},\bxi\ra)$. According to Theorem 5.2.2 in \citet{vershynin_2018}, we know that for any $x\geq 0$ it holds that
\begin{equation}
    \PP(g(\bxi)-\EE g(\bxi)\geq x)\leq \exp\Big(-\frac{cx^2}{\sigma_{p}^{2}\|g\|_{\mathrm{Lip}}^{2}}\Big), \label{eq: call1}
\end{equation}
where $c$ is a constant. To calculate the Lipschitz norm, we have
\begin{align*}
    |g(\bxi)-g(\bxi')| &= \Bigg| \sum_{r=1}^{m} \sigma(\la\wb_{-\hat{y},r}^{(t)},\bxi\ra)-\sum_{r=1}^{m}\sigma(\la\wb_{-\hat{y},r}^{(t)},\bxi'\ra)\Bigg|\\
    &\leq\sum_{r=1}^{m}\big|\sigma(\la\wb_{-\hat{y},r}^{(t)},\bxi\ra)-\sigma(\la\wb_{-\hat{y},r}^{(t)},\bxi'\ra)\big|\\
    &\leq\sum_{r=1}^{m}|\la\wb_{-\hat{y},r}^{(t)},\bxi-\bxi'\ra|\\
    &\leq\sum_{r=1}^{m}\big\|\wb_{-\hat{y},r}^{(t)}\big\|_{2}\cdot\|\bxi-\bxi'\|_{2},
\end{align*}
where the first inequality is by triangle inequality; the second inequality is by the property of ReLU; the last inequality is by Cauchy-Schwartz inequality. Therefore, we have
\begin{equation}
    \|g\|_{\mathrm{Lip}} \leq\sum_{r=1}^{m}\big\|\wb_{-\hat{y},r}^{(t)}\big\|_{2}, \label{eq: lip}
\end{equation}
and since $\la\wb_{-\hat{y},r}^{(t)},\bxi\ra\sim\cN\big(0,\|\wb_{-\hat{y},r}^{(t)}\|_2^2\sigma_p^2\big)$, we can get
\begin{equation*}
    \EE g(\bxi) = \sum_{r=1}^{m}\EE\sigma(\la\wb_{-\hat{y},r}^{(t)},\bxi\ra)=\sum_{r=1}^{m}\frac{\|\wb_{-\hat{y},r}^{(t)}\|_2\sigma_p}{\sqrt{2\pi}}=\frac{\sigma_p}{\sqrt{2\pi}}\sum_{r=1}^{m}\|\wb_{-\hat{y},r}^{(t)}\|_2. 
\end{equation*}
Next we seek to upper bound the $2$-norm of $\wb_{j, r}^{(t)}$. First, we tackle the noise section in the decomposition, namely: 
\begin{align*}
    &\quad \bigg \| \sum_{ i = 1}^n \rho_{j, r, i}^{(t)} \cdot \| \bxi_i \|_2^{-2} \cdot \bxi_i \bigg \|_2^2 \\
    &= \sum_{i=1}^n {\rho_{j, r, i}^{(t)}}^2 \cdot \| \bxi_i \|_2^{-2} + 2 \sum_{1 \leq i_1 < i_2 \leq n} \rho_{j, r, i_1}^{(t)} \rho_{j, r, i_2}^{(t)} \cdot \| \bxi_{i_1} \|_2^{-2} \cdot \| \bxi_{i_2} \|_2^{-2} \cdot \la \bxi_{i_1}, \bxi_{i_2} \ra \\
    &\leq 4 \sigma_p^{-2} d^{-1} \sum_{i=1}^n {\rho_{j, r, i}^{(t)}}^2 + 2 \sum_{1 \leq i_1 < i_2 \leq n} |\rho_{j, r, i_1}^{(t)} \rho_{j, r, i_2}^{(t)}| \cdot (16 \sigma_p^{-4} d^{-2}) \cdot (2\sigma_p^2 \sqrt{d \log(6n^2 / \delta)}) \\
    &= 4 \sigma_p^{-2} d^{-1} \sum_{i=1}^n {\rho_{j, r, i}^{(t)}}^2 + 32 \sigma_p^{-2} d^{-3/2} \sqrt{\log(6n^2/\delta)} \bigg[ \bigg( \sum_{i=1}^n |\rho_{j, r, i}^{(t)}| \bigg)^2 - \sum_{i=1}^n {\rho_{j, r, i}^{(t)}}^2 \bigg] \\
    &= \Theta (\sigma_p^{-2} d^{-1}) \sum_{i=1}^n {\rho_{j, r, i}^{(t)}}^2 + \tilde{\Theta} (\sigma_p^{-2} d^{-3/2}) \bigg( \sum_{i=1}^n |\rho_{j, r, i}^{(t)}| \bigg)^2 \\
    &\leq \big[ \Theta (\sigma_p^{-2} d^{-1} n^{-1}) + \tilde{\Theta} (\sigma_p^{-2} d^{-3/2}) \big] \bigg( \sum_{i=1}^n |\zeta_{j, r, i}^{(t)}| + \sum_{i=1}^n |\omega_{j, r, i}^{(t)}| \bigg)^2 \\
    &\leq \Theta (\sigma_p^{-2} d^{-1} n^{-1})\bigg( \sum_{i=1}^n \zeta_{j, r, i}^{(t)}\bigg)^2
\end{align*}
where for the first inequality we used Lemma~\ref{lm: data inner products}; for the second inequality we used the definition of $\zeta, \omega$; for the second to last equation we plugged in coefficient orders. 
We can thus upper bound the norm of $\wb_{j,r}^{(t)}$ as: 
\begin{align}
    \|\wb_{j,r}^{(t)}\|_2 &\leq \|\wb_{j,r}^{(0)}\|_2 + \gamma_{j,r}^{(t)} \cdot \| \bmu \|_2^{-1} + \bigg \| \sum_{ i = 1}^n \rho_{j,r,i}^{(t)} \cdot \| \bxi_i \|_2^{-2} \cdot \bxi_i \bigg \|_2 \notag \\
    &\leq \|\wb_{j,r}^{(0)}\|_2 + \gamma_{j,r}^{(t)} \cdot \| \bmu \|_2^{-1} + \Theta(\sigma_{p}^{-1}d^{-1/2}n^{-1/2})\cdot\sum_{i=1}^{n}\zeta_{j,r,i}^{(t)} \notag \\
    &= \Theta ( \sigma_{p}^{-1} d^{-1/2} n^{-1/2}) \cdot \sum_{i = 1}^{n} \zeta_{j, r, i}^{(t)}
\end{align}
where the first inequality is due to the triangle inequality, and the 
equality is due to the following comparisons: 
\begin{equation*}
    \frac{ \gamma_{j, r}^{(t)} \cdot \| \bmu \|_{2}^{-1}} {\Theta (\sigma_{p}^{-1} d^{-1/2} n^{-1/2}) \cdot \sum_{i=1}^{n} \zeta_{j,r,i}^{(t)}} = \Theta( \sigma_{p} d^{1/2} n^{1/2} \| \bmu \|_2^{-1} \mathrm{SNR}^{2}) = \Theta (\sigma_p^{-1} d^{-1/2} n^{1/2} \| \bmu \|_2) = O(1)
\end{equation*}
based on the coefficient order $\sum_{i=1}^{n} \zeta_{j,r,i}^{(t)} / \gamma_{j, r}^{(t)} = \Theta (\mathrm{SNR} ^{-2})$, the definition $\mathrm{SNR} = \| \bmu \|_2 / (\sigma_p \sqrt{d})$, and the condition for $d$ in Condition~\ref{condition:d_sigma0_eta}; and also
\begin{equation*}
    \frac{\|\wb_{j,r}^{(0)}\|_2}{\Theta(\sigma_{p}^{-1}d^{-1/2}n^{-1/2})\cdot\sum_{i=1}^{n}\zeta_{j,r,i}^{(t)}}=\frac{\Theta(\sigma_0\sqrt{d})}{\Theta(\sigma_{p}^{-1}d^{-1/2}n^{-1/2})\cdot\sum_{i=1}^{n}\zeta_{j,r,i}^{(t)}}=O(\sigma_0\sigma_p d n^{-1/2})=O(1) 
\end{equation*}
based on Lemma~\ref{lm: initialization inner products}, the coefficient order $\sum_{i=1}^{n} \zeta_{j,r,i}^{(t)} = \Omega (n)$, and the condition for $\sigma_0$ in Condition~\ref{condition:d_sigma0_eta}. With this and \eqref{ineq: test inner product 1}, we give an analysis of the following the key component,
\begin{equation}
    \frac{\sum_r \sigma(\la\wb_{\hat{y}, r}^{(t)},\hat{y}\bmu\ra)}{\sigma_p\sum_{r=1}^{m}\big\|\wb_{-\hat{y},r}^{(t)}\big\|_{2}}
    \geq \frac{ \Theta \big( \sum_{r} \gamma_{\hat{y}, r}^{(t)} \big)}{\Theta (d^{-1/2} n^{-1/2}) \cdot \sum_{r,i} \zeta_{-\hat{y}, r, i}^{(t)}}
    = \Theta(d^{1/2} n^{1/2} \mathrm{SNR}^2)
    = \Theta(n^{1/2} \|\bmu\|_{2}^{2} / \sigma_p^2 d^{1/2})\label{order of ratio}
\end{equation}

By \eqref{order of ratio} and $n\|\bmu\|_{2}^{4}\geq C_1\sigma_p^4 d$ where $C_1$ is a sufficiently large constant, it directly follows that
\begin{equation}\label{greater than 0}
    \begin{aligned}
        \sum_r \sigma(\la\wb_{\hat{y}, r}^{(t)},\hat{y}\bmu\ra)-\frac{\sigma_p}{\sqrt{2\pi}}\sum_{r=1}^{m}\|\wb_{-\hat{y},r}^{(t)}\|_2>0.
    \end{aligned}
\end{equation}

Now using the method in \eqref{eq: call1} with the results above, we plug \eqref{ineq: test inner product 2} into \eqref{eq: test logit decomposition} and then \eqref{ineq: test error analysis 1}, to obtain
\begin{align}
    \PP_{(\xb, \hat{y}, y) \sim \cD} \big( \hat{y} f(\bW^{(t)}, \xb) \leq 0 \big) 
    &\leq \PP_{(\xb, \hat{y}, y) \sim \cD} \bigg( \sum_{r} \sigma(\la \wb_{-\hat{y}, r}^{(t)}, \bxi \ra) \geq \sum_r \sigma(\la\wb_{\hat{y}, r}^{(t)},\hat{y}\bmu\ra)\bigg) \notag \\
    &=\PP_{(\xb, \hat{y}, y) \sim \cD} \Bigg(g(\bxi)-\EE g(\bxi) \geq \sum_r \sigma(\la\wb_{\hat{y}, r}^{(t)},\hat{y}\bmu\ra)-\frac{\sigma_p}{\sqrt{2\pi}}\sum_{r=1}^{m}\|\wb_{-\hat{y},r}^{(t)}\|_2\Bigg) \notag \\
    &\leq \exp \Bigg[ -\frac{c\Big(\sum_r \sigma(\la\wb_{\hat{y}, r}^{(t)},\hat{y}\bmu\ra)- (\sigma_p / \sqrt{2\pi}) \sum_{r=1}^{m}\|\wb_{-\hat{y},r}^{(t)} \big\|_2\Big)^{2}}{\sigma_{p}^{2}\Big(\sum_{r=1}^{m} \big\|\wb_{-\hat{y},r}^{(t)} \big\|_{2}\Big)^{2}} \Bigg] \notag \\
    &= \exp \bigg[ -c\bigg(\frac{\sum_r \sigma(\la\wb_{\hat{y}, r}^{(t)},\hat{y}\bmu\ra)}{\sigma_p\sum_{r=1}^{m}\big\|\wb_{-\hat{y},r}^{(t)}\big\|_{2}} - 1/\sqrt{2\pi}\bigg)^{2} \bigg] \notag \\
    &\leq \exp(c/2\pi)\exp\bigg(- 0.5c\bigg(\frac{\sum_r \sigma(\la\wb_{\hat{y}, r}^{(t)},\hat{y}\bmu\ra)}{\sigma_p\sum_{r=1}^{m}\big\|\wb_{-\hat{y},r}^{(t)}\big\|_{2}}\bigg)^{2}\bigg)\label{ineq: test error analysis 3}
\end{align} 
where the second inequality is by \eqref{greater than 0} and plugging \eqref{eq: lip} into \eqref{eq: call1}, the third inequality is due to the fact that $(s-t)^{2} \geq s^{2}/2 - t^{2}, \forall s, t \geq 0$.

And we can get from \eqref{order of ratio} and \eqref{ineq: test error analysis 3} that
\begin{align*}
    \PP_{(\xb, \hat{y}, y) \sim \cD} \big( \hat{y} f(\bW^{(t)}, \xb) \leq 0 \big)
    &\leq \exp(c/2\pi)\exp\bigg(- 0.5c\bigg(\frac{\sum_r \sigma(\la\wb_{\hat{y}, r}^{(t)},\hat{y}\bmu\ra)}{\sigma_p\sum_{r=1}^{m}\big\|\wb_{-\hat{y},r}^{(t)}\big\|_{2}}\bigg)^{2}\bigg)\\
    &= \exp\Big(\frac{c}{2\pi}- \frac{n\|\bmu\|_{2}^{4}}{C\sigma_{p}^{4}d}\Big)\\
    &\leq \exp\Big(-\frac{n\|\bmu\|_{2}^{4}}{2C\sigma_{p}^{4}d}\Big)\\
    &=\exp\Big(-\frac{n\|\bmu\|_{2}^{4}}{C_2\sigma_{p}^{4}d}\Big),
\end{align*}
where $C = O(1)$; the last inequality holds if we choose $C_1\geq cC/\pi$; the last equality holds if we choose $C_2$ as $2C$.

\end{proof}

\subsection{Test Error Lower Bound}

In this section, we will give the lower bound of the test error at iteration $t$ defined in Theorem \ref{thm:signal_learning_main} when the training loss converges to $\epsilon$, which, together with Theorem~\ref{thm:signal_learning_main}, shows a sharp phase transition.  First, we give the proof of key Lemma~\ref{lemma:g lower bound}.
\begin{proof}[Proof of Lemma~\ref{lemma:g lower bound}]
 Without loss of generality, let $\max\Big\{\sum_{r}\gamma_{1,r}^{(t)}, \sum_{r}\gamma_{-1,r}^{(t)}\Big\} = \sum_{r}\gamma_{1,r}^{(t)} $. Denote $\vb = \lambda \cdot \sum_{i}\ind(y_i = 1)\bxi_{i}$, where $\lambda = C_{7}\|\bmu\|_{2}^{2}/(d\sigma_p^{2})$ and $C_{7}$ is a sufficiently large constant. Then we only need to prove that
\begin{equation}
    \underbrace{g(\bxi + \vb) - g(\bxi) + g(-\bxi + \vb) - g(-\bxi)}_{I} \geq 4C_{6}\sum_{r}\gamma_{1,r}^{(t)}.\label{eq:g upbound}
\end{equation}

Since ReLU is a convex activation function, we have that 
\begin{align}
\sigma(\la \wb_{1,r}^{(t)}, \bxi + \vb\ra) - \sigma(\la \wb_{1,r}^{(t)}, \bxi\ra) &\geq \sigma'(\la  \wb_{1,r}^{(t)}, \bxi\ra)\la \wb_{1,r}^{(t)}, \vb \ra  \label{eq:conv1}\\
\sigma(\la \wb_{1,r}^{(t)}, -\bxi + \vb\ra) - \sigma(\la \wb_{1,r}^{(t)}, -\bxi\ra) &\geq \sigma'(\la  \wb_{1,r}^{(t)}, -\bxi\ra)\la \wb_{1,r}^{(t)}, \vb \ra \label{eq:conv2}. 
\end{align}
Adding \eqref{eq:conv1} and \eqref{eq:conv2} we have that almost surely for all $\bxi$
\begin{equation}
\begin{aligned}
&\sigma(\la \wb_{1,r}^{(t)}, \bxi + \vb\ra) - \sigma(\la \wb_{1,r}^{(t)}, \bxi\ra) + \sigma(\la \wb_{1,r}^{(t)}, -\bxi + \vb\ra) - \sigma(\la \wb_{1,r}^{(t)}, -\bxi\ra)\\
&\geq \la \wb_{1,r}^{(t)}, \vb \ra \\
&\geq \lambda\bigg[\sum_{y_i = 1}\zeta_{1,r,i}^{(t)} - 2n\sqrt{\log(12mn/\delta)}\cdot \sigma_0 \sigma_p \sqrt{d} - 5n^{2}\alpha\sqrt{\log(6n^{2}/\delta)/d}\bigg],\label{eq:g upbound1}
\end{aligned}
\end{equation}
where the last inequality is by \eqref{ineq: noise product bound2} and Lemma \ref{lm: initialization inner products}.
Since ReLU is a Liptchitz, we also have that 
\begin{equation}
\begin{aligned}
&\sigma(\la \wb_{-1,r}^{(t)}, \bxi + \vb\ra) - \sigma(\la \wb_{-1,r}^{(t)}, \bxi\ra) + \sigma(\la \wb_{-1,r}^{(t)}, -\bxi + \vb\ra) - \sigma(\la \wb_{-1,r}^{(t)}, -\bxi\ra)\\
&\leq 2|\la \wb_{-1,r}^{(t)}, \vb \ra| \\
&\leq 2\lambda\bigg[\sum_{y_i = 1}\omega_{-1,r,i}^{(t)} + 2n\sqrt{\log(12mn/\delta)}\cdot \sigma_0 \sigma_p \sqrt{d} + 5n^{2}\alpha\sqrt{\log(6n^{2}/\delta)/d}\bigg],\label{eq:g upbound2}
\end{aligned}
\end{equation}
where the last inequality is by \eqref{ineq: noise product bound1} and Lemma \ref{lm: initialization inner products}. Therefore, by plugging \eqref{eq:g upbound1} and \eqref{eq:g upbound2} into left hand side $I$ in \eqref{eq:g upbound}, we have that 
\begin{align*}
&g(\bxi + \vb) - g(\bxi) + g(-\bxi + \vb) - g(-\bxi) \\
&\geq \lambda\bigg[\sum_{r}\sum_{y_i = 1}\zeta_{1,r,i}^{(t)} - 6nm\sqrt{\log(12mn/\delta)}\cdot \sigma_0 \sigma_p \sqrt{d} - 15m n^{2}\alpha\sqrt{\log(6n^{2}/\delta)/d}\bigg] \\
&\geq (\lambda/2) \cdot \sum_{r}\sum_{y_i = 1}\zeta_{1,r,i}^{(t)} \\
&\geq \lambda/2 \cdot \Theta(\mathrm{SNR}^{-2})\sum_{r} \gamma_{1,r}^{(t)} \\
&\geq 4 C_{6}\sum_{r} \gamma_{1,r}^{(t)},
\end{align*}
where the second inequality is by Lemma \ref{lm: first stage} and Condition \ref{condition:d_sigma0_eta}; the third inequality is by Lemma \ref{lm: zeta,gamma ratio}. Finally, it is worth noting that the norm 
\begin{align*}
\|\vb\|_{2} = \|\lambda \cdot \sum_{i}\ind(y_i = 1)\bxi_{i}\|_{2} = \Theta\bigg(\sqrt{\frac{n\|\bmu\|_{2}^{4}}{\sigma_{p}^{4}d}}\bigg) \leq 0.06\sigma_p,\\   
\end{align*}
where the last inequality is by condition $n\|\bmu\|_{2}^{4} \leq C_3\sigma_p^{4} d$ with sufficiently large $C_3$ in Theorem~\ref{thm:signal_learning_main}, which completes the proof.
\end{proof}

Then we present an important Lemma, which bounds the Total Variation (TV) distance between two Gaussian with the same covariance matrix.

\begin{lemma}[Proposition 2.1 in \citet{devroye2018total}]\label{lm: TV}
The TV distance between $\cN(0, \sigma_{p}^{2}\Ib_{d})$ and $\cN(\vb, \sigma_{p}^{2}\Ib_{d})$ is smaller than $\|\vb\|_{2}/2\sigma_p$.
\end{lemma}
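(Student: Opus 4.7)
The plan is to apply Pinsker's inequality together with the closed-form expression for the Kullback--Leibler divergence between two Gaussians with a common covariance. Recall Pinsker's inequality: for any two probability measures $P$ and $Q$ on a common measurable space,
\begin{equation*}
\mathrm{TV}(P, Q) \leq \sqrt{\mathrm{KL}(P \| Q)/2}.
\end{equation*}
So it suffices to bound the KL divergence between $P = \cN(\vb, \sigma_p^2 \Ib_d)$ and $Q = \cN(\mathbf{0}, \sigma_p^2 \Ib_d)$ by $\|\vb\|_2^2/(2\sigma_p^2)$, after which the stated bound follows by plugging in.

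First, I would write the log density ratio explicitly. Since the two Gaussians share the same (diagonal) covariance $\sigma_p^2 \Ib_d$, the quadratic terms in the exponent cancel and one is left with
\begin{equation*}
\log \frac{\dd P}{\dd Q}(\xb) = \frac{1}{\sigma_p^2}\la \xb, \vb\ra - \frac{\|\vb\|_2^2}{2\sigma_p^2}.
\end{equation*}
Taking the expectation under $P$ (so $\xb \sim \cN(\vb, \sigma_p^2 \Ib_d)$, hence $\EE_P[\xb] = \vb$) gives
\begin{equation*}
\mathrm{KL}(P \| Q) = \EE_P\!\left[\log \frac{\dd P}{\dd Q}\right] = \frac{\|\vb\|_2^2}{\sigma_p^2} - \frac{\|\vb\|_2^2}{2\sigma_p^2} = \frac{\|\vb\|_2^2}{2\sigma_p^2}.
\end{equation*}

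Combining with Pinsker's inequality yields
\begin{equation*}
\mathrm{TV}(P, Q) \leq \sqrt{\frac{1}{2} \cdot \frac{\|\vb\|_2^2}{2\sigma_p^2}} = \frac{\|\vb\|_2}{2\sigma_p},
\end{equation*}
which is exactly the claim. There is no substantive obstacle: the only tasks are the Gaussian KL computation, which is a direct calculation once the shared covariance cancels the quadratic piece, and invoking Pinsker's inequality as a black box. An alternative route would be to couple the two Gaussians by a translation and estimate $\int |\dd P - \dd Q|$ directly (or use Scheff\'e's identity), but Pinsker gives the cleanest one-line finish and matches the bound in the stated lemma exactly without extra constants.
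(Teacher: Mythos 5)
Your proof is correct. The paper itself does not prove this lemma; it simply cites Proposition~2.1 of \citet{devroye2018total}, so there is no in-paper argument to compare against. The Pinsker-plus-Gaussian-KL route you take is a standard, self-contained way to get exactly the stated constant: $\mathrm{KL}\big(\cN(\vb,\sigma_p^2\Ib_d)\,\|\,\cN(\mathbf{0},\sigma_p^2\Ib_d)\big)=\|\vb\|_2^2/(2\sigma_p^2)$, and Pinsker then yields $\mathrm{TV}\leq\|\vb\|_2/(2\sigma_p)$. As a side remark, the bound is not tight: projecting onto the direction of $\vb$ reduces the problem to one dimension, where the TV distance equals $2\Phi\big(\|\vb\|_2/(2\sigma_p)\big)-1\leq\|\vb\|_2/(\sigma_p\sqrt{2\pi})$, a strictly better constant; but the Pinsker route is the cleanest way to match the constant actually asserted in the lemma.
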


Finally, we can prove the third part of Theorem \ref{thm:signal_learning_main}: given Lemma~\ref{lm: TV} and Lemma~\ref{lemma:g lower bound}.

\begin{theorem}[Third part of Theorem \ref{thm:signal_learning_main}]\label{thm: test error lower bound}
Suppose that $n\|\bmu\|_{2}^{4} \leq C_3 d\sigma_{p}^{4}$, then we have that $L_{\cD}^{0-1} (\Wb^{(t)}) \geq p + 0.1$, where $C_3$ is an sufficiently large absolute constant.
\end{theorem}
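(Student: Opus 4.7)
The plan is to follow the three-step outline sketched in Section~\ref{sec:proof}, making the event-based argument rigorous. First, I would decompose the classification error exactly as in \eqref{eq:test error}: since $\hat{y}$ is Rademacher and $y$ is a $p$-flip of $\hat{y}$,
\begin{equation*}
\mathbb{P}_{(\xb,y)\sim\cD}\bigl(y\neq\sign(f(\Wb^{(t)},\xb))\bigr) = p + (1-2p)\,\mathbb{P}_{(\xb,\hat{y})}\bigl(\hat{y}f(\Wb^{(t)},\xb)\le 0\bigr).
\end{equation*}
So it suffices to show $\mathbb{P}(\hat{y}f \le 0)\ge 0.11$, and the final bound $L_\cD^{0\text{-}1}\ge p+0.1$ will follow from $p\le 1/C$. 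To lower bound this probability, I would plug in the signal-noise decomposition, use \eqref{ineq: test inner product 1} (signal term $= \Theta(\gamma_{\hat y,r}^{(t)})$) together with Lemma~\ref{order of coef} to absorb lower-order error terms (from $\la\wb_{j,r}^{(0)},\bxi\ra$, $\omega$-coefficients and cross inner products $\la\bxi,\bxi_i\ra/\|\bxi_i\|_2^2$), and show that under the regime $n\|\bmu\|_2^4\le C_3\sigma_p^4 d$ the signal contribution is of the same order as $\max_j\sum_r\gamma_{j,r}^{(t)}$. This lets me lower bound $\mathbb{P}(\hat yf\le 0)$ by $\tfrac12\,\PP(\Omega)$, where
\begin{equation*}
\Omega := \Bigl\{\bxi\in\RR^d : \bigl|\,{\textstyle\sum_{j,r}}\,j\,\sigma(\la\wb_{j,r}^{(t)},\bxi\ra)\bigr|\ge C_6 \max_{j}\,{\textstyle\sum_r}\gamma_{j,r}^{(t)}\Bigr\},
\end{equation*}
the factor $\tfrac12$ arising from conditioning on the (symmetric) sign of $\hat{y}$.

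Next, I would apply Lemma~\ref{lemma:g lower bound}, which produces a fixed $\vb$ with $\|\vb\|_2\le 0.06\sigma_p$ such that $g(\bxi+\vb)-g(\bxi)+g(-\bxi+\vb)-g(-\bxi)\ge 4C_6\max_j\sum_r\gamma_{j,r}^{(t)}$ pointwise in $\bxi$. By the pigeonhole principle, for every $\bxi$ at least one of $\bxi,\,-\bxi,\,\bxi+\vb,\,-\bxi+\vb$ has $|g(\cdot)|\ge C_6\max_j\sum_r\gamma_{j,r}^{(t)}$, i.e. lies in $\Omega$. Equivalently $\Omega\cup(-\Omega)\cup(\Omega-\vb)\cup(-\Omega-\vb)=\RR^d$, so by the union bound
\begin{equation*}
\PP(\Omega)+\PP(-\Omega)+\PP(\Omega-\vb)+\PP(-\Omega-\vb)\ge 1.
\end{equation*}

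Finally, I would exploit the symmetry $\bxi\sim\cN(\mathbf{0},\sigma_p^2\Ib)$: reflection symmetry gives $\PP(\Omega)=\PP(-\Omega)$, and Lemma~\ref{lm: TV} applied to the law of $\bxi$ versus the law of $\bxi-\vb$ (equivalently, $\cN(\mathbf{0},\sigma_p^2\Ib)$ vs.\ $\cN(\vb,\sigma_p^2\Ib)$) gives total variation at most $\|\vb\|_2/(2\sigma_p)\le 0.03$, hence $|\PP(\Omega)-\PP(\Omega-\vb)|\le 0.03$ and similarly for $-\Omega$. Combining,
\begin{equation*}
1 \le 4\PP(\Omega)+0.06,\qquad\text{so}\qquad \PP(\Omega)\ge 0.235 \ge 0.22,
\end{equation*}
which yields $\PP(\hat yf\le 0)\ge 0.11$ and concludes $L_\cD^{0\text{-}1}\ge p+(1-2p)(0.11)\ge p+0.1$ under $p\le 1/C$.

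The main obstacle is the first reduction: turning the sign event $\{\hat yf(\Wb^{(t)},\xb)\le 0\}$ into the clean geometric event $\Omega$ with only a factor-of-two loss. This requires carefully verifying that in the regime $n\|\bmu\|_2^4\le C_3\sigma_p^4 d$, the signal contribution $\sum_r\sigma(\la\wb_{\hat y,r}^{(t)},\hat y\bmu\ra)$ is comparable to (not overwhelmingly larger than) $\max_j\sum_r\gamma_{j,r}^{(t)}$, so that the inequality $|g(\bxi)|\ge C_6\max_j\sum_r\gamma_{j,r}^{(t)}$ genuinely forces $\hat yf\le 0$ for a sign-appropriate noise realization, after absorbing all initialization and cross-term errors using Lemma~\ref{lm: data inner products}, Lemma~\ref{lm: initialization inner products}, and the scales in Lemma~\ref{order of coef}. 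Once this reduction is in place, the pigeonhole-plus-TV argument is essentially mechanical.
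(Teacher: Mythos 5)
Your proposal is correct and follows essentially the same route as the paper's proof: the same decomposition into $p+(1-2p)\PP(\hat{y}f\le 0)$, the same reduction to the event $\Omega$ with a factor-of-two loss from the sign of $\hat{y}$, the same pigeonhole argument via Lemma~\ref{lemma:g lower bound}, and the same symmetry-plus-TV step yielding $\PP(\Omega)\ge 0.22$. One small clarification: the condition $n\|\bmu\|_2^4\le C_3\sigma_p^4 d$ is used in the proof of Lemma~\ref{lemma:g lower bound} to guarantee $\|\vb\|_2\le 0.06\sigma_p$ (so the TV bound is small), whereas the comparability of the signal term to $\max_j\sum_r\gamma_{j,r}^{(t)}$ holds from the coefficient scales in Lemma~\ref{order of coef} and \eqref{ineq: test inner product 1}--\eqref{ineq: test inner product 2} regardless of that regime.
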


\begin{proof}
For the sake of convenience, we use $(\xb, \hat{y}, y) \sim \cD$ to denote the following: data point $(\xb, y)$ follows distribution $\cD$ defined in Definition~\ref{def:data}, and $\hat{y}$ is its true label.
By \eqref{ineq: test error analysis 1}, we have
\begin{equation}
    \begin{aligned}
        &\mathbb{P}_{(\xb,y)\sim\cD} \big(y\neq \sign(f(\Wb^{(t)},\xb))\big)\\
        &= p\cdot \mathbb{P}_{(\xb, \hat{y}, y) \sim \cD} \big( \hat{y} f(\Wb^{(t)},\xb) \geq 0\big) + (1-p) \cdot \mathbb{P}_{(\xb, \hat{y}, y) \sim \cD} \big( \hat{y} f(\Wb^{(t)},\xb) \leq 0 \big)\\
        &=p+(1-2p)\cdot \mathbb{P}_{(\xb, \hat{y}, y) \sim \cD} \big( \hat{y} f(\Wb^{(t)},\xb) \leq 0 \big). 
    \end{aligned}\label{eq11}
\end{equation}
Therefore, it suffices to provide a lower bound for $\mathbb{P}_{(\xb,\hat{y})\sim\cD}\big(\hat{y} f(\Wb^{(t)},\xb) \leq 0\big)$. To achieve this, we have
\begin{equation}
    \begin{aligned}
    &\PP_{(\xb, \hat{y}, y) \sim \cD} \big( \hat{y} f(\bW^{(t)}, \xb) \leq 0 \big) \\
    &= \PP_{(\xb, \hat{y}, y) \sim \cD} \bigg( \sum_{r} \sigma(\la \wb_{-\hat{y}, r}^{(t)}, \bxi \ra) - \sum_{r} \sigma(\la \wb_{\hat{y}, r}^{(t)}, \bxi \ra) \geq \sum_r \sigma(\la\wb_{\hat{y}, r}^{(t)},\hat{y}\bmu\ra) - \sum_r \sigma(\la\wb_{-\hat{y}, r}^{(t)},\hat{y}\bmu\ra)\bigg)\\   
    &\geq 0.5\PP_{(\xb, \hat{y}, y) \sim \cD} \bigg( \bigg|\sum_{r} \sigma(\la \wb_{1, r}^{(t)}, \bxi \ra) - \sum_{r} \sigma(\la \wb_{-1, r}^{(t)}, \bxi \ra)\bigg| \geq  C_{6}\max\Big\{\sum_{r}\gamma_{1,r}^{(t)}, \sum_{r}\gamma_{-1,r}^{(t)}\Big\}\bigg)\label{eq10}
    \end{aligned}
\end{equation}
where $C_{6}$ is a constant, the inequality holds since if $\bigg|\sum_{r} \sigma(\la \wb_{1, r}^{(t)}, \bxi \ra) - \sum_{r} \sigma(\la \wb_{-1, r}^{(t)}, \bxi \ra)\bigg|$ is too large we can always pick a corresponding $\hat{y}$ given $\bxi$ to make a wrong prediction. Let $g(\bxi) = \sum_{r} \sigma(\la \wb_{1, r}^{(t)}, \bxi \ra) - \sum_{r} \sigma(\la \wb_{-1, r}^{(t)}, \bxi \ra) $. Denote the set 
\begin{align*}
\Omega := \bigg\{\bxi \bigg|  |g(\bxi)| \geq  C_{6}\max\Big\{\sum_{r}\gamma_{1,r}^{(t)}, \sum_{r}\gamma_{-1,r}^{(t)}\Big\}\bigg\}.    
\end{align*}
By plugging the definition of $\Omega$ into \eqref{eq10}, we have
\begin{equation}
    \PP_{(\xb, \hat{y}, y) \sim \cD} \big( \hat{y} f(\bW^{(t)}, \xb) \leq 0 \big)\geq 0.5\PP(\Omega)\label{lower bound1}
\end{equation}
Next, we will give a lower bound of $\PP(\Omega)$. By Lemma~\ref{lemma:g lower bound}, we have that $\sum_{j}[g(j\bxi + \vb) - g(j\bxi)] \geq 4C_{6}\max_{j}\Big\{\sum_{r}\gamma_{j,r}^{(t)}\Big\}$



Therefore, by pigeon's hole principle, there must exist one of the $\bxi$, $\bxi + \vb$, $-\bxi$, $-\bxi + \vb$ belongs $\Omega$.  So we have proved that $\Omega \cup -\Omega \cup \Omega - \{\vb\} \cup -\Omega - \{\vb\} = \RR^{d}$. Therefore at least one of $\PP(\Omega), \PP(-\Omega), \PP(\Omega - \{\vb\}), \PP(\Omega -\{\vb\}), \PP(-\Omega - \{\vb\})$ is greater than $0.25$. Notice that $\mathbb{P}(-\Omega) = \mathbb{P}(\Omega)$ and 
\begin{align*}
|\mathbb{P}(\Omega) - \mathbb{P}(\Omega - \vb)| &= |\mathbb{P}_{\bxi \sim \cN(0, \sigma_p^{2} \Ib_{d})}(\bxi \in \Omega) - \mathbb{P}_{\bxi \sim \cN(\vb, \sigma_{p}^{2}\Ib_{d})}(\bxi \in \Omega)|\\
&\leq \text{TV}(\cN(0, \sigma_p^{2} \Ib_{d}), \cN(\vb, \sigma_p^{2} \Ib_{d}))\\
&\leq \frac{\|\vb\|_{2}}{2\sigma_{p}} \\
&\leq 0.03,
\end{align*}
where the first inequality is by the definition of Total variation (TV) distance, the second inequality is by 
Lemma~\ref{lm: TV}.

Therefore we have proved that $\mathbb{P}(\Omega) \geq 0.22$, and plugging this into \eqref{eq11} and \eqref{lower bound1}, we get
\begin{align*}
    &\mathbb{P}_{(\xb,y)\sim\cD} \big(y\neq \sign(f(\Wb^{(t)},\xb))\big)\\
    &=p+(1-2p)\cdot \mathbb{P}_{(\xb, \hat{y}, y) \sim \cD} \big( \hat{y} f(\Wb^{(t)},\xb) \leq 0 \big)\\
    &\geq p+(0.5-p)\cdot\PP(\Omega)\\
    &\geq 0.78 p+0.11\\
    &\geq p+0.1,
\end{align*}
where the last inequality is by $p<1/C$ from Condition \ref{condition:d_sigma0_eta} and by choosing $C > 22$ a sufficiently large constant, which completes the proof.
\end{proof}

\end{document}